\documentclass[11pt]{article}

\usepackage{fullpage}
\usepackage[utf8]{inputenc} 
\usepackage[T1]{fontenc}    
\usepackage{hyperref}       
\usepackage{url}            
\usepackage{booktabs}       
\usepackage{amsfonts}       
\usepackage{nicefrac}       
\usepackage{microtype}      
\usepackage{tcolorbox}
\usepackage{diagbox}
\usepackage{mathtools}
\usepackage{authblk}
\usepackage{enumerate}
\usepackage[shortlabels]{enumitem}
\usepackage{algorithmic}
\usepackage{algorithm}
\usepackage{graphicx} 
\usepackage{caption}
\usepackage{subcaption}
\usepackage{amsmath}
\usepackage{amsthm}
\usepackage{amssymb}
\usepackage{tikz}
\usepackage{tablefootnote}
\usepackage{multirow}
\usepackage{enumerate}
\usepackage{color}
\usepackage{xcolor}
\usepackage{natbib}

\usetikzlibrary{arrows}

\allowdisplaybreaks[4]

\usepackage{mathrsfs}

\usepackage{hyperref}
\usepackage{bm,todonotes}

\allowdisplaybreaks

\newtheorem{thm}{Theorem}[section]
\newtheorem{lem}{Lemma}[section]
\newtheorem{cor}{Corollary}[section]
\newtheorem{prop}{Proposition}[section]

\newtheorem{rem}{Remark}[section]

\newenvironment{prf}{\begin{proof}}{\end{proof}}
\newtheorem{ass}{Assumption}

\hypersetup{
	colorlinks=true,
	filecolor=blue,
	citecolor = blue,
	urlcolor=cyan,
}

\usepackage{amsmath,amsfonts,bm}

\def\1{\bm{1}}

\DeclareMathAlphabet{\mathsfit}{\encodingdefault}{\sfdefault}{m}{sl}
\SetMathAlphabet{\mathsfit}{bold}{\encodingdefault}{\sfdefault}{bx}{n}

\def\D{{\bf D}}

\def\e{{\bf e}}

\def\g{{\bf g}}
\def\h{{\bf h}}

\def\S{{\bf S}}

\def\U{{\bf U}}
\def\u{{\bf u}}

\def\W{{\bf W}}

\def\X{{\bf X}}

\def\Xi{\boldsymbol{\xi}}

\def\Phi{\boldsymbol{\phi}}

\def\x{{\boldsymbol{x}}}

\def\U{\boldsymbol{U}}

\def\Y{{\bf Y}}
\def\y{{\bf y}}

\def\0{{\bf 0}}
\def\1{{\bf 1}}

\title{
Faithful Federated Tabular Data Synthesis
}
\author[1]{Zilong Cao\thanks{Email: \texttt{nwu\_czl@stumail.nwu.edu.cn}}}
\author[1]{Hai Zhang\thanks{Email: \texttt{zhanghai@nwu.edu.cn}}}
\author[2]{Xuan Bi\thanks{Corresponding author. Email: \texttt{xbi@umn.edu}}}

\affil[1]{School of Mathematics, Northwest University}
\affil[2]{Carlson School of Management, University of Minnesota}

\date{}
\begin{document}

\maketitle
\begin{abstract}
Synthetic data is an important data source in the AI era. 
This paper studies the generation of mixed-type tabular synthetic data in federated settings, where two key challenges arise: ensuring distributional fidelity and effectively handling mixed-type data. 
We propose Federated Integrative Two-Step Synthesis (FITS), a framework for mixed-type tabular data synthesis. 
At its core is a two-step federated algorithm for estimating conditional cumulative distribution functions (CDFs) using local linear regression. 
Building on this algorithm, we develop a federated synthesis procedure based on the probability chain rule to ensure distributional fidelity. 
To handle mixed-type data, FITS introduces a continualization method that lifts discrete variables into a continuous space and recovers their discrete distributions without loss at both the density-function and CDF levels. 
This transformation provides a unified continuous representation of mixed-type variables, facilitating the application of local linear regression and synthetic data generation.
Our theoretical analysis characterizes the conditional density and cumulative distribution functions of the continualized mixed-type variables and establishes the uniform convergence of the resulting federated conditional CDF estimators. 
Experiments demonstrate that FITS-generated synthetic data improves downstream task performance for all clients, with the benefits depending on how the synthetic data is used.
\end{abstract}

\section{Introduction}
Synthetic data generation has received increasing attention as the AI era progresses, with synthetic data becoming an important data source in many fields, such as large language models \citep{wang2023self}, imaging \citep{kim2022transferable}, and medical research \citep{kokosi2022synthetic}. 
However, synthetic data generation (or, broadly speaking, data synthesis) is itself a nontrivial methodological problem, rather than a simple consequence of assuming a generative distribution \citep{alaa2022faithful}. 
In addition to the intrinsic challenges of generating synthetic data, a fundamental issue is the limited availability of real data, which hinders the production of high-quality synthetic data. Such scarcity frequently arises from data silos, where data holders prioritize privacy and legal considerations. This severely limits or even prevents access to their data.
Federated learning (FL, \cite{McMahan2016CommunicationEfficientLO,kairouz2021advances}) provides an efficient and privacy-preserving framework for synthetic data generation, addressing this dilemma through a distributed paradigm that eliminates centralized storage and avoids centralizing raw data while transmitting only processed data from clients to ensure security. 
Numerous federated synthetic data generation efforts focus on images and other data formats \citep{little2023federated}; 
nonetheless, the development of synthetic tabular data generation in federated learning remains limited, despite its substantial importance across various domains.

Existing studies have explored generative adversarial networks \citep{zhao2021fed}, diffusion models \citep{sattarov2025overcoming}, and marginal-based synthesis \citep{li2026heterofedsyn}. 
However, several major challenges remain unresolved. 
(i) Communication costs remain substantial. 
Federated synthetic data generation methods, especially deep learning-based models, require extensive communication to learn the generative model. 
Several reports \citep{zhu2019deep, kairouz2021advances,nist2025genomicsppfl} emphasize that reducing communication is essential for improving both the efficiency and safety of federated learning. 
(ii) Lack of faithfulness remains a concern \citep{alaa2022faithful}. 
Generative adversarial networks and diffusion models lack theoretical interpretability, which may lead to synthetic data that is unfaithful.
For example, many deep-learning-based models cannot guarantee that the synthetic data is faithful to the original distribution.
Marginal-based synthesis, on the other hand, relies only on low-order marginal information and may have limitations in approximating high-dimensional distributions.
(iii) Existing methods remain inefficient in handling mixed-type data. 
For example, black-box approaches, such as generative adversarial networks and diffusion models, typically use one-hot encodings or continuous embeddings for discrete variables; 
however, one-hot encodings are not genuinely continuous and substantially increase dimensionality, whereas non-invertible continuous embeddings destroy the discrete structure and compromise interpretability.
Marginal-based synthesis, by contrast, discretizes continuous variables to facilitate the computation of marginal statistics, thereby losing information carried by continuous variables.
Nonparametric methods provide established approaches to handling mixed-type data.
\citet{li2008nonparametric} estimate conditional CDFs and quantiles using generalized product kernels that combine continuous and discrete kernels.
Alternatively, \citet{nagler2018generic} shows that suitably chosen independent noisy jitter allows nonparametric density estimators designed for continuous data to be applied to mixed data.
These approaches provide important foundations for mixed-data estimation but do not develop a federated synthesis procedure.

To address these challenges, we propose a federated framework, termed federated integrative two-step synthesis for mixed-type tabular data (FITS), which supports the generation of multivariate mixed-type tabular data based on the chain rule. 
FITS is characterized by a two-step communication procedure.
In the first step, clients transmit weighted Hessian-type matrices to the central server, which computes the inverse of the aggregated matrix and sends only its first row back to the clients. 
In the second step, clients send their local response-smoothed CDF contribution vectors to the central server. 
The server then constructs the global conditional cumulative distribution functions (CDFs), which are subsequently used for inverse sampling and data synthesis. 
For mixed-type data, we employ an efficient technique that lifts discrete variables into a continuous space while allowing inversion back to the original discrete space without incurring additional dimensional cost. 
FITS includes a core federated algorithm for estimating conditional cumulative distribution functions via local linear regression. 
In theory, local linear regression (LLR, \cite{fan1996estimation}) can effectively reduce estimator bias and has been studied for conditional CDF estimation \citep{hall1999methods,xie2023uniform} and widely used in various applications \citep{cai2011analysis, cheng2013local, Chakrabortty2017EfficientAA,cheng2020estimating}. 
Furthermore, we propose client-designed and penalty-adaptive kernels to enable effective LLR fitting in federated settings with mixed-type data.
At the client level, the client-designed kernel offers both fixed-bandwidth and K-nearest-neighbor (KNN) variants, with the choice determined by each client. Under this design, each client can specify kernel components, including the kernel function and bandwidth parameters.
The penalty-adaptive kernel, by contrast, operates at the kernel-function level and penalizes continualized samples whose categories do not match that of the query.

FITS offers several distinctive advantages for federated synthesis of mixed-type tabular data.
First, unlike iterative federated optimization, FITS requires only the upload of processed data and a \emph{two-step} procedure to achieve distribution estimation.
Second, by estimating conditional CDFs and combining them through the chain rule, FITS directly targets the underlying joint distribution, providing a principled basis for distributionally faithful data synthesis.
Third, we utilize an invertible continualization technique that maps continuous and discrete variables into a unified continuous domain without increasing the data dimension. This transformation preserves the recoverability of the original discrete variables while providing a foundation for applying established methods developed for continuous data and facilitates synthetic data generation through a unified procedure.
Finally, FITS supports client-designed kernels that accommodate client-specific data characteristics and a penalty-adaptive mechanism that reduces kernel-fitting error caused by category mismatches after continualization.

Our contributions can be summarized as follows:

From a methodological perspective, we propose a novel FITS framework for federated mixed-type tabular data synthesis. 
It supports federated conditional CDF estimation and synthetic data generation.
In addition, FITS introduces an invertible discrete-to-continuous lifting technique, the client-designed kernel for the FL scheme, and the penalty-adaptive kernel for the mixed-type data.

In theory, we characterize the proposed continualization technique at both the density and distribution-function levels, showing, in particular, its blockwise smoothness.
Moreover, under this blockwise structure, we establish uniform convergence rates for the federated local linear conditional CDF estimator using Gaussian kernels with unbounded support. 
In particular, direct kernel smoothing near blockwise boundaries can yield inconsistent estimates; we therefore develop a modified estimation procedure that restores consistency at these points.
Finally, we establish the uniform consistency of the per-step conditional CDF estimates used in the chain-rule generation process.

We evaluate FITS on simulated and real datasets under imbalanced client sample sizes and compare it with two federated data synthesis baselines, HeteroFedSyn \citep{li2026heterofedsyn} and FedTabDiff \citep{sattarov2025overcoming}, using three downstream models: linear regression, logistic regression, and FT-Transformer (a deep learning model, \citep{gorishniy2021revisiting}), covering both regression and classification tasks.  
The results show that FITS-generated synthetic data can substantially improve downstream performance and provide the best distributional fidelity compared to synthetic data generated by the other methods. 

The remainder of the paper is organized as follows. 
Section~\ref{sec:background} introduces the federated setup, notation, and local linear formulation of conditional CDF estimation. 
Section~\ref{sec:algorithm} presents FITS for continuous data, including two-step federated aggregation, client-designed kernel schemes, and synthetic data generation through the chain rule and inverse sampling. 
Section~\ref{sec:general_mix} extends the framework to mixed-type data through continualization of discrete variables and penalty-adaptive kernels. 
Section~\ref{sec:theory} establishes uniform convergence results for both continuous and continualized mixed-type data. 
Section~\ref{sec:experiments} evaluates the utility of synthetic data through simulation and real-data studies. 
The final section summarizes the findings and outlines future research directions. 
The appendices provide auxiliary theoretical tools, detailed proofs, and supplementary experiments.

\section{Background}
\label{sec:background}

Subsection~\ref{subsec:fed_setup} establishes the federated learning setup and the general notation used throughout the paper. Subsection~\ref{subsec:llr_formulation} introduces local linear regression, explains how it estimates conditional CDFs, and describes its adaptation to the federated learning scheme.

\subsection{Federated setup and notation}
\label{subsec:fed_setup}

We consider a federated learning system consisting of a central
server (CS) and $K$ clients. Client $k\in\{1,\ldots,K\}$ holds a
client-specific hold-out dataset
\[
    \mathcal H_k
    =
    \bigl\{\h_i^{(k)}\bigr\}_{i=1}^{m_k},
    \qquad
    \h_i^{(k)}
    =
    \bigl(\h_{i,:d}^{(k)},\,h_{i,d+1}^{(k)}\bigr)
    \in\mathbb R^{d+1},
\]
where $\h_{i,:d}^{(k)} = (h_{i,1}^{(k)},...,h_{i,d}^{(k)})\in\mathcal X\subset\mathbb R^d$ is the
covariate vector and $h_{i,d+1}^{(k)}\in\mathcal Z\subset\mathbb R$
is the response. We write $M=\sum_{k=1}^K m_k$ for the total
hold-out sample size. Throughout the paper, we assume that the
samples $\{\h_i^{(k)}: i\in[m_k],\ k\in[K]\}$ are i.i.d.\ draws
from a common joint distribution.

Let $\X\in\mathcal X$ and $Z\in\mathcal Z$ denote the covariate
vector and response, respectively. The estimand of interest is
the conditional distribution function
\begin{equation}\label{eq:target}
    F(z\mid \x)
    =
    \mathbb P(Z\le z\mid \X=\x),
    \qquad
    z\in\mathcal Z,\ \x\in\mathcal X.
\end{equation}
It is the canonical object from which arbitrary conditional functionals, including means, quantiles, and probability masses, can be derived, and samples can be generated by inverse-CDF sampling.

The federated system is subject to the following communication and data-access constraints.
The clients communicate with the CS in synchronous rounds; in each round, a client may upload one or more deterministic functions of its local data $\mathcal H_k$ (and any quantities previously received from the CS), and the CS may broadcast a deterministic function of the received uploads to all clients.
The following raw-data objects are \emph{never} transmitted: the individual samples $\{\h_i^{(k)}\}_i$, the local response vectors $\{h_{i,d+1}^{(k)}\}_i$, and the local design matrices used in weighted regression.
The federated estimator developed in Section~\ref{sec:algorithm} respects these constraints by transmitting only \emph{aggregated sufficient statistics}---a $(d{+}1)\times(d{+}1)$ design summary in the first round and an $L$-dimensional CDF contribution vector in the second round---each of which is a many-to-one function of the underlying raw data.

\paragraph{Coordinate grids.}
For each coordinate $j\in[d]$ we fix an evaluation grid $\g_j=(g_{1j},\ldots,g_{Gj})^\top\in\mathbb R^G$, and write $\mathcal G=\{\g_j\}_{j=1}^{d+1}$.
The grids serve two purposes: in Section~\ref{sec:algorithm}, they parameterize the evaluation points of $\widehat F(\cdot\mid\x)$. And the grids are used to generate synthetic data via linear interpolation.

\subsection{Local-linear formulation of the conditional CDF}
\label{subsec:llr_formulation}

We estimate $F(z\mid \x)$ by applying local linear regression to a smoothed CDF score of the response with respect to the continuous covariates. 
Let $\mathcal K:\mathbb R\to[0,1]$ denote a continuous CDF kernel. 
The local linear estimator $\widehat F(z\mid \x)=\hat\beta_0$ of $F(z\mid \x)$ at fixed $\x$ and $z$ is defined as the intercept of the weighted least-squares fit
\begin{equation}\label{eq:llr_global}
(\hat\beta_0,\hat{\bm\beta}_{:d})
=
\arg\min_{(\beta_0,\bm\beta_{:d})}
\sum_{i=1}^{M}
\Bigl(
\mathcal K_i(z)
-\beta_0
-\bm\beta_{:d}^\top
(\x- \h_{i,:d})
\Bigr)^{2}
\varpi_i,
\end{equation}
where $\bm\beta_{:d} = (\beta_1,\ldots,\beta_d) \in \mathbb{R}^d$ denotes the $d$-dimensional slope vector, $\varpi_i$ is a nonnegative kernel weight on the covariate coordinates, and $\h_{i,:d}$ is the covariate vector of the $i$th observation. 
The choice of $\varpi_i$ is left abstract here and is generally specified as the kernel evaluation of the query point $\x$ at the sample point $\h_{i,:d}$. 
For each query point $\x\in \mathcal X$ and each evaluation threshold $z\in\mathcal Z$, define the smoothed response indicator with bandwidth $w^{(i)}_z>0$, generally setting $w^{(i)}_z = w_z >0$, as
\begin{equation}\label{eq:smoothed_indicator}
\mathcal K_i(z)
:=
\mathcal K\left(
\frac{z- h_{i,d+1}}{w^{(i)}_z}
\right),
\qquad i\in[M],
\end{equation}
where $\{\h_i\}_{i=1}^M$ denotes the pooled hold-out sample under a temporary common index, with $\h_i=(\h_{i,:d},h_{i,d+1})$. 
Under standard kernel assumptions, $\mathbb E[\mathcal K_i(z)\mid \X_i=\x]\to F(z\mid\x)$ as $w^{(i)}_z\to0$.

The local-linear estimator \citep{fan1992design} has two well-known statistical advantages over the Nadaraya--Watson type local-constant estimator:
the leading bias term contributed by the marginal density of the covariates vanishes, and the estimator is automatically bias-corrected at the boundary of the covariate support. 
These properties carry over to the federated setting without modification, because, as shown below, the federated estimator coincides exactly with the centralized local linear estimator on the pooled sample.

For the federated decomposition, we record the closed-form
solution of \eqref{eq:llr_global}. Write
\[
    \U_i
    :=
    \bm\iota(\h_{i,:d}-\x)
    =\bigl(1,(\h_{i,:d}-\x)^\top\bigr)^\top
    \in\mathbb R^{d+1},
    \qquad
    \W:=\mathrm{diag}(\varpi_1,\ldots,\varpi_M),
\]
where $\bm\iota(\y) = (1,\y)^\top$, and let $\U\in\mathbb R^{M\times(d+1)}$ stack the $\U_i^\top$ as rows.
Set $\bm{\mathcal K}(z):=(\mathcal K_1(z),\ldots,\mathcal K_M(z))^\top$
and $\e_1=(1,0,\ldots,0)^\top\in\mathbb R^{d+1}$. The intercept of
\eqref{eq:llr_global} admits the closed form
\begin{equation}\label{eq:llr_intercept_closed}
    \hat\beta_0
    =
    \e_1^\top
    \bigl(\U^\top\W\U\bigr)^{-1}
    \U^\top\W\bm{\mathcal K}(z)
    =
    \sum_{i=1}^{M}
    \underbrace{
        \varpi_i\,\e_1^\top
        \bigl(\U^\top\W\U\bigr)^{-1}\U_i
    }_{\displaystyle =:\,\alpha_i}
    \,\mathcal K_i(z).
\end{equation}
The local-linear estimator is thus a linear combination of the smoothed response indicators with weights $\{\alpha_i\}_{i=1}^M$ that depend on $\x$ and the design $\U$ but not on $z$.
This separability is the structural property that enables the two-step federated learning procedure developed in Section~\ref{sec:algorithm}.

\section{The Proposed Method}
\label{sec:algorithm}

This section develops our framework FITS for continuous data,  focusing on federated conditional CDF estimation and synthetic data generation.
Subsection~\ref{subsec:two_step_fllr} formulates the two-step FL procedure for estimating LLR-based conditional CDFs, develops the core algorithm, and analyzes its communication cost and equivalence to the pooled setting.
Subsection~\ref{subsec:bandwidth_schemes} presents two interchangeable bandwidth schemes (fixed-bandwidth and KNN-kernel).
Subsection~\ref{subsec:generation} proposes a method for federated synthetic data generation that combines the chain rule with inverse sampling from conditional CDFs.

\subsection{Two-step federated aggregation}
\label{subsec:two_step_fllr}
\begin{figure}[t]
    \centering
    \includegraphics[width=\textwidth]{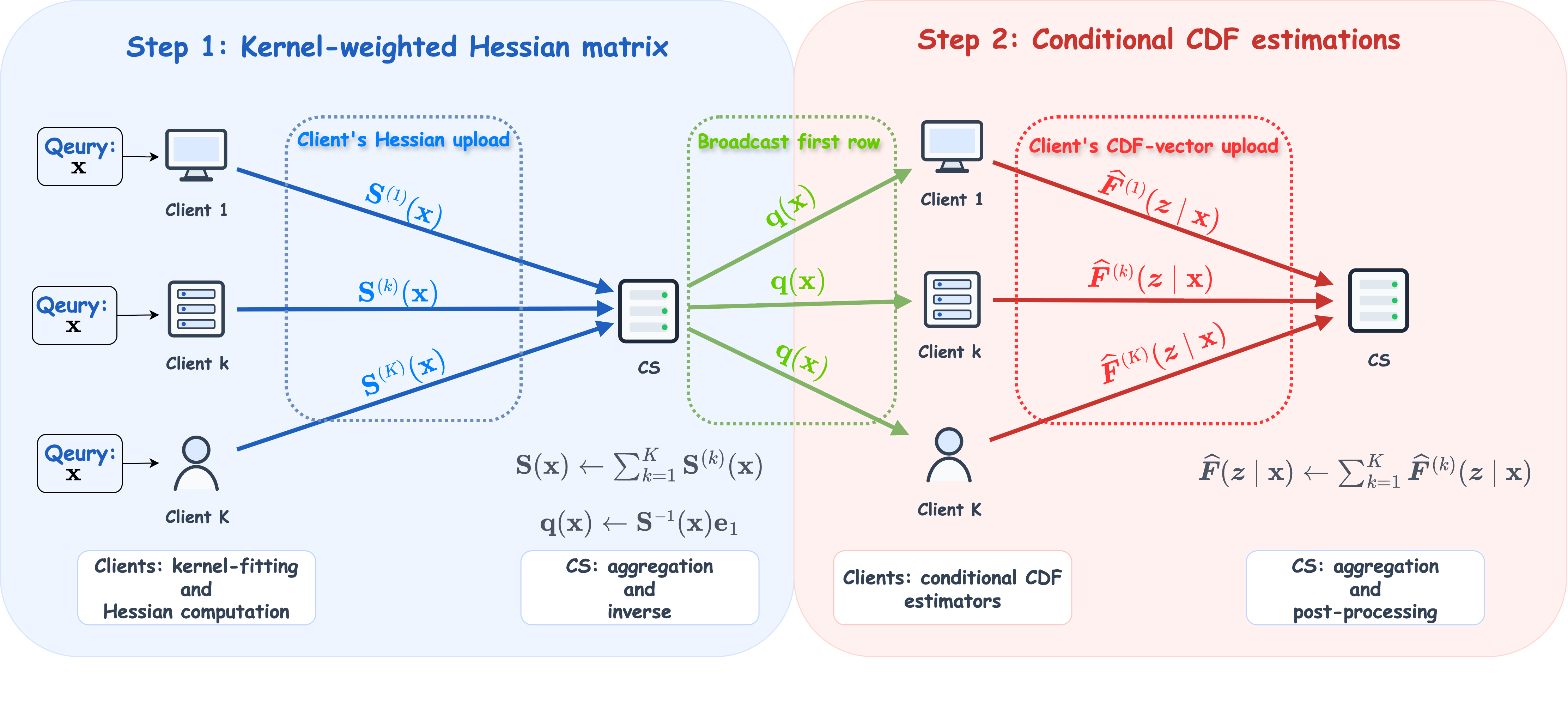}
    \caption{FITS: two-step aggregation. }
    \label{fig:twostep_agg}
\end{figure}
We describe our two-step federated aggregation procedure first, whose purpose is to perform federated estimation of the closed-form expression in \eqref{eq:llr_intercept_closed}.
The first step calculates the clients' kernel-weighted Hessian matrices, and the CS aggregates them and broadcasts the first-row vector of the inverse of the aggregated Hessian matrix.
In the second step, clients upload their vectors of conditional CDF estimates, after which the CS aggregates and post-processes them to obtain the final conditional CDF estimation vector.
Figure~\ref{fig:twostep_agg} illustrates the two-step procedure.

\paragraph{Step 1: aggregation of the kernel-weighted Hessian.}
For each client $k$, define the local block matrix
\begin{equation}\label{eq:local_block}
    \S^{(k)}(\x)
    :=
    \sum_{i}
    \varpi_i^{(k)}\,
    \U_i^{(k)}
    \U_i^{(k)\top}
    \in\mathbb R^{(d+1)\times(d+1)},
\end{equation}
where $\U_i^{(k)}=\bm\iota(\h_{i,:d}^{(k)}-\x)$ and
$\varpi_i^{(k)}$ is the (still abstract) covariate-kernel weight on the
$i$th local sample. We write $\S^{(k)}$ in block form as
\[
    \S^{(k)}(\x)
    =
    \begin{pmatrix}
        S_0^{(k)} & \S_1^{(k)\top}\\
        \S_1^{(k)} & \S_2^{(k)}
    \end{pmatrix},
    \qquad
    S_0^{(k)}=\sum_i \varpi_i^{(k)},\
    \S_1^{(k)}=\sum_i \varpi_i^{(k)}\u_i^{(k)},\
    \S_2^{(k)}=\sum_i \varpi_i^{(k)}\u_i^{(k)}\u_i^{(k)\top},
\]
where $\u_i^{(k)}:=\h_{i,:d}^{(k)}-\x$. The
global weighted Hessian satisfies
\begin{equation}\label{eq:hessian_decomp}
    \U^\top\W\U
    =
    \sum_{k=1}^{K}\S^{(k)}(\x).
\end{equation}
In the first communication round, each client uploads $\S^{(k)}(\x)$---an object of size $(d{+}1)(d{+}2)/2$ when symmetry is exploited---to the CS. 
The CS aggregates the uploads to form $\S(\x)=\sum_k\S^{(k)}(\x)$ and computes the vector
\begin{equation}\label{eq:q_vector}
    \bm q(\x)
    :=
    \bigl(\S(\x)+\lambda_{\rm ridge}\mathbf I_{d+1}\bigr)^{-1}
    \e_1
    \ \in\mathbb R^{d+1},
\end{equation}
where $\lambda_{\rm ridge}\ge 0$ is a small ridge stabilizer, with
$\lambda_{\rm ridge}=0$ corresponding to the unregularized
solution $ \bm q(\x)^\top =\e_1^\top(\U^\top\W\U)^{-1}$. The CS then broadcasts
$\bm q(\x)$ to all clients.

\paragraph{Step 2: aggregation of conditional CDF estimates.}
Given $\bm q(\x)$, the federated weight in
\eqref{eq:llr_intercept_closed} factorizes as
\begin{equation}\label{eq:alpha_federated}
    \alpha_i^{(k)}
    =
    \varpi_i^{(k)}\,
    \bm q(\x)^\top
    \U_i^{(k)},
\end{equation}
which is computable locally from $\varpi_i^{(k)}$, $\U_i^{(k)}$, and
the broadcast $\bm q(\x)$. Each client then computes the
local response-smoothed CDF vector at the target grid
$\bm z=(z_1,\ldots,z_L)\in\mathbb R^L$,
\begin{equation}\label{eq:local_F}
    \widehat{\bm F}^{(k)}(\bm z\mid\x)
    :=
    \sum_{i}
    \alpha_i^{(k)}\,
    \mathcal K\!\left(
        \frac{\bm z -  h_{i,d+1}^{(k)}\mathbf 1_L}{w^{(k)}_z}
    \right)
    \in\mathbb R^{L},
\end{equation}
where $\mathbf 1_L \in \mathbb{R}^L$ is the all-ones vector, $w^{(k)}_z$ is the $k$th client's response bandwidth, and $\mathcal K$ is applied componentwise. The client then uploads $\widehat{\bm F}^{(k)}(\bm z\mid\x)$ to the CS. The CS
aggregates,
\begin{equation}\label{eq:global_F_raw}
    \widehat{\bm F}^{\rm raw}(\bm z\mid\x)
    =
    \sum_{k=1}^{K}
    \widehat{\bm F}^{(k)}(\bm z\mid\x)\in\mathbb R^{L},
\end{equation}
and applies a post-processing operator $\mathcal M_{\rm cdf}$ that
clips to $[0,1]$, isotonizes in $z$, and normalizes the
right endpoint to $1$ to obtain the final estimate
$\widehat{\bm F}(\bm z\mid\x)$. 

\begin{algorithm}[H]
\caption{FITS: Two-Step Federated Estimation of the Conditional CDF}
\label{alg:two_step_fllr}
\begin{algorithmic}[1]
\REQUIRE Query covariate ${\x}\in{\mathcal X}\subset\mathbb R^d$; target grid $\bm z=(z_1,\dots,z_L)\in\mathbb R^L$; response bandwidth $w_z$; clients $k=1,\dots,K$ with continuous hold-out data ${\mathcal H}_k=\{{\h}_i^{(k)}\}_{i=1}^{m_k}$, where ${\h}_i^{(k)}=({\h}_{i,:d}^{(k)},{h}_{i,d+1}^{(k)})$; kernel-bandwidth scheme parameter $r_0 \in \mathbb N$.
\STATE \textbf{----------------- Step 1: aggregation of the kernel-weighted Hessian -----------------}
\FOR{$k=1,\dots,K$ \textbf{in parallel}}
    \STATE According to Eq.~\eqref{eq:client_fitting_func}, run $\mathcal W_k(\bm x;\mathcal H_k,r_0)$ to get the index set of selected samples $\mathcal N_{c_k}^{(k)}\subseteq[m_k]$ and the samples' kernel weights $\varpi_i^{(k)},\ i\in\mathcal N_{c_k}^{(k)}$. 
    \STATE According to the summation Eq.~\eqref{eq:hessian_decomp}, client $k$ obtains $\S^{(k)}( \x)$ and uploads it to CS.
\ENDFOR
\STATE \textbf{// CS aggregates and broadcasts}
\STATE CS computes and broadcasts $q( \x)$ according to Eq.~\eqref{eq:q_vector}.

\STATE \textbf{----------------- Step 2: aggregation of conditional CDF estimates -----------------}
\FOR{$k=1,\dots,K$ \textbf{in parallel}}
    \STATE For each $i\in\mathcal N_{c_k}^{(k)}$, client $k$ computes the federated local linear weight $\alpha^{(k)}_i$ according to Eq.~\eqref{eq:alpha_federated}.
    \STATE Compute and upload the local response-smoothed CDF vector $\widehat{\bm F}^{(k)}(\bm z\mid \x)$ according to Eq.~\eqref{eq:local_F}.
\ENDFOR

\STATE \textbf{// CS aggregates and post-processes}
\STATE According to Eq.~\eqref{eq:global_F_raw}, CS gets $\widehat{\bm F}^{\rm raw}(\bm z\mid \x)$ and applies CDF post-processing $\mathcal M_{\rm cdf}$ to it. 
\RETURN $\widehat{\bm F}(\bm z\mid \x) = \mathcal M_{\rm cdf}[\widehat{\bm F}^{\rm raw}(\bm z\mid \x)]$.
\end{algorithmic}
\end{algorithm}
\begin{rem}[Communication cost]\label{rem:cost} The two-step procedure requires, per query $\x$: (i) Round 1. Each client uploads a symmetric
    $(d{+}1)\times(d{+}1)$ matrix, costing $(d{+}1)(d{+}2)/2$
    scalars per client; the CS broadcasts a $(d{+}1)$-dimensional
    vector $\bm q(\x)$. (ii) Round 2. Each client uploads an $L$-dimensional CDF
    vector with elements $\widehat{\bm F}^{(k)}(z_i\mid\x), \ i\in[L]$. The per-query communication is therefore $O(d^2+L)$ per client, independent of the local sample size $m_k$. 
\end{rem}

The full procedure is given in Algorithm~\ref{alg:two_step_fllr} and the communication cost is $O(d^2+L)$ as stated in Remark~\ref{rem:cost}. 
Here, in Algorithm~\ref{alg:two_step_fllr}, $\mathcal W_k(\bm x;\mathcal H_k,r_0)$ denotes the client-designed procedure described in Subsection~\ref{subsec:bandwidth_schemes}, where clients can choose their own kernel-fitting schemes.
It returns the selected local samples together with their weights $\varpi$, and allows each client to adaptively choose a kernel-fitting scheme, either the fixed-bandwidth scheme or the KNN-kernel scheme.
The fixed-bandwidth scheme uses all local hold-out samples, so that $\mathcal N_{c_k}^{(k)}=[m_k]$. By contrast, the KNN-kernel scheme searches for the neighbors of $\x$ and uses a subset $\mathcal N_{c_k}^{(k)}\subset [m_k]$.

Combining \eqref{eq:hessian_decomp}--\eqref{eq:alpha_federated} with \eqref{eq:llr_intercept_closed} yields the following algebraic equivalence between the federated and pooled estimators.

\begin{prop}[Federated--pooled equivalence]
\label{prop:fed_pooled_equiv}
Fix any query $\x\in\mathcal X$ and threshold
$z\in\mathcal Z$. Suppose the kernel weights $\{\varpi_i^{(k)}: i\in
[m_k],\ k\in[K]\}$ used by the federated algorithm coincide with
those that would be assigned to the same observations under the
centralized rule on the pooled sample, and that
$\lambda_{\rm ridge}=0$ and $\U^\top\W\U$ is invertible. Then
\[
    \widehat F^{\rm raw}(z\mid\x)
    \ =\
    \sum_{k=1}^{K}\widehat F^{(k)}(z\mid\x)
    \ =\
    \hat\beta_0,
\]
where $\hat\beta_0$ is the centralized local linear estimator
\eqref{eq:llr_intercept_closed} computed on the pooled sample.
\end{prop}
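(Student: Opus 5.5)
The argument is purely algebraic: write both sides as the same finite sum over the pooled index set, using the decomposition of the Hessian in \eqref{eq:hessian_decomp} and the closed form \eqref{eq:llr_intercept_closed}. Throughout, fix $\x$ and $z$. Identify the pooled index $i\in[M]$ with the pairs $(k,i)$, $i\in[m_k]$, through a bijection. By hypothesis, under this bijection the weights satisfy $\varpi_i^{(k)}=\varpi_i$. The design rows satisfy $\U_i^{(k)}=\U_i$ and the smoothed indicators satisfy $\mathcal K_i^{(k)}(z)=\mathcal K_i(z)$. For the last identity I will take the response bandwidths $w_z^{(k)}$ equal to the common $w_z$ used in the pooled fit. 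This is implicit in requiring that both estimators use the same rule, and I will state it explicitly.

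\textbf{Step 1 (Hessian).} Summing \eqref{eq:local_block} over $k$ and reindexing gives $\sum_k\S^{(k)}(\x)=\sum_{i=1}^M\varpi_i\U_i\U_i^\top=\U^\top\W\U$, which is \eqref{eq:hessian_decomp}. Take $\lambda_{\rm ridge}=0$ and assume $\U^\top\W\U$ is invertible. Then \eqref{eq:q_vector} gives $\bm q(\x)=(\U^\top\W\U)^{-1}\e_1$. Since $\U^\top\W\U$ is symmetric, its inverse is symmetric as well, so
\[
\bm q(\x)^\top=\e_1^\top(\U^\top\W\U)^{-1}.
\]

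\textbf{Step 2 (weights and aggregation).} Substitute this into \eqref{eq:alpha_federated}. This gives $\alpha_i^{(k)}=\varpi_i\,\e_1^\top(\U^\top\W\U)^{-1}\U_i=\alpha_i$, which is exactly the weight defined in \eqref{eq:llr_intercept_closed}. Next take the $z$-coordinate of \eqref{eq:local_F} and sum over $k$ as in \eqref{eq:global_F_raw}. This yields
\[
\widehat F^{\rm raw}(z\mid\x)=\sum_k\sum_{i\in[m_k]}\alpha_i^{(k)}\mathcal K_i^{(k)}(z)=\sum_{i=1}^M\alpha_i\mathcal K_i(z).
\]
By \eqref{eq:llr_intercept_closed}, the right-hand side equals $\hat\beta_0$. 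The middle equality in the statement is just the definition \eqref{eq:global_F_raw} evaluated at the coordinate $z$.

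\textbf{Main subtlety.} There is no analytic difficulty. The only points needing care are bookkeeping ones: the reindexing bijection; the symmetry of $(\U^\top\W\U)^{-1}$, which is needed to convert $\bm q$ from a column vector to the row $\e_1^\top(\U^\top\W\U)^{-1}$; and the matching of response bandwidths across clients. For the KNN scheme, the sum over $[m_k]$ in Step 2 is replaced by a sum over $\mathcal N_{c_k}^{(k)}$. In that case, the coincidence hypothesis is read as assigning $\varpi_i^{(k)}=0$ to every sample not selected. Under this reading the sums over $\mathcal N^{(k)}_{c_k}$ and over $[m_k]$ agree, and the argument goes through unchanged.
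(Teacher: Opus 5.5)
Your proposal is correct and follows the paper's proof. The paper substitutes the Hessian decomposition into the definition of $\bm q(\x)$ with $\lambda_{\rm ridge}=0$ to get $\bm q(\x)^\top=\e_1^\top(\U^\top\W\U)^{-1}$, concludes $\alpha_i^{(k)}=\alpha_i$, and sums the local CDF contributions over clients; the bookkeeping you make explicit is left implicit there: the symmetry of the inverse, matching the client response bandwidths $w_z^{(k)}$ to the pooled ones (a single common $w_z$ suffices but is not required, since the pooled indicator already allows observation-specific $w_z^{(i)}$), and zero weights for unselected KNN samples.
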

\begin{prf}
Substituting \eqref{eq:hessian_decomp} into \eqref{eq:q_vector}
gives $\bm q(\x)^\top=\e_1^\top(\U^\top\W\U)^{-1}$ when $\lambda_{\rm ridge}=0$.
Hence, \eqref{eq:alpha_federated} equals
the centralized weight $\alpha_i$ in
\eqref{eq:llr_intercept_closed}. Summing \eqref{eq:local_F} over
$k$ reproduces the right-hand side of
\eqref{eq:llr_intercept_closed}.
\end{prf}

Proposition~\ref{prop:fed_pooled_equiv} establishes that Algorithm~\ref{alg:two_step_fllr} is not an approximation of the centralized estimator but an exact reformulation of it under federated data access. 
The practical implication is that all statistical properties derived for the centralized estimator---in particular the uniform convergence rates of Section~\ref{sec:theory}---transfer to the federated estimator without further argument.

\subsection{Client-designed kernel}
\label{subsec:bandwidth_schemes}

In this subsection, we propose two client-designed kernel schemes: the fixed-bandwidth scheme and the KNN-kernel scheme, supporting the $\mathcal W_k(\bm x;\mathcal H_k,r_0)$ procedure in line 3 of Algorithm~\ref{alg:two_step_fllr}.
Clients may choose either scheme to select local samples and compute their kernel weights.
Moreover, both schemes leverage the client's local data to standardize the selected samples and the query point before kernel fitting, thereby improving the stability and effectiveness of kernel fitting.
The standardization procedure is performed entirely on the client side: the marginal means, standard deviations, and raw covariate scales are not transmitted, in accordance with the data-access constraints in Subsection~\ref{subsec:fed_setup}.

\emph{Fixed-bandwidth scheme.}
Under the fixed-bandwidth scheme, each client uses all of its
local hold-out samples, so that
$\mathcal N_{c_k}^{(k)}=[m_k]$, together with a client-specific
covariate bandwidth $w_x^{(k)}>0$. After the covariate coordinates have been locally standardized using the client's marginal means and standard deviations, the weight assigned to observation $i$ is
\begin{equation}\label{eq:fixed_bw_weight}
    \varpi_i^{(k)}
    =
    \bm\kappa\!\left(
        \frac{\bar\x^{(k)}-\bar\h_i^{(k)}}
             {w_x^{(k)}}
    \right)\rho_i^{(k)}, \qquad i \in [m_k]
\end{equation}
where $\bm\kappa(\cdot)=\prod_{j=1}^{d}\kappa_j(\cdot)$ is the product kernel and barred quantities denote locally Z-score standardized coordinates.
$\rho_i^{(k)}$ is the category penalty introduced in Subsection~\ref{subsec:mixed_adaptation} and is used to reduce the kernel-fitting error caused by mismatched categories; if there is no discrete variable, $\rho_i^{(k)} = 1$.
This scheme is conceptually simple and directly inherits the usual fixed-bandwidth local linear interpretation, but it may lead to highly uneven effective sample sizes due to client imbalance.
Similarly, the effective sample size problem also affects the pooled method.
A possible approach is based on KNN-type methods \citep{fan1992variable,benezet2025learning}.

\emph{KNN-kernel scheme.}
Under the KNN-kernel scheme, the effective bandwidth is chosen adaptively at each client based on the distance to the $c_k$-th nearest neighbor of the query point.
The local KNN budgets are allocated in proportion to the hold-out sample sizes:
\begin{equation}\label{eq:knn_budget}
    c_k
    =
    \max\!\left\{
        c_0,\
        \min\!\left(
            \big\lfloor \frac{m_k}{M} r_0\big\rceil,\
            m_k
        \right)
    \right\},
\end{equation}
where $r_0\in \mathbb N$ is a global KNN budget and $c_0  >1$ is the minimum number of nearest neighbors. 
Hence, the total number of observations contributing to the federated local fit is approximately $r_0$, matching the budget of a centralized KNN rule with total sample size $M$.
After locally standardizing the covariate coordinates, client $k$ selects the $c_k$ nearest neighbors of the query point and sets
\[
    r_{c_k}^{(k)}(\bar\x)
    =
    \max_{i\in\mathcal N_{c_k}^{(k)}}
    \|\bar\h_i^{(k)}-\bar\x^{(k)}\|.
\]
The corresponding kernel weight is
\begin{equation}\label{eq:knn_weight}
    \varpi_i^{(k)}
    =
    \bm\kappa\!\left(
        \frac{\bar\h_i^{(k)}-\bar\x^{(k)}}
             {\eta\,r_{c_k}^{(k)}(\bar\x)}
    \right)\rho_i^{(k)},
    \qquad
    i\in\mathcal N_{c_k}^{(k)},
\end{equation}
where $\eta>0$ is a fixed kernel-scale parameter. Typically, a bounded-support kernel $\bm \kappa(\cdot)$ is used for the KNN-kernel scheme.

Based on these schemes, we define a kernel-fitting function for each client that outputs the index set of selected samples and their kernel weights:
\begin{equation}\label{eq:client_fitting_func}
    \mathcal W_k(\bm x;\mathcal H_k,r_0) \to N_{c_k}^{(k)}, \{\varpi_i^{(k)}\}_i^{N_{c_k}^{(k)}}, \ k\in [K],
\end{equation} where $r_0 \in \mathbb N$, $r_0>0$ selects the KNN-kernel scheme, whereas $r_0=0$ selects the fixed-bandwidth scheme.
\begin{rem}
    A separate design choice concerns the scale on which the local linear expansion is formed.
    The kernel weights $\varpi_i^{(k)}$ are evaluated in the locally standardized covariate space, but the local linear design vector $\U_i^{(k)}=\bm\iota(\h_{i,:d}^{(k)}-\x)$ is constructed in the original coordinate space.
    This mismatch is intentional.
    Standardization stabilizes the kernel-fitting step under heterogeneous client scales;
    keeping the design in the native scale ensures that the local linear expansion targets $F(z\mid\x)$ in the original covariate system, so that the federated estimator remains an estimator of the population conditional distribution on ${\mathcal X}$ rather than on a client-dependent rescaling thereof.
\end{rem}

\subsection{Federated Synthetic Data Generation}
\label{subsec:generation}

\begin{figure}[ht]
    \centering
    \includegraphics[width=\textwidth]{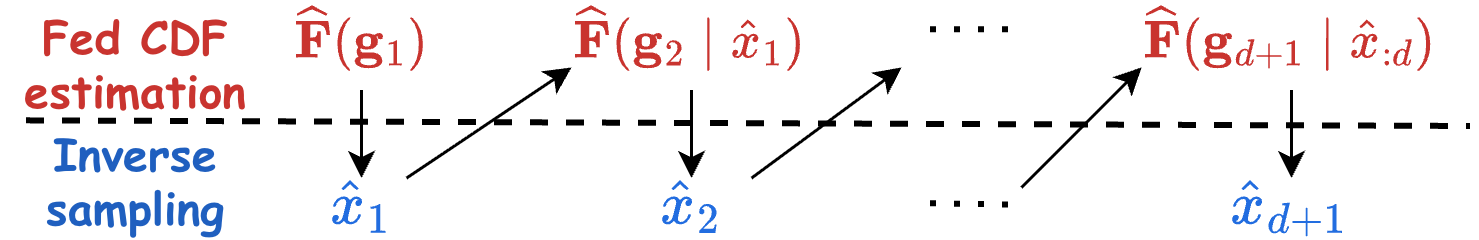}
    \caption{FITS: generation of a single synthetic sample by the chain rule. }
    \label{fig:syn}
\end{figure}
We combine the chain rule with inverse sampling from the federated conditional CDF estimator $\widehat{\bm F}(\bm z\mid\x)$ to generate synthetic data.
Without loss of generality, we assume that the chain order follows the variable sequence from $\X_1$ to $\X_{d+1}$ and denote $\X_{:j}=(\X_1,\ldots,\X_j)$.
The $j$-th sampling step is based on the one-dimensional conditional distribution function given the previously sampled $j-1$ dimensions:
\begin{equation}\label{eq:inv_sampling}
    \hat{x}_j = \widehat{\bm F}^{-1}
    (U \mid\hat{\x}_{:j-1}=(\hat{x}_1,\ldots,\hat{x}_{j-1})),
\end{equation}
where $U\sim \mathrm{Unif}[0,1]$, $\hat{x}_i$ denotes the sampled value of the $i$-th coordinate, and $\widehat{\bm F}^{-1}(\cdot \mid \hat{\x}_{:j-1})$ denotes the estimated inverse conditional CDF given the event $\hat{\x}_{:j-1}=(\hat{x}_1,\ldots,\hat{x}_{j-1})$.
For notational convenience, we denote $\widehat{\bm F}(z \mid \hat{\x}_{:0})$ as the unconditional CDF, which is equivalent to the CDF estimate for the first variable $\widehat{\bm F}(z)$.
For conditional CDF estimation, we estimate the conditional CDF using Algorithm~\ref{alg:two_step_fllr} and invert the estimate using linear interpolation.
For the first dimension, we use the following smooth KDE-based CDF estimator:
\begin{equation}\label{eq:first_dim_kde}
    \widehat{\bm F}(z)
    =
    \frac{1}{M}
    \sum_{k=1}^{K}
    \widehat{\bm F}^{(k)}(z)
    =
    \frac{1}{M}
    \sum_{k=1}^{K}
    \left[
    \sum_{i=1}^{m_k}
    \mathcal{K}
    \left(
    \frac{z-h_{i,1}^{(k)}}{w_z^{(k)}}
    \right)
    \right],
\end{equation}
where $\mathcal{K}(\cdot)$ is the kernel CDF.
The federated CDF estimation in this procedure is straightforward: each client uploads the estimated values $\widehat{\bm F}^{(k)}(z_\ell)$ on the grid points $\bm z=(z_1,\ldots,z_L)$.
Values between adjacent grid points are obtained by linear interpolation.
Specifically, if
$
U\in
(\widehat{\bm F}(z_{\ell-1}\mid\hat{\x}_{:j-1}),
\widehat{\bm F}(z_{\ell}\mid\hat{\x}_{:j-1})),
$
then
\begin{equation}\label{eq:interpolation}
    \hat{x}_j
    =
    z_{\ell-1}
    +
    \frac{
    U-\widehat{\bm F}(z_{\ell-1}\mid\hat{\x}_{:j-1})
    }{
    \widehat{\bm F}(z_\ell\mid\hat{\x}_{:j-1})
    -
    \widehat{\bm F}(z_{\ell-1}\mid\hat{\x}_{:j-1})
    }
    (z_\ell-z_{\ell-1}),
    \qquad \ell\in[L].
\end{equation}

\begin{algorithm}[ht]
\caption{FITS: Federated Synthetic Data Generation}
\label{alg:chain_generation}
\begin{algorithmic}[1]
\REQUIRE Number of synthetic samples $B$; coordinate grids $\mathcal G=\{\bm g_j\}_{j=1}^{d+1}$ in the chain order; clients $k=1,\ldots,K$ with hold-out data $\mathcal H_k=\{\h_i^{(k)}\}_{i=1}^{m_k}$, and the inputs required by Algorithm~\ref{alg:two_step_fllr}.
\FOR{$j=1,2,\ldots,d+1$}
    \FOR{$b=1,\ldots,B$}
        \STATE Estimate the CDFs in a federated manner
        \[\begin{cases}
            \widehat{\bm F}(\bm{g}_1) \text{ by Eq.~\eqref{eq:first_dim_kde}} , & j=1,\\[1mm]
                \widehat{\bm F}(\bm{g}_j \mid\hat{\x}_{b,:j-1} = (\hat{x}_{b,1},...,\hat{x}_{b,j-1})) \text{ by Algorithm~\ref{alg:two_step_fllr}}, & j>1.
        \end{cases} 
        \]

        \STATE CS draws $u_{bj}\sim\mathrm{Unif}(0,1)$ and samples
        \[
            \hat x_{b,j}
            \leftarrow
                \widehat{\bm F}^{-1}(u_{bj} \mid\hat{\x}_{:j-1} = (\hat{x}_1,...,\hat{x}_{j-1})) \text{ by Eq.~\eqref{eq:inv_sampling} and Eq.~\eqref{eq:interpolation}}.
        \]
    \ENDFOR
\ENDFOR

\RETURN  Synthetic data $\mathcal{B}=\{\hat{\x}_b = (\hat{x}_{b,1},...,\hat{x}_{b,d+1})\}_{b=1}^B$.
\end{algorithmic}
\end{algorithm}
\begin{rem}[Cost of synthetic data generation]
For $D$ variables, $B$ synthetic records, $K$ clients, and $L$ response grid points, each conditional CDF query with $r$ conditioning variables requires each client to upload $(r+1)(r+2)/2$ Hessian entries and $L$ CDF values. 
Thus, direct execution of the conditional generation steps in Algorithm~\ref{alg:chain_generation} requires
$
O\!\left(KB(D^3+DL)\right)
$
uploaded scalars in total.

\end{rem}

\section{Generalization to Mixed-Type Data}\label{sec:general_mix}
In this section, we propose a continualization method for discrete variables.
In Subsection~\ref{subsec:continualize_univariate}, the proposed continualization procedure lifts discrete variables into a continuous space and allows exact inversion back to the original discrete space without information loss.
In particular, we characterize continualization at the density-function level, thereby unifying mixed-type data within a continuous framework in Subsection~\ref{subsec:continualize_multi}.
For illustration, we provide a bivariate example in Subsection~\ref{subsec:toy_example}.
For practical implementation, we further develop a penalty-adaptive kernel method for mixed-type covariates in Subsection~\ref{subsec:mixed_adaptation}.
This method effectively mitigates the impact of mismatches between continualized discrete samples used for kernel fitting and the query point.

\subsection{Continualization of discrete variables}
\label{subsec:continualize_univariate}

The lifting is constructed independently for each discrete coordinate; hence, this subsection focuses on the univariate case.
Discrete variables include ordered variables and unordered (nominal/categorical) variables.
For ordered variables, we lift the discrete support into continuous adjacent intervals.
For unordered variables, we fix an arbitrary one-to-one numerical coding to define the continualization intervals.

\paragraph{Ordered discrete variables.}
Let $\Upsilon$ be a discrete random variable with finite ordered
support
\[
    \varUpsilon=\{\upsilon_1,\ldots,\upsilon_L\},
    \qquad
    \upsilon_1<\cdots<\upsilon_L,
    \qquad
    p_\ell=\mathbb P(\Upsilon=\upsilon_\ell),\quad
    \sum_{\ell=1}^L p_\ell=1.
\]
Set $\upsilon_0:=\upsilon_1-1$ and define the disjoint intervals $I_\ell$ and an interval-valued random variable $I_\Upsilon$ corresponding to $\Upsilon$:
\[
    I_\ell=(\upsilon_{\ell-1},\upsilon_\ell],
    \qquad
    I_\Upsilon=(\lfloor \Upsilon\rfloor,\Upsilon],
    \qquad
    \ell\in[L],
\]
where $\lfloor \Upsilon\rfloor$ represents the predecessor
$\upsilon_{\ell-1}$ of $\Upsilon=\upsilon_\ell$ in the ordered support, and
$|I_\ell|=\upsilon_\ell-\upsilon_{\ell-1}$.
Let $U\sim\mathrm{Unif}(0,|I_\Upsilon|)$ be independent of $\Upsilon$, and define the continualized variable
\begin{equation}\label{eq:continualize_univ}
    Y=\Upsilon-U.
\end{equation}
Then $Y$ takes values in $\bigcup_{\ell=1}^L I_\ell$, and on each
interval $I_\ell$, the mass $p_\ell$ is spread uniformly. The
density and distribution function of $Y$ are
\begin{equation*}
    f_Y(y)
    =
    \sum_{\ell=1}^L (p_\ell/|I_\ell|)\,\mathbb I\{y\in I_\ell\},
    \qquad
    F_Y(y)
    =
    \sum_{r<\ell}p_r+p_\ell\frac{(y-\upsilon_{\ell-1})}{|I_\ell|},
    \quad y\in I_\ell,
\end{equation*}
so $f_Y$ is piecewise constant and $F_Y$ is piecewise affine.

\paragraph{Inverse map via the ceiling operator.}
The original discrete variable is recovered from its continualization by
\begin{equation}\label{eq:ceiling}
    \Upsilon=\mathrm{ceil}(Y)
    :=\upsilon_\ell
    \text{ on the event }Y\in I_\ell.
\end{equation}
Equivalently, $\mathrm{ceil}$ is the canonical inverse map between
the partition $\{I_\ell\}_{\ell=1}^L$ of
$\bigcup_\ell I_\ell$ and the support $\varUpsilon$.

\paragraph{Unordered (nominal) discrete variables.}
For an unordered categorical variable with support
$\{c_1,\ldots,c_L\}$, the same construction applies after fixing
an arbitrary one-to-one coding $c_\ell\leftrightarrow\ell$.
We set
\[
    I_\ell=(\ell-1,\ell],
    \qquad
    Y=\ell-U \text{ on the event } \Upsilon=c_\ell,
    \qquad
    U\sim\mathrm{Unif}(0,1),
\]
and recover the original category through
\begin{equation}\label{eq:cat_operator}
    \Upsilon=\mathrm{cat}(Y):=c_{\lceil Y\rceil}.
\end{equation}
Because the $\mathrm{ceil}$ operator is also included in the $\mathrm{cat}$ operation, we use $\mathrm{cat}$ as the unified inverse operator.
The coding is used to define the partition and inverse map and introduces a Euclidean metric.
We summarize the properties of the univariate construction here.
The more general multidimensional construction is presented in Subsection~\ref{subsec:continualize_multi}, together with the corresponding theoretical results, Theorem~\ref{thm:continuity_continualization_pdf} and Corollary~\ref{cor:Cr_continualization}.

\begin{prop}[Univariate continualization]
\label{prop:univariate_continualize}
Let $\Upsilon$ be a discrete variable on
$\varUpsilon=\{\upsilon_1,\ldots,\upsilon_L\}$ (ordered or
nominal) and let $Y$ be its continualization as in
\eqref{eq:continualize_univ}. Then:
\begin{enumerate}[(i)]
    \item \textbf{Mass preservation.}
    $\mathbb P(Y\in I_\ell)=p_\ell=\mathbb P(\Upsilon=\upsilon_\ell)$
    for every $\ell\in[L]$.
    \item \textbf{Exact inverse.}
    For ordered variables, $\mathrm{ceil}(Y)=\Upsilon$ almost surely; for nominal variables, $\mathrm{cat}(Y)=\Upsilon$ almost surely.
    \item \textbf{Piecewise regularity.}
    The density $f_Y$ is piecewise constant on the partition
    $\{I_\ell\}_\ell$, and the distribution function $F_Y$ is
    continuous on $\bigcup_\ell I_\ell$ and piecewise affine
    within each $I_\ell$.
\end{enumerate}
\end{prop}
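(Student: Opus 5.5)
The plan is to compute the conditional law of $Y$ given $\Upsilon$ and then average over $\Upsilon$. Throughout, nominal variables are treated as ordered ones with $\upsilon_\ell=\ell$ and $\upsilon_0=0$, so that $I_\ell=(\ell-1,\ell]$ and $|I_\ell|=1$. This lets a single argument handle both cases, with $\mathrm{cat}$ reducing to $c_{\mathrm{ceil}(Y)}$.

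\emph{Conditional law of $Y$.} Since $U$ is independent of $\Upsilon$ only once its scale $|I_\Upsilon|$ is fixed, I would state precisely what is meant: conditionally on $\Upsilon=\upsilon_\ell$, $U\sim\mathrm{Unif}(0,|I_\ell|)$. Equivalently, $U=|I_\Upsilon|\,V$ with $V\sim\mathrm{Unif}(0,1)$ independent of $\Upsilon$. Then on $\{\Upsilon=\upsilon_\ell\}$,
\[
Y=\upsilon_\ell-|I_\ell|V\sim\mathrm{Unif}\bigl[\upsilon_{\ell-1},\upsilon_\ell\bigr),
\]
and $Y\in I_\ell$ except on the null event $\{V=1\}$ (or $\{V=0\}$, depending on the convention for the endpoints). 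This is the one point that needs care: the half-open intervals $(\upsilon_{\ell-1},\upsilon_\ell]$ and the uniform law disagree only on a Lebesgue-null endpoint, so every identity below holds almost surely.

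\emph{(i) and (ii).} Because the $I_\ell$ are pairwise disjoint, the conditional computation gives $\mathbb P(Y\in I_r\mid\Upsilon=\upsilon_\ell)=\mathbb I\{r=\ell\}$ almost surely. The law of total probability then yields $\mathbb P(Y\in I_\ell)=p_\ell$, which is (i). For (ii), the same identity shows $\{Y\in I_\ell\}=\{\Upsilon=\upsilon_\ell\}$ up to null sets. By the definition \eqref{eq:ceiling}, $\mathrm{ceil}(Y)=\upsilon_\ell=\Upsilon$ on that event, and summing over $\ell$ gives the almost-sure equality. The nominal case follows in the same way from \eqref{eq:cat_operator}, since $\lceil Y\rceil=\ell$ exactly when $Y\in(\ell-1,\ell]$.

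\emph{(iii).} Mixing the conditional uniform densities gives
\[
f_Y(y)=\sum_\ell p_\ell\,|I_\ell|^{-1}\,\mathbb I\{y\in I_\ell\},
\]
which is constant on each $I_\ell$. Integrating this density, for $y\in I_\ell$ we get $F_Y(y)=\sum_{r<\ell}p_r+p_\ell(y-\upsilon_{\ell-1})/|I_\ell|$, which is affine within $I_\ell$. To obtain continuity on the union $\bigcup_\ell I_\ell=(\upsilon_0,\upsilon_L]$, I would check the matching at each junction point $\upsilon_\ell$. Approaching from inside $I_\ell$, the value at $y=\upsilon_\ell$ is $\sum_{r\le\ell}p_r$. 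The right limit from $I_{\ell+1}$ is $\sum_{r\le \ell}p_r+0$, the same value. Alternatively, continuity follows directly from absolute continuity of $Y$, since $F_Y$ is the integral of a bounded density. I do not expect a genuine obstacle; the only subtlety is the endpoint/null-set bookkeeping in the first step.
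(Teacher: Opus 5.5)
Your proposal is correct and follows the paper's route. The paper gives no separate proof and treats (i)--(iii) as immediate from the displayed formulas for $f_Y$ and $F_Y$; your argument is exactly the verification of those formulas, obtained by conditioning on $\Upsilon$, mixing the uniform laws and integrating. Your reading of $U$ as $|I_\Upsilon|V$ with $V\sim\mathrm{Unif}(0,1)$ independent of $\Upsilon$ correctly makes the paper's loose ``independent of $\Upsilon$'' precise, matches how the paper later treats the response continualization, and the endpoint issue you flag concerns only a null set.
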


Properties~(i)--(iii) follow directly from the definition and therefore require no further proof.

\subsection{Multidimensional continualization and block structure}
\label{subsec:continualize_multi}

We now extend the univariate construction to the mixed-type
covariate vector $\X$. Write
\[
    \X=\bigl(\X^{(c)},\X^{(d)}\bigr),
\]
where $\X^{(c)}\in\mathcal X_c\subset\mathbb R^{d-\tilde d}$
collects the $d-\tilde d$ continuous coordinates and
$\X^{(d)}$ collects the $\tilde d$ discrete coordinates. For
each discrete coordinate $j\in[\tilde d]$, let
$\varUpsilon_j=\{\upsilon_{j,1},\ldots,\upsilon_{j,\tau_j}\}$
denote its support, and let
$\mathcal Y_j=\bigcup_{\ell=1}^{\tau_j}I_{j,\ell}$ denote the union
of its continualization intervals as defined in
Section~\ref{subsec:continualize_univariate}. The product support
of the discrete covariates is
\[
    \varUpsilon
    =
    \prod_{j=1}^{\tilde d}\varUpsilon_j .
\]
For each
$\bm\upsilon=(\upsilon_{1,\ell_1},\ldots,\upsilon_{\tilde d,\ell_{\tilde d}})
\in\varUpsilon$, define the associated block
\begin{equation}\label{eq:block}
    I(\bm\upsilon)
    =
    I_{1,\ell_1}\times\cdots\times I_{\tilde d,\ell_{\tilde d}}
    \ \subset\ \mathcal Y:=\prod_{j=1}^{\tilde d}\mathcal Y_j .
\end{equation}
Under the left-open right-closed convention, the blocks
$\{I(\bm\upsilon)\}_{\bm\upsilon\in\varUpsilon}$ form a partition
of $\mathcal Y$, and the indicator
$\mathbb I\{\Y\in I(\bm\upsilon)\}$ is equivalent to
$\mathbb I\{\X^{(d)}=\bm\upsilon\}$.
The block $I(\bm\upsilon)$ also defines a corresponding volume
$|I(\bm\upsilon)|$, which is used to normalize the lifted density function.

Let $\Y$ denote the joint continualization of $\X^{(d)}$, obtained
by applying \eqref{eq:continualize_univ} coordinatewise with
mutually independent uniform random variables. The resulting
continualized vector is
\begin{equation}\label{eq:continualize_multi}
    \widetilde\X
    =
    \bigl(\X^{(c)},\Y\bigr)
    \in\mathcal X_c\times\mathcal Y
    =:\widetilde{\mathcal X}.
\end{equation}
Throughout the paper, $\x$ denotes a generic point in the original
mixed-type covariate space $\mathcal X$, while
$\tilde\x=(\x^{(c)},\y)$ denotes the corresponding point in
$\widetilde{\mathcal X}$. The inverse discrete map
\[
    \tilde\x
    \mapsto
    \bigl(\x^{(c)},\mathrm{cat}_1(y_1),\ldots,
    \mathrm{cat}_{\tilde d}(y_{\tilde d})\bigr)
\]
recovers the original mixed-type variable exactly.

The block structure naturally separates $\widetilde{\mathcal X}$
into an interior region and a boundary region. Define
\begin{equation}\label{eq:interior}
    \widetilde{\mathcal X}^\circ
    :=
    \mathcal X_c\times
    \mathcal Y^\circ
    =
    \mathcal X_c\times
    \bigcup_{\bm\upsilon\in\varUpsilon}
    I(\bm\upsilon)^\circ,
\end{equation}
where $I(\bm\upsilon)^\circ$ denotes the topological interior of
$I(\bm\upsilon)$ and $\mathcal Y^\circ$ denotes the union of the interiors of all blocks.
Its complement is
\[
    \widetilde{\mathcal X}^\bullet
    :=
    \mathcal X_c\times\mathcal Y^\bullet,
    \qquad
    \mathcal Y^\bullet
    :=
    \mathcal Y\setminus\mathcal Y^\circ .
\]
This set is the union of all block boundaries in the $\y$-coordinates
and will be referred to as the covariate jumping set (block-boundary set), since
the lifted joint density of $\widetilde\X$ may exhibit jump discontinuities
across $\widetilde{\mathcal X}^\bullet$.

This construction yields a piecewise smooth lifted law. On each
block $\mathcal X_c\times I(\bm\upsilon)^\circ$, the joint density
of $\widetilde\X$ inherits the smoothness of the conditional density of
$\X^{(c)}$ given $\X^{(d)}=\bm\upsilon$, because the continualization
is uniform within the corresponding discrete cell. Across
$\widetilde{\mathcal X}^\bullet$, the density may jump, since the
conditional densities of $\X^{(c)}$ on adjacent blocks need not
match. Thus the lifted law is generally piecewise smooth rather
than globally smooth. A formal piecewise-$C^r$ regularity condition is given in
Section~\ref{sec:theory}, where it supports the blockwise uniform convergence analysis of conditional CDF estimation.

The same continualization device is applied to the response when
$Z$ is discrete. Specifically, if
$Z\in\{z_1,\ldots,z_L\}$ with $z_1<\cdots<z_L$, set
$z_0:=z_1-1$, $I_i=(z_{i-1},z_i]$, and define
$\tilde Z=Z-V$, where $V\sim\mathrm{Unif}(0,|I_Z|)$ is independent of
$(Z,\widetilde\X)$. Then
$\tilde Z\in\widetilde{\mathcal Z}:=\bigcup_{i=1}^L I_i$, and the
target in \eqref{eq:target} is estimated as
$F(\tilde z\mid\tilde\x)$. If $Z$ is already continuous, no
response continualization is performed and we set $\tilde z=z$. Section~\ref{sec:theory} covers both cases.

\subsection{An illustrative bivariate example}
\label{subsec:toy_example}
\begin{figure}[H]
    \centering

    \begin{subfigure}[t]{0.72\linewidth}
        \centering
        \includegraphics[width=\textwidth]{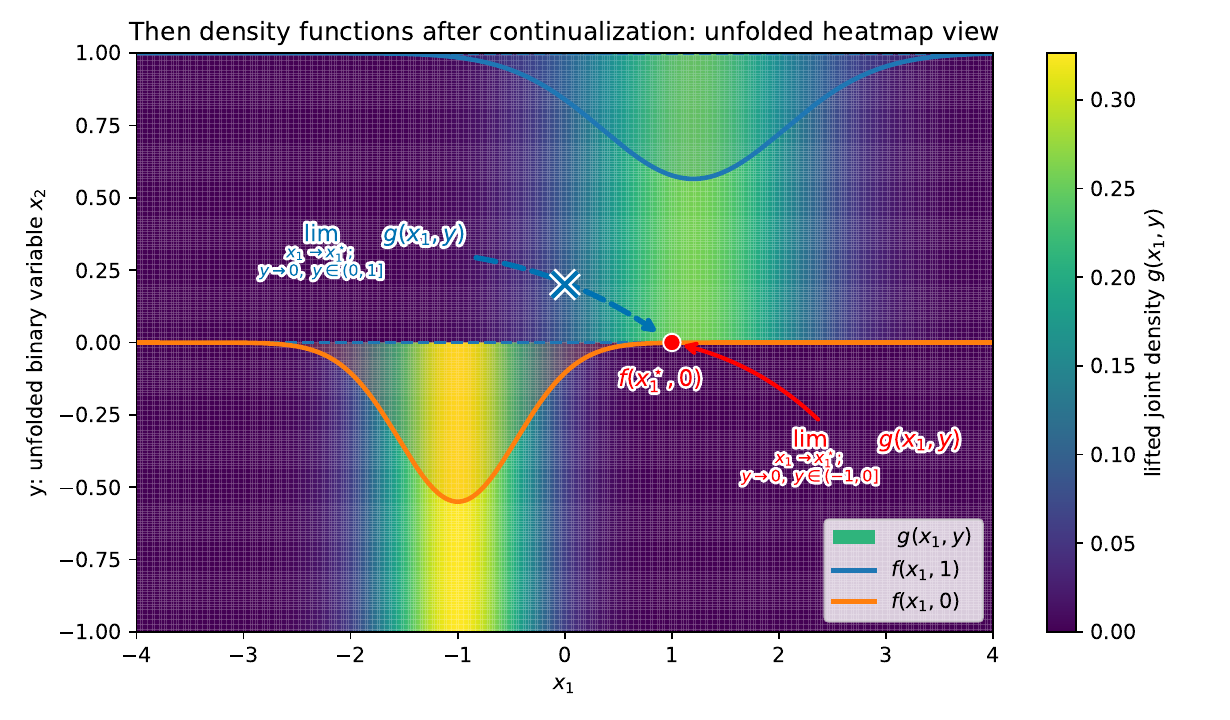}
        \caption{Lifted density after binary continualization.}
        \label{fig:toy_density_heatmap}
    \end{subfigure}

    \vspace{0.6em}

    \begin{subfigure}[t]{0.47\linewidth}
        \centering
        \includegraphics[width=\textwidth]{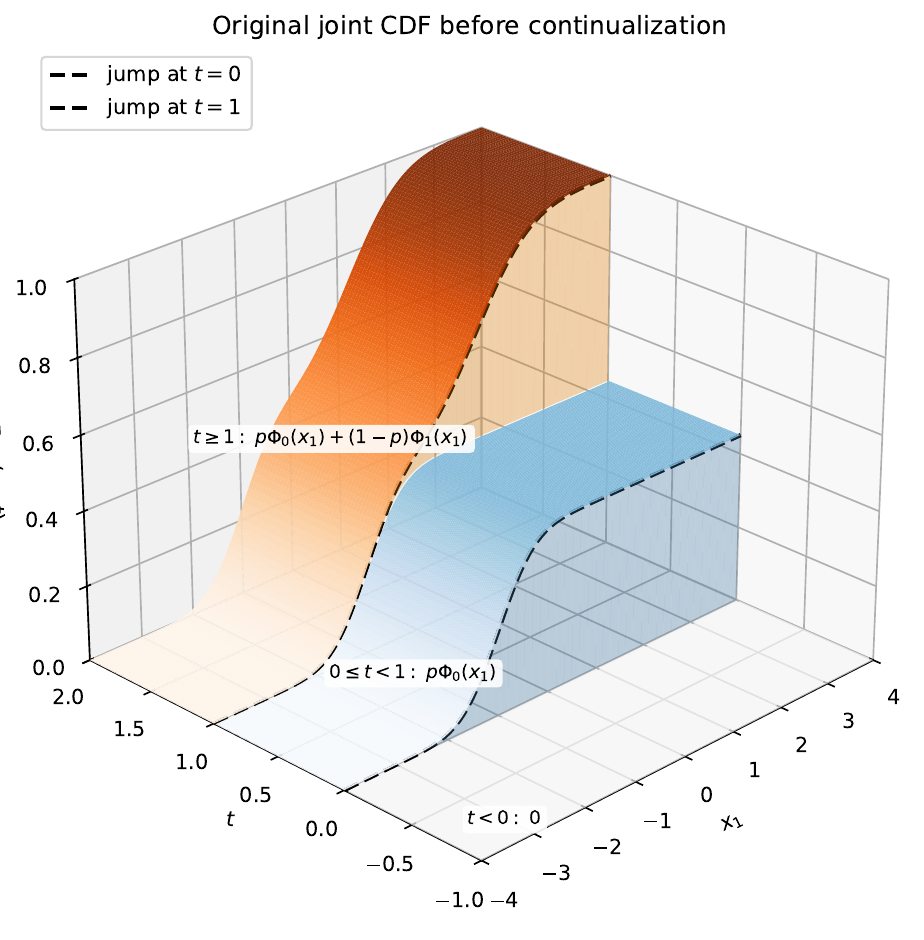}
        \caption{Original joint CDF.}
        \label{fig:toy_discrete_cdf}
    \end{subfigure}
    \hfill
    \begin{subfigure}[t]{0.47\linewidth}
        \centering
        \includegraphics[width=\textwidth]{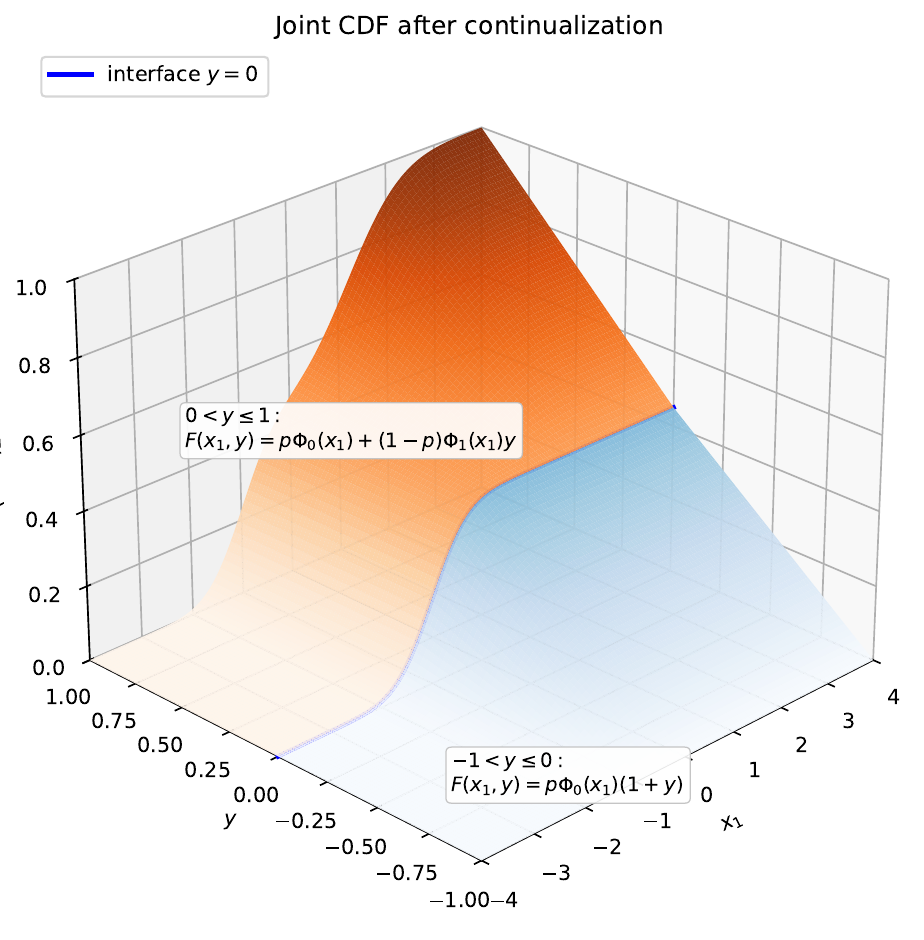}
        \caption{Continualized joint CDF.}
        \label{fig:toy_continualized_cdf}
    \end{subfigure}

    \caption{Illustration of binary continualization in a bivariate model with one continuous $X_1$ and one binary variable $X_2\in\{0,1\}$. Panel~(a) shows the lifted density
    $g(x_1,y)=f(x_1,0)\mathbb I\{{y\in(-1,0]}\} +f(x_1,1)\mathbb I \{y\in(0,1]\}$, where $f(x_1,0)=p f(x_1 \mid 0)$ and
    $f(x_1,1)=(1-p)f(x_1 \mid 1)$. The line $y=0$
    marks the internal block interface across which the lifted
    density may jump. Panel~(b) shows the original joint CDF
    $F_{\mathrm{orig}}(x_1,t)=\mathbb P(X_1\le x_1,X_2\le t)$,
    which is stepwise in the discrete coordinate. Panel~(c) shows
    the lifted CDF after setting $Y=X_2-U$, where
    $U\sim\mathrm{Unif}(0,1)$ is independent of $(X_1,X_2)$; on
    $(-1,0]$ and $(0,1]$, it is affine in $y$, with forms
    $p\Phi_0(x_1)(1+y)$ and
    $p\Phi_0(x_1)+(1-p)\Phi_1(x_1)y$, respectively.}
    \label{fig:toy_continualization}
\end{figure}

To fix ideas, consider a bivariate covariate $\x=(x_1,x_2)$, where $x_1$ is continuous on a compact set $\mathcal X_1\subset\mathbb R$ and $x_2\in\{0,1\}$ is binary with $\mathbb P(x_2=0)=p$. Suppose that the joint mass-density $f(x_1,x_2)$ is continuous in $x_1$ for each value of $x_2$ and the marginal density $f(x_1)>0$ is continuous on $\mathcal X_1$.

Then
\[
    f(x_1,x_2)
    =
    f(x_1,0)\mathbb I\{x_2=0\}
    +
    f(x_1,1)\mathbb I\{x_2=1\}.
\]

Under binary continualization, let $Y=X_2-U$, where $U\sim\mathrm{Unif}(0,1)$ is independent of $X_2$. The continualized variable takes values in $(-1,0]\cup(0,1]$, with blocks $I(0)=(-1,0]$ and $I(1)=(0,1]$, and the lifted covariate is $\tilde\x=(x_1,y)\in\mathcal X_1\times(-1,1]$. The lifted joint density has the blockwise form
\[
    g(x_1,y)
    =
    f(x_1,0)\mathbb I\{y\in I(0)\}
    +
    f(x_1,1)\mathbb I\{y\in I(1)\}.
\]

Thus $g$ is continuous, and as smooth as $f(\cdot,j)$, within each of the two open blocks $\mathcal X_1\times(-1,0)$ and $\mathcal X_1\times(0,1)$, but may jump across the internal interface $\mathcal X_1\times\{0\}$ whenever $f(x_1,0)\neq f(x_1,1)$. 
Figure~\ref{fig:toy_continualization} illustrates this lifted density and the corresponding change from a stepwise discrete-coordinate CDF to a piecewise affine continualized CDF.
The same blockwise interpretation applies to conditional quantities. 
For example, on the block corresponding to $x_2=0$,
\[
    g(x_1\mid y)
    =
    f(x_1\mid 0)
    =
    \frac{f(x_1,0)}{p},
    \qquad y\in(-1,0],
\]
and
\[
    g(y\mid x_1)
    =
    \mathbb P(X_2=0\mid x_1)
    =
    \frac{f(x_1,0)}{f(x_1)},
    \qquad y\in(-1,0].
\]

These lifted conditional functions are continuous in $x_1$ within the block and constant in the continualized coordinate $y$. The block $\mathcal X_1\times(0,1]$ is treated analogously, with $p$ replaced by $1-p$ and $f(x_1,0)$ replaced by $f(x_1,1)$. Finally, applying the inverse categorization map, here the ceiling map in \eqref{eq:ceiling}, collapses $\mathcal X_1\times(-1,0]$ and $\mathcal X_1\times(0,1]$ back to $\mathcal X_1\times\{0\}$ and $\mathcal X_1\times\{1\}$, respectively. Hence continualization creates a blockwise continuous representation without losing the original mixed-type information.

\subsection{Penalty-adaptive kernel}
\label{subsec:mixed_adaptation}

The continualization in Section~\ref{subsec:continualize_univariate} maps each discrete value $\upsilon_\ell$ to the interval $I_\ell=(\upsilon_{\ell-1},\upsilon_\ell]$, and the continualized query value $y_q$ lies somewhere inside the interval associated with its category.
If $y_q$, a continualized coordinate of the query point, is close to an interval boundary, a Euclidean KNN search in the continualized space may select observations that are geometrically close but belong to different discrete categories.
For the fixed-bandwidth scheme, kernel evaluation at $y_q$ may also assign excessive weight to mismatched neighboring observations, thereby affecting the effective fitting weights of observations with matched categories.
Such neighbors are undesirable because locality should respect both the continualized geometry and the recovered discrete labels.

\begin{figure}[ht]
    \centering
    \includegraphics[width=\textwidth]{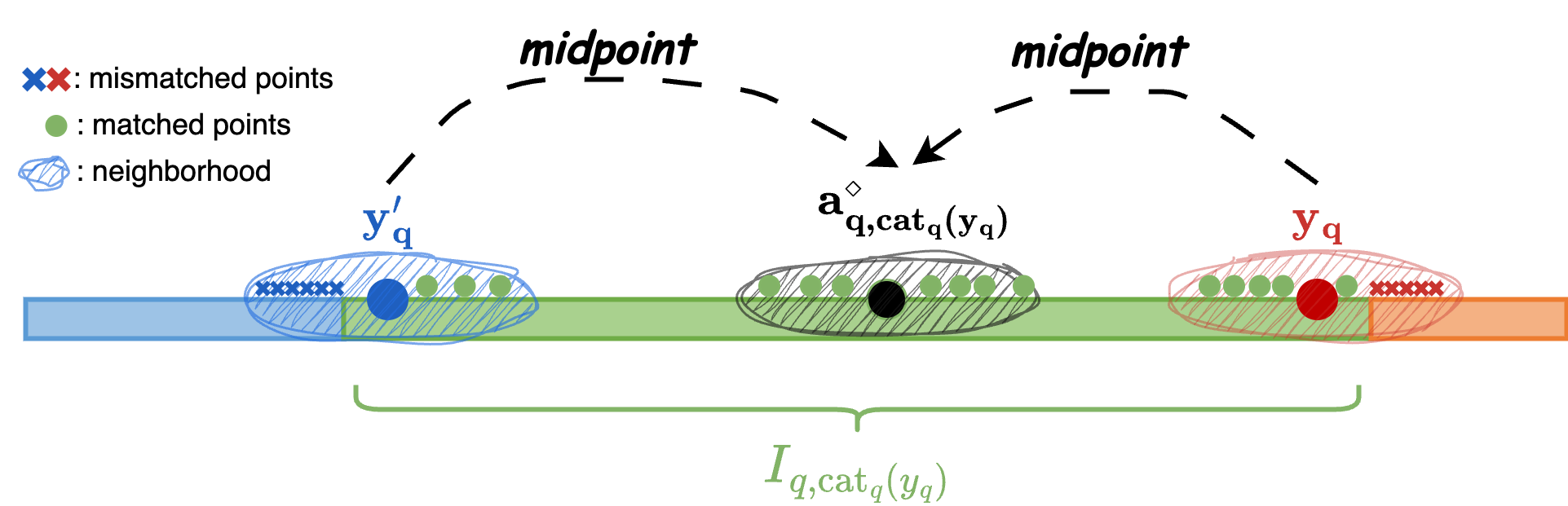}
    \caption{Boundary mismatches.}
    \label{fig:mid_point}
\end{figure}

We address this boundary effect by replacing each discrete covariate coordinate of the query point with the center of its continualization interval before standardization, neighbor selection, and kernel evaluation:
\begin{equation}\label{eq:interval_center}
    y_q^{\diamond}
    \leftarrow
    a_{q,\mathrm{cat}_q(y_q)}^{\diamond}
    :=
    \text{midpoint of }
    I_{q,\mathrm{cat}_q(y_q)},
    \qquad q\in\mathcal D,
\end{equation}
where $\mathcal D$ is the index set of discrete variables.
Because the lifted law is uniform within each discrete block (Proposition~\ref{prop:univariate_continualize} and Theorem~\ref{thm:conditional_distribution}), this query-side substitution does not alter the target conditional law within the corresponding block.
The corresponding theoretical guarantee is provided in Corollary~\ref{cor:jump_uniform}.
Its role is purely geometric: it moves the query point away from block interfaces so that the selected neighborhood is consistent with the query's discrete label.

We further multiply the continuous kernel weight by a
category-aware penalty. Define
\begin{equation}\label{eq:cat_penalty}
    \rho_i^{(k)}
    =
    \prod_{q\in \mathcal D}
    \rho_{i,q}^{(k)},
\end{equation}
with per-coordinate factor
\begin{equation}\label{eq:per_coord_penalty}
    \rho_{i,q}^{(k)}
    =
    \begin{cases}
        \exp\!\left(
            -\lambda_q
            \bigl|\mathrm{cat}_q(h_{i,q}^{(k)})
                -\mathrm{cat}_q(y^{\diamond}_q)\bigr|
        \right),
        & \mathrm{kind}_q=\mathrm{ordinal},\\[1mm]
        \mathbb I\bigl\{\mathrm{cat}_q(h_{i,q}^{(k)})
            =\mathrm{cat}_q(y^{\diamond}_q)\bigr\}
        +\gamma_q
        \bigl(1-\mathbb I\{\cdots\}\bigr),
        & \mathrm{kind}_q=\mathrm{nominal}.
    \end{cases}
\end{equation}
For ordinal variables, the penalty decays with the category
distance
$|\mathrm{cat}_q(h_{i,q}^{(k)})-\mathrm{cat}_q(y^{\diamond}_q)|$
at rate $\lambda_q>0$, preserving the ordering information of the original
support.
For nominal variables, a category mismatch receives a
multiplicative discount $\gamma_q\in(0,1]$, independent of the
numerical coding of the labels.

Thus, the mixed-type adaptation consists of two local deterministic modifications of the weighting procedure: a query-side interval-center substitution and a multiplicative category penalty.
Together, they make the final kernel weight reflect both geometric proximity in the continualized space and categorical proximity in the recovered discrete label space.
No changes are made to the two-step aggregation logic or to the standardization-versus-design scale separation described above.
In particular, the algebraic identity in Proposition~\ref{prop:fed_pooled_equiv} continues to hold for the mixed-type estimator, provided that the centralized local linear estimator is defined using the same weighting modifications.

\section{Theoretical Analysis}
\label{sec:theory}

In this section, we develop uniform convergence results for the
federated local linear conditional CDF estimator introduced in Section~\ref{sec:algorithm}. The convergence is uniform not only over the covariates but also over the response variable.
Specifically, the uniformity over the response applies to the function estimator $\hat{F}(z \mid \x)$ for all $z \in \mathbb{R}$ and provides a theoretical guarantee for the quality of data generation based on $\hat{F}(z \mid \x)$.
By Proposition~\ref{prop:fed_pooled_equiv}, the federated estimator coincides exactly with the centralized local linear estimator computed on the pooled sample whenever the kernel weights are matched.
The convergence analysis can therefore be carried out for the centralized estimator with total sample size $M=\sum_{k=1}^{K}m_k$, and the resulting rates transfer directly to the federated estimator.
Throughout this section, we consider the asymptotic regime $M\to\infty$ and let the bandwidths $w^{(k)}_x,w^{(k)}_z\to 0$ at rates to be specified.

Subsection~\ref{subsec:assumptions} states the regularity assumptions on the covariate marginal density, the conditional distribution function, and the kernels.
Subsection~\ref{subsec:simple_case} establishes uniform convergence in the basic setting where all covariates and the response are continuous, and serves as the analytical backbone for the mixed-variable case.
Subsection~\ref{subsec:piecewise_regularity} characterizes the piecewise smoothness of the lifted joint and conditional distributions induced by continualization, providing the regularity needed for the mixed-variable analysis.
Subsection~\ref{subsec:mixed_uniform} then extends the uniform convergence results to the continualized mixed-type setting, both on interior blocks and on the covariate jumping set, and establishes a strictly sharper rate for the conditional probability mass estimator when the response is discrete.

\subsection{Assumptions}
\label{subsec:assumptions}

We work with three groups of assumptions: a geometric and smoothness condition on the covariate marginal density, a
smoothness condition on the conditional distribution function, and a regularity condition on the kernels. 
The first two assumptions follow the standard local linear regression setup \citep{xie2023uniform} and are stated on a generic covariate domain $\mathcal X\subset\mathbb R^d$; 
the same conditions will be invoked blockwise in Subsection~\ref{subsec:mixed_uniform}. 
The kernel assumption is generalized to the Gaussian kernel with unbounded support.

\begin{ass}[Marginal density]
\label{ass:marginal}
The covariate domain $\mathcal X\subset\mathbb R^d$ is compact and
convex, and satisfies an internal cone condition: there exist
constants $\lambda_0,\lambda_1\in(0,1]$ such that, for every
$\x\in\mathcal X$ and every $w\in(0,\lambda_0]$, there exists
$\x'\in\mathcal X$ with $B(\x',\lambda_1 w)\subset B(\x,w)\cap
\mathcal X$, where $B(\x,\delta)$ denotes the Euclidean ball
centered at $\x$ with radius $\delta$. The covariate marginal
density $f$ satisfies:
\begin{enumerate}[(i)]
    \item $\inf_{\x\in\mathcal X}f(\x)>0$;
    \item $f\in C^{2}(\mathcal X^{\circ})$, with all second-order
    partial derivatives uniformly continuous and bounded on the interior domain
    $\mathcal X^{\circ}$.
\end{enumerate}
\end{ass}

\begin{ass}[Conditional distribution]
\label{ass:conditional}
The conditional distribution function
$F(z\mid\x)=\mathbb P(Z\le z\mid\X=\x)$ is twice continuously
differentiable on $\mathcal S=\mathcal{Z}\times\mathcal X$, where $\mathcal{Z} \subseteq \mathbb R$, and all
partial derivatives of order up to two are bounded and uniformly
continuous on $\mathcal S$.
\end{ass}

\begin{ass}[Kernel functions]
\label{ass:kernel}
The kernel functions used to define the weights $\varpi_i = \bm \kappa (\cdot)$ with the fixed bandwidth $w_x$ used in Section~\ref{sec:algorithm} are taken to be
Gaussian:
$\kappa(u)=\phi(u):=(2\pi)^{-1/2}e^{-u^{2}/2}$,
$\mathcal K(u)=\varPhi(u):=\int_{-\infty}^{u}\phi(v)\,dv$, and
$\bm\kappa(\mathbf t)=\prod_{j=1}^{d}\phi(t_j)$.
\end{ass}

Assumption~\ref{ass:marginal} is the standard interior cone condition under which uniform kernel approximation arguments apply. 
Assumption~\ref{ass:conditional} is the second-order smoothness condition required for local linear bias control.
Assumption~\ref{ass:kernel} generalizes the kernel family from kernels with bounded support to kernels with unbounded support, specifically the Gaussian kernel.

We adopt a common-bandwidth regime across clients in federated learning.
All clients use the same covariate bandwidth $w^{(k)}_x = w_x > 0$ and the same response bandwidth $w_z^{(k)}=w_z>0$. They satisfy $w_x,w_z\to 0$ and $Mw_x^{d}/|\log w_x|\to\infty$ as $M\to\infty$.
This is the standard fixed-bandwidth regime for $d$-dimensional local linear smoothing.

\subsection{Uniform convergence under continuous covariates}
\label{subsec:simple_case}

To isolate the analytic core, we first consider the case where
all covariates are continuous. The federated estimator
$\widehat F(z\mid\x)$ at the query point
$(\x,z)\in\mathcal X\times\mathbb R$ equals the centralized
local linear intercept \eqref{eq:llr_intercept_closed} on the
pooled sample, and so its analysis reduces to the standard
local linear conditional CDF analysis. We state the result here in a form that
will be reused blockwise in
Subsection~\ref{subsec:mixed_uniform}.

\begin{thm}[Uniform convergence under continuous covariates]
\label{thm:simple_uniform}
Let Assumptions~\ref{ass:marginal}--\ref{ass:kernel} hold. Then
\begin{equation}\label{eq:simple_uniform}
    \sup_{z\in\mathbb R,\ \x\in\mathcal X}
    \bigl|\widehat F(z\mid\x)-F(z\mid\x)\bigr|
    =
    O_{p}\!\left(
        w_x^{2}+w_z^{2}
        +\sqrt{\frac{|\log w_x|}{Mw_x^{d}}}
    \right).
\end{equation}
The deterministic term $w_x^{2}+w_z^{2}$ is the oracle bias and the
stochastic term $\sqrt{|\log w_x|/(Mw_x^{d})}$ controls the
empirical fluctuation.
\end{thm}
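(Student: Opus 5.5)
By Proposition~\ref{prop:fed_pooled_equiv} it suffices to analyse the centralized local linear intercept $\hat\beta_0=\e_1^\top(\U^\top\W\U)^{-1}\U^\top\W\bm{\mathcal K}(z)$ on the pooled sample of size $M$, with Gaussian weights $\varpi_i=w_x^{-d}\bm\kappa((\h_{i,:d}-\x)/w_x)$. The standardization constant cancels in the ratio. I would rescale the design to $\U_i^\ast=\bm\iota((\h_{i,:d}-\x)/w_x)$ and set $\S_M(\x)=M^{-1}\sum_i\varpi_i\U_i^\ast\U_i^{\ast\top}$ and $\bm T_M(z,\x)=M^{-1}\sum_i\varpi_i\U_i^\ast\mathcal K_i(z)$. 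Because the local linear fit reproduces linear functions exactly, I would write
\[
\widehat F(z\mid\x)-F(z\mid\x)=\e_1^\top\S_M(\x)^{-1}M^{-1}\sum_i\varpi_i\U_i^\ast\bigl\{\mathcal K_i(z)-F(z\mid\x)-\nabla_{\x}F(z\mid\x)^\top(\h_{i,:d}-\x)\bigr\}.
\]
The quantity in braces splits into a stochastic part $\mathcal K_i(z)-\mathbb E[\mathcal K_i(z)\mid\X_i]$ and a deterministic part $\mathbb E[\mathcal K_i(z)\mid\X_i]-F(z\mid\x)-\nabla F^\top(\X_i-\x)$.

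\textbf{Step 1 (denominator).} I would show $\sup_{\x}\|\S_M(\x)-\mathbb E\S_M(\x)\|=O_p(\sqrt{|\log w_x|/(Mw_x^d)})$. I would also show that $\inf_{\x}\lambda_{\min}(\mathbb E\S_M(\x))\ge c>0$. The eigenvalue bound uses Assumption~\ref{ass:marginal}: $f$ is bounded below, and the cone condition guarantees that a ball of radius $\lambda_1 w_x$ inside $\mathcal X$ lies near $\x$, even at boundary points. Since the Gaussian integrand is bounded below on that ball, the local second-moment matrix is uniformly positive definite. Hence $\sup_{\x}\|\S_M^{-1}\|=O_p(1)$.

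\textbf{Step 2 (bias).} Integration by parts in the response gives $\mathbb E[\varPhi((z-Z)/w_z)\mid\X=\x']=\int F(z-w_zv\mid\x')\phi(v)\,dv$. Since $\int v\phi=0$ and $F$ is $C^2$ with bounded derivatives, this equals $F(z\mid\x')+O(w_z^2)$ uniformly in $(z,\x')$. A Taylor expansion in $\x'$ then leaves a remainder of $O(\|\x'-\x\|^2)$, again with bounded second derivatives. Integrating against the Gaussian weight, the Gaussian moments give $O(w_x^2)$. The unbounded support is harmless here because all Gaussian moments are finite and $F$ is bounded. For $\x'$ far outside the support I would use boundedness of $F$ together with tails of order $e^{-c/w_x^2}$. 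This yields the uniform bias $O(w_x^2+w_z^2)$.

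\textbf{Step 3 (stochastic term, main obstacle).} I need
\[
\sup_{z\in\mathbb R,\x\in\mathcal X}\Bigl|M^{-1}\sum_i\varpi_i\U^\ast_{i}\{\mathcal K_i(z)-\mathbb E[\mathcal K_i(z)\mid\X_i]\}\Bigr|=O_p\bigl(\sqrt{|\log w_x|/(Mw_x^d)}\bigr),
\]
together with the analogous bound for the centred deterministic part. Three features make this delicate: the class is indexed by the unbounded set $\mathbb R$ in $z$, the Gaussian kernel has unbounded support, and the design entries $(\h-\x)/w_x$ are unbounded. My plan is to treat the class of functions $(\x',z')\mapsto w_x^{-d}\bm\kappa((\x'-\x)/w_x)\,\iota_j((\x'-\x)/w_x)\,\varPhi((z-z')/w_z)$ as a VC-type class. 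It is a product of a Gaussian-polynomial kernel class, which is VC as in Gin\'e--Guillou, with a monotone bounded class in $z$ whose covering numbers do not depend on $w_z$. The uniform envelope is $O(w_x^{-d})$ and the variance is $O(w_x^{-d})$, so Talagrand's inequality or the Einmahl--Mason bound gives the stated rate uniformly, including over all $z$. Alternatively, for $z$ I would truncate to a compact range and use monotonicity of $\varPhi$ in $z$ and the fact that values lie in $[0,1]$ to handle the tails.

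Combining Steps 1--3 with $\sup\|\S_M^{-1}\|=O_p(1)$ gives \eqref{eq:simple_uniform}. The post-processing $\mathcal M_{\rm cdf}$ does not increase the sup-distance to a true CDF: clipping and isotonization are contractions toward the monotone $[0,1]$-valued target. I expect the VC-type entropy argument with unbounded Gaussian support to be the hardest step.
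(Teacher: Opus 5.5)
Your proposal is correct and follows essentially the paper's route. You use uniform concentration of the kernel Gram matrix and of the response-weighted moment vector over products of bounded-variation location--scale (Euclidean) classes with variance of order $w_x^{d}$, the cone-condition lower bound on the eigenvalues of the population Gram matrix, and a Taylor bias bound that relies only on finite Gaussian moments. The only difference is bookkeeping: you center the residuals at the exact linear reproduction $\bm\iota(\cdot)^\top\D_{w_x}\bm\beta^\star$ and then need just $\sup_{\x}\|\S_M(\x)^{-1}\|=O_p(1)$ and an $O(w_x^2+w_z^2)$ inequality. The paper instead passes through the pseudo-true parameter $\bm\beta^\diamond$, a resolvent bound on $\hat{\bm\varXi}^{-1}-\bm\varXi^{-1}$, and an exact leading-order bias expansion, which is more than the theorem needs.

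Your closing remark about $\mathcal M_{\rm cdf}$ is unnecessary, since the theorem concerns the raw local linear intercept. It is also overstated: clipping and isotonization are sup-norm contractions toward a monotone $[0,1]$-valued target, but resetting or rescaling the right endpoint to $1$ is not, unless $F(z_L\mid\x)$ is itself close to $1$.
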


A sketch of the proof is as follows; the full argument is given
in Appendix~\ref{app:proof_simple_uniform}. Decompose the estimation error as
\begin{equation}\label{eq:est_decomposition}
    \widehat F(z\mid\x)-F(z\mid\x)
    =
    \underbrace{
        \widehat F(z\mid\x)
        -\beta_0^{\diamond}(z,\x,w_{z},w_{x})
    }_{\text{stochastic}}
	+
	\underbrace{
        \beta_0^{\diamond}(z,\x,w_{z},w_{x})
        -F(z\mid\x)
    }_{\text{bias}},
\end{equation}where $\beta_0^{\diamond}(z,\x,w_{z},w_{x})$ is the intercept of the minimizer of the expected objective corresponding to the empirical objective in \eqref{eq:llr_global}. 
Under Assumption~\ref{ass:conditional}, a second-order Taylor expansion of $F(z\mid\x)$ in both $\x$ and $z$ shows that the conditional bias is uniformly $O(w_x^{2}+w_z^{2})$, with the first-order term in $\x$ eliminated by the local linear construction. 
The stochastic term is handled by an empirical process argument over the Euclidean class
$\{(\x',z')\mapsto\bm\kappa((\x'-\x)/w_x)\mathcal K((z'-z)/w_z):
\x\in\mathcal X,\,z\in\mathbb R\}$, whose entropy is controlled
through Lemmas~\ref{lem:Euclidean_class}--\ref{lem:empirical_bound}
in Appendix~\ref{app:tools}.

The following corollary records the implication for pointwise
estimation, which is the form most directly useful in downstream
applications.

\begin{cor}[Conditional CDF estimation bound]
\label{cor:simple_pointwise}
Under the conditions of Theorem~\ref{thm:simple_uniform}, for any fixed $(z,\x)$ in the interior of $\mathbb R\times\mathcal X$,
\[
    |\widehat F(z\mid\x)-F(z\mid\x)|
    =
    O_{p}\!\left(
        w_x^{2}+w_z^{2}
        +\sqrt{\frac{|\log w_x|}{Mw_x^{d}}}
    \right),
\]
with the same constants as in \eqref{eq:simple_uniform}.
\end{cor}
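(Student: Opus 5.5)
The corollary follows from Theorem~\ref{thm:simple_uniform} by a single pointwise-versus-supremum comparison. Fix $(z,\x)$ in the interior of $\mathbb R\times\mathcal X$. Since $\x\in\mathcal X$ and $z\in\mathbb R$, the point lies in the index set of the supremum in \eqref{eq:simple_uniform}, so
\[
    |\widehat F(z\mid\x)-F(z\mid\x)|
    \le
    \sup_{z'\in\mathbb R,\ \x'\in\mathcal X}
    |\widehat F(z'\mid\x')-F(z'\mid\x')|
\]
holds deterministically, for every realization of the pooled sample.

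Next, unpack the $O_p$ statement. Write $r_M=w_x^2+w_z^2+\sqrt{|\log w_x|/(Mw_x^d)}$. Theorem~\ref{thm:simple_uniform} says that for every $\varepsilon>0$ there exist $C_\varepsilon$ and $M_\varepsilon$ such that
\[
    \mathbb P\bigl(\sup|\widehat F-F|>C_\varepsilon r_M\bigr)<\varepsilon
    \qquad\text{for } M\ge M_\varepsilon.
\]
The deterministic inequality gives the event inclusion $\{|\widehat F(z\mid\x)-F(z\mid\x)|>C_\varepsilon r_M\}\subseteq\{\sup|\widehat F-F|>C_\varepsilon r_M\}$. The same bound therefore holds at $(z,\x)$ with the same $C_\varepsilon$ and $M_\varepsilon$, and this is precisely the sense of ``the same constants'' in the statement.

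The argument needs two small checks.

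First, the estimator is the post-processed $\widehat F=\mathcal M_{\rm cdf}[\widehat F^{\rm raw}]$, while Proposition~\ref{prop:fed_pooled_equiv} identifies only $\widehat F^{\rm raw}$ with the pooled intercept. Whatever object Theorem~\ref{thm:simple_uniform} controls, the pointwise reduction above is the same. If the theorem is read as a statement about the raw intercept, the bound still transfers to the post-processed estimator. Clipping to $[0,1]$ is a pointwise $1$-Lipschitz projection onto a set containing $F$, so it cannot increase the error. Isotonization (an $\ell_\infty$-contractive projection onto monotone sequences, which contain the true grid values of $F$) cannot increase the sup-norm error over the grid either. Renormalizing the right endpoint changes values by at most the endpoint error, which is itself bounded by the sup.

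Second, the interiority assumption is not actually needed for the rate, because the theorem is uniform over all of $\mathcal X$, boundary included, thanks to the local linear boundary correction. It only ensures that the fixed point is well inside the domain. No step is a genuine obstacle; the only care required is this bookkeeping about post-processing and the definition of $O_p$.
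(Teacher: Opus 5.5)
Your argument is correct and matches the paper's. The paper gives no separate proof; it records the corollary as the immediate pointwise specialization of the uniform bound in Theorem~\ref{thm:simple_uniform}, so the same constants carry over through exactly your event inclusion, and, as you note, the interiority restriction is never used.

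Your post-processing digression is not needed, since the theory treats $\widehat F$ as the raw pooled intercept. Its renormalization claim also holds only when the right grid endpoint satisfies $F(z_L\mid\x)=1$; otherwise an extra error of size $1-F(z_L\mid\x)$ appears, which is not of the stated order.
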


When at least one covariate coordinate is discrete and is handled
through the continualization of
Section~\ref{subsec:continualize_univariate}, neither
Assumption~\ref{ass:marginal} nor Assumption~\ref{ass:conditional}
holds globally on $\widetilde{\mathcal X}$: the lifted density
generally jumps across $\widetilde{\mathcal X}^{\bullet}$, and the
lifted conditional distribution function inherits this jump
structure. The remainder of this section adapts
Theorem~\ref{thm:simple_uniform} to the lifted setting by
exploiting the blockwise regularity of the lifted laws.

\subsection{Regularity of the lifted joint and conditional laws}
\label{subsec:piecewise_regularity}

The lifted covariate vector $\widetilde\X=(\X^{(c)},\Y)$ defined in
Section~\ref{subsec:continualize_multi} has a joint density on
$\widetilde{\mathcal X}=\mathcal X_{c}\times\mathcal Y$ that is
piecewise smooth rather than globally smooth: on each open block
$\mathcal X_{c}\times I(\bm\upsilon)^{\circ}$ it inherits the
regularity of the conditional law of $\X^{(c)}$ given
$\X^{(d)}=\bm\upsilon$, while across the covariate jumping set
$\widetilde{\mathcal X}^{\bullet}$ it may exhibit finite jumps.
This subsection makes that clear and then derives the
corresponding piecewise structure of the lifted conditional
distribution function, which is the regularity input required
for the uniform convergence analysis in
Subsection~\ref{subsec:mixed_uniform}.

\subsubsection{Lifted joint density: blockwise continuity and
one-sided limits}

We first describe the lifted joint density $g$ at the level of
$C^{0}$-regularity, before turning to higher-order smoothness.

\begin{thm}[Continuity of the lifted joint density]
\label{thm:continuity_continualization_pdf}
Suppose that for every $\bm\upsilon\in\varUpsilon$ the mixed
density-mass section $\x^{(c)}\mapsto f(\x^{(c)},\bm\upsilon)$ is continuous on
$\mathcal X_{c}$. Let $g$ denote the joint density of the continualized variable vector
$\widetilde\X$. Then the following hold.
\begin{enumerate}[(i)]
    \item \textbf{Blockwise representation.} For every
    $\bm\upsilon\in\varUpsilon$ and every
    $(\x^{(c)},\y)\in\mathcal X_{c}\times I(\bm\upsilon)^{\circ}$,
    \[
        g(\x^{(c)},\y)
        =
        \frac{f(\x^{(c)},\bm\upsilon)}{|I(\bm\upsilon)|},
    \]
    so $g$ is continuous on each open block
    $\mathcal X_{c}\times I(\bm\upsilon)^{\circ}$ and independent
    of $\y$ on that block.
    \item \textbf{Jumps across $\widetilde{\mathcal X}^{\bullet}$.}
    Let
    $(\x_{0}^{(c)},\y_{0})\in\widetilde{\mathcal X}^{\bullet}$,
    and let $\bm\upsilon,\bm\upsilon'\in\varUpsilon$ be two indices of two blocks 
    sharing the boundary on which $\y_{0}$ lies. Then the
    one-sided limits of $g$ at $(\x_{0}^{(c)},\y_{0})$ from
    inside $\mathcal X_{c}\times I(\bm\upsilon)$ and from inside
    $\mathcal X_{c}\times I(\bm\upsilon')$ exist and equal
    $f(\x_{0}^{(c)},\bm\upsilon)/|I(\bm\upsilon)|$ and
    $f(\x_{0}^{(c)},\bm\upsilon')/|I(\bm\upsilon')|$,
    respectively; these limits coincide if and only if
    $f(\x_{0}^{(c)},\bm\upsilon)/|I(\bm\upsilon)|
     =f(\x_{0}^{(c)},\bm\upsilon')/|I(\bm\upsilon')|$.
	
 	\item \textbf{Piecewise continuity.} 
	The function $g$ is piecewise continuous on $\widetilde{\mathcal X}^\circ$, and at every interface point $(\x_1^{(c)},\y_1)\in\widetilde{\mathcal X}^{\bullet}$, $g|_{\mathcal X_{c}\times I(\bm\upsilon)}$ admits a continuous extension to the closed block
    $\mathcal X_{c}\times\overline{I(\bm\upsilon)}$, given by the same formula in~(i).
\end{enumerate}
\end{thm}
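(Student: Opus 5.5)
The plan is to compute $g$ explicitly from the definition of the continualization and then read off (i)--(iii) from the resulting formula. The key step is a change-of-variables (or direct CDF) computation for $\widetilde\X=(\X^{(c)},\Y)$. By construction $\Y=\X^{(d)}-\U$, where $\U=(U_1,\ldots,U_{\tilde d})$ has independent coordinates. Conditional on $\X^{(d)}=\bm\upsilon$ with $\bm\upsilon=(\upsilon_{1,\ell_1},\ldots,\upsilon_{\tilde d,\ell_{\tilde d}})$, each $U_j\sim\mathrm{Unif}(0,|I_{j,\ell_j}|)$ is independent of $\X^{(c)}$. Therefore, conditionally, $\Y$ is uniform on the block $I(\bm\upsilon)$ with density $\mathbb I\{\y\in I(\bm\upsilon)\}/|I(\bm\upsilon)|$, and it is independent of $\X^{(c)}$.

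To derive the density, I will fix Borel sets $A\subset\mathcal X_c$ and $B\subset\mathcal Y$ and write
\[
\mathbb P(\X^{(c)}\in A,\Y\in B)=\sum_{\bm\upsilon\in\varUpsilon}\int_A f(\x^{(c)},\bm\upsilon)\,d\x^{(c)}\cdot\frac{|B\cap I(\bm\upsilon)|}{|I(\bm\upsilon)|}.
\]
This identifies
\[
g(\x^{(c)},\y)=\sum_{\bm\upsilon}\frac{f(\x^{(c)},\bm\upsilon)}{|I(\bm\upsilon)|}\,\mathbb I\{\y\in I(\bm\upsilon)\}
\]
as a version of the joint density, using a $\pi$-$\lambda$ argument to extend from rectangles to all Borel sets. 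It is the multivariate analogue of $f_Y$ in Proposition~\ref{prop:univariate_continualize}.

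\textbf{Part (i).} Since the blocks partition $\mathcal Y$ (left-open, right-closed), at most one indicator is nonzero at any point. On $\mathcal X_c\times I(\bm\upsilon)^\circ$ the formula therefore reduces to $f(\x^{(c)},\bm\upsilon)/|I(\bm\upsilon)|$. This expression is constant in $\y$ and continuous in $\x^{(c)}$ by hypothesis, so it is jointly continuous on the open block.

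\textbf{Part (ii).} Take a boundary point $(\x_0^{(c)},\y_0)$ and a sequence $(\x_n^{(c)},\y_n)\to(\x_0^{(c)},\y_0)$ with $\y_n\in I(\bm\upsilon)^\circ$. Then $g(\x_n^{(c)},\y_n)=f(\x_n^{(c)},\bm\upsilon)/|I(\bm\upsilon)|$, which converges to $f(\x_0^{(c)},\bm\upsilon)/|I(\bm\upsilon)|$ by continuity of the section. The same argument applies to $\bm\upsilon'$. The "if and only if" statement is then immediate, because the two one-sided limits are exactly these two numbers.

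\textbf{Part (iii).} The restriction of $g$ to the block is $f(\cdot,\bm\upsilon)/|I(\bm\upsilon)|$ times the indicator. The function $(\x^{(c)},\y)\mapsto f(\x^{(c)},\bm\upsilon)/|I(\bm\upsilon)|$ is defined and continuous on all of $\mathcal X_c\times\overline{I(\bm\upsilon)}$, so it is the required continuous extension. Piecewise continuity on $\widetilde{\mathcal X}^\circ$ follows from (i), since $\widetilde{\mathcal X}^\circ$ is a finite disjoint union of open blocks.

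\textbf{Main obstacle.} The substantive step is justifying the density formula rigorously, in particular the conditional independence of $\U$ from $\X^{(c)}$ given $\X^{(d)}$. The construction says the uniforms are independent of $\Upsilon$, but I will need them to be independent of the whole vector $\X$. I will state this as the intended reading of the construction. A second point is that a density is only defined almost everywhere, so the claims should be understood for the canonical version above. Boundary points form a Lebesgue-null set, so the choice of version does not affect the law.
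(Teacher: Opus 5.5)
Your proposal is correct and follows the paper's proof essentially step for step. The paper likewise computes $\mathbb P(\X^{(c)}\in A,\Y\in B)$ on rectangles by conditioning on $\X^{(d)}=\bm\upsilon$, using that the uniforms are independent of $\X^{(c)}$ (exactly the reading you flag), identifies the displayed $g$ as a version of the density, and reads off (i)--(iii) from the block formula. The only thing to tighten is in (ii): take sequences in the half-open block $I(\bm\upsilon)$, as the statement requires, rather than only in its interior. This costs nothing, because by the partition convention $g=f(\cdot,\bm\upsilon)/|I(\bm\upsilon)|$ on all of $\mathcal X_c\times I(\bm\upsilon)$.
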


The proof is given in Appendix~\ref{app:proof_continuity_continualization_pdf}. 
Theorem~\ref{thm:continuity_continualization_pdf} captures the basic geometric picture: continualization unfolds the discrete
mass on $\varUpsilon$ into a continuous density on $\widetilde{\mathcal X}$ that is constant in $\y$ within each
block, has well-defined one-sided limits at every block boundary, and exhibits a finite jump exactly when the adjacent blockwise density values $f(\x^{(c)},\bm\upsilon)/|I(\bm\upsilon)|$ differ at the corresponding $\x^{(c)}$. 
Raising the smoothness assumption on the sections $f(\cdot,\bm\upsilon)$ to $C^{r}$ extends this $C^{0}$ picture to $C^{r}$ in a directly analogous manner. 
The proof, given in Appendix~\ref{app:proof_Cr_continualization}, applies Theorem~\ref{thm:continuity_continualization_pdf} to each partial derivative of order at most $r$.

\begin{cor}[Piecewise $C^{r}$ regularity of the lifted density]
\label{cor:Cr_continualization}
Let $r\ge 0$ be an integer and assume that for every
$\bm\upsilon\in\varUpsilon$ the section
$\x^{(c)}\mapsto f(\x^{(c)},\bm\upsilon)$ belongs to
$C^{r}(\mathcal X_{c})$, with all partial derivatives of order
up to $r$ continuous on $\mathcal X_{c}$
uniformly in $\bm\upsilon$. Then:
\begin{enumerate}[(i)]
    \item For every $\bm\upsilon\in\varUpsilon$,
    $g\in C^{r}\bigl(\mathcal X_{c}\times I(\bm\upsilon)^{\circ}\bigr)$,
    with all $\y$-derivatives identically zero and all
    $\x^{(c)}$-derivatives inherited from
    $f(\cdot,\bm\upsilon)/|I(\bm\upsilon)|$.
    \item Across $\widetilde{\mathcal X}^{\bullet}$, the one-sided
    limits of $g$ and of its $\x^{(c)}$-derivatives from adjacent
    blocks need not coincide; the magnitude of any such
    discontinuity follows the formula of
    Theorem~\ref{thm:continuity_continualization_pdf}(ii)
    applied to the corresponding derivative.
    \item The collection
    $\{g|_{\mathcal X_{c}\times I(\bm\upsilon)}\}_{\bm\upsilon\in\varUpsilon}$
    constitutes a piecewise $C^{r}$ structure on
    $\widetilde{\mathcal X}$. The restriction
    $g|_{\mathcal X_{c}\times I(\bm\upsilon)}$ admits a $C^{r}$ extension to $\mathcal X_{c}\times\overline{I(\bm\upsilon)}$.
\end{enumerate}
\end{cor}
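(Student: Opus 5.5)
The plan is to reduce everything to the blockwise representation in Theorem~\ref{thm:continuity_continualization_pdf}(i), namely $g(\x^{(c)},\y)=f(\x^{(c)},\bm\upsilon)/|I(\bm\upsilon)|$ on $\mathcal X_c\times I(\bm\upsilon)^\circ$. We then differentiate this identity. Fix $\bm\upsilon\in\varUpsilon$ and a multi-index $\bm\alpha=(\bm\alpha^{(c)},\bm\alpha^{(y)})$ with $|\bm\alpha|\le r$.

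\textbf{Part (i).} Because $|I(\bm\upsilon)|$ is a positive constant and the right-hand side does not depend on $\y$, every mixed derivative with $\bm\alpha^{(y)}\neq\0$ vanishes identically on the open block. When $\bm\alpha^{(y)}=\0$, we have $\partial^{\bm\alpha}g=\partial^{\bm\alpha^{(c)}}f(\cdot,\bm\upsilon)/|I(\bm\upsilon)|$. This function is continuous by the $C^r$ hypothesis on the section, so $g\in C^r(\mathcal X_c\times I(\bm\upsilon)^\circ)$. The open-block setting makes these partial derivatives classical, since each point has a product neighbourhood inside the block.

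\textbf{Part (ii).} Apply Theorem~\ref{thm:continuity_continualization_pdf}(ii) with $f(\cdot,\bm\upsilon)$ replaced by $\partial^{\bm\alpha^{(c)}}f(\cdot,\bm\upsilon)$. This replacement is legitimate because the hypothesis of that theorem is only continuity of each section, and the derivative sections are continuous. Then, at an interface point $(\x_0^{(c)},\y_0)$ shared by blocks $\bm\upsilon,\bm\upsilon'$, the one-sided limits of $\partial^{\bm\alpha}g$ are
\[
\partial^{\bm\alpha^{(c)}}f(\x_0^{(c)},\bm\upsilon)/|I(\bm\upsilon)|
\quad\text{and}\quad
\partial^{\bm\alpha^{(c)}}f(\x_0^{(c)},\bm\upsilon')/|I(\bm\upsilon')|.
\]
These need not coincide. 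The $\y$-derivatives have one-sided limits $0$ from both sides, while the function itself may jump; this is why only one-sided statements are made at the interface.

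\textbf{Part (iii).} Define the extension $\bar g_{\bm\upsilon}(\x^{(c)},\y):=f(\x^{(c)},\bm\upsilon)/|I(\bm\upsilon)|$ on $\mathcal X_c\times\overline{I(\bm\upsilon)}$, or indeed on $\mathcal X_c\times\mathbb R^{\tilde d}$. It is a product of a $C^r$ function of $\x^{(c)}$ and the constant $1$ in $\y$, so it is $C^r$. It agrees with $g$ on the open block, and hence with the continuous extension given by Theorem~\ref{thm:continuity_continualization_pdf}(iii) on the closed block. Because $\varUpsilon$ is finite, the finitely many pieces $\{\bar g_{\bm\upsilon}\}$ form the piecewise $C^r$ structure. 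The uniformity in $\bm\upsilon$ is automatic from finiteness, or directly from the hypothesis, and yields uniform bounds on derivatives across blocks.

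\textbf{Main obstacle.} The only delicate point is what "$C^r$ up to the closed block" means when $\mathcal X_c$ is itself a closed set. We interpret $C^r(\mathcal X_c)$ as derivatives on $\mathcal X_c^\circ$ that extend continuously to $\mathcal X_c$. With that interpretation, derivatives of the extension exist in the $\y$-directions up to $\partial I(\bm\upsilon)$ trivially, since the extension is constant there. The closure in $\x^{(c)}$ is inherited directly from the hypothesis, so no Whitney-type extension argument is needed.
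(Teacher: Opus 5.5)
Your proposal is correct and follows essentially the same route as the paper. The blockwise identity $g=f(\cdot,\bm\upsilon)/|I(\bm\upsilon)|$ gives (i), and the $\y$-constant extension $f(\x^{(c)},\bm\upsilon)/|I(\bm\upsilon)|$ on $\mathcal X_c\times\overline{I(\bm\upsilon)}$ gives (iii). For (ii) you apply the $C^0$ argument of Theorem~\ref{thm:continuity_continualization_pdf} to the derivative sections $\partial^{\bm\alpha^{(c)}}f(\cdot,\bm\upsilon)$. The appendix instead verifies those one-sided limits directly along sequences with $\y_n$ in the relevant blocks, which sidesteps the fact that the derivative-weighted sum is not itself a density.
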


\subsubsection{Lifted conditional distribution: blockwise
$C^{2}$ structure}

With the regularity of the joint density in hand, we now turn to
the lifted conditional distribution function
$F(z\mid\tilde\x)=\mathbb P(Z\le z\mid\widetilde\X=\tilde\x)$. Two
features of this function are critical for the analysis of
Subsection~\ref{subsec:mixed_uniform}: it is independent of $\y$
within each block (so $\y$-derivatives contribute no bias to
local linear smoothing), and, in the case of an ordered discrete
response, it is exactly affine in the continualized response
coordinate $\tilde z$ on each response block (so the
$\tilde z$-direction Taylor remainder vanishes whenever the
response smoothing window stays within a single block).

\begin{thm}[Lifted conditional distribution]
\label{thm:conditional_distribution}
Let $\widetilde\X=(\X^{(c)},\Y)$ be the continualized covariate random vector.
\begin{enumerate}[(i)]
    \item \textbf{Continuous response.} Suppose $Z$ is continuous
    and, for every $\bm\upsilon\in\varUpsilon$, the section
    $(z,\x^{(c)})\mapsto F(z\mid\x^{(c)},\bm\upsilon)$ belongs to
    $C^{2}(\mathcal Z\times\mathcal X_{c})$ with second-order
    partial derivatives uniformly continuous and bounded on
    $\mathcal Z\times\mathcal X_{c}$, uniformly in $\bm\upsilon$.
    Then
    \begin{equation}\label{eq:cdf_cont_blockwise}
        F(z\mid\x^{(c)},\y)
        =
        \sum_{\bm\upsilon\in\varUpsilon}
            F(z\mid\x^{(c)},\bm\upsilon)\,
            \mathbb I\{\y\in I(\bm\upsilon)\}
    \end{equation}
    belongs to
    $C^{2}\bigl(\mathcal Z\times\mathcal X_{c}\times I(\bm\upsilon)^{\circ}\bigr)$
    for every $\bm\upsilon$, is independent of $\y$ on each block
    (so all $\y$-derivatives vanish identically), and may jump
    across $\widetilde{\mathcal X}^{\bullet}$.

    \item \textbf{Discrete response after continualization.}
    Suppose $Z$ is ordered discrete on $\{z_{1},\ldots,z_{L}\}$
    and, for every $i\in[L]$ and $\bm\upsilon\in\varUpsilon$, the
    conditional mass
    $\x^{(c)}\mapsto P(z_{i}\mid\x^{(c)},\bm\upsilon)$ belongs to
    $C^{2}(\mathcal X_{c})$, uniformly in $i$ and $\bm\upsilon$.
    Then the lifted conditional density is
    \begin{equation}\label{eq:cdf_disc_density}
        f(\tilde z\mid\x^{(c)},\y)
        =
        \sum_{\bm\upsilon\in\varUpsilon}
        \sum_{i=1}^{L}
        \frac{P(z_{i}\mid\x^{(c)},\bm\upsilon)}{|I_{i}|}\,
        \mathbb I\{\tilde z\in I_{i}\}\,
        \mathbb I\{\y\in I(\bm\upsilon)\},
    \end{equation}
    a valid conditional density, and the lifted conditional CDF
    satisfies, for $\tilde z\in I_{i}$ and $\y\in I(\bm\upsilon)$,
    \begin{equation}\label{eq:cdf_disc_blockwise}
        F(\tilde z\mid\x^{(c)},\y)
        =
        \sum_{k=1}^{i-1}P(z_{k}\mid\x^{(c)},\bm\upsilon)
        +\frac{\tilde z-z_{i-1}}{|I_{i}|}\,
        P(z_{i}\mid\x^{(c)},\bm\upsilon).
    \end{equation}
    Consequently $F(\tilde z\mid\x^{(c)},\y)$ is $C^{2}$ on each
    $I_{i}^{\circ}\times\mathcal X_{c}\times I(\bm\upsilon)^{\circ}$,
    is independent of $\y$ on each block, and is \emph{exactly
    affine} in $\tilde z$ on each $I_{i}^{\circ}$.

    \item \textbf{Covariate jumping set.} In both cases (i) and
    (ii), at every covariate jumping point
    $(\x^{(c)},\y_{0})\in\widetilde{\mathcal X}^{\bullet}$, $F$ admits one-sided $C^{2}$ limits from every block whose closure contains the point in the sense of Theorem~\ref{thm:continuity_continualization_pdf}(ii); the values of $F$ and its $\x^{(c)}$-derivatives on adjacent blocks need not agree.
\end{enumerate}
\end{thm}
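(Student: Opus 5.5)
The plan is to compute the joint law of $(Z,\X^{(c)},\Y)$, or of $(\tilde Z,\X^{(c)},\Y)$, explicitly from the construction in Subsection~\ref{subsec:continualize_multi}. The key structural fact is that the lifting noise is independent of everything else and uniform on the relevant block. Specifically, $\Y=\X^{(d)}-\bm U$ with $\bm U$ independent of $(Z,\X)$ and uniform on the block attached to $\X^{(d)}$. For $\y\in I(\bm\upsilon)$ this gives, for Borel sets $A\subset\mathcal X_c$ and $B\subset I(\bm\upsilon)$,
\[
\mathbb P\bigl(Z\le z,\X^{(c)}\in A,\Y\in B\bigr)=\int_A F(z\mid \x^{(c)},\bm\upsilon)\,f(\x^{(c)},\bm\upsilon)\,d\x^{(c)}\;\frac{|B|}{|I(\bm\upsilon)|}.
\]
This uses the fact that $\{\Y\in I(\bm\upsilon)\}=\{\X^{(d)}=\bm\upsilon\}$ and that $\bm U$ is independent of $(Z,\X^{(c)})$ given $\X^{(d)}$. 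Dividing by the joint density $g(\x^{(c)},\y)=f(\x^{(c)},\bm\upsilon)/|I(\bm\upsilon)|$ from Theorem~\ref{thm:continuity_continualization_pdf}(i) identifies a version of the conditional CDF. The factor $1/|I(\bm\upsilon)|$ cancels, which yields \eqref{eq:cdf_cont_blockwise}. This is the conditional law given $\widetilde\X$ on each block, and I would state it as the canonical version defined pointwise by this formula. Positivity of $f(\cdot,\bm\upsilon)$ is needed where we divide; on points where it vanishes, the version is defined arbitrarily on a null set.

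From \eqref{eq:cdf_cont_blockwise}, part (i) follows immediately. On $\mathcal Z\times\mathcal X_c\times I(\bm\upsilon)^\circ$ the function equals $F(z\mid\x^{(c)},\bm\upsilon)$, which is $C^2$ in $(z,\x^{(c)})$ by hypothesis and constant in $\y$. Hence it is $C^2$ jointly, and every derivative involving $\y$ vanishes. Since the sections for different $\bm\upsilon$ are unrelated, a jump across $\widetilde{\mathcal X}^\bullet$ can occur.

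For part (ii) I would apply the same computation to $(\tilde Z,\X^{(c)},\Y)$. The extra independent noise $V$ spreads the mass $P(z_i\mid\x^{(c)},\bm\upsilon)$ uniformly over $I_i$. The joint density is therefore the product of $f(\x^{(c)},\bm\upsilon)/|I(\bm\upsilon)|$ and $P(z_i\mid\cdot)/|I_i|$. Dividing by $g$ gives \eqref{eq:cdf_disc_density}. This density is nonnegative and integrates to $\sum_i P(z_i\mid\cdot)=1$, so it is a valid conditional density. Integrating it over $(z_0,\tilde z]$ with $\tilde z\in I_i$ splits into the full intervals $I_1,\dots,I_{i-1}$ plus the partial piece of $I_i$, which is \eqref{eq:cdf_disc_blockwise}. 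The claimed regularity then follows directly. The function is affine in $\tilde z$ with coefficients that are $C^2$ in $\x^{(c)}$, and it is constant in $\y$, so it is $C^2$ on each open product and exactly affine in $\tilde z$ on $I_i^\circ$.

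For part (iii), I would mirror Theorem~\ref{thm:continuity_continualization_pdf}(ii)--(iii). On a block adjacent to a jumping point, the restriction of $F$ is given by a formula, namely \eqref{eq:cdf_cont_blockwise} or \eqref{eq:cdf_disc_blockwise}, that depends only on $(z,\x^{(c)})$. This formula is $C^2$ up to the closure, since the hypotheses are stated on all of $\mathcal Z\times\mathcal X_c$, or on all of $\mathcal X_c$ in case (ii). So the one-sided limits of $F$ and of its derivatives exist and equal the section formula for that block. Adjacent blocks carry different sections, so these limits need not agree.

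The main obstacle is the first step: making the identification of the conditional distribution rigorous. Conditional CDFs are defined only almost everywhere, so I would explicitly fix the version given by the block formula and verify the defining integral identity for rectangles, which then extends to all Borel sets by a $\pi$–$\lambda$ argument. The uniform-in-$\bm\upsilon$ bounds are automatic because $\varUpsilon$ is finite.
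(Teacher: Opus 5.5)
Your proposal is correct and takes essentially the same route as the paper. Both arguments rest on one identity: for $\y\in I(\bm\upsilon)$, the conditional law of $Z$ (or $\tilde Z$) given $(\X^{(c)},\Y=\y)$ equals the law given $(\X^{(c)},\X^{(d)}=\bm\upsilon)$. Parts (i)--(iii) are then read off from the blockwise formulas exactly as you describe.

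The only difference is how that identity is obtained. The paper asserts it directly from $\X^{(d)}=\mathrm{cat}(\Y)$ and the conditional independence of the lifting noise. You instead verify it by computing the joint law on rectangles and dividing by $g$, which has the advantage of making the chosen version of the conditional CDF explicit.
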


The two structural features mentioned at the start of this
discussion---block-internal $\y$-invariance in both cases and exact affineness in $\tilde z$ in the discrete response case---are not byproducts of the construction; they are
direct consequences of the uniform spread of probability mass within each continualized cell, and they are precisely what allows the uniform convergence theory of Subsection~\ref{subsec:mixed_uniform} to proceed blockwise from the analytic backbone of Theorem~\ref{thm:conditional_distribution}. 

\subsection{Uniform convergence under continualized mixed-type variates}
\label{subsec:mixed_uniform}

We now establish the uniform convergence rate of the federated
local linear conditional CDF estimator under continualized
mixed-type variates. 
The strategy is to apply Theorem~\ref{thm:simple_uniform} blockwise on each piecewise $C^{2}$ region identified in
Subsection~\ref{subsec:piecewise_regularity}, and then to combine
the blockwise bounds using the finiteness of $\varUpsilon$.
However, points near block boundaries require special care.

\begin{thm}[Uniform convergence on shrinking interior blocks]
\label{thm:int_uniform}
Let $\tilde\x=(\x^{(c)},\y)\in\widetilde{\mathcal X}^{\circ}$ and
suppose the following assumptions hold: (1) the continuous covariates satisfy Assumption~\ref{ass:marginal} on $\mathcal X_{c}$; (2) Assumption~\ref{ass:kernel} holds for both the covariate and response kernels; (3) for every $\bm\upsilon\in\varUpsilon$, $\x^{(c)}\mapsto f(\x^{(c)},\bm\upsilon)\in C^{2}(\mathcal X_{c})$ with $\inf_{\x^{(c)}}f(\x^{(c)},\bm\upsilon)>0$, uniformly in $\bm\upsilon$; (4) for every $\bm\upsilon\in\varUpsilon$, the conditional CDF $F(z\mid\x^{(c)},\bm\upsilon)$ satisfies Assumption~\ref{ass:conditional} on $\mathcal Z\times\mathcal X_{c}$, uniformly in $\bm\upsilon$. Then:
\begin{enumerate}[(i)]
    \item \textbf{Continuous response.} If \(Z\) is continuous, define
    \begin{equation*}
    \mathcal Y_{\epsilon}^{\circ}
    :=
    \left\{
    \y\in\mathcal Y^\circ:
    \min_{\substack{j\in[\tilde d]\\1\le \ell\le\tau_j-1}}
    |y_j-\upsilon_{j,\ell}|
    \ge\epsilon
    \right\},
    \qquad
    \widetilde{\mathcal X}_{\epsilon}^{\circ}
    :=
    \mathcal X_c\times\mathcal Y_{\epsilon}^{\circ},
    \end{equation*}
    where $\epsilon\ge C_\epsilon w_x\sqrt{|\log w_x|}$ and $C_\epsilon>2$.
    Then
    \begin{equation}\label{eq:int_uniform_cont}
        \sup_{z\in\mathbb R,\,\tilde\x\in \widetilde{\mathcal X}_{\epsilon}^{\circ}}
        \bigl|\widehat F(z\mid\tilde\x)-F(z\mid\tilde\x)\bigr|
        =
        O_p\!\left(
        w_x^2+w_z^2+
        \sqrt{\frac{|\log w_x|}{M w_x^d}}
        \right).
    \end{equation}

    \item \textbf{Discrete response after continualization.} If \(Z\) is ordered discrete on
    \(\{z_1,\ldots,z_L\}\), define
    \begin{equation*}
            \widetilde{\mathcal Z}_{\epsilon'}^{\circ}
            :=
            \left\{
            \tilde z\in\widetilde{\mathcal Z}^{\circ}:
            \min_{0\le i\le L}|\tilde z-z_i|
            \ge\epsilon'
            \right\},
    \end{equation*} where $\epsilon'\ge C_{\epsilon'}w_z\sqrt{|\log w_z|}$ and $C_{\epsilon'}>2$.
    Then
    \begin{equation}\label{eq:int_uniform_disc}
        \sup_{\substack{
        \tilde z\in\widetilde{\mathcal Z}_{\epsilon'}^{\circ}\\
        \tilde\x\in \widetilde{\mathcal X}_{\epsilon}^{\circ}
        }}
        \bigl|\widehat F(\tilde z\mid\tilde\x)-F(\tilde z\mid\tilde\x)\bigr|
        =
        O_p\!\left(
        w_x^2+w_z^2+
        \sqrt{\frac{|\log w_x|}{M w_x^d}}
        \right).
    \end{equation}
\end{enumerate}
\end{thm}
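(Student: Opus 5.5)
We follow the decomposition \eqref{eq:est_decomposition} used for Theorem~\ref{thm:simple_uniform}, replacing global smoothness by blockwise smoothness plus Gaussian tail control. Fix a block $\bm\upsilon$ and a query $\tilde\x=(\x^{(c)},\y)\in\mathcal X_c\times(I(\bm\upsilon)\cap\mathcal Y^\circ_\epsilon)$. We split the kernel weights $\varpi_i=\bm\kappa((\tilde\x-\widetilde\X_i)/w_x)$ into two groups. The \emph{in-block} group has $\widetilde\X_i\in\mathcal X_c\times I(\bm\upsilon)$. The \emph{out-of-block} group contains all remaining samples. For the latter, some discrete coordinate satisfies $|Y_{ij}-y_j|\ge\epsilon$, so the factor $\phi((y_j-Y_{ij})/w_x)\le(2\pi)^{-1/2}\exp(-\epsilon^2/(2w_x^2))\le w_x^{C_\epsilon^2/2}$. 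Because $C_\epsilon>2$, this exponent exceeds $2$. The same bound holds for the moments weighted by $|\U_i|^2$ after adjusting by $|\log w_x|$ factors. The out-of-block contribution to $\S(\tilde\x)/M w_x^d$ and to $\U^\top\W\bm{\mathcal K}(z)/Mw_x^d$ is therefore $o(w_x^2)$ in expectation. We bound its fluctuation by the same empirical-process bound of Lemmas~\ref{lem:Euclidean_class}--\ref{lem:empirical_bound}, applied to the class multiplied by the indicator of the out-of-block region.

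The in-block problem is an LLR fit whose covariate density, by Theorem~\ref{thm:continuity_continualization_pdf}(i), is $f(\x^{(c)},\bm\upsilon)/|I(\bm\upsilon)|$ on $\mathcal X_c\times I(\bm\upsilon)$. This density is $C^2$ in $\x^{(c)}$, constant in $\y$, and bounded below by hypothesis (3). By Theorem~\ref{thm:conditional_distribution}(i), the target $F(z\mid\x^{(c)},\bm\upsilon)$ is $C^2$ in $(z,\x^{(c)})$ and constant in $\y$, so its Taylor expansion has no $\y$-terms. We then repeat the proof of Theorem~\ref{thm:simple_uniform}, step by step.

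\textbf{Bias.} The population Hessian $\mathbb E[\varpi\U\U^\top]/w_x^d$, after the usual $\mathrm{diag}(1,w_x)$ rescaling, is uniformly invertible. In the $\x^{(c)}$ directions this uses the cone condition of Assumption~\ref{ass:marginal}. In the $\y$ directions, the Gaussian mass truncated at distance $\ge\epsilon\gg w_x$ from the block edge differs from the full Gaussian integral by $O(w_x^{C_\epsilon^2/2})$. A second-order Taylor expansion then gives intercept bias $O(w_x^2+w_z^2)$ uniformly; the linear term vanishes by the local linear construction. For $w_z$, we integrate $\varPhi((z-Z)/w_z)$ against $F(\cdot\mid\cdot)$ using $\int v\phi=0$.

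\textbf{Stochastic term.} The empirical-process sup bound $\sqrt{|\log w_x|/(Mw_x^d)}$ over the Euclidean class holds over all of $\widetilde{\mathcal X}\times\mathbb R$. It gives uniform control of both the Hessian and the score deviations, and we invert via a Neumann expansion. Since $\varUpsilon$ is finite, taking the maximum over blocks gives \eqref{eq:int_uniform_cont}.

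For (ii) we proceed identically, with the response playing the role of the covariate in the $\tilde z$ direction. By Theorem~\ref{thm:conditional_distribution}(ii), $F$ is exactly affine in $\tilde z$ on $I_i$. Write $\mathbb E[\varPhi((\tilde z-\tilde Z)/w_z)\mid\tilde\x]=\int F(\tilde z-w_zv\mid\tilde\x)\phi(v)\,dv$. We split the integral at $|v|\le \epsilon'/w_z$. On that range, $\tilde z-w_zv$ stays in $I_i$, so affineness together with $\int v\phi=0$ makes the contribution exact. The tail has Gaussian mass at most $w_z^{C_{\epsilon'}^2/2}=o(w_z^2)$, and $F$ is bounded there. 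The remaining argument is unchanged.

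The main obstacle is the out-of-block Gaussian leakage. Its expectation is easy to handle, but it must be absorbed uniformly into both the Hessian inversion and the empirical-process bound. In particular, the linear moments $\sum\varpi_i(\Y_i-\y)$ contributed by adjacent blocks must not destroy the first-order cancellation. We would control these moments by $\epsilon^k\exp(-\epsilon^2/2w_x^2)$-type bounds, which are $o(w_x^2)$ exactly because $C_\epsilon>2$.
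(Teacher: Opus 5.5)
Your proposal is correct and follows essentially the same route as the paper. The cross-block region forces $\max_j|t_j^{(d)}|\ge\epsilon/w_x$, so its contribution to $\bm\mu$ and $\bm\varXi$ is killed by the Gaussian tail $w_x^{C_\epsilon^2/2}$ (up to log factors), which is $o(w_x^2)$. The within-block part reuses the Taylor and eigenvalue argument of Theorem~\ref{thm:simple_uniform} via blockwise $C^2$ regularity and $\y$-invariance, the stochastic term is the unchanged global empirical-process bound, and (ii) combines exact affineness in $\tilde z$ and kernel symmetry on $|v|\le\epsilon'/w_z$ with an $o(w_z^2)$ tail.

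One small correction: $\mathcal Y_\epsilon^\circ$ only keeps $\y$ away from \emph{internal} interfaces, so near the outer edges of $\mathcal Y$ the $\y$-direction kernel mass is genuinely truncated. Uniform invertibility of the population Hessian there must come from the cone-condition argument of Lemma~\ref{lem:eig_bound} applied to the whole continualized support, as the paper does, rather than from near-untruncated Gaussian mass. Your sample-level in/out split and your worry about first-order cancellation are also unnecessary, since both $\bm\mu_{out}$ and $\bm\varXi_{out}$ are already $o(w_x^2)$ in norm.
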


\begin{rem}
    The shrinking radii $\epsilon$ and $\epsilon'$ are of the same order as $O(w_x|\log w_x|)$ and $O(w_z|\log w_z|)$. When $w_x =w_z= n^{-\alpha}, \quad\alpha>0$, the shrinking radii are of order $O(\frac{\alpha\log n}{n^\alpha}) \to 0$ as $n\to \infty$. In addition, the theorem shows the interval-center substitution moves the query away from the excluded interface neighborhoods $(\mathcal Y_{\epsilon}^{\circ})^c$ effectively in practice.
\end{rem}
The proof is given in Appendix~\ref{app:proof_int_uniform}. Theorem~\ref{thm:int_uniform} directly applies only to the shrinking interior \(\mathcal X_{\epsilon}^{\circ}\). To extend the result to the excluded interface neighborhoods and the boundary points, for each \(\bm\upsilon\in\varUpsilon\), fix a representative
\[
\y_{\epsilon}^{\circ}
\in
I(\bm\upsilon)\cap\mathcal Y_{\epsilon}^{\circ},
\]
which exists for all sufficiently small \(\epsilon\). Under the left-open right-closed convention, every
\(\y\in\mathcal Y\), including a boundary point \(\y^\bullet\), belongs to a unique block \(I(\bm\upsilon)\). By Theorem~\ref{thm:conditional_distribution},
\[
F(z\mid\x^{(c)},\y)
=
F(z\mid\x^{(c)},\y_{\epsilon}^{\circ}),
\qquad
\y\in I(\bm\upsilon).
\]
Thus, although Theorem~\ref{thm:int_uniform} does not directly control the estimator at points within the excluded interface neighborhoods, their conditional distributions, including those at the boundary points, can be recovered by evaluating the estimator at the corresponding interior representatives. In what follows, with the continuous covariates $\x^{(c)}$ fixed, we therefore use
\[
\tilde\x_{\epsilon}^{\circ}
:=
(\x^{(c)},\y_{\epsilon}^{\circ})
\in \widetilde{\mathcal X}_{\epsilon}^{\circ}
\]
as the representative of any
\(\tilde\x=(\x^{(c)},\y)\) belonging to the same block.

\begin{cor}[Uniform convergence near covariate jumping points]
\label{cor:jump_uniform}
Under the conditions of Theorem~\ref{thm:int_uniform},
define
\[
\widetilde{\mathcal X}_{\epsilon}^{\bullet}
:=
\widetilde{\mathcal X}\setminus \widetilde{\mathcal X}_{\epsilon}^{\circ},
\]
which includes the excluded interface neighborhoods and
the covariate jumping points. For each
$\tilde\x_\epsilon^{\bullet}=(\x^{(c)},\y)\in
\widetilde{\mathcal X}_{\epsilon}^{\bullet}$,
let $\tilde\x_{\epsilon}^{\circ} \in \widetilde{\mathcal X}_{\epsilon}^{\circ}$
be its interior representative defined above.
Then the following hold.
\begin{enumerate}[(i)]
    \item If $Z$ is continuous,
    \begin{equation}\label{eq:jump_uniform_cont}
        \sup_{\substack{z\in\mathbb R\\
        \tilde\x_\epsilon^{\bullet} \in\widetilde{\mathcal X}_{\epsilon}^{\bullet}}}
        \bigl|
        \widehat F(z\mid\tilde\x_{\epsilon}^{\circ})
        -F(z\mid \tilde\x_\epsilon^{\bullet})
        \bigr|
        =
        O_p\!\left(
            w_x^2+w_z^2+
            \sqrt{\frac{|\log w_x|}{Mw_x^d}}
        \right).
    \end{equation}

    \item If $Z$ is ordered discrete and is continualized,
    \begin{equation}\label{eq:jump_uniform_disc}
        \sup_{\substack{
        \tilde z\in\widetilde{\mathcal Z}_{\epsilon'}^{\circ}\\
        \tilde\x_\epsilon^{\bullet}\in\widetilde{\mathcal X}_{\epsilon}^{\bullet}}}
        \bigl|
        \widehat F(\tilde z\mid
        \tilde\x_{\epsilon}^{\circ})
        -F(\tilde z\mid\tilde\x_\epsilon^{\bullet})
        \bigr|
        =
        O_p\!\left(
            w_x^2+w_z^2+
            \sqrt{\frac{|\log w_x|}{Mw_x^d}}
        \right).
    \end{equation}
\end{enumerate}
In particular, evaluating the federated estimator at the corresponding interior representative yields a uniformly consistent estimate of the conditional CDF throughout the excluded interface neighborhoods, including the jumping points, at the shrinking-interior rate.
\end{cor}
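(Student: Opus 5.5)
The argument is a reduction to Theorem~\ref{thm:int_uniform} through the blockwise invariance in Theorem~\ref{thm:conditional_distribution}. Fix $\tilde\x_\epsilon^{\bullet}=(\x^{(c)},\y)\in\widetilde{\mathcal X}_\epsilon^{\bullet}$. Under the left-open right-closed convention, $\y$ lies in a unique block $I(\bm\upsilon)$. We assign to it the representative $\tilde\x_\epsilon^{\circ}=(\x^{(c)},\y_\epsilon^{\circ})$, where $\y_\epsilon^{\circ}\in I(\bm\upsilon)\cap\mathcal Y_\epsilon^{\circ}$. This representative exists because $\epsilon\to 0$ while each block has a fixed positive side length. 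For instance, we may take $\y_\epsilon^{\circ}$ to be the block center, which is the interval-center substitution \eqref{eq:interval_center}, and this choice lies in $\mathcal Y_\epsilon^\circ$ once $\epsilon<\min_{j,\ell}|I_{j,\ell}|/2$. Theorem~\ref{thm:conditional_distribution}(i) for continuous $Z$, and (ii) for discrete $Z$, show that $F(\cdot\mid\x^{(c)},\y)$ depends on $\y$ only through its block. Hence
\[
F(z\mid\tilde\x_\epsilon^{\bullet})=F(z\mid\tilde\x_\epsilon^{\circ})
\]
for all $z$ in case (i), and for all $\tilde z$ in case (ii).

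With this identity we write
\[
\widehat F(z\mid\tilde\x_\epsilon^{\circ})-F(z\mid\tilde\x_\epsilon^{\bullet})=\widehat F(z\mid\tilde\x_\epsilon^{\circ})-F(z\mid\tilde\x_\epsilon^{\circ}).
\]
The map $\tilde\x_\epsilon^{\bullet}\mapsto\tilde\x_\epsilon^{\circ}$ takes values in $\widetilde{\mathcal X}_\epsilon^{\circ}$, so the supremum over $\widetilde{\mathcal X}_\epsilon^{\bullet}$ of the left side is at most the supremum over $z\in\mathbb R$ and $\tilde\x\in\widetilde{\mathcal X}_\epsilon^{\circ}$ of $|\widehat F(z\mid\tilde\x)-F(z\mid\tilde\x)|$. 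Theorem~\ref{thm:int_uniform}(i) bounds that quantity by $O_p(w_x^2+w_z^2+\sqrt{|\log w_x|/(Mw_x^d)})$, which gives \eqref{eq:jump_uniform_cont}. In case (ii) we keep $\tilde z\in\widetilde{\mathcal Z}_{\epsilon'}^\circ$ and apply \eqref{eq:int_uniform_disc} in the same way, which gives \eqref{eq:jump_uniform_disc}.

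The only delicate point is measurability and uniformity of the reduction, that is, making sure the $O_p$ statement is not affected by the choice of representatives. This is not a problem here. The supremum over the image set is dominated by the supremum over all of $\widetilde{\mathcal X}_\epsilon^{\circ}$, and that is a single random variable already controlled by Theorem~\ref{thm:int_uniform}, so the choice of representatives plays no role. The representatives can even be fixed once per block, since $\varUpsilon$ is finite.

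Two further checks are needed. First, the identity $F(z\mid\tilde\x_\epsilon^{\bullet})=F(z\mid\tilde\x_\epsilon^{\circ})$ must hold at true boundary points $\y^\bullet$. This follows because the conditional law at $\y^\bullet$ is defined blockwise through \eqref{eq:cdf_cont_blockwise}, and Theorem~\ref{thm:conditional_distribution}(iii) gives the matching one-sided limit from the block containing $\y^\bullet$. Second, $\epsilon$ must respect the requirement of Theorem~\ref{thm:int_uniform}, namely $\epsilon\ge C_\epsilon w_x\sqrt{|\log w_x|}$, while still being small enough for the representatives to exist. This holds for large $M$ because $w_x\to0$.
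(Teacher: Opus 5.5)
Your proposal is correct and is essentially the paper's proof. Both use the blockwise $\y$-invariance from Theorem~\ref{thm:conditional_distribution} to replace $F(z\mid\tilde\x_\epsilon^{\bullet})$ by $F(z\mid\tilde\x_\epsilon^{\circ})$, then bound the resulting supremum by the one over $\widetilde{\mathcal X}_\epsilon^{\circ}$ already controlled in Theorem~\ref{thm:int_uniform}. Your extra checks fill in details the paper leaves implicit: the block center lies in $\mathcal Y_\epsilon^\circ$ once $\epsilon<\min_{j,\ell}|I_{j,\ell}|/2$, and this is compatible with $\epsilon\ge C_\epsilon w_x\sqrt{|\log w_x|}$ for large $M$.
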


The proof is short: by Theorem~\ref{thm:conditional_distribution},
\[
F(z\mid\tilde\x_\epsilon^{\bullet})
=
F(z\mid\tilde\x_{\epsilon}^{\circ})
\]
for every
$\tilde\x\in\widetilde{\mathcal X}_{\epsilon}^{\bullet}$ and the corresponding interior representative $\tilde\x_{\epsilon}^{\circ} \in \widetilde{\mathcal X}_{\epsilon}^{\circ}$.
Consequently,
\[
\widehat F(z\mid\tilde\x_{\epsilon}^{\circ})
-F(z\mid\tilde\x_\epsilon^{\bullet})
=
\widehat F(z\mid\tilde\x_{\epsilon}^{\circ})
-F(z\mid\tilde\x_{\epsilon}^{\circ}),
\]
and the right-hand side is bounded uniformly by
Theorem~\ref{thm:int_uniform}.

Corollary~\ref{cor:jump_uniform} extends the shrinking-interior result to covariate interface neighborhoods, including the jumping points, through a deterministic blockwise reparametrization. This relies on the $\y$-invariance of the conditional CDF within each continualization block. Similarly, the response points near the boundaries $\tilde{z} \in \widetilde{\mathcal Z}_{\epsilon'}^{\bullet} := \widetilde{\mathcal Z}\setminus \widetilde{\mathcal Z}_{\epsilon'}^{\circ}$ also need to be analyzed carefully. We cannot obtain a convergence result analogous to \eqref{eq:jump_uniform_disc}, but the original conditional probability masses $p_{i}(\tilde\x):=\mathbb P(Z=z_{i}\mid\tilde\x)$ can be estimated by a finite difference of $\widehat F$ at the two interior representative points, with the resulting estimators attaining a strictly sharper rate.

\begin{cor}[Uniform convergence for conditional probability masses]
\label{cor:mass_uniform}
Under the conditions of Theorem~\ref{thm:int_uniform} for the
discrete response case, fix for each $i\in[L]$ two interior
evaluation points
$\tilde z_{i,-}^{\circ},\tilde z_{i,+}^{\circ}
\in I_i\cap\widetilde{\mathcal Z}_{\epsilon'}^{\circ}$
with $\tilde z_{i,-}^{\circ}<\tilde z_{i,+}^{\circ}$,
satisfying the separation condition
\begin{equation}\label{eq:separation}
    \inf_{1\le i\le L}
    \frac{\tilde z_{i,+}^{\circ}-\tilde z_{i,-}^{\circ}}
    {|I_i|}
    \ge c
    \quad\text{for some }c\in(0,1].
\end{equation}
Define the conditional mass estimator
\[
    \widehat p_i(\tilde\x)
    :=
    \frac{|I_i|}
    {\tilde z_{i,+}^{\circ}-\tilde z_{i,-}^{\circ}}
    \Bigl(
        \widehat F(\tilde z_{i,+}^{\circ}\mid\tilde\x)
        -
        \widehat F(\tilde z_{i,-}^{\circ}\mid\tilde\x)
    \Bigr).
\]
Then
\begin{equation}\label{eq:mass_uniform}
    \sup_{\substack{
        1\le i\le L\\
        \tilde\x\in\mathcal X_\epsilon^{\circ}
    }}
    \bigl|\widehat p_i(\tilde\x)-p_i(\tilde\x)\bigr|
    =
    O_p\!\left(
        w_x^2+
        \sqrt{\frac{|\log w_x|}{Mw_x^d}}
    \right)+o(w_z^2).
\end{equation}
\end{cor}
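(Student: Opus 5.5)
The plan is to write $\widehat p_i(\tilde\x)-p_i(\tilde\x)$ as a difference quotient of the estimation error at the two interior response points, using the exact affineness of the lifted conditional CDF. By Theorem~\ref{thm:conditional_distribution}(ii), for $\tilde z\in I_i$,
\[
F(\tilde z_{i,+}^{\circ}\mid\tilde\x)-F(\tilde z_{i,-}^{\circ}\mid\tilde\x)=\frac{\tilde z_{i,+}^{\circ}-\tilde z_{i,-}^{\circ}}{|I_i|}\,p_i(\tilde\x).
\]
Hence
\[
\widehat p_i(\tilde\x)-p_i(\tilde\x)=\frac{|I_i|}{\tilde z_{i,+}^{\circ}-\tilde z_{i,-}^{\circ}}\Bigl(\Delta(\tilde z_{i,+}^{\circ},\tilde\x)-\Delta(\tilde z_{i,-}^{\circ},\tilde\x)\Bigr),
\]
where $\Delta(\tilde z,\tilde\x):=\widehat F(\tilde z\mid\tilde\x)-F(\tilde z\mid\tilde\x)$. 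By the separation condition \eqref{eq:separation}, the prefactor is at most $1/c$, uniformly in $i$.

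A crude bound now follows directly from Theorem~\ref{thm:int_uniform}(ii): since $\tilde z_{i,\pm}^{\circ}\in\widetilde{\mathcal Z}_{\epsilon'}^{\circ}$, the difference is $O_p(w_x^2+w_z^2+\sqrt{|\log w_x|/(Mw_x^d)})$. To get the sharper $o(w_z^2)$ response term, I would revisit the decomposition \eqref{eq:est_decomposition} and split $\Delta$ into its stochastic part and its bias part $\beta_0^\diamond-F$. The stochastic parts and the $\x$-direction bias ($O(w_x^2)$) are controlled exactly as in the proof of Theorem~\ref{thm:int_uniform}, so the only thing to improve is the $\tilde z$-direction bias.

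For that term I would write
\[
\mathbb E\bigl[\varPhi((\tilde z-\tilde Z)/w_z)\mid\tilde\x'\bigr]=\int F(\tilde z-w_zv\mid\tilde\x')\phi(v)\,dv.
\]
On the set $|v|\le C_{\epsilon'}\sqrt{|\log w_z|}$, the point $\tilde z-w_zv$ stays in $I_i$ because $\tilde z$ is at distance at least $\epsilon'$ from the endpoints. There $F$ is exactly affine, and symmetry of $\phi$ kills the linear term, so the contribution equals $F(\tilde z\mid\tilde\x')$ up to the omitted Gaussian mass. Outside this set, $F$ is bounded and the tail mass is at most $2\bar\Phi(C_{\epsilon'}\sqrt{|\log w_z|})\lesssim w_z^{C_{\epsilon'}^2/2}/\sqrt{|\log w_z|}$. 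Since $C_{\epsilon'}>2$, this is $o(w_z^2)$, uniformly in $i$, in $\tilde z\in\widetilde{\mathcal Z}_{\epsilon'}^\circ$, and in $\tilde\x'$. The local linear weights are bounded in $\ell_1$ uniformly over the interior, as in Theorem~\ref{thm:int_uniform}, so passing this through the weights keeps the response bias at $o(w_z^2)$.

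The main obstacle is justifying this bias separation cleanly. The response-smoothing bias must be isolated from the covariate-smoothing bias, and the Gaussian tail must be uniform over $\tilde\x'$ in the covariate window, including the parts of that window that leave the block. I would handle the latter using the interior-covariate restriction $\tilde\x\in\widetilde{\mathcal X}^\circ_\epsilon$ together with the Gaussian covariate tail, mirroring the argument in the proof of Theorem~\ref{thm:int_uniform}. Finally, since $L$ is finite, taking the supremum over $i$ costs nothing, which gives \eqref{eq:mass_uniform}.
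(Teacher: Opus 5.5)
Your proposal is correct and follows the paper's proof essentially step for step. It uses the exact difference-quotient identity from the affineness in Theorem~\ref{thm:conditional_distribution}(ii), the $1/c$ bound on the prefactor from \eqref{eq:separation}, and the refined bias bound $O(w_x^2)+o(w_z^2)$ from the proof of Theorem~\ref{thm:int_uniform}(ii), which comes from symmetric cancellation of the affine part on $|v|\le a$ plus a Gaussian tail estimate; you re-derive that bound where the paper simply cites it. Your worry about covariate windows leaving the block is unnecessary for the response term: $F(\cdot\mid\tilde\x')$ is affine on each $I_i$ for every $\tilde\x'$, whatever its covariate block, so the $o(w_z^2)$ response-smoothing bound is automatically uniform over all of $\widetilde{\mathcal X}$, as the paper notes.
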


The three results of this section---Theorem~\ref{thm:int_uniform}, Corollary~\ref{cor:jump_uniform}, and Corollary~\ref{cor:mass_uniform}---together establish that the federated estimator developed in Section~\ref{sec:algorithm}
attains the uniform rate $O_{p}(w_x^{2}+w_z^{2}+\sqrt{|\log w_x|/(Mw_x^{d})})$ on the full continualized covariate domain in both the continuous- and discrete-response cases.

\section{Experiments}
\label{sec:experiments}
We compare FITS with two baseline methods, the marginal-based HeteroFedSyn \citep{li2026heterofedsyn} and the diffusion model FedTabDiff \citep{sattarov2024fedtabdiff}. We evaluate the synthetic data generators with three models, including linear regression and logistic regression for parameter estimation, and a deep learning model (FT-Transformer, \citep{gorishniy2021revisiting}) for prediction.  
The simulation study uses a controlled mixed-type data-generating mechanism with known downstream parameters, allowing direct assessment of parameter recovery under imbalanced federated sample sizes. The real-data study uses two mixed-type tabular data sets, one for regression and one for binary classification, to examine these generators' utility in practice.

Throughout this section, we ask not only whether synthetic data should replace local data, but also whether synthetic samples generated by FITS provide a statistically useful complement to client-level local data, especially for clients with limited sample sizes.
This distinction is important: in federated settings, the value of synthetic data depends not only on the quality of the generator but also on how synthetic and local data is combined in the downstream tasks. 
Table~\ref{tab:experiment_designs} summarizes the datasets and experimental designs.

\begin{table}[h]
\centering
\caption{Experimental designs for the simulation and real-data studies.}
\label{tab:experiment_designs}
\resizebox{\textwidth}{!}{
\begin{tabular}{ccccccccccc}
\specialrule{0.8pt}{0pt}{1.5pt}
\specialrule{0.8pt}{0pt}{3pt}
 & Simulation & Bike Sharing & Adult Income \\
\specialrule{0.8pt}{3pt}{1.5pt}
\specialrule{0.8pt}{0pt}{3pt}
Task & Regression and classification & Regression & Classification \\
Data source & Synthetic DGP & UCI Bike Sharing & UCI Adult Income \\
Full sample size & $10{,}000$ & $17{,}379$ & $45{,}222$ \\
Training sample size & $6{,}000$ & $13{,}903$ & $36{,}177$ \\
Test sample size & $4{,}000$ & $3{,}476$ & $9{,}045$ \\
Generated sample size & $3{,}000$ & $5{,}000$ & $20{,}000$ \\
Number of repetitions &  \multicolumn{3}{c}{$50$} \\
Clients &  \multicolumn{3}{c}{$10$} \\
Client proportions 
& \multicolumn{3}{c}{$(30\%,20\%,10\%,9\%,8\%,7\%,6\%,5\%,4\%,1\%)$} \\
Client size range & $60$--$1{,}800$ & $139$--$4{,}171$ & $362$--$10{,}853$ \\
Covariates/predictors  & $6$ & $12$ & $14$ \\
Variable types 
& Continuous, ordinal, nominal 
& Continuous, ordinal 
& Continuous, ordinal, nominal \\
FITS generation
& Joint [Shared $X$, task-specific $Y$]
& Joint $[X,Y]$
& Joint $[X,Y]$ \\
Parameter reference
& True coefficients
& Full-data estimator
& Full-data estimator \\
Distributional reference
& Full data
& Full data
& Full data \\
\specialrule{0.8pt}{3pt}{1.5pt}
\specialrule{0.8pt}{0pt}{0pt}
\end{tabular}
}
\end{table}

\subsection{Experimental setup}
\label{subsec:experiment_setup}

\subsubsection{Federated split with sample-size imbalance}

All experiments use \(K=10\) clients. Unless otherwise stated, the training data is randomly permuted and split according to the fixed proportion vector
$
(p_1,\ldots,p_{10})
=
(30\%,20\%,10\%,9\%,8\%,7\%$ $,6\%,5\%,4\%,1\%).
$ Thus client \(k\) receives approximately \(p_k n_{\rm train}\) observations. This creates a deliberately imbalanced federated system: the largest client contains about \(30\%\) of the training data, whereas the smallest client contains about \(1\%\).

This design is intended to stress-test the downstream data usage modes under client-size imbalance. If all clients had comparable sample sizes, the relative advantage of synthetic data would be harder to separate from ordinary sampling variability. In contrast, under the present split, estimators from the local datasets only are expected to be high-variance on the smallest clients and comparatively stable on the largest clients. This makes it possible to assess whether synthetic data primarily helps small clients and whether it degrades performance on large clients. The main experiments focus on sample-size imbalance while keeping the federated split otherwise random.

\subsubsection{Federated generator configurations}
For FITS, we use the Gaussian-kernel scheme and the KNN-kernel scheme, abbreviated as FITS-G and FITS-K respectively. 
The general settings include that the target bandwidth parameter $w_z$ is set to $0.05$ and the conditional CDF grid size is $G=300$. For covariates' kernel-fitting, FITS-G takes the Gaussian kernel with Scott's rule kernel bandwidth $w^{(k)}_x = n_{train}^{-1/(d+4)}$, where $n_{train}$ is the size of the full training dataset and the dimension $d$ increases as the covariates are added along the chain process.
FITS-K uses the Epanechnikov kernel with $\eta = 1.05$ and the global KNN budget is dimension-adaptive: when $d_{\rm cond}=1$, $r_0 = 350$; when $d_{\rm cond}=2$, $r_0 = 250$ and otherwise, $r_0 = 200$.
For HeteroFedSyn and FedTabDiff, Appendix~\ref{app:baseline_methods} gives the detailed configurations.

\subsubsection{Downstream data usage modes}

For each downstream task, we compare four data usage modes introduced in Section~\ref{subsec:integration_modes}, which are used to explore how synthetic and local data is combined in the downstream tasks.
Mode~1 (M1) uses only synthetic data. 
Mode~2 (M2) uses only the client's local data (and therefore does not use federated learning). 
Mode~3 (M3) pools local observations with a subset of the synthetic observations directly in a mixed-data objective.
Mode~4 (M4) combines local and synthetic observations through a loss-reweighted objective, where the synthetic component receives a tunable weight.

\subsection{Simulation experiments}
\label{subsec:simulation_experiments}

\subsubsection{Simulation design}

The simulation study uses a mixed-type tabular data-generating mechanism with six covariates,
$(X_B,X_{G_1},X_{G_2},X_O,X_L,X_{G_3}),$
and two task-specific responses, \(Y_{\rm lin}\) for linear regression and \(Y_{\rm log}\) for logistic regression. 
The variable \(X_B\) is binary, \(X_O\) is ordinal, and the remaining covariates are continuous. The dependence structure among the covariates and the model settings are shown in Section~\ref{subsec:cov_ship}. 

Operationally, the generator first produces a shared synthetic covariate table according to the chain order $(X_B,X_{G_1},X_{G_2},X_O,X_L,X_{G_3})$. 
The two task-specific responses, $Y_{\rm lin}$ and $Y_{\rm log}$, are then generated on top of this same covariate table. 
This design ensures that the linear and logistic downstream comparisons differ only in the response-generating mechanism, while sharing the same generated covariate distribution. 
This shared-covariate design removes sampling variation in the covariate distribution as a confounder when comparing the regression and classification tasks.
More details are described in Table~\ref{tab:task_chain_parameters}.

\subsubsection{Baseline comparisons}\label{sec:sim_baselines}
We conduct three sets of experiments in this section.
First, we compare FITS-G with two baselines, HeteroFedSyn and FedTabDiff, across all modes and clients.
We report the parameter estimation errors of linear and logistic regression, as well as the classification accuracy of FT-Transformer.
Second, we compare the Gaussian-kernel and KNN-kernel schemes, denoted by FITS-G and FITS-K, respectively.
Third, we evaluate FITS-G under the four modes and their respective parameter settings to identify effective ways to use FITS-generated synthetic data.
All results are reported as the mean (standard deviation) over 50 repeats, with lower errors and higher accuracy indicating better performance.

\begin{table}[ht]
\centering
\caption{Simulation: linear-regression parameter estimation error.}
\label{tab:simulation_linear_parameter_l2_error_fixed_modes}
\begingroup
\setlength{\tabcolsep}{4pt}
\renewcommand{\arraystretch}{1.35}
\resizebox{\textwidth}{!}{%
\begin{tabular}{@{}cc|*{10}{c}@{}}
\specialrule{0.8pt}{0pt}{1.5pt}
\specialrule{0.8pt}{0pt}{3pt}
\multirow{2}{*}{Mode} & \multirow{2}{*}{Baselines} & \multicolumn{10}{c}{Clients} \\
\cline{3-12}
 & & Client 0 (30\%) & Client 1 (20\%) & Client 2 (10\%) & Client 3 (9\%) & Client 4 (8\%) & Client 5 (7\%) & Client 6 (6\%) & Client 7 (5\%) & Client 8 (4\%) & Client 9 (1\%) \\
\specialrule{0.8pt}{3pt}{1.5pt}
\specialrule{0.8pt}{0pt}{3pt}
\multirow{3}{*}{M1} & FITS-G & \textbf{0.1584 (0.0698)} & \textbf{0.1584 (0.0698)} & \textbf{0.1584 (0.0698)} & \textbf{0.1584 (0.0698)} & \textbf{0.1584 (0.0698)} & \textbf{0.1584 (0.0698)} & \textbf{0.1584 (0.0698)} & \textbf{0.1584 (0.0698)} & \textbf{0.1584 (0.0698)} & \textbf{0.1584 (0.0698)} \\
 & HeteroFedSyn & 4.6669 (0.3037) & 4.6669 (0.3037) & 4.6669 (0.3037) & 4.6669 (0.3037) & 4.6669 (0.3037) & 4.6669 (0.3037) & 4.6669 (0.3037) & 4.6669 (0.3037) & 4.6669 (0.3037) & 4.6669 (0.3037) \\
 & FedTabDiff & 1.3846 (0.5311) & 1.3846 (0.5311) & 1.3846 (0.5311) & 1.3846 (0.5311) & 1.3846 (0.5311) & 1.3846 (0.5311) & 1.3846 (0.5311) & 1.3846 (0.5311) & 1.3846 (0.5311) & 1.3846 (0.5311) \\
\hline
\multirow{1}{*}{M2} & None & \textbf{0.0751 (0.0329)} & \textbf{0.0930 (0.0385)} & \textbf{0.1302 (0.0568)} & \textbf{0.1446 (0.0698)} & \textbf{0.1310 (0.0616)} & \textbf{0.1541 (0.0704)} & \textbf{0.1751 (0.0725)} & \textbf{0.2063 (0.0856)} & \textbf{0.2198 (0.1036)} & \textbf{0.4730 (0.1856)} \\
\hline
\multirow{3}{*}{\shortstack{M3 }} & FITS-G & \textbf{0.1060 (0.0435)} & \textbf{0.1225 (0.0549)} & \textbf{0.1350 (0.0622)} & \textbf{0.1406 (0.0620)} & \textbf{0.1405 (0.0638)} & \textbf{0.1413 (0.0629)} & \textbf{0.1438 (0.0654)} & \textbf{0.1467 (0.0663)} & \textbf{0.1501 (0.0669)} & \textbf{0.1545 (0.0682)} \\
 & HeteroFedSyn & 3.2202 (0.2503) & 3.5844 (0.2632) & 4.0457 (0.2720) & 4.1081 (0.2955) & 4.1559 (0.2868) & 4.2088 (0.2809) & 4.2857 (0.2856) & 4.3372 (0.2879) & 4.3956 (0.2900) & 4.5981 (0.2999) \\
 & FedTabDiff & 0.9799 (0.4452) & 1.0844 (0.4678) & 1.2112 (0.5020) & 1.2301 (0.5039) & 1.2438 (0.5066) & 1.2589 (0.5084) & 1.2780 (0.5128) & 1.2891 (0.5153) & 1.3106 (0.5159) & 1.3650 (0.5280) \\
\hline
\multirow{3}{*}{\shortstack{M4 }} & FITS-G & \textbf{0.0863 (0.0339)} & \textbf{0.1062 (0.0477)} & \textbf{0.1206 (0.0580)} & \textbf{0.1300 (0.0570)} & \textbf{0.1285 (0.0603)} & \textbf{0.1303 (0.0579)} & \textbf{0.1337 (0.0628)} & \textbf{0.1387 (0.0648)} & \textbf{0.1447 (0.0645)} & \textbf{0.1512 (0.0669)} \\
 & HeteroFedSyn & 2.4775 (0.2225) & 2.9268 (0.2443) & 3.5818 (0.2570) & 3.6796 (0.2920) & 3.7555 (0.2784) & 3.8404 (0.2678) & 3.9677 (0.2771) & 4.0556 (0.2784) & 4.1585 (0.2823) & 4.5319 (0.2971) \\
 & FedTabDiff & 0.7655 (0.3807) & 0.8985 (0.4177) & 1.0802 (0.4758) & 1.1101 (0.4787) & 1.1318 (0.4841) & 1.1565 (0.4877) & 1.1888 (0.4955) & 1.2078 (0.5007) & 1.2454 (0.5018) & 1.3462 (0.5249) \\
\specialrule{0.8pt}{3pt}{1.5pt}
\specialrule{0.8pt}{0pt}{0pt}
\end{tabular}
}
\endgroup
\end{table}

\begin{table}[ht]
\centering
\caption{Simulation: logistic-regression parameter estimation error.}
\label{tab:simulation_logistic_parameter_l2_error_fixed_modes}
\begingroup
\setlength{\tabcolsep}{4pt}
\renewcommand{\arraystretch}{1.35}
\resizebox{\textwidth}{!}{%
\begin{tabular}{@{}cc|*{10}{c}@{}}
\specialrule{0.8pt}{0pt}{1.5pt}
\specialrule{0.8pt}{0pt}{3pt}
\multirow{2}{*}{Mode} & \multirow{2}{*}{Baselines} & \multicolumn{10}{c}{Clients} \\
\cline{3-12}
 & & Client 0 (30\%) & Client 1 (20\%) & Client 2 (10\%) & Client 3 (9\%) & Client 4 (8\%) & Client 5 (7\%) & Client 6 (6\%) & Client 7 (5\%) & Client 8 (4\%) & Client 9 (1\%) \\
\specialrule{0.8pt}{3pt}{1.5pt}
\specialrule{0.8pt}{0pt}{3pt}
\multirow{3}{*}{M1} & FITS-G & \textbf{0.7111 (0.1401)} & \textbf{0.7111 (0.1401)} & \textbf{0.7111 (0.1401)} & \textbf{0.7111 (0.1401)} & \textbf{0.7111 (0.1401)} & \textbf{0.7111 (0.1401)} & \textbf{0.7111 (0.1401)} & \textbf{0.7111 (0.1401)} & \textbf{0.7111 (0.1401)} & \textbf{0.7111 (0.1401)} \\
 & HeteroFedSyn & 3.7460 (0.5029) & 3.7460 (0.5029) & 3.7460 (0.5029) & 3.7460 (0.5029) & 3.7460 (0.5029) & 3.7460 (0.5029) & 3.7460 (0.5029) & 3.7460 (0.5029) & 3.7460 (0.5029) & 3.7460 (0.5029) \\
 & FedTabDiff & 2.1627 (0.7893) & 2.1627 (0.7893) & 2.1627 (0.7893) & 2.1627 (0.7893) & 2.1627 (0.7893) & 2.1627 (0.7893) & 2.1627 (0.7893) & 2.1627 (0.7893) & 2.1627 (0.7893) & 2.1627 (0.7893) \\
\hline
\multirow{1}{*}{M2} & None & \textbf{0.3923 (0.1852)} & \textbf{0.5028 (0.2397)} & \textbf{0.6821 (0.3112)} & \textbf{0.8635 (0.3364)} & \textbf{0.7619 (0.3377)} & \textbf{0.8787 (0.4005)} & \textbf{0.8757 (0.4659)} & \textbf{1.0606 (0.5028)} & \textbf{1.1961 (0.6224)} & \textbf{10.6726 (16.6535)} \\
\hline
\multirow{3}{*}{\shortstack{M3 }} & FITS-G & \textbf{0.5117 (0.1249)} & \textbf{0.5694 (0.1246)} & \textbf{0.6281 (0.1209)} & \textbf{0.6370 (0.1343)} & \textbf{0.6446 (0.1351)} & \textbf{0.6602 (0.1370)} & \textbf{0.6615 (0.1381)} & \textbf{0.6714 (0.1420)} & \textbf{0.6681 (0.1364)} & \textbf{0.6983 (0.1363)} \\
 & HeteroFedSyn & 2.7278 (0.3807) & 2.9886 (0.4202) & 3.2917 (0.4382) & 3.3256 (0.4551) & 3.3909 (0.4637) & 3.4413 (0.4649) & 3.4779 (0.4677) & 3.5052 (0.4744) & 3.5438 (0.4838) & 3.6935 (0.4914) \\
 & FedTabDiff & 1.4479 (0.5721) & 1.6263 (0.6169) & 1.8196 (0.6498) & 1.8533 (0.6480) & 1.9047 (0.6881) & 1.9549 (0.7112) & 1.9653 (0.7155) & 1.9843 (0.7242) & 2.0228 (0.7459) & 2.1242 (0.7750) \\
\hline
\multirow{3}{*}{\shortstack{M4 }} & FITS-G & \textbf{0.4215 (0.1316)} & \textbf{0.4946 (0.1278)} & \textbf{0.5717 (0.1161)} & \textbf{0.5882 (0.1389)} & \textbf{0.5979 (0.1375)} & \textbf{0.6242 (0.1382)} & \textbf{0.6237 (0.1439)} & \textbf{0.6425 (0.1466)} & \textbf{0.6334 (0.1397)} & \textbf{0.6865 (0.1351)} \\
 & HeteroFedSyn & 2.2095 (0.3488) & 2.5441 (0.3994) & 2.9683 (0.4105) & 3.0208 (0.4436) & 3.1229 (0.4448) & 3.2045 (0.4459) & 3.2641 (0.4482) & 3.3091 (0.4592) & 3.3746 (0.4736) & 3.6439 (0.4824) \\
 & FedTabDiff & 1.1230 (0.4924) & 1.3417 (0.5581) & 1.5946 (0.5915) & 1.6461 (0.5847) & 1.7227 (0.6341) & 1.7997 (0.6657) & 1.8168 (0.6732) & 1.8451 (0.6853) & 1.9086 (0.7182) & 2.0893 (0.7651) \\
\specialrule{0.8pt}{3pt}{1.5pt}
\specialrule{0.8pt}{0pt}{0pt}
\end{tabular}
}
\endgroup
\end{table}

\begin{table}[ht]
\centering
\caption{Simulation: FT-Transformer classification accuracy on the test data.}
\label{tab:simulation_ft_classification_test_accuracy_fixed_modes}
\begingroup
\setlength{\tabcolsep}{4pt}
\renewcommand{\arraystretch}{1.35}
\resizebox{\textwidth}{!}{%
\begin{tabular}{@{}cc|*{10}{c}@{}}
\specialrule{0.8pt}{0pt}{1.5pt}
\specialrule{0.8pt}{0pt}{3pt}
\multirow{2}{*}{Mode} & \multirow{2}{*}{Baselines} & \multicolumn{10}{c}{Clients} \\
\cline{3-12}
 & & Client 0 (30\%) & Client 1 (20\%) & Client 2 (10\%) & Client 3 (9\%) & Client 4 (8\%) & Client 5 (7\%) & Client 6 (6\%) & Client 7 (5\%) & Client 8 (4\%) & Client 9 (1\%) \\
\specialrule{0.8pt}{3pt}{1.5pt}
\specialrule{0.8pt}{0pt}{3pt}
\multirow{3}{*}{M1} & FITS-G & \textbf{0.8608 (0.0079)} & \textbf{0.8608 (0.0079)} & \textbf{0.8608 (0.0079)} & \textbf{0.8608 (0.0079)} & \textbf{0.8608 (0.0079)} & \textbf{0.8608 (0.0079)} & \textbf{0.8608 (0.0079)} & \textbf{0.8608 (0.0079)} & \textbf{0.8608 (0.0079)} & \textbf{0.8608 (0.0079)} \\
 & HeteroFedSyn & 0.7650 (0.0221) & 0.7650 (0.0221) & 0.7650 (0.0221) & 0.7650 (0.0221) & 0.7650 (0.0221) & 0.7650 (0.0221) & 0.7650 (0.0221) & 0.7650 (0.0221) & 0.7650 (0.0221) & 0.7650 (0.0221) \\
 & FedTabDiff & 0.8582 (0.0075) & 0.8582 (0.0075) & 0.8582 (0.0075) & 0.8582 (0.0075) & 0.8582 (0.0075) & 0.8582 (0.0075) & 0.8582 (0.0075) & 0.8582 (0.0075) & 0.8582 (0.0075) & 0.8582 (0.0075) \\
\hline
\multirow{1}{*}{M2} & None & \textbf{0.8547 (0.0101)} & \textbf{0.8406 (0.0093)} & \textbf{0.8280 (0.0105)} & \textbf{0.8278 (0.0115)} & \textbf{0.8237 (0.0127)} & \textbf{0.8242 (0.0123)} & \textbf{0.8259 (0.0122)} & \textbf{0.8246 (0.0135)} & \textbf{0.8234 (0.0134)} & \textbf{0.8193 (0.0235)} \\
\hline
\multirow{3}{*}{\shortstack{M3 }} & FITS-G & \textbf{0.8692 (0.0080)} & \textbf{0.8672 (0.0070)} & \textbf{0.8644 (0.0074)} & \textbf{0.8645 (0.0075)} & \textbf{0.8629 (0.0082)} & \textbf{0.8638 (0.0072)} & \textbf{0.8635 (0.0084)} & \textbf{0.8634 (0.0081)} & \textbf{0.8617 (0.0079)} & \textbf{0.8603 (0.0081)} \\
 & HeteroFedSyn & 0.8263 (0.0144) & 0.8100 (0.0140) & 0.7974 (0.0163) & 0.7953 (0.0181) & 0.7914 (0.0170) & 0.7873 (0.0185) & 0.7845 (0.0195) & 0.7856 (0.0191) & 0.7835 (0.0197) & 0.7725 (0.0217) \\
 & FedTabDiff & 0.8685 (0.0068) & 0.8666 (0.0065) & 0.8643 (0.0076) & 0.8623 (0.0084) & 0.8623 (0.0075) & 0.8632 (0.0081) & 0.8624 (0.0081) & 0.8616 (0.0091) & 0.8611 (0.0074) & 0.8602 (0.0077) \\
\hline
\multirow{3}{*}{\shortstack{M4 }} & FITS-G & \textbf{0.8689 (0.0084)} & \textbf{0.8667 (0.0079)} & 0.8647 (0.0085) & 0.8633 (0.0078) & \textbf{0.8634 (0.0071)} & 0.8620 (0.0080) & \textbf{0.8627 (0.0083)} & 0.8612 (0.0096) & \textbf{0.8606 (0.0080)} & \textbf{0.8605 (0.0075)} \\
 & HeteroFedSyn & 0.8412 (0.0147) & 0.8247 (0.0133) & 0.8051 (0.0170) & 0.8027 (0.0187) & 0.8000 (0.0149) & 0.7964 (0.0147) & 0.7933 (0.0159) & 0.7899 (0.0194) & 0.7853 (0.0178) & 0.7757 (0.0203) \\
 & FedTabDiff & 0.8684 (0.0078) & 0.8663 (0.0071) & \textbf{0.8649 (0.0077)} & \textbf{0.8637 (0.0077)} & 0.8626 (0.0068) & \textbf{0.8621 (0.0070)} & 0.8623 (0.0068) & \textbf{0.8616 (0.0082)} & 0.8602 (0.0090) & 0.8602 (0.0084) \\
\specialrule{0.8pt}{3pt}{1.5pt}
\specialrule{0.8pt}{0pt}{0pt}
\end{tabular}
}
\endgroup
\end{table}

Tables~\ref{tab:simulation_linear_parameter_l2_error_fixed_modes},
\ref{tab:simulation_logistic_parameter_l2_error_fixed_modes}, and
\ref{tab:simulation_ft_classification_test_accuracy_fixed_modes}
present the first set of experiments comparing the synthetic data generation methods.
Unless stated otherwise, we always use all available synthetic data and each client's local data for model training in M3.
M4 also uses all local and synthetic data but employs a weighted loss function, in which $\lambda = 0.5$ specifies the weight assigned to each synthetic observation.
For each mode and client, the best result among the compared methods is shown in bold.
Overall, FITS-generated synthetic data yields better performance than the baselines.
Clients with smaller local datasets benefit more from synthetic data, which can even serve as an effective substitute for their local data in downstream model training.
For example, in Table~\ref{tab:simulation_linear_parameter_l2_error_fixed_modes}, training solely on FITS-G synthetic data (M1) begins to outperform training solely on local data (M2) at Client~6 (6\%), with parameter estimation errors of $0.1584$ and $0.1751$, respectively.
Similarly, in Table~\ref{tab:simulation_logistic_parameter_l2_error_fixed_modes}, this advantage begins at Client~3 (9\%).
Table~\ref{tab:simulation_ft_classification_test_accuracy_fixed_modes} also shows that FITS-G improves FT-Transformer classification accuracy across all clients.
Regarding data usage, combining synthetic and local data through M3 or M4 is more effective than using synthetic data alone through M1.
Across these tables, M3 and M4 generally outperform M1; for FT-Transformer classification in Table~\ref{tab:simulation_ft_classification_test_accuracy_fixed_modes}, they also outperform M2.

Next, we compare FITS-G and FITS-K in
Tables~\ref{tab:simulation_linear_parameter_l2_error_fixed_modes_fits_g_vs_fits_k},
\ref{tab:simulation_logistic_parameter_l2_error_fixed_modes_fits_g_vs_fits_k}, and
\ref{tab:simulation_ft_classification_test_accuracy_fixed_modes_fits_g_vs_fits_k}.
The performance differences between the Gaussian-kernel and KNN-kernel schemes are negligible across the downstream tasks.
These findings support the effectiveness of the proposed KNN-kernel scheme for FITS-based synthetic data generation.

The preceding experiments show that data usage modes affect downstream performance differently.
In the third set of experiments, we therefore vary the parameters of M3 and M4 for FITS-G to investigate how synthetic data can be used more effectively.
In Tables~\ref{tab:simulation_linear_parameter_l2_error_fixed_modes_fits_g_all_modes_parameters},
\ref{tab:simulation_logistic_parameter_l2_error_fixed_modes_fits_g_all_modes_parameters}, and
\ref{tab:simulation_ft_classification_test_accuracy_fixed_modes_fits_g_all_modes_parameters},
the best result across all modes and parameter settings is shown in bold for each client.
These results suggest that M4 is an effective way to exploit synthetic data, provided that the synthetic-observation loss weight $\lambda$ is chosen appropriately.

\subsubsection{Distributional fidelity}

Synthetic data needs to exhibit distributional fidelity.
In this section, we use a distributional metric called Energy Distance (ED, \cite{Szkely2013EnergySA}) to assess the fidelity of this synthetic data. Compared with the commonly used KL divergence, ED can be estimated directly from samples without requiring density estimation and avoids the infinite divergence that may occur due to mismatched supports between original and synthetic distributions.
\begin{equation*}
D_E(\hat P, \hat Q)=
\sqrt{
\frac{2}{nm}\sum_{i=1}^{n}\sum_{j=1}^{m}\| \bm x_i- \bm y_j\|_2
-\frac{1}{n^2}\sum_{i=1}^{n}\sum_{k=1}^{n}\|\bm x_i- \bm x_k\|_2
-\frac{1}{m^2}\sum_{j=1}^{m}\sum_{\ell=1}^{m}\|\bm y_j- \bm y_\ell\|_2
},
\end{equation*} where $\hat P$ is the empirical distribution of the sample $\{\bm x_i\}_{i=1}^n$ drawn from $P$; similarly, $\hat Q$ is the empirical distribution of the sample $\{\bm y_i\}_{i=1}^m$ drawn from $Q$. 
We evaluate the distributional distance between the synthetic data and the full data including the training samples and test samples.
We use the full data as the reference because it includes the training data used for synthesis and, owing to its larger sample size, is expected to provide a more reliable approximation to the underlying data distribution. 
This choice allows us to assess synthetic-data fidelity against a reference that incorporates the training observations while more broadly representing the underlying distribution. In the experiment, continuous and ordered variables are standardized using the means
and standard deviations computed from the real reference data.
Unordered categorical variables are one-hot encoded and each one-hot block is divided by
$\sqrt{2}$, giving unit Euclidean distance between distinct categories.

In Table~\ref{tab:simulation_energy_distance}, we compute three Energy Distances to assess the fidelity of the covariate distribution, the joint distribution of the covariates and the regression response, and the joint distribution of the covariates and the classification response.
FITS-G gives the lowest ED values among all synthetic data generators, providing the best distributional fidelity for the full-dimensional tabular data.
\begin{table}[ht]
\centering
\caption{Simulation: Energy Distance between synthetic and full data.}
\label{tab:simulation_energy_distance}
\begingroup
\setlength{\tabcolsep}{4pt}
\renewcommand{\arraystretch}{1.35}
\resizebox{\textwidth}{!}{%
\begin{tabular*}{\textwidth}{@{\extracolsep{\fill}}l*{3}{c}@{}}
\specialrule{0.8pt}{0pt}{1.5pt}
\specialrule{0.8pt}{0pt}{3pt}
Variables & HeteroFedSyn & FedTabDiff & FITS-G \\
\specialrule{0.8pt}{3pt}{1.5pt}
\specialrule{0.8pt}{0pt}{3pt}
$X$ & 0.0702 (0.0084) & 0.1215 (0.0493) & \textbf{0.0392 (0.0079)} \\
\hline
$(X,Y_{\mathrm{lin}})$ & 0.1074 (0.0073) & 0.1260 (0.0492) & \textbf{0.0429 (0.0076)} \\
\hline
$(X,Y_{\mathrm{log}})$ & 0.0752 (0.0074) & 0.1231 (0.0490) & \textbf{0.0409 (0.0079)} \\
\specialrule{0.8pt}{3pt}{1.5pt}
\specialrule{0.8pt}{0pt}{0pt}
\end{tabular*}
}
\endgroup
\end{table}

\subsubsection{Additional experiments}

Appendix~\ref{app:supp_results} provides supplementary results for
Section~\ref{sec:sim_baselines}.
For the first set of experiments, which compares FITS-G with HeteroFedSyn
and FedTabDiff across all modes and clients,
Tables~\ref{tab:simulation_linear_test_mse_fixed_modes},
\ref{tab:simulation_ft_regression_test_mse_fixed_modes}, and
\ref{tab:simulation_logistic_test_accuracy_fixed_modes}
report the test MSE of linear regression, the test MSE of FT-Transformer
regression, and the test accuracy of logistic regression, respectively.
For the second set of experiments, which compares the Gaussian-kernel
and KNN-kernel schemes, FITS-G and FITS-K, the corresponding results
are presented in
Tables~\ref{tab:simulation_linear_test_mse_fixed_modes_fits_g_vs_fits_k},
\ref{tab:simulation_ft_regression_test_mse_fixed_modes_fits_g_vs_fits_k}, and
\ref{tab:simulation_logistic_test_accuracy_fixed_modes_fits_g_vs_fits_k}.
For the third set of experiments, which examines FITS-G across the four
modes and their respective parameter settings to identify effective
ways to use synthetic data, the corresponding results are provided in
Tables~\ref{tab:simulation_linear_test_mse_fixed_modes_fits_g_all_modes_parameters},
\ref{tab:simulation_ft_regression_test_mse_fixed_modes_fits_g_all_modes_parameters}, and
\ref{tab:simulation_logistic_test_accuracy_fixed_modes_fits_g_all_modes_parameters}.

\subsection{Real-data experiments}
\label{subsec:real_data_experiments}

\subsubsection{Datasets and preprocessing}

We next evaluate the method on two real mixed-type tabular data sets. 
The first is the UCI Bike Sharing data set, used for a linear regression task with response variable \texttt{cnt}. 
After preprocessing, the dataset contains \(17{,}379\) observations, of which \(13{,}903\) are used for training and \(3{,}476\) for testing. 
As in the simulation study, the training set is randomly split across ten clients according to the fixed proportion vector $(p_1, ..., p_{10})$, yielding client sizes of $\{4171,2781,1391,1251,1112,973,834,695,556,139\}$.
The preprocessed predictors consist of four continuous variables and eight discrete variables. The response is standardized before fitting the generator and downstream linear models.
The second real-data experiment uses the Adult Income dataset for binary classification. After preprocessing, the dataset contains \(45{,}222\) observations, with \(36{,}177\) used for training and \(9{,}045\) for testing. 
The training dataset is randomly split according to the fixed proportion vector $(p_1, . . . , p_{10})$, yielding client sizes of $\{10853,7235,3618,3256,2894,2532,2171,1809,1447,362\}$.
The predictors include five continuous variables and nine discrete variables. 
The binary income response is encoded as \(0/1\).
For both data sets, missing or invalid entries are removed during preprocessing. 
Continuous predictors are standardized using complete-data means and standard deviations. 
Appendix~\ref{app:realdata_setting} provides more detailed information about the datasets and the chain order in FITS.
Unlike the simulation study, the true downstream parameters are unavailable in the real-data experiments. 
Therefore, parameter error is computed relative to the full-data reference estimator fitted on the complete preprocessed training and test data. Prediction metrics are computed on the test set.

\subsubsection{Bike dataset}
We evaluate regression performance on the Bike dataset.
As in the simulation study, this evaluation comprises three sets of
experiments and an assessment of distributional fidelity.

Tables~\ref{tab:bike_linear_parameter_l2_error_fixed_modes},
\ref{tab:bike_linear_test_mse_fixed_modes}, and
\ref{tab:bike_ft_regression_test_mse_fixed_modes}
present the first set of experiments comparing the synthetic data
generation methods, with $\lambda$ fixed at $0.1$ for M4.
The results show that FITS achieves the best performance across the
regression tasks and models considered on this dataset.
The second set of experiments compares FITS-G and FITS-K, as reported in
Tables~\ref{tab:bike_linear_parameter_l2_error_fixed_modes_fits_g_vs_fits_k},
\ref{tab:bike_linear_test_mse_fixed_modes_fits_g_vs_fits_k}, and
\ref{tab:bike_ft_regression_test_mse_fixed_modes_fits_g_vs_fits_k}.
The two schemes yield broadly comparable performance on the regression
tasks.
The third set of experiments, presented in
Tables~\ref{tab:bike_linear_parameter_l2_error_fixed_modes_fits_g_all_modes_parameters},
\ref{tab:bike_linear_test_mse_fixed_modes_fits_g_all_modes_parameters}, and
\ref{tab:bike_ft_regression_test_mse_fixed_modes_fits_g_all_modes_parameters},
further suggests that M4, with an appropriately chosen $\lambda$,
provides an effective way to combine synthetic data with real local data.
In particular, M4 with $\lambda=1$ is equivalent to M3 with $r=\infty$.
Regarding distributional fidelity, the Energy Distance results in
Table~\ref{tab:bike_energy_distance} indicate that FITS more faithfully
preserves the joint distribution of the Bike data.

\begin{table}[H]
\centering
\caption{Bike: linear-regression parameter estimation error.}
\label{tab:bike_linear_parameter_l2_error_fixed_modes}
\begingroup
\setlength{\tabcolsep}{4pt}
\renewcommand{\arraystretch}{1.35}
\resizebox{\textwidth}{!}{%
\begin{tabular}{@{}cc|*{10}{c}@{}}
\specialrule{0.8pt}{0pt}{1.5pt}
\specialrule{0.8pt}{0pt}{3pt}
\multirow{2}{*}{Mode} & \multirow{2}{*}{Baselines} & \multicolumn{10}{c}{Clients} \\
\cline{3-12}
 & & Client 0 (30\%) & Client 1 (20\%) & Client 2 (10\%) & Client 3 (9\%) & Client 4 (8\%) & Client 5 (7\%) & Client 6 (6\%) & Client 7 (5\%) & Client 8 (4\%) & Client 9 (1\%) \\
\specialrule{0.8pt}{3pt}{1.5pt}
\specialrule{0.8pt}{0pt}{3pt}
\multirow{3}{*}{M1} & FITS-G & \textbf{0.1240 (0.0443)} & \textbf{0.1240 (0.0443)} & \textbf{0.1240 (0.0443)} & \textbf{0.1240 (0.0443)} & \textbf{0.1240 (0.0443)} & \textbf{0.1240 (0.0443)} & \textbf{0.1240 (0.0443)} & \textbf{0.1240 (0.0443)} & \textbf{0.1240 (0.0443)} & \textbf{0.1240 (0.0443)} \\
 & HeteroFedSyn & 0.3068 (0.0503) & 0.3068 (0.0503) & 0.3068 (0.0503) & 0.3068 (0.0503) & 0.3068 (0.0503) & 0.3068 (0.0503) & 0.3068 (0.0503) & 0.3068 (0.0503) & 0.3068 (0.0503) & 0.3068 (0.0503) \\
 & FedTabDiff & 0.3077 (0.0708) & 0.3077 (0.0708) & 0.3077 (0.0708) & 0.3077 (0.0708) & 0.3077 (0.0708) & 0.3077 (0.0708) & 0.3077 (0.0708) & 0.3077 (0.0708) & 0.3077 (0.0708) & 0.3077 (0.0708) \\
\hline
\multirow{1}{*}{M2} & None & \textbf{0.1059 (0.0711)} & \textbf{0.1397 (0.0924)} & \textbf{0.2044 (0.1343)} & \textbf{0.2043 (0.1359)} & \textbf{0.2331 (0.1749)} & \textbf{0.2088 (0.1111)} & \textbf{0.2900 (0.1840)} & \textbf{0.3137 (0.1962)} & \textbf{0.3291 (0.2270)} & \textbf{0.6422 (0.4136)} \\
\hline
\multirow{3}{*}{\shortstack{M3 }} & FITS-G & \textbf{0.0933 (0.0370)} & \textbf{0.0976 (0.0402)} & \textbf{0.1138 (0.0442)} & \textbf{0.1117 (0.0460)} & \textbf{0.1126 (0.0413)} & \textbf{0.1114 (0.0425)} & \textbf{0.1186 (0.0406)} & \textbf{0.1169 (0.0424)} & \textbf{0.1173 (0.0399)} & \textbf{0.1224 (0.0445)} \\
 & HeteroFedSyn & 0.1970 (0.0506) & 0.2187 (0.0482) & 0.2517 (0.0482) & 0.2551 (0.0483) & 0.2604 (0.0488) & 0.2650 (0.0488) & 0.2699 (0.0485) & 0.2743 (0.0477) & 0.2818 (0.0484) & 0.2996 (0.0500) \\
 & FedTabDiff & 0.2006 (0.0499) & 0.2235 (0.0552) & 0.2523 (0.0594) & 0.2549 (0.0572) & 0.2658 (0.0613) & 0.2692 (0.0631) & 0.2754 (0.0638) & 0.2789 (0.0640) & 0.2819 (0.0643) & 0.2992 (0.0708) \\
\hline
\multirow{3}{*}{\shortstack{M4 }} & FITS-G & \textbf{0.0850 (0.0591)} & \textbf{0.1012 (0.0568)} & \textbf{0.1225 (0.0752)} & \textbf{0.1187 (0.0683)} & \textbf{0.1222 (0.0799)} & \textbf{0.1081 (0.0532)} & \textbf{0.1422 (0.0752)} & \textbf{0.1286 (0.0777)} & \textbf{0.1237 (0.0498)} & \textbf{0.1258 (0.0556)} \\
 & HeteroFedSyn & 0.1157 (0.0591) & 0.1228 (0.0571) & 0.1502 (0.0574) & 0.1522 (0.0532) & 0.1616 (0.0599) & 0.1624 (0.0551) & 0.1735 (0.0536) & 0.1789 (0.0517) & 0.1959 (0.0504) & 0.2571 (0.0497) \\
 & FedTabDiff & 0.1109 (0.0578) & 0.1366 (0.0683) & 0.1725 (0.0709) & 0.1637 (0.0690) & 0.2019 (0.0854) & 0.1849 (0.0651) & 0.2200 (0.0830) & 0.2253 (0.0800) & 0.2210 (0.0648) & 0.2612 (0.0812) \\
\specialrule{0.8pt}{3pt}{1.5pt}
\specialrule{0.8pt}{0pt}{0pt}
\end{tabular}
}
\endgroup
\end{table}

\begin{table}[H]
\centering
\caption{Bike: linear-regression MSE on the test data.}
\label{tab:bike_linear_test_mse_fixed_modes}
\begingroup
\setlength{\tabcolsep}{4pt}
\renewcommand{\arraystretch}{1.35}
\resizebox{\textwidth}{!}{%
\begin{tabular}{@{}cc|*{10}{c}@{}}
\specialrule{0.8pt}{0pt}{1.5pt}
\specialrule{0.8pt}{0pt}{3pt}
\multirow{2}{*}{Mode} & \multirow{2}{*}{Baselines} & \multicolumn{10}{c}{Clients} \\
\cline{3-12}
 & & Client 0 (30\%) & Client 1 (20\%) & Client 2 (10\%) & Client 3 (9\%) & Client 4 (8\%) & Client 5 (7\%) & Client 6 (6\%) & Client 7 (5\%) & Client 8 (4\%) & Client 9 (1\%) \\
\specialrule{0.8pt}{3pt}{1.5pt}
\specialrule{0.8pt}{0pt}{3pt}
\multirow{3}{*}{M1} & FITS-G & \textbf{0.6117 (0.0195)} & \textbf{0.6117 (0.0195)} & \textbf{0.6117 (0.0195)} & \textbf{0.6117 (0.0195)} & \textbf{0.6117 (0.0195)} & \textbf{0.6117 (0.0195)} & \textbf{0.6117 (0.0195)} & \textbf{0.6117 (0.0195)} & \textbf{0.6117 (0.0195)} & \textbf{0.6117 (0.0195)} \\
 & HeteroFedSyn & 0.7025 (0.0331) & 0.7025 (0.0331) & 0.7025 (0.0331) & 0.7025 (0.0331) & 0.7025 (0.0331) & 0.7025 (0.0331) & 0.7025 (0.0331) & 0.7025 (0.0331) & 0.7025 (0.0331) & 0.7025 (0.0331) \\
 & FedTabDiff & 0.6709 (0.0267) & 0.6709 (0.0267) & 0.6709 (0.0267) & 0.6709 (0.0267) & 0.6709 (0.0267) & 0.6709 (0.0267) & 0.6709 (0.0267) & 0.6709 (0.0267) & 0.6709 (0.0267) & 0.6709 (0.0267) \\
\hline
\multirow{1}{*}{M2} & None & \textbf{0.6099 (0.0189)} & \textbf{0.6106 (0.0189)} & \textbf{0.6135 (0.0193)} & \textbf{0.6140 (0.0190)} & \textbf{0.6157 (0.0189)} & \textbf{0.6152 (0.0196)} & \textbf{0.6172 (0.0198)} & \textbf{0.6192 (0.0200)} & \textbf{0.6216 (0.0209)} & \textbf{0.6582 (0.0274)} \\
\hline
\multirow{3}{*}{\shortstack{M3 }} & FITS-G & \textbf{0.6098 (0.0192)} & \textbf{0.6101 (0.0192)} & \textbf{0.6109 (0.0194)} & \textbf{0.6107 (0.0195)} & \textbf{0.6110 (0.0194)} & \textbf{0.6110 (0.0194)} & \textbf{0.6111 (0.0194)} & \textbf{0.6111 (0.0195)} & \textbf{0.6113 (0.0194)} & \textbf{0.6116 (0.0194)} \\
 & HeteroFedSyn & 0.6335 (0.0214) & 0.6438 (0.0218) & 0.6626 (0.0252) & 0.6651 (0.0255) & 0.6679 (0.0263) & 0.6714 (0.0265) & 0.6747 (0.0279) & 0.6777 (0.0276) & 0.6828 (0.0283) & 0.6964 (0.0318) \\
 & FedTabDiff & 0.6298 (0.0200) & 0.6366 (0.0207) & 0.6483 (0.0227) & 0.6497 (0.0228) & 0.6518 (0.0230) & 0.6535 (0.0232) & 0.6556 (0.0237) & 0.6578 (0.0241) & 0.6599 (0.0246) & 0.6675 (0.0261) \\
\hline
\multirow{3}{*}{\shortstack{M4 }} & FITS-G & \textbf{0.6096 (0.0190)} & \textbf{0.6100 (0.0189)} & \textbf{0.6114 (0.0193)} & \textbf{0.6112 (0.0194)} & \textbf{0.6120 (0.0191)} & \textbf{0.6117 (0.0193)} & \textbf{0.6121 (0.0192)} & \textbf{0.6121 (0.0195)} & \textbf{0.6124 (0.0196)} & \textbf{0.6122 (0.0192)} \\
 & HeteroFedSyn & 0.6105 (0.0191) & 0.6118 (0.0189) & 0.6168 (0.0204) & 0.6177 (0.0194) & 0.6193 (0.0200) & 0.6210 (0.0195) & 0.6235 (0.0205) & 0.6256 (0.0202) & 0.6318 (0.0203) & 0.6640 (0.0259) \\
 & FedTabDiff & 0.6108 (0.0189) & 0.6120 (0.0189) & 0.6168 (0.0196) & 0.6175 (0.0193) & 0.6197 (0.0193) & 0.6205 (0.0197) & 0.6227 (0.0196) & 0.6256 (0.0199) & 0.6283 (0.0212) & 0.6495 (0.0228) \\
\specialrule{0.8pt}{3pt}{1.5pt}
\specialrule{0.8pt}{0pt}{0pt}
\end{tabular}
}
\endgroup
\end{table}

\begin{table}[H]
\centering
\caption{Bike: FT-Transformer regression MSE on the test data.}
\label{tab:bike_ft_regression_test_mse_fixed_modes}
\begingroup
\setlength{\tabcolsep}{4pt}
\renewcommand{\arraystretch}{1.35}
\resizebox{\textwidth}{!}{%
\begin{tabular}{@{}cc|*{10}{c}@{}}
\specialrule{0.8pt}{0pt}{1.5pt}
\specialrule{0.8pt}{0pt}{3pt}
\multirow{2}{*}{Mode} & \multirow{2}{*}{Baselines} & \multicolumn{10}{c}{Clients} \\
\cline{3-12}
 & & Client 0 (30\%) & Client 1 (20\%) & Client 2 (10\%) & Client 3 (9\%) & Client 4 (8\%) & Client 5 (7\%) & Client 6 (6\%) & Client 7 (5\%) & Client 8 (4\%) & Client 9 (1\%) \\
\specialrule{0.8pt}{3pt}{1.5pt}
\specialrule{0.8pt}{0pt}{3pt}
\multirow{3}{*}{M1} & FITS-G & \textbf{0.2348 (0.0209)} & \textbf{0.2348 (0.0209)} & \textbf{0.2348 (0.0209)} & \textbf{0.2348 (0.0209)} & \textbf{0.2348 (0.0209)} & \textbf{0.2348 (0.0209)} & \textbf{0.2348 (0.0209)} & \textbf{0.2348 (0.0209)} & \textbf{0.2348 (0.0209)} & \textbf{0.2348 (0.0209)} \\
 & HeteroFedSyn & 0.8725 (0.0907) & 0.8725 (0.0907) & 0.8725 (0.0907) & 0.8725 (0.0907) & 0.8725 (0.0907) & 0.8725 (0.0907) & 0.8725 (0.0907) & 0.8725 (0.0907) & 0.8725 (0.0907) & 0.8725 (0.0907) \\
 & FedTabDiff & 0.6221 (0.0552) & 0.6221 (0.0552) & 0.6221 (0.0552) & 0.6221 (0.0552) & 0.6221 (0.0552) & 0.6221 (0.0552) & 0.6221 (0.0552) & 0.6221 (0.0552) & 0.6221 (0.0552) & 0.6221 (0.0552) \\
\hline
\multirow{1}{*}{M2} & None & \textbf{0.0687 (0.0060)} & \textbf{0.0727 (0.0041)} & \textbf{0.0896 (0.0065)} & \textbf{0.0944 (0.0056)} & \textbf{0.0990 (0.0070)} & \textbf{0.1079 (0.0079)} & \textbf{0.1154 (0.0107)} & \textbf{0.1262 (0.0142)} & \textbf{0.1522 (0.0192)} & \textbf{0.4313 (0.0741)} \\
\hline
\multirow{3}{*}{\shortstack{M3 }} & FITS-G & \textbf{0.1150 (0.0122)} & \textbf{0.1324 (0.0143)} & \textbf{0.1636 (0.0177)} & \textbf{0.1697 (0.0191)} & \textbf{0.1732 (0.0152)} & \textbf{0.1790 (0.0210)} & \textbf{0.1840 (0.0212)} & \textbf{0.1908 (0.0158)} & \textbf{0.1984 (0.0210)} & \textbf{0.2213 (0.0199)} \\
 & HeteroFedSyn & 0.2256 (0.0274) & 0.2960 (0.0336) & 0.4324 (0.0365) & 0.4506 (0.0408) & 0.4756 (0.0518) & 0.5095 (0.0569) & 0.5348 (0.0469) & 0.5696 (0.0545) & 0.6045 (0.0524) & 0.7849 (0.0732) \\
 & FedTabDiff & 0.2259 (0.0264) & 0.2922 (0.0343) & 0.4121 (0.0499) & 0.4217 (0.0528) & 0.4349 (0.0454) & 0.4573 (0.0473) & 0.4778 (0.0464) & 0.5021 (0.0512) & 0.5176 (0.0496) & 0.5849 (0.0500) \\
\hline
\multirow{3}{*}{\shortstack{M4 }} & FITS-G & \textbf{0.0717 (0.0057)} & \textbf{0.0774 (0.0065)} & \textbf{0.0956 (0.0110)} & \textbf{0.1034 (0.0085)} & \textbf{0.1053 (0.0090)} & \textbf{0.1169 (0.0120)} & \textbf{0.1240 (0.0126)} & \textbf{0.1363 (0.0164)} & \textbf{0.1499 (0.0160)} & \textbf{0.2106 (0.0237)} \\
 & HeteroFedSyn & 0.0857 (0.0069) & 0.1047 (0.0092) & 0.1797 (0.0229) & 0.1968 (0.0214) & 0.2196 (0.0268) & 0.2556 (0.0326) & 0.2871 (0.0396) & 0.3353 (0.0366) & 0.3895 (0.0442) & 0.6997 (0.0695) \\
 & FedTabDiff & 0.0905 (0.0094) & 0.1071 (0.0123) & 0.1765 (0.0300) & 0.1818 (0.0224) & 0.2097 (0.0418) & 0.2294 (0.0299) & 0.2552 (0.0389) & 0.2949 (0.0445) & 0.3555 (0.0460) & 0.5521 (0.0564) \\
\specialrule{0.8pt}{3pt}{1.5pt}
\specialrule{0.8pt}{0pt}{0pt}
\end{tabular}
}
\endgroup
\end{table}

\begin{table}[H]
\centering
\caption{Bike: Energy Distance between synthetic and full real data.}
\label{tab:bike_energy_distance}
\begingroup
\setlength{\tabcolsep}{4pt}
\renewcommand{\arraystretch}{1.35}
\resizebox{\textwidth}{!}{%
\begin{tabular*}{\textwidth}{@{\extracolsep{\fill}}l*{3}{c}@{}}
\specialrule{0.8pt}{0pt}{1.5pt}
\specialrule{0.8pt}{0pt}{3pt}
Variables & HeteroFedSyn & FedTabDiff & FITS-G \\
\specialrule{0.8pt}{3pt}{1.5pt}
\specialrule{0.8pt}{0pt}{3pt}
$X$ & 0.0903 (0.0093) & 0.1247 (0.0260) & \textbf{0.0595 (0.0042)} \\
\hline
$X,Y$ & 0.1008 (0.0072) & 0.1244 (0.0253) & \textbf{0.0599 (0.0044)} \\
\specialrule{0.8pt}{3pt}{1.5pt}
\specialrule{0.8pt}{0pt}{0pt}
\end{tabular*}
}
\endgroup
\end{table}

\subsubsection{Adult dataset}
We evaluate classification performance on the Adult dataset using
the same experimental structure.
Tables~\ref{tab:adult_logistic_parameter_l2_error_fixed_modes},
\ref{tab:adult_logistic_test_accuracy_fixed_modes}, and
\ref{tab:adult_ft_classification_test_accuracy_fixed_modes}
compare the synthetic data generation methods, with $\lambda$ fixed
at $0.05$ for M4.
The comparisons between FITS-G and FITS-K are presented in
Tables~\ref{tab:adult_logistic_parameter_l2_error_fixed_modes_fits_g_vs_fits_k},
\ref{tab:adult_logistic_test_accuracy_fixed_modes_fits_g_vs_fits_k}, and
\ref{tab:adult_ft_classification_test_accuracy_fixed_modes_fits_g_vs_fits_k}.
Finally,
Tables~\ref{tab:adult_logistic_parameter_l2_error_fixed_modes_fits_g_all_modes_parameters},
\ref{tab:adult_logistic_test_accuracy_fixed_modes_fits_g_all_modes_parameters}, and
\ref{tab:adult_ft_classification_test_accuracy_fixed_modes_fits_g_all_modes_parameters}
report the performance of FITS-G across the four modes and their
respective parameter settings.

In contrast to the findings on the other datasets, the results here show that HeteroFedSyn achieves lower parameter estimation errors than FITS for logistic regression (Table~\ref{tab:adult_logistic_parameter_l2_error_fixed_modes}). 
However, the two methods yield comparable test classification accuracy (Table~\ref{tab:adult_logistic_test_accuracy_fixed_modes}), while FITS achieves the best overall performance
with the more expressive FT-Transformer model (Table~\ref{tab:adult_ft_classification_test_accuracy_fixed_modes}).

One possible explanation is that two-way marginals capture much of
the distributional structure relevant to logistic regression on this
dataset. 
Preserving these pairwise relationships may therefore allow HeteroFedSyn to achieve better logistic-regression parameter estimates
than FITS. 
In contrast, FT-Transformer can exploit richer, higher-order dependencies, making the preservation of more distributional information beyond two-way marginals more important.
HeteroFedSyn may preserve these dependencies less effectively, which could explain its weaker performance relative to FITS with FT-Transformer.
The Energy Distance results in Table~\ref{tab:adult_energy_distance} are consistent with this interpretation: although HeteroFedSyn also achieves relatively low distances, FITS exhibits the highest overall distributional fidelity.

\begin{table}[H]
\centering
\caption{Adult: logistic-regression parameter estimation error.}
\label{tab:adult_logistic_parameter_l2_error_fixed_modes}
\begingroup
\setlength{\tabcolsep}{4pt}
\renewcommand{\arraystretch}{1.35}
\resizebox{\textwidth}{!}{%
\begin{tabular}{@{}cc|*{10}{c}@{}}
\specialrule{0.8pt}{0pt}{1.5pt}
\specialrule{0.8pt}{0pt}{3pt}
\multirow{2}{*}{Mode} & \multirow{2}{*}{Baselines} & \multicolumn{10}{c}{Clients} \\
\cline{3-12}
 & & Client 0 (30\%) & Client 1 (20\%) & Client 2 (10\%) & Client 3 (9\%) & Client 4 (8\%) & Client 5 (7\%) & Client 6 (6\%) & Client 7 (5\%) & Client 8 (4\%) & Client 9 (1\%) \\
\specialrule{0.8pt}{3pt}{1.5pt}
\specialrule{0.8pt}{0pt}{3pt}
\multirow{3}{*}{M1} & FITS-G & 1.0930 (0.1225) & 1.0930 (0.1225) & 1.0930 (0.1225) & 1.0930 (0.1225) & 1.0930 (0.1225) & 1.0930 (0.1225) & 1.0930 (0.1225) & 1.0930 (0.1225) & 1.0930 (0.1225) & 1.0930 (0.1225) \\
 & HeteroFedSyn & \textbf{0.7452 (0.2609)} & \textbf{0.7452 (0.2609)} & \textbf{0.7452 (0.2609)} & \textbf{0.7452 (0.2609)} & \textbf{0.7452 (0.2609)} & \textbf{0.7452 (0.2609)} & \textbf{0.7452 (0.2609)} & \textbf{0.7452 (0.2609)} & \textbf{0.7452 (0.2609)} & \textbf{0.7452 (0.2609)} \\
 & FedTabDiff & 2.3245 (0.1224) & 2.3245 (0.1224) & 2.3245 (0.1224) & 2.3245 (0.1224) & 2.3245 (0.1224) & 2.3245 (0.1224) & 2.3245 (0.1224) & 2.3245 (0.1224) & 2.3245 (0.1224) & 2.3245 (0.1224) \\
\hline
\multirow{1}{*}{M2} & None & \textbf{0.2712 (0.1082)} & \textbf{0.2810 (0.1239)} & \textbf{0.4893 (0.2531)} & \textbf{0.5538 (0.2660)} & \textbf{0.5820 (0.2656)} & \textbf{0.5930 (0.2823)} & \textbf{0.6766 (0.2996)} & \textbf{0.7805 (0.3195)} & \textbf{0.7778 (0.4350)} & \textbf{1.6560 (0.8013)} \\
\hline
\multirow{3}{*}{\shortstack{M3 }} & FITS-G & 0.7996 (0.1013) & 0.8844 (0.1078) & 0.9800 (0.1166) & 0.9865 (0.1158) & 0.9973 (0.1187) & 1.0069 (0.1231) & 1.0121 (0.1186) & 1.0249 (0.1155) & 1.0422 (0.1198) & 1.0822 (0.1207) \\
 & HeteroFedSyn & \textbf{0.4685 (0.1792)} & \textbf{0.5287 (0.2045)} & \textbf{0.6270 (0.2308)} & \textbf{0.6319 (0.2249)} & \textbf{0.6476 (0.2347)} & \textbf{0.6526 (0.2315)} & \textbf{0.6678 (0.2340)} & \textbf{0.6724 (0.2370)} & \textbf{0.6912 (0.2367)} & \textbf{0.7284 (0.2571)} \\
 & FedTabDiff & 2.1537 (0.1338) & 2.2066 (0.1106) & 2.2579 (0.1088) & 2.2652 (0.1088) & 2.2688 (0.1093) & 2.2757 (0.1053) & 2.2846 (0.1171) & 2.2913 (0.1123) & 2.2914 (0.1118) & 2.3172 (0.1211) \\
\hline
\multirow{3}{*}{\shortstack{M4 }} & FITS-G & 0.2679 (0.0941) & 0.3063 (0.1027) & 0.4919 (0.1803) & 0.4961 (0.2276) & 0.5409 (0.1659) & 0.5342 (0.2183) & 0.5356 (0.1744) & 0.5942 (0.1824) & 0.6763 (0.2070) & 0.9566 (0.1651) \\
 & HeteroFedSyn & \textbf{0.2515 (0.1021)} & \textbf{0.2615 (0.1158)} & \textbf{0.4312 (0.1983)} & \textbf{0.4542 (0.2082)} & \textbf{0.4932 (0.2103)} & \textbf{0.4567 (0.1919)} & \textbf{0.5088 (0.2153)} & \textbf{0.5264 (0.2264)} & \textbf{0.5602 (0.2503)} & \textbf{0.6636 (0.2576)} \\
 & FedTabDiff & 0.7617 (0.2291) & 0.9986 (0.2637) & 1.4801 (0.3538) & 1.5350 (0.3922) & 1.6207 (0.3333) & 1.6829 (0.3651) & 1.7475 (0.3717) & 1.8616 (0.3082) & 1.9475 (0.2928) & 2.2367 (0.1249) \\
\specialrule{0.8pt}{3pt}{1.5pt}
\specialrule{0.8pt}{0pt}{0pt}
\end{tabular}
}
\endgroup
\end{table}

\begin{table}[H]
\centering
\caption{Adult: logistic-regression classification accuracy on the test data.}
\label{tab:adult_logistic_test_accuracy_fixed_modes}
\begingroup
\setlength{\tabcolsep}{4pt}
\renewcommand{\arraystretch}{1.35}
\resizebox{\textwidth}{!}{%
\begin{tabular}{@{}cc|*{10}{c}@{}}
\specialrule{0.8pt}{0pt}{1.5pt}
\specialrule{0.8pt}{0pt}{3pt}
\multirow{2}{*}{Mode} & \multirow{2}{*}{Baselines} & \multicolumn{10}{c}{Clients} \\
\cline{3-12}
 & & Client 0 (30\%) & Client 1 (20\%) & Client 2 (10\%) & Client 3 (9\%) & Client 4 (8\%) & Client 5 (7\%) & Client 6 (6\%) & Client 7 (5\%) & Client 8 (4\%) & Client 9 (1\%) \\
\specialrule{0.8pt}{3pt}{1.5pt}
\specialrule{0.8pt}{0pt}{3pt}
\multirow{3}{*}{M1} & FITS-G & \textbf{0.8187 (0.0030)} & \textbf{0.8187 (0.0030)} & \textbf{0.8187 (0.0030)} & \textbf{0.8187 (0.0030)} & \textbf{0.8187 (0.0030)} & \textbf{0.8187 (0.0030)} & \textbf{0.8187 (0.0030)} & \textbf{0.8187 (0.0030)} & \textbf{0.8187 (0.0030)} & \textbf{0.8187 (0.0030)} \\
 & HeteroFedSyn & 0.8179 (0.0036) & 0.8179 (0.0036) & 0.8179 (0.0036) & 0.8179 (0.0036) & 0.8179 (0.0036) & 0.8179 (0.0036) & 0.8179 (0.0036) & 0.8179 (0.0036) & 0.8179 (0.0036) & 0.8179 (0.0036) \\
 & FedTabDiff & 0.7821 (0.0083) & 0.7821 (0.0083) & 0.7821 (0.0083) & 0.7821 (0.0083) & 0.7821 (0.0083) & 0.7821 (0.0083) & 0.7821 (0.0083) & 0.7821 (0.0083) & 0.7821 (0.0083) & 0.7821 (0.0083) \\
\hline
\multirow{1}{*}{M2} & None & \textbf{0.8204 (0.0031)} & \textbf{0.8207 (0.0032)} & \textbf{0.8204 (0.0036)} & \textbf{0.8199 (0.0034)} & \textbf{0.8202 (0.0028)} & \textbf{0.8194 (0.0038)} & \textbf{0.8196 (0.0034)} & \textbf{0.8195 (0.0033)} & \textbf{0.8189 (0.0038)} & \textbf{0.8131 (0.0054)} \\
\hline
\multirow{3}{*}{\shortstack{M3 }} & FITS-G & \textbf{0.8201 (0.0029)} & 0.8198 (0.0029) & \textbf{0.8195 (0.0032)} & \textbf{0.8193 (0.0030)} & \textbf{0.8193 (0.0030)} & \textbf{0.8192 (0.0029)} & \textbf{0.8192 (0.0029)} & \textbf{0.8190 (0.0030)} & \textbf{0.8190 (0.0030)} & \textbf{0.8188 (0.0030)} \\
 & HeteroFedSyn & 0.8199 (0.0027) & \textbf{0.8198 (0.0029)} & 0.8190 (0.0032) & 0.8189 (0.0030) & 0.8189 (0.0030) & 0.8187 (0.0030) & 0.8187 (0.0030) & 0.8185 (0.0032) & 0.8183 (0.0034) & 0.8180 (0.0034) \\
 & FedTabDiff & 0.7952 (0.0066) & 0.7921 (0.0071) & 0.7879 (0.0077) & 0.7874 (0.0078) & 0.7870 (0.0078) & 0.7863 (0.0078) & 0.7858 (0.0079) & 0.7852 (0.0080) & 0.7846 (0.0081) & 0.7827 (0.0081) \\
\hline
\multirow{3}{*}{\shortstack{M4 }} & FITS-G & 0.8205 (0.0030) & 0.8208 (0.0032) & 0.8204 (0.0034) & 0.8202 (0.0032) & 0.8205 (0.0029) & 0.8199 (0.0033) & 0.8204 (0.0032) & 0.8201 (0.0033) & 0.8200 (0.0031) & \textbf{0.8197 (0.0029)} \\
 & HeteroFedSyn & \textbf{0.8205 (0.0030)} & \textbf{0.8208 (0.0031)} & \textbf{0.8208 (0.0033)} & \textbf{0.8205 (0.0032)} & \textbf{0.8208 (0.0029)} & \textbf{0.8204 (0.0033)} & \textbf{0.8206 (0.0028)} & \textbf{0.8204 (0.0029)} & \textbf{0.8201 (0.0033)} & 0.8195 (0.0030) \\
 & FedTabDiff & 0.8191 (0.0034) & 0.8180 (0.0033) & 0.8134 (0.0047) & 0.8119 (0.0054) & 0.8113 (0.0054) & 0.8097 (0.0055) & 0.8088 (0.0055) & 0.8062 (0.0057) & 0.8039 (0.0061) & 0.7916 (0.0066) \\
\specialrule{0.8pt}{3pt}{1.5pt}
\specialrule{0.8pt}{0pt}{0pt}
\end{tabular}
}
\endgroup
\end{table}

\begin{table}[t]
\centering
\caption{Adult: FT-Transformer classification accuracy on the test data.}
\label{tab:adult_ft_classification_test_accuracy_fixed_modes}
\begingroup
\setlength{\tabcolsep}{4pt}
\renewcommand{\arraystretch}{1.35}
\resizebox{\textwidth}{!}{%
\begin{tabular}{@{}cc|*{10}{c}@{}}
\specialrule{0.8pt}{0pt}{1.5pt}
\specialrule{0.8pt}{0pt}{3pt}
\multirow{2}{*}{Mode} & \multirow{2}{*}{Baselines} & \multicolumn{10}{c}{Clients} \\
\cline{3-12}
 & & Client 0 (30\%) & Client 1 (20\%) & Client 2 (10\%) & Client 3 (9\%) & Client 4 (8\%) & Client 5 (7\%) & Client 6 (6\%) & Client 7 (5\%) & Client 8 (4\%) & Client 9 (1\%) \\
\specialrule{0.8pt}{3pt}{1.5pt}
\specialrule{0.8pt}{0pt}{3pt}
\multirow{3}{*}{M1} & FITS-G & \textbf{0.8478 (0.0039)} & \textbf{0.8478 (0.0039)} & \textbf{0.8478 (0.0039)} & \textbf{0.8478 (0.0039)} & \textbf{0.8478 (0.0039)} & \textbf{0.8478 (0.0039)} & \textbf{0.8478 (0.0039)} & \textbf{0.8478 (0.0039)} & \textbf{0.8478 (0.0039)} & \textbf{0.8478 (0.0039)} \\
 & HeteroFedSyn & 0.8291 (0.0079) & 0.8291 (0.0079) & 0.8291 (0.0079) & 0.8291 (0.0079) & 0.8291 (0.0079) & 0.8291 (0.0079) & 0.8291 (0.0079) & 0.8291 (0.0079) & 0.8291 (0.0079) & 0.8291 (0.0079) \\
 & FedTabDiff & 0.8333 (0.0065) & 0.8333 (0.0065) & 0.8333 (0.0065) & 0.8333 (0.0065) & 0.8333 (0.0065) & 0.8333 (0.0065) & 0.8333 (0.0065) & 0.8333 (0.0065) & 0.8333 (0.0065) & 0.8333 (0.0065) \\
\hline
\multirow{1}{*}{M2} & None & \textbf{0.8513 (0.0045)} & \textbf{0.8506 (0.0040)} & \textbf{0.8394 (0.0076)} & \textbf{0.8353 (0.0086)} & \textbf{0.8304 (0.0081)} & \textbf{0.8290 (0.0079)} & \textbf{0.8231 (0.0091)} & \textbf{0.8176 (0.0068)} & \textbf{0.8098 (0.0075)} & \textbf{0.8015 (0.0102)} \\
\hline
\multirow{3}{*}{\shortstack{M3 }} & FITS-G & \textbf{0.8513 (0.0042)} & \textbf{0.8513 (0.0039)} & \textbf{0.8498 (0.0043)} & \textbf{0.8498 (0.0044)} & \textbf{0.8496 (0.0039)} & \textbf{0.8493 (0.0045)} & \textbf{0.8496 (0.0044)} & \textbf{0.8491 (0.0045)} & \textbf{0.8474 (0.0053)} & \textbf{0.8485 (0.0048)} \\
 & HeteroFedSyn & 0.8463 (0.0053) & 0.8437 (0.0057) & 0.8391 (0.0056) & 0.8380 (0.0055) & 0.8368 (0.0068) & 0.8370 (0.0056) & 0.8364 (0.0067) & 0.8349 (0.0062) & 0.8336 (0.0057) & 0.8298 (0.0067) \\
 & FedTabDiff & 0.8481 (0.0040) & 0.8457 (0.0045) & 0.8411 (0.0048) & 0.8408 (0.0066) & 0.8404 (0.0046) & 0.8398 (0.0054) & 0.8385 (0.0048) & 0.8375 (0.0071) & 0.8368 (0.0088) & 0.8342 (0.0065) \\
\hline
\multirow{3}{*}{\shortstack{M4 }} & FITS-G & \textbf{0.8526 (0.0036)} & \textbf{0.8521 (0.0037)} & \textbf{0.8470 (0.0044)} & \textbf{0.8472 (0.0046)} & \textbf{0.8463 (0.0050)} & \textbf{0.8457 (0.0047)} & \textbf{0.8438 (0.0056)} & \textbf{0.8441 (0.0043)} & \textbf{0.8424 (0.0070)} & \textbf{0.8445 (0.0044)} \\
 & HeteroFedSyn & 0.8524 (0.0036) & 0.8508 (0.0047) & 0.8469 (0.0052) & 0.8462 (0.0058) & 0.8435 (0.0058) & 0.8442 (0.0052) & 0.8408 (0.0072) & 0.8405 (0.0056) & 0.8356 (0.0070) & 0.8291 (0.0071) \\
 & FedTabDiff & 0.8517 (0.0038) & 0.8511 (0.0040) & 0.8469 (0.0057) & 0.8454 (0.0047) & 0.8447 (0.0041) & 0.8435 (0.0045) & 0.8418 (0.0046) & 0.8407 (0.0051) & 0.8378 (0.0050) & 0.8354 (0.0057) \\
\specialrule{0.8pt}{3pt}{1.5pt}
\specialrule{0.8pt}{0pt}{0pt}
\end{tabular}
}
\endgroup
\end{table}

\begin{table}[t]
\centering
\caption{Adult: Energy Distance between synthetic and full real data.}
\label{tab:adult_energy_distance}
\begingroup
\setlength{\tabcolsep}{4pt}
\renewcommand{\arraystretch}{1.35}
\resizebox{\textwidth}{!}{%
\begin{tabular*}{\textwidth}{@{\extracolsep{\fill}}l*{3}{c}@{}}
\specialrule{0.8pt}{0pt}{1.5pt}
\specialrule{0.8pt}{0pt}{3pt}
Variables & HeteroFedSyn & FedTabDiff & FITS-G \\
\specialrule{0.8pt}{3pt}{1.5pt}
\specialrule{0.8pt}{0pt}{3pt}
$X$ & 0.0551 (0.0017) & 0.2021 (0.0605) & \textbf{0.0543 (0.0023)} \\
\hline
$X,Y$ & 0.0552 (0.0017) & 0.2020 (0.0594) & \textbf{0.0539 (0.0023)} \\
\specialrule{0.8pt}{3pt}{1.5pt}
\specialrule{0.8pt}{0pt}{0pt}
\end{tabular*}
}
\endgroup
\end{table}

\section{Conclusion}

In this paper, we propose FITS, a federated nonparametric framework for generating synthetic mixed-type tabular data.
The framework is built on three main components.
First, we develop a two-step federated local linear procedure for conditional CDF estimation.
The first communication step aggregates local weighted Hessian-type matrices and broadcasts the corresponding intercept coefficient vector, while the second step aggregates response-dependent local CDF contributions.
Under fixed query points and weights, this construction recovers the pooled local linear estimator without centralizing raw observations.
Second, we introduce a continualization scheme for discrete variables, together with an explicit inverse mapping to the original discrete domain.
This construction lifts mixed mass-density structures to a piecewise continuous representation, allowing continuous and discrete variables to be handled within a unified smoothing framework.
Third, we develop client-designed kernels with fixed-bandwidth and KNN-based schemes, together with a penalty-adaptive kernel that adjusts penalties for mismatches in continualized discrete covariates to accommodate mixed-type data.
These designs improve the practical stability of the estimator in federated settings with mixed covariate types.

On the theoretical side, we characterize the proposed continualization at the level of distribution laws.
The results show that the continualized representation preserves probability mass while inducing a piecewise continuous joint density and corresponding conditional CDFs on the lifted domain.
For the federated conditional CDF estimator, we establish uniform convergence over both response and covariate directions under the continualized support structure and Gaussian-kernel smoothing.
These results provide the theoretical guarantees underlying the FITS framework.

As the final goal for FITS, we construct the synthetic-data generator based on the chain rule and inverse conditional-CDF sampling, and evaluate its utility through downstream learning tasks under several modes of combining local and synthetic data.
The experiments highlight three factors governing the utility of synthetic data: client sample size, model capacity, and training mode.
For clients with limited data, training on synthetic samples alone can outperform training solely on local observations.
The benefits also depend on the downstream model: expressive models such as FT-Transformer can exploit synthetic data effectively.
Across the experiments, weighted augmentation through M4 provides an effective way to combine synthetic and local observations, with the weight $\lambda$ playing a crucial role.

Several questions remain for future work.
One direction is to develop more principled weighting schemes for combining local and synthetic data in downstream learning.
Another is to characterize the utility boundary of federated synthetic data, including when synthetic samples are beneficial, redundant, or potentially harmful for a given client.
It is also of interest to study stronger privacy guarantees, communication-efficient grid construction, and ordering strategies for the chain-rule generator.
These extensions would further clarify the statistical and practical scope of federated synthetic data.

\bibliography{bib/ref.bib}
\bibliographystyle{plainnat}

\newpage

\appendix
\setcounter{table}{0}
\renewcommand{\thetable}{S\arabic{table}}
\setcounter{figure}{0}
\renewcommand{\thefigure}{S\arabic{figure}}

\section*{Supplementary Materials for "Faithful Federated Tabular Data Synthesis"}

\section{Useful tools}
\label{app:tools}

This appendix collects the empirical-process tools used in the proofs of Appendix~\ref{app:proof_simple_uniform}. We
state them without proof and refer to \citet{xie2023uniform} for
detailed arguments.

\begin{lem}[Lemma B.1 in \cite{xie2023uniform}]\label{lem:Euclidean_class}
	Let $\kappa: \mathbb{R} \to \mathbb{R}$ be a function of bounded variation. Then the following class is Euclidean:
	\begin{equation*}
		\big\{\kappa (\frac{(\cdot - x)}{w}) : x \in \mathbb{R}, w >0 \big\}.
	\end{equation*}
\end{lem}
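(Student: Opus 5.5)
The claim is that the class $\mathcal F=\{u\mapsto\kappa((u-x)/w):x\in\mathbb R,\ w>0\}$ is Euclidean. The plan is to reduce $\kappa$ to monotone pieces and use VC/pseudo-dimension arguments for compositions of a monotone function with a finite-dimensional vector space of functions.

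\textbf{Step 1: monotone decomposition.} Since $\kappa$ has bounded variation, write it via the Jordan decomposition as $\kappa=\kappa_1-\kappa_2$. Here $\kappa_1,\kappa_2$ are bounded and nondecreasing; for instance $\kappa_1(t)=\kappa(-\infty)+V_{-\infty}^t(\kappa)$, suitably normalized so both pieces are bounded. Then $\mathcal F\subset\mathcal F_1-\mathcal F_2$, where $\mathcal F_j=\{\kappa_j((\cdot-x)/w)\}$. The constant envelope $F\equiv\sup|\kappa_1|+\sup|\kappa_2|$ dominates everything.

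\textbf{Step 2: each monotone class has finite pseudo-dimension.} The functions $u\mapsto (u-x)/w=au+b$, with $a=1/w>0$ and $b=-x/w$, lie in the two-dimensional vector space $\mathrm{span}\{u,1\}$. A finite-dimensional vector space of functions is VC-subgraph. Composing with the fixed monotone $\kappa_j$ preserves the VC-subgraph property: this is Lemma 2.6.18(viii) of van der Vaart--Wellner, or Pollard's (1984) Lemma II.25. Hence $\mathcal F_j$ is VC-subgraph with index at most $4$, uniformly in the parameters.

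\textbf{Step 3: covering numbers.} A VC-subgraph class with a bounded envelope is Euclidean (Nolan--Pollard 1987). That is, $N(\epsilon\|F\|_{Q,1},\mathcal F_j,L^1(Q))\le A\epsilon^{-V}$ for all probability measures $Q$, with $A,V$ depending only on the VC index. For the difference class, an $\epsilon/2$-net of $\mathcal F_1$ and an $\epsilon/2$-net of $\mathcal F_2$ together give an $\epsilon$-net of $\mathcal F_1-\mathcal F_2$. Its cardinality is at most the product of the two net sizes, so it is again polynomial in $1/\epsilon$. This covering bound passes to the subclass $\mathcal F$, which therefore is Euclidean.

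\textbf{Main obstacle.} The delicate point is the one-sided limits in Step 1 for general bounded-variation $\kappa$. Jump points and right- versus left-continuity may make $\kappa_j$ non-strictly monotone, so we must invoke the composition lemma in the version valid for arbitrary monotone functions, not only continuous ones. The paper avoids this issue by taking the result directly from \citet{xie2023uniform}. In the present paper, moreover, $\kappa=\phi$ and $\mathcal K=\varPhi$ are smooth, with $\phi$ unimodal, so the split into two monotone pieces on $(-\infty,0]$ and $[0,\infty)$ is explicit.
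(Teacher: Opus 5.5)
Your proof is correct, and it is the standard argument for this lemma. The paper gives no proof of its own and simply imports the result from \citet{xie2023uniform}; that result is the one-dimensional case of Nolan and Pollard's (1987) Lemma 22(ii), whose proof follows the same lines: a Jordan decomposition into two bounded nondecreasing pieces, the VC-subgraph property of monotone transforms of the affine class $\{u\mapsto au+b\}$, and closure of Euclidean classes under sums. The ``main obstacle'' you flag is not an obstacle, because Lemma 2.6.18(viii) of van der Vaart--Wellner is proved for arbitrary monotone transformations, with no continuity or strictness required (also, as I recall, Pollard's Lemma II.25 is the approximation lemma you use in Step 3, not the composition result you cite it for in Step 2).
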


\begin{lem}[Lemma B.2 in \cite{xie2023uniform}]\label{lem:Euclidean_class_vector}
	Any uniformly bounded and finite-dimensional vector space of functions is Euclidean.
\end{lem}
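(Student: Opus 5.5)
We would prove this directly from the definition of a Euclidean class rather than through VC-subgraph arguments. Recall that a class $\mathcal F$ with envelope $F$ is Euclidean with constants $(A,V)$ if, for every probability measure $Q$ with $0<\int F\,dQ<\infty$ and every $\varepsilon\in(0,1]$,
\[
N\bigl(\varepsilon \|F\|_{Q,1},\mathcal F,L^1(Q)\bigr)\le A\varepsilon^{-V}.
\]
The analogous $L^2(Q)$ version is handled in the same way. We read ``uniformly bounded finite-dimensional vector space'' as a class $\mathcal F\subset\mathcal V$ with $\dim\mathcal V=D<\infty$ and $\sup_{f\in\mathcal F}\|f\|_\infty\le B$. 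A nonzero vector space cannot itself be uniformly bounded, so this reading is the intended one. We take the constant envelope $F\equiv B$, so that $\|F\|_{Q,1}=\|F\|_{Q,2}=B$ for every $Q$.

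\textbf{Step 1 (reduction to a finite-dimensional normed space).} Fix $Q$. Consider the linear map $\mathcal V\to L^2(Q)$ sending $f$ to its $Q$-equivalence class. Its image $\mathcal V_Q$ is a linear space of dimension $D_Q\le D$, and $\|\cdot\|_{Q,2}$ is a genuine norm on it. Every $f\in\mathcal F$ satisfies $\|f\|_{Q,2}\le\|f\|_\infty\le B$. Hence the image of $\mathcal F$ lies in the closed ball of radius $B$ in the $D_Q$-dimensional normed space $(\mathcal V_Q,\|\cdot\|_{Q,2})$.

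\textbf{Step 2 (volumetric covering).} In any $m$-dimensional normed space, the unit ball can be covered by at most $(1+2/\varepsilon)^m\le(3/\varepsilon)^m$ balls of radius $\varepsilon$, for $\varepsilon\in(0,1]$. We prove this by the standard maximal $\varepsilon$-packing argument: the disjoint $\varepsilon/2$-balls around the packing centres all lie in the $(1+\varepsilon/2)$-ball, and we compare Lebesgue volumes after fixing any linear isomorphism with $\mathbb R^m$. Since the unit-radius cover may use centres outside $\mathcal F$, we pass to centres in $\mathcal F$ at the cost of doubling the radius. Scaling by $B$ then gives
\[
N(\varepsilon B,\mathcal F,L^2(Q))\le(6/\varepsilon)^{D_Q}\le(6/\varepsilon)^{D}.
\]
Because $\|\cdot\|_{Q,1}\le\|\cdot\|_{Q,2}$, the same bound holds for $L^1(Q)$. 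The constants $A=6^D$ and $V=D$ depend on neither $Q$ nor $\varepsilon$, which is exactly the Euclidean property.

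\textbf{Main obstacle.} The only delicate point is uniformity over $Q$. The norm $\|\cdot\|_{Q,2}$ on $\mathcal V$ may be degenerate, and it varies with $Q$, so no fixed basis gives uniform control of the coefficients. Working in the quotient $\mathcal V_Q$ in Step 1 and using the dimension-only volumetric bound in Step 2 avoids this, because the covering bound depends only on $\dim\mathcal V_Q\le D$ and not on the geometry of the norm.

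Alternatively, one can argue that the subgraphs of a $D$-dimensional vector space form a VC class of index at most $D+2$ (Dudley's lemma), and then invoke Pollard's theorem that VC-subgraph classes with an integrable envelope are Euclidean. The direct argument above is shorter and self-contained.
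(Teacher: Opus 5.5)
Your argument is correct, and it takes a different route from the one the paper relies on. The paper gives no proof of this lemma; it is quoted from \citet{xie2023uniform}. There it rests on the standard VC-subgraph argument, which is essentially the alternative you sketch at the end: the subgraphs of a $D$-dimensional vector space of functions form a VC class of index at most $D+2$, and VC-subgraph classes are Euclidean for any envelope by Pollard's uniform entropy bound. Your proof replaces both steps with finite-dimensional geometry. The quotient $\mathcal V_Q$ correctly absorbs the $Q$-dependent, possibly degenerate seminorm. The volumetric count depends only on $\dim\mathcal V_Q\le D$, which is exactly what makes the bound uniform in $Q$.

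Two small points.
\begin{itemize}
\item Run Step~1 on $\mathrm{span}\,\mathcal F$ rather than on $\mathcal V$. Under your own reading, elements of $\mathcal V$ outside $\mathcal F$ may be unbounded and need not lie in $L^2(Q)$. The span of $\mathcal F$ consists of bounded functions and still has dimension at most $D$.
\item The radius-doubling step is unnecessary. A maximal $\varepsilon B$-separated subset of $\mathcal F$ is already an $\varepsilon B$-net with centres in $\mathcal F$, giving $(1+2/\varepsilon)^D$ directly.
\end{itemize}

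As for the trade-off, your route is elementary and self-contained. It gives explicit constants and a sharper exponent: $V=D$, against roughly $r(D+1)$ in $L^r(Q)$ from the VC bound. Run directly in $(\mathcal V_Q,\|\cdot\|_{Q,r})$, it also works for an arbitrary envelope $F$, since $|f|\le F$ places $\mathcal F$ in the ball of radius $\|F\|_{Q,r}$.

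The VC route is less explicit but more general. It is the same machinery that handles the non-finite-dimensional location--scale kernel classes of Lemma~\ref{lem:Euclidean_class}. VC-subgraph classes also stay VC under operations such as composition with a monotone function, which destroys the linear structure your argument depends on. For how the lemma is used in the paper (fed into the product rule of Lemma~\ref{lem:Euclidean_class_times}), either version suffices.
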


\begin{lem}[Lemma B.5 in \cite{xie2023uniform}]\label{lem:Euclidean_class_times}
	Let $\mathcal{G}_1$ be a class of functions that is uniformly bounded by $M_1$ and Euclidean with coefficients $(A_1, v_1)$ and $\mathcal{G}_2$ a class of functions that is uniformly bounded by $M_2$ and Euclidean with coefficients $(A_2, v_2)$. Then the class $\mathcal{G}_1 \mathcal{G}_2
=
\{ g_1 \cdot g_2 : g_1 \in \mathcal{G}_1,\, g_2 \in \mathcal{G}_2 \}$ is uniformly bounded by $M_1 M_2$ and Euclidean with coefficients $
\left( A_1 A_2 (M_1 + M_2)^{v_1 + v_2},\, v_1 + v_2 \right).$
\end{lem}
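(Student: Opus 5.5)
The plan is to prove Lemma~\ref{lem:Euclidean_class_times} directly from the covering-number definition of a Euclidean class. That is, to read ``Euclidean with coefficients $(A,v)$'' as
\[
N\bigl(\varepsilon,\mathcal G,L^p(Q)\bigr)\le A\varepsilon^{-v}
\]
for every probability measure $Q$ and every $\varepsilon\in(0,1]$. I will use this unnormalized form because that is the convention producing the factor $(M_1+M_2)^{v_1+v_2}$ in the statement. If the normalized convention $\varepsilon\|F\|_{Q}$ is used instead, the same argument goes through with constant envelopes.

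The uniform bound is immediate: $|g_1g_2|\le\|g_1\|_\infty\|g_2\|_\infty\le M_1M_2$.

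For the entropy bound, I fix $Q$ and $\varepsilon$, and set $\delta=\varepsilon/(M_1+M_2)$. I then choose a $\delta$-net $\{g_1^{(a)}\}$ of $\mathcal G_1$ and a $\delta$-net $\{g_2^{(b)}\}$ of $\mathcal G_2$ in $L^p(Q)$, with cardinalities at most $A_1\delta^{-v_1}$ and $A_2\delta^{-v_2}$. The key step is the telescoping inequality
\[
\|g_1g_2-g_1^{(a)}g_2^{(b)}\|_{Q}\le \|g_2\|_\infty\|g_1-g_1^{(a)}\|_{Q}+\|g_1^{(a)}\|_\infty\|g_2-g_2^{(b)}\|_{Q}\le M_2\delta+M_1\delta=\varepsilon .
\]
Hence the products $\{g_1^{(a)}g_2^{(b)}\}$ form an $\varepsilon$-net of $\mathcal G_1\mathcal G_2$. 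Its cardinality is at most
\[
A_1A_2\,\delta^{-(v_1+v_2)}=A_1A_2(M_1+M_2)^{v_1+v_2}\varepsilon^{-(v_1+v_2)},
\]
which is exactly the claimed coefficient pair. Since $M_i\ge1$ may be assumed, enlarging the envelopes if necessary, we have $\delta\le\varepsilon\le1$, so the defining bounds for $\mathcal G_1,\mathcal G_2$ are applicable at scale $\delta$. The case $\delta>1$ is trivial anyway, because a single element then covers the class.

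The main obstacle is the second term of the telescoping step, which needs the net centers $g_1^{(a)}$ to be bounded by $M_1$. This requires that the centers lie in $\mathcal G_1$ (or at least inside the $M_1$-ball), whereas a generic cover may use arbitrary centers. I would handle this first by replacing covering numbers with packing numbers. A maximal $\delta$-separated subset of $\mathcal G_i$ is automatically a $\delta$-net consisting of class members, and packing and covering numbers satisfy $N(\delta)\le D(\delta)\le N(\delta/2)$. Under the usual Nolan--Pollard convention, the Euclidean property is stated via packing numbers, so no constant is lost. If one insists on covering numbers with external centers, a fallback is to truncate each center to $[-M_1,M_1]$. Truncation is a contraction toward the bounded class, so it does not increase the $L^p(Q)$ distance to elements of $\mathcal G_1$, and the argument above then goes through unchanged.
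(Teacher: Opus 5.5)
The paper gives no proof of this lemma; it quotes it from \citet{xie2023uniform}. Judged on its own, the core of your argument is correct and is the standard proof. That core consists of nets at scale $\delta=\varepsilon/(M_1+M_2)$, the telescoping bound, and the truncation $c\mapsto (c\wedge M_1)\vee(-M_1)$, which is pointwise $1$-Lipschitz and fixes every member of $\mathcal G_1$, so it cannot increase $L^p(Q)$ distances to the class. Under the unnormalized covering-number convention, with internal centers or with truncated external centers, it yields exactly $A_1A_2(M_1+M_2)^{v_1+v_2}\varepsilon^{-(v_1+v_2)}$, provided $\delta\le 1$.

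The genuine gap is the regime $\delta>1$, i.e.\ $M_1+M_2<\varepsilon\le 1$, which is not trivial. A one-element cover gives the count $1$, but the target $A_1A_2\delta^{-(v_1+v_2)}$ can be smaller than $1$. The hypotheses only give $N_i(\delta)\le N_i(1)\le A_i$ there. In fact the stated inequality is false in this regime. Take $\mathcal G_1=\mathcal G_2=\{1/4\}$, a single constant function. Each class is bounded by $M_i=1/4$ and is Euclidean with $(A_i,v_i)=(1,1)$, since its covering number is $1\le\varepsilon^{-1}$. The product class $\{1/16\}$ has covering number $1$, yet the claimed bound at $\varepsilon=1$ is $(1/2)^2=1/4$.

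For the same reason, ``enlarging the envelopes if necessary'' is not a reduction, because it changes the constant in the conclusion. What you can actually prove is the lemma under the extra hypothesis $M_1+M_2\ge 1$, or with $\max\{M_1+M_2,1\}$ in place of $M_1+M_2$. This is harmless for the paper, which only uses that $\mathcal G_1\mathcal G_2$ is Euclidean and never the value of its coefficient.

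Your remark that the packing convention loses no constant is also unjustified. Your construction bounds the covering number $N(\varepsilon,\mathcal G_1\mathcal G_2)$. A packing-number definition of ``Euclidean'' requires a bound on $D(\varepsilon,\mathcal G_1\mathcal G_2)$, and the conversion $D(\varepsilon)\le N(\varepsilon/2)$ costs a factor $2^{v_1+v_2}$. Similarly, under the normalized convention with envelopes $M_1$, $M_2$, $M_1M_2$, the natural choice $\delta_i=\varepsilon M_i/2$ gives the coefficient $A_1A_2\,2^{v_1+v_2}$, not the stated one. So the exact constant in the lemma belongs to the unnormalized covering convention, and it is your truncation route, not the packing route, that delivers it.
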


\begin{lem}[Lemma 2 in \cite{xie2023uniform}]\label{lem:empirical_bound}
Let $\xi_1,\ldots,\xi_n$ be an iid sample of a random vector $\xi$ in $\mathbb{R}^d$.
Let $\mathcal{G}_n$ be a sequence of classes of measurable real-valued functions defined on $\mathbb{R}^d$.
Assume that there is a uniformly bounded Euclidean class $\mathcal{G}$ with coefficients $A$ and $v$
such that $\mathcal{G}_n \subset \mathcal{G}$ for all $n$.
Let $\sigma_n^2$ be a positive sequence such that
\begin{equation*}
\sigma_n^2 \ge \sup_{g\in\mathcal{G}_n} E[g(\xi)^2].
\end{equation*}
Then
\begin{equation*}
\epsilon
=
\sup_{g\in\mathcal{G}_n}
\left|
\sum_{i=1}^n \bigl(g(\xi_i)-Eg(\xi_i)\bigr)
\right|
=
O_p\!\left(
\sqrt{n\sigma_n^2 |\log \sigma_n|}
+ |\log \sigma_n|
\right).
\end{equation*}
In particular, if
\begin{equation*}
\frac{n\sigma_n^2}{|\log \sigma_n|}\to\infty,
\end{equation*}
then
\begin{equation*}
\epsilon
=
O_p\!\left(
\sqrt{n\sigma_n^2 |\log \sigma_n|}
\right).
\end{equation*}
\end{lem}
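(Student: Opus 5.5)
The statement is a Talagrand/Einmahl--Mason type maximal inequality. I would prove it by bounding the expectation $\mathbb E\,\epsilon$ and then applying Markov's inequality.

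\textbf{Setup and symmetrization.} Let $U$ be the uniform bound of $\mathcal G$ and set $\tau_n:=\min\{\sigma_n,U\}$, with $\sigma_n$ reduced if necessary so that $|\log\sigma_n|$ is bounded below (the case of bounded-away-from-zero $\sigma_n$ is trivial, since then $\epsilon=O_p(\sqrt n)$ by the same argument). Let $\varepsilon_i$ be Rademacher signs. Symmetrization gives
\[
\mathbb E\,\epsilon\le 2\,\mathbb E\sup_{g\in\mathcal G_n}\Bigl|\sum_{i=1}^n\varepsilon_i g(\xi_i)\Bigr| .
\]

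\textbf{Conditional chaining.} Conditionally on the data, the process $g\mapsto\sum_i\varepsilon_i g(\xi_i)$ is sub-Gaussian with respect to the empirical $L_2$ metric. Because $\mathcal G_n\subset\mathcal G$ and $\mathcal G$ is Euclidean, the covering numbers satisfy $N(\delta U,\mathcal G_n,L_2(P_n))\le(A/\delta)^v$ uniformly in the empirical measure. Dudley's entropy integral then gives
\[
\mathbb E_\varepsilon\sup\Bigl|\sum\varepsilon_i g(\xi_i)\Bigr|\lesssim \sqrt n\int_0^{\hat\sigma_n}\sqrt{v\log(AU/\delta)}\,d\delta\lesssim \sqrt{n v}\,\hat\sigma_n\sqrt{\log(AU/\hat\sigma_n)},
\]
where $\hat\sigma_n^2:=\sup_{g\in\mathcal G_n}P_n g^2$ is the random empirical radius. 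I would then take expectations, using Jensen and the concavity of $s\mapsto s\sqrt{\log(C/s)}$ on small $s$.

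\textbf{Main obstacle: controlling the random radius.} The chaining bound involves $\hat\sigma_n$, not $\sigma_n$. Write
\[
n\,\mathbb E\hat\sigma_n^2\le n\sigma_n^2+\mathbb E\sup_{g}\Bigl|\sum\bigl(g^2(\xi_i)-Eg^2\bigr)\Bigr| .
\]
Symmetrize the second term again and apply the contraction principle: $t\mapsto t^2$ is $2U$-Lipschitz on $[-U,U]$. This bounds the second term by $4U\,\mathbb E\sup|\sum\varepsilon_i g(\xi_i)|=:4U\,Z$. Combining with the chaining step gives the fixed-point inequality
\[
Z\lesssim\sqrt{(n\sigma_n^2+UZ)\,v\,|\log\sigma_n|}.
\]
Solving this quadratic inequality in $\sqrt Z$ yields
\[
Z\lesssim\sqrt{n\sigma_n^2|\log\sigma_n|}+Uv|\log\sigma_n|,
\]
which is the Giné--Guillou bound. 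Markov's inequality then gives the first claim, $\epsilon=O_p\bigl(\sqrt{n\sigma_n^2|\log\sigma_n|}+|\log\sigma_n|\bigr)$. I expect this fixed-point step to be the delicate part, together with tracking the $|\log\sigma_n|$ factor (rather than $\log n$) through the entropy integral.

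\textbf{The ``in particular'' claim.} If $n\sigma_n^2/|\log\sigma_n|\to\infty$, then
\[
|\log\sigma_n|=\sqrt{n\sigma_n^2|\log\sigma_n|}\cdot\sqrt{\frac{|\log\sigma_n|}{n\sigma_n^2}}=o\Bigl(\sqrt{n\sigma_n^2|\log\sigma_n|}\Bigr),
\]
so the second term is absorbed into the first.
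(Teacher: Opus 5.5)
The paper gives no proof of this lemma. Appendix~\ref{app:tools} quotes it from \citet{xie2023uniform} ``without proof,'' so there is no in-paper argument to compare against.

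Your route is the standard proof of the Gin\'e--Guillou moment inequality on which the cited lemma rests, and it is sound in outline. It consists of symmetrization, Dudley chaining in $L_2(P_n)$ with the random radius $\hat\sigma_n$, Jensen, control of $n\,\mathbb E\hat\sigma_n^2$ by a second symmetrization plus contraction, the quadratic fixed point, and Markov. Three bookkeeping points:
\begin{itemize}
\item After Jensen you also need monotonicity of $x\mapsto x\log(C/x)$ on $(0,C/e]$ to replace $\mathbb E\hat\sigma_n^2$ by its upper bound $\sigma_n^2+8UZ/n$. Taking $C\ge 9eU^2$ makes this legitimate, since $Z\le nU$.
\item The contraction step gives $8UZ$ rather than $4UZ$: a factor $2$ from symmetrization, a factor $2$ from contraction with absolute values, and Lipschitz constant $2U$. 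This is immaterial.
\item Symmetrization needs a pointwise-measurability (permissibility) condition on $\mathcal G$, which the statement leaves implicit.
\end{itemize}

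The step that does not work is your opening reduction. Chaining naturally produces $\log(AU/\tau_n)$, and replacing this by $|\log\sigma_n|$ is valid only when $\sigma_n$ stays away from $1$. In fact the lemma as literally stated is false near $\sigma_n=1$. Take the single function $g(\xi)=\xi$ with $\xi$ Rademacher and $\sigma_n\equiv 1$. The right-hand side is then $O_p(0)$, while $\epsilon=|\sum_i\xi_i|\asymp\sqrt n$.

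You cannot escape this by ``reducing $\sigma_n$.'' First, $\sigma_n^2$ must dominate $\sup_g Eg^2$. Second, $x\mapsto x^2|\log x|$ decreases to $0$ on $(e^{-1/2},1)$, so a bound in a smaller $\sigma_n$ does not imply the stated one. Likewise, $\epsilon=O_p(\sqrt n)$ is not the target bound when $\sigma_n\to1$.

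The correct repair is to add the hypothesis that $\sigma_n$ is bounded away from $1$. In particular $\sigma_n\to0$ suffices, and this is the only regime the paper uses, with $\sigma_n^2\asymp w_x^d$. Under that hypothesis $\log(AU/\tau_n)\lesssim|\log\sigma_n|$, and your argument goes through verbatim, including the ``in particular'' absorption step.
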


\begin{lem}\label{lem:eig_bound}
Denote $\Omega(\x,w_x)=\int \bm\iota(\mathbf{t})\bm\iota(\mathbf{t})^\top \bm\kappa(\mathbf{t})\mathbb{I}\{\x+w_x \mathbf{t}\in\mathcal{X}\}\,d\mathbf{t}.$ Under Assumptions \ref{ass:marginal} and \ref{ass:kernel}, there exists a constant $C>0$ such that, for all $\x\in\mathcal{X}$ and all sufficiently small $w_x>0$, the eigenvalues of $\Omega(\x,w_x)$ and $\bm{\varXi}(\x,w_x)$ lie in $[1/C,C]$.
\end{lem}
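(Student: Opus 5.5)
The plan is to reduce both eigenvalue bounds to quadratic-form bounds over the unit sphere of $\mathbb R^{d+1}$, and to handle the upper and lower bounds separately. I read $\bm{\varXi}(\x,w_x)$, which is not written out in the displayed text, as the density-weighted analogue
\[
\bm{\varXi}(\x,w_x)=\int \bm\iota(\mathbf t)\bm\iota(\mathbf t)^\top\bm\kappa(\mathbf t)\,f(\x+w_x\mathbf t)\,\mathbb I\{\x+w_x\mathbf t\in\mathcal X\}\,d\mathbf t ,
\]
which is the limit of the normalized Hessian $(Mw_x^d)^{-1}\U^\top\W\U$. Since both matrices are symmetric, it suffices to show that for every unit vector $\bm a=(a_0,\bm a_{1})$,
\[
\frac1C\le \bm a^\top\Omega(\x,w_x)\bm a\le C ,
\]
uniformly in $\x\in\mathcal X$ and in small $w_x$, and likewise for $\bm{\varXi}$.

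\emph{Upper bound.} We have $(\bm a^\top\bm\iota(\mathbf t))^2\le 1+\|\mathbf t\|^2$. Dropping the indicator then gives $\bm a^\top\Omega\bm a\le\int(1+\|\mathbf t\|^2)\bm\kappa(\mathbf t)\,d\mathbf t=1+d$, using the Gaussian moments in Assumption~\ref{ass:kernel}. For $\bm{\varXi}$ we pick up an extra factor $\sup_{\mathcal X}f$. This supremum is finite because $\mathcal X$ is compact and $f$ is continuous with bounded derivatives under Assumption~\ref{ass:marginal}.

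\emph{Lower bound.} This step carries the actual content, and the obstacle is uniformity near the boundary of $\mathcal X$. There the indicator can remove a large part of the kernel mass.

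1. Take $w_x\le\lambda_0$ and apply the internal cone condition with $w=w_x$. This gives some $\x'$ with $B(\x',\lambda_1w_x)\subset B(\x,w_x)\cap\mathcal X$.
2. Rescale by $\mathbf t=(\mathbf s-\x)/w_x$. The set $\{\mathbf t:\x+w_x\mathbf t\in\mathcal X\}$ then contains the ball $B(\mathbf t_0,\lambda_1)$, where $\mathbf t_0=(\x'-\x)/w_x$ satisfies $\|\mathbf t_0\|\le1$.
3. On this ball $\|\mathbf t\|\le 2$, so $\bm\kappa(\mathbf t)\ge c_\kappa:=(2\pi)^{-d/2}e^{-2}>0$. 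Hence
\[
\bm a^\top\Omega\bm a\ \ge\ c_\kappa\,\Psi(\bm a,\mathbf t_0),\qquad \Psi(\bm a,\mathbf t_0):=\int_{B(\mathbf t_0,\lambda_1)}\bigl(a_0+\bm a_1^\top\mathbf t\bigr)^2d\mathbf t .
\]
4. The function $\Psi$ is continuous on the compact set $\mathbb S^{d}\times\overline{B(\mathbf 0,1)}$. It is strictly positive there, because a nonzero affine function vanishes only on a hyperplane, which is Lebesgue-null, while the ball has positive volume.
5. Therefore $\Psi$ attains a positive minimum $\psi_0$. This minimum depends only on $d$ and $\lambda_1$, and not on $\x$ or $w_x$.

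This yields $\bm a^\top\Omega\bm a\ge c_\kappa\psi_0$. For $\bm{\varXi}$ the same bound holds after multiplying by $\inf_{\mathcal X}f>0$, which is Assumption~\ref{ass:marginal}(i).

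Taking $C$ to be the larger of the reciprocal lower constant and the upper constant across both matrices completes the argument for all $w_x\le\lambda_0$. The unbounded Gaussian support causes no difficulty: it only helps the upper bound through finite moments, and for the lower bound it suffices that $\bm\kappa$ is bounded below on a fixed compact ball.
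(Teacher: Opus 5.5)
Your proposal is correct and follows essentially the same route as the paper: an upper bound from the finite Gaussian second moments, and a lower bound obtained by applying the internal cone condition and rescaling so that the indicator set contains a ball $B(\mathbf t_0,\lambda_1)$ with $\|\mathbf t_0\|\le 1$. From there you bound $\bm\kappa$ below on a fixed ball, use compactness and positivity of $\int_{B(\mathbf t_0,\lambda_1)}(\bm a^\top\bm\iota(\mathbf t))^2\,d\mathbf t$, and finally sandwich $\bm\varXi$ between $(\inf f)\,\Omega$ and $(\sup f)\,\Omega$. The differences are only cosmetic: you bound $\bm\kappa$ below on the radius-$2$ ball and use the compact parameter set $\mathbb S^{d}\times\overline{B(\mathbf 0,1)}$, whereas the paper stays inside $B(\mathbf 0,1)$ and uses $\overline{B}(\mathbf 0,1-\lambda_1)$.
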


\section{Proof of Theorem~\ref{thm:simple_uniform}}
\label{app:proof_simple_uniform}

Throughout this and the subsequent appendices, we work under
Assumptions~\ref{ass:marginal}--\ref{ass:kernel} with the pooled
sample of size $M=\sum_{k=1}^{K}m_{k}$, treating
$\widehat F(z\mid\x)$ as the centralized local linear estimator on
$\{(\h_{i,:d},h_{i,d+1})\}_{i=1}^{M}$. By
Proposition~\ref{prop:fed_pooled_equiv}, this is the same object
as the federated estimator $\widehat F^{\rm raw}(z\mid\x)$ in Algorithm~\ref{alg:two_step_fllr} under matched
kernel weights, so all rates established in this appendix
transfer to the federated estimator without further argument.

\subsection{Setup and notation used throughout the proofs}
\label{app:proof_setup}

We collect here the notation that recurs in all subsequent
proofs. Recall the smoothed response indicator from
\eqref{eq:smoothed_indicator}:
\[
    \mathcal K_{i}(z)
    :=
    \mathcal K\!\left(\frac{z-h_{i,d+1}}{w_{z}}\right),
    \qquad i\in[M],
\]
where $\mathcal K=\varPhi$ is the standard normal CDF. The
local linear regression \eqref{eq:llr_global} targets the
population parameter
\begin{equation}\label{eq:true_param}
    \bm\beta^{\star}(z,\x)
    =
    \bigl(\beta_{0}^{\star}(z,\x),
        \beta_{1}^{\star}(z,\x),\ldots,
        \beta_{d}^{\star}(z,\x)\bigr)^{\top}
    =
    \bigl(F(z\mid\x),\,
        \nabla_{\x}F(z\mid\x)^{\top}\bigr)^{\top},
\end{equation}
where
$\nabla_{\x}F(z\mid\x)
=\bigl(\partial F(z\mid\x)/\partial x_{1},
\ldots,\partial F(z\mid\x)/\partial x_{d}\bigr)^{\top}$
is the gradient with respect to $\x$. The intercept
$\beta_{0}^{\star}(z,\x)=F(z\mid\x)$ is the estimand of interest.

For each $i\in[M]$, define the centered covariate increment and
covariate-kernel weight at the query point $\x$:
\begin{equation}\label{eq:design_quantities}
    \u_{i}
    :=
    \h_{i,:d}-\x \ \in\mathbb R^{d},
    \qquad
    \varpi_{i}
    :=
    \bm\kappa_{w_x}\!\left(\h_{i,:d}-\x\right)
    =
    \prod_{j=1}^{d}\kappa\!\left(\frac{h_{ij}-x_{j}}{w_{x}}\right),
\end{equation}
and define the design matrix and the diagonal weight matrix as
\[
    \U
    :=
    \begin{pmatrix}
        1 & \u_{1}^{\top}\\
        \vdots & \vdots\\
        1 & \u_{M}^{\top}
    \end{pmatrix}
    \in\mathbb R^{M\times(d+1)},
    \qquad
    \W
    :=
    \mathrm{diag}(\varpi_{1},\ldots,\varpi_{M}).
\]
With $\U_{i}:=\bm\iota(\u_{i})=(1,\u_{i}^{\top})^{\top}
\in\mathbb R^{d+1}$, the weighted-least-squares normal equations
yield the closed form
\begin{equation}\label{eq:llr_normal_form}
    \widehat{\bm\beta}(z,\x,w_{z},w_{x})
    =
    \bigl(\U^{\top}\W\U\bigr)^{-1}\U^{\top}\W\,\bm{\mathcal K}(z),
    \qquad
    \bm{\mathcal K}(z)
    :=
    \bigl(\mathcal K_{1}(z),\ldots,\mathcal K_{M}(z)\bigr)^{\top},
\end{equation}
and the intercept used as
$\widehat F(z\mid\x)=\hat\beta_{0}(z,\x,w_{z},w_{x})$ admits the
linear-functional representation
\begin{equation}\label{eq:linear_form}
    \widehat F(z\mid\x)
    =
    \hat\beta_{0}(z,\x,w_{z},w_{x})
    =
    \e_{1}^{\top}\bigl(\U^{\top}\W\U\bigr)^{-1}\U^{\top}\W\,\bm{\mathcal K}(z)
    =
    \sum_{i=1}^{M}\alpha_{i}(\x)\,\mathcal K_{i}(z),
\end{equation}
where $\e_{1}=(1,0,\ldots,0)^{\top}\in\mathbb R^{d+1}$ and
$\alpha_{i}(\x):=\varpi_{i}\,\e_{1}^{\top}(\U^{\top}\W\U)^{-1}\U_{i}$.

\paragraph{Pseudo-true parameter and the error decomposition.}
The empirical objective in \eqref{eq:llr_global} is the
finite-sample analogue of the population weighted least-squares
problem whose minimizer is the deterministic pseudo-true parameter
\begin{equation}\label{eq:pseudo_est}
    \bm\beta^{\diamond}(z,\x,w_{z},w_{x})
    :=
    \arg\min_{\bm\beta\in\mathbb R^{d+1}}
    \mathbb E_{Z,\X}\!\left[
        \Bigl(
            \mathcal K\!\Bigl(\frac{z-Z}{w_{z}}\Bigr)
            -\bm\iota(\X-\x)^{\top}\bm\beta
        \Bigr)^{2}
        \bm\kappa_{w_{x}}(\X-\x)
    \right],
\end{equation}
where $\bm\iota(\x)=(1,x_{1},\ldots,x_{d})^{\top} \in \mathbb{R}^{d+1}$ and
$\bm\kappa_{w_{x}}(\X-\x)=\prod_{j=1}^{d}
\kappa((X_{j}-x_{j})/w_{x})$. The minimizer of
\eqref{eq:pseudo_est} admits the closed form
\begin{equation}\label{eq:pseudo_closed_form}
    \D_{w_{x}}\bm\beta^{\diamond}(z,\x,w_{z},w_{x})
    =
    \bm\varXi(\x,w_{x})^{-1}\,\bm\mu(z,\x,w_{z},w_{x}),
\end{equation}
where $\D_{w_{x}}=\mathrm{diag}(1,w_{x},\ldots,w_{x})\in \mathbb{R}^{(d+1)\times (d+1) }$  and
\begin{align*}
    \bm\varXi(\x,w_{x})
    &:=
    \frac{1}{w_{x}^{d}}\,
    \mathbb E_{\X}\!\left[
        \bm\iota\!\Bigl(\frac{\X-\x}{w_{x}}\Bigr)
        \bm\iota\!\Bigl(\frac{\X-\x}{w_{x}}\Bigr)^{\top}
        \bm\kappa_{w_{x}}(\X-\x)
    \right],
    \\[1mm]
    \bm\mu(z,\x,w_{z},w_{x})
    &:=
    \frac{1}{w_{x}^{d}}\,
    \mathbb E_{Z,\X}\!\left[
        \bm\iota\!\Bigl(\frac{\X-\x}{w_{x}}\Bigr)
        \mathcal K\!\Bigl(\frac{z-Z}{w_{z}}\Bigr)
        \bm\kappa_{w_{x}}(\X-\x)
    \right].
\end{align*}
The pseudo-true parameter $\bm\beta^{\diamond}$ is deterministic
and depends on the bandwidths $(w_{z},w_{x})$; it converges to
$\bm\beta^{\star}(z,\x)$ as $w_{z},w_{x}\to 0$. This deterministic
intermediate quantity gives the natural decomposition of the
overall estimation error into a stochastic component and a
smoothing-induced (oracle) bias:
\begin{equation}
    \widehat{\bm\beta}(z,\x,w_{z},w_{x})
    -\bm\beta^{\star}(z,\x)
    =
    \underbrace{
        \widehat{\bm\beta}(z,\x,w_{z},w_{x})
        -\bm\beta^{\diamond}(z,\x,w_{z},w_{x})
    }_{\text{stochastic error}}
    +
    \underbrace{
        \bm\beta^{\diamond}(z,\x,w_{z},w_{x})
        -\bm\beta^{\star}(z,\x)
    }_{\text{oracle bias}}.
\end{equation}
The stochastic error captures the empirical fluctuation of the
finite-sample estimator around its deterministic population
counterpart, and is controlled by an empirical-process argument
over the kernel class
\begin{equation}\label{eq:kernel_class}
    \mathcal G_{M}
    :=
    \Bigl\{
        (\x',z')\mapsto
        \bm\kappa\!\Bigl(\frac{\x'-\x}{w_{x}}\Bigr)
        \mathcal K\!\Bigl(\frac{z'-z}{w_{z}}\Bigr)
        :\ \x\in\mathcal X,\ z\in\mathbb R
    \Bigr\},
\end{equation}
which is uniformly bounded and Euclidean by
Lemmas~\ref{lem:Euclidean_class}--\ref{lem:empirical_bound}.
The oracle bias captures the deterministic smoothing distortion
introduced by the bandwidths $(w_{x},w_{z})$ and is controlled by
a Taylor expansion of $F(\cdot\mid\cdot)$ under
Assumption~\ref{ass:conditional}. The two components are bounded
separately in Sections~\ref{app:simple_bias} and \ref{app:stochastic_bound} below; combining them yields the conclusion
of Theorem~\ref{thm:simple_uniform}.

\subsection{Uniform bias bound}
\label{app:simple_bias}

\begin{lem}[Uniform bias bound]
\label{lem:simple_bias}
Under Assumptions~\ref{ass:marginal}--\ref{ass:kernel},
	\begin{equation*}
		\begin{aligned}
			&\D_{w_x}\bigl(\bm \beta^\diamond(z,\x,w_z, w_x) - \bm \beta^\star(z, \x)\bigr)\\
			& = \frac{w_x^2}{2}\, \Omega(\x,w_x)^{-1} \sum_{i,j=1}^{d} \frac{\partial^2 F(z\mid\x)}{\partial x_i \partial x_j} \int \bm \iota(\mathbf{t})\, t_i t_j\, \bm \kappa (\mathbf{t})\, \mathbb{I}\{\x + w_x \mathbf{t} \in \mathcal{X}\}\,d \mathbf{t}\\
			& \quad + \frac{w_z^2}{2}\, \Omega(\x,w_x)^{-1}\, \frac{\partial^2 F(z\mid \x)}{\partial z^2}\, \Big(\int v^2 \kappa(v)\,dv\Big) \int \bm \iota(\mathbf{t})\,\bm \kappa(\mathbf{t})\, \mathbb{I}\{\x + w_x \mathbf{t}\in \mathcal{X}\}\,d\mathbf{t} \\
			& \quad + o(w_x^2 + w_z^2)
		\end{aligned}
	\end{equation*}
	holds uniformly over $z \in \mathbb{R}$ and $\x \in \mathcal{X}$. In particular, for the intercept term,
\[
    \sup_{z\in\mathbb R,\ \x\in\mathcal X}
    \Bigl|\,
        \beta_0^{\diamond}(z,\x,w_{z},w_{x})
        -F(z\mid\x)
    \,\Bigr|
    =
    O\!\left(w_{x}^{2}+w_{z}^{2}\right).
\]
\end{lem}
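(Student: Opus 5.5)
We work directly from the closed form \eqref{eq:pseudo_closed_form}, writing $\D_{w_x}\bm\beta^\diamond=\bm\varXi(\x,w_x)^{-1}\bm\mu(z,\x,w_z,w_x)$ and $\D_{w_x}\bm\beta^\star=\bm\varXi^{-1}\bm\varXi\,\D_{w_x}\bm\beta^\star$. The difference is then
\[
\D_{w_x}(\bm\beta^\diamond-\bm\beta^\star)=\bm\varXi(\x,w_x)^{-1}\bigl[\bm\mu(z,\x,w_z,w_x)-\bm\varXi(\x,w_x)\D_{w_x}\bm\beta^\star(z,\x)\bigr],
\]
and the task reduces to expanding the bracket.

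\textbf{Step 1: the response direction.} Condition on $\X$ and integrate by parts in $Z$:
\[
\mathbb E[\mathcal K((z-Z)/w_z)\mid\X]=\int F(z-w_zv\mid\X)\kappa(v)\,dv .
\]
Taylor expand to second order in $z$. Because $\kappa$ is symmetric, the first-order term vanishes, which leaves
\[
F(z\mid\X)+\tfrac{w_z^2}{2}\partial_z^2F(z\mid\X)\textstyle\int v^2\kappa(v)\,dv+o(w_z^2),
\]
uniformly, by uniform continuity of $\partial_z^2F$. The Gaussian tail is harmless here since $\int v^2\phi<\infty$.

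\textbf{Step 2: the covariate direction.} Substitute $\X=\x+w_x\mathbf t$ in $\bm\mu$ and $\bm\varXi$. This gives integrals of the form
\[
\int\bm\iota(\mathbf t)\,(\cdot)\,\bm\kappa(\mathbf t)\,f(\x+w_x\mathbf t)\,\mathbb I\{\x+w_x\mathbf t\in\mathcal X\}\,d\mathbf t .
\]
Expand $F(z\mid\x+w_x\mathbf t)=F(z\mid\x)+w_x\mathbf t^\top\nabla_{\x}F+\tfrac{w_x^2}{2}\mathbf t^\top\nabla^2_{\x}F\,\mathbf t+R$. The zeroth- and first-order terms reproduce exactly $\bm\varXi\,\D_{w_x}\bm\beta^\star$; this is the local linear cancellation, and it holds even near the boundary because the same indicator appears in both. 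What remains is the quadratic term plus the $w_z^2$ term from Step 1. The $w_z^2$ term is evaluated at $\x+w_x\mathbf t$, but replacing it by its value at $\x$ costs only $o(w_z^2)$.

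\textbf{Step 3: freezing the density.} Replace $f(\x+w_x\mathbf t)$ by $f(\x)$ in both $\bm\varXi$ and the bracket. This is a relative error of $O(w_x|\mathbf t|)$, multiplying terms that are already $O(w_x^2+w_z^2)$. The factor $f(\x)$ then cancels between $\bm\varXi^{-1}$ and the bracket, so that $\bm\varXi$ becomes $f(\x)\Omega(\x,w_x)$ up to $o(1)$. This yields the displayed expansion with $\Omega^{-1}$. Uniform invertibility, with eigenvalues in $[1/C,C]$ by Lemma~\ref{lem:eig_bound}, makes the perturbation of the inverse uniformly controlled. The intercept bound then follows by taking the first coordinate, since $\D_{w_x}$ leaves it unscaled and the moment integrals are bounded.

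\textbf{Main obstacle.} The hardest part is uniformity of the remainders under the unbounded Gaussian support. $R$ is $o(w_x^2|\mathbf t|^2)$ only for $w_x|\mathbf t|$ small, and $\x+w_x\mathbf t$ may leave $\mathcal X$. I would split the integral at $|\mathbf t|\le\delta/w_x$. On the inner region, the uniform continuity of the second derivatives gives $o(w_x^2)\int|\mathbf t|^2\bm\kappa$. On the outer region, boundedness of $F$ and its derivatives, together with the tail bound $\int_{|\mathbf t|>\delta/w_x}|\mathbf t|^3\bm\kappa=O(e^{-c/w_x^2})=o(w_x^2)$, suffices. Convexity of $\mathcal X$ guarantees that the segment from $\x$ to $\x+w_x\mathbf t$ stays in $\mathcal X$ whenever the indicator is on, so the Taylor expansion is valid wherever it is used.
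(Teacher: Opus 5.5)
Your proposal is correct and follows essentially the same route as the paper's proof. The shared steps are:
\begin{itemize}
\item the response-direction expansion $\int F(z-w_zv\mid\cdot)\kappa(v)\,dv$, where the symmetric kernel kills the first-order term;
\item the covariate Taylor expansion, whose zeroth- and first-order terms reproduce $\bm\varXi(\x,w_x)\D_{w_x}\bm\beta^\star(z,\x)$;
\item freezing $f$ at $\x$ so that $\bm\varXi(\x,w_x)^{-1}f(\x)=\Omega(\x,w_x)^{-1}+O(w_x)$, with Lemma~\ref{lem:eig_bound} controlling the inverse;
\item a truncation argument for the Gaussian tails.
\end{itemize}
Your split at $\|\mathbf t\|\le\delta/w_x$ does the same job as the paper's split at a fixed radius $M$; either works because the modulus-of-continuity bound $\omega(w_x\|\mathbf t\|)$ is uniform in $(z,\x)$ and the Gaussian kernel has finite moments of all orders. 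Your explicit appeal to convexity of $\mathcal X$ to justify the Taylor expansion makes precise a step the paper uses only implicitly.
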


\begin{prf}
The proof follows the same decomposition as in the proof of Theorem 1 of \cite{xie2023uniform},
with the only difference that the kernel $\bm\kappa$ is allowed to have unbounded support.
Hence we only need to verify that the remainder terms remain uniformly negligible under the
Gaussian kernel. The original proof writes the population matrices and moments as integral
expressions and then applies second-order Taylor expansions in $z$ and $\x$. 
Write
\begin{equation*}
\bm\varXi(\x,w_x)
=
\int \bm\iota(\mathbf t)\bm\iota(\mathbf t)^\top
\bm\kappa(\mathbf t)\,
f(\x+w_x\mathbf t)\,
\mathbb I\{\x+w_x\mathbf t\in\mathcal X\}\,d\mathbf t \in \mathbb{R}^{(d+1) \times (d+1)},
\end{equation*}
and
\begin{equation*}
\bm\mu (z,\x,w_z,w_x)
=
\int \bm\iota(\mathbf t)\bm\kappa(\mathbf t)\,
\widetilde F(z\mid \x+w_x\mathbf t)\,
f(\x+w_x\mathbf t)\,
\mathbb I\{\x+w_x\mathbf t\in\mathcal X\}\,d\mathbf t \in \mathbb{R}^{d+1},
\end{equation*}
where
\begin{equation*}
\widetilde F(z\mid \x)
=
\mathbb{E}\!\left[
\mathcal K\!\left(\frac{z-Z}{w_z}\right)\Bigm| \x
\right].
\end{equation*}
By definition of $\bm\beta^\diamond$,
\begin{equation*}
\D_{w_x}\bm\beta^\diamond(z,\x,w_z,w_x)
=
\bm \varXi(\x,w_x)^{-1}\bm\mu (z,\x,w_z, w_x).
\end{equation*} 

We first expand $\widetilde F(z\mid \x+w_x\mathbf t)$ in the response direction,
\begin{equation}\label{eq:response_direction}
\widetilde F(z\mid \x+w_x\mathbf t)
=
\int \kappa(v)\,F(z-w_z v\mid \x+w_x\mathbf t)\,dv. 
\end{equation}
Since $F$ is twice continuously differentiable in $z$ and $\x$, and its second-order derivatives
are uniformly continuous on $\mathbb R\times\mathcal X$ by Assumption \ref{ass:conditional},
we have the second-order expansion
\begin{equation}\label{eq:z_difference}
F(z-w_z v\mid \x+w_x\mathbf t)
=
F(z\mid \x+w_x\mathbf t)
-
\partial_zF(z\mid \x+w_x\mathbf t)\,w_z v
+
\frac{w_z^2}{2}\partial_z^2F(z\mid \x+w_x\mathbf t)\,v^2
+
R_z(z,\x,\mathbf t,v),
\end{equation}
where the remainder
\begin{equation*}
R_z(z,\x,\mathbf t,v) = \frac{w_z^2}{2} v^2 \Big(\partial_z^2F(\tilde{z}\mid \x+w_x\mathbf t) -\partial_z^2F(z\mid \x+w_x\mathbf t)\Big)
\end{equation*} and $\tilde{z} \in [z-w_z v, z]$.
Because $\kappa$ is symmetric, $\int v\,\kappa(v)\,dv=0$ implies that the integral of the first-order term $\partial_zF(z\mid \x+w_x\mathbf t)\,w_z v$ vanishes. Because $\kappa$ is Gaussian,
\begin{equation*}
\int v^2 \kappa(v)\,dv<\infty.
\end{equation*}

To control the integrated remainder, fix any $\eta>0$. Since \(\partial_z^2F\) is uniformly bounded and the Gaussian kernel has a finite second moment, we can choose \(M>0\) sufficiently large such that
\[
2\|\partial_z^2F\|_\infty \int_{|v|>M}v^2\kappa(v)\,dv <\eta.
\]
For this fixed \(M\), the uniform continuity of \(\partial_z^2F\) implies that there exists \(\delta>0\) such that
\[
|z'-z|<\delta \quad\Longrightarrow\quad \left|
\partial_z^2F(z'\mid\x') - \partial_z^2F(z\mid\x')\right|
<\eta
\]
uniformly over \(\x'\in\mathcal X\). For sufficiently small \(w_z\), \(w_zM<\delta\), and hence, whenever \(|v|\le M\),
\[
|\tilde z-z|\le w_z|v|\le w_zM<\delta,
\]
so that
\[
\left|
\partial_z^2F(\tilde z\mid\x+w_x\mathbf t) - \partial_z^2F(z\mid\x+w_x\mathbf t)
\right| <\eta.
\]
Therefore,
\[
\frac{1}{w_z^2}
\left|
\int_{\mathbb R}
R_z(z,\x,\mathbf t,v)\kappa(v)\,dv
\right|
\le
\frac{\eta}{2}
\int_{|v|\le M}v^2\kappa(v)\,dv
+
\|\partial_z^2F\|_\infty \int_{|v|>M}v^2\kappa(v)\,dv \le C\eta,
\]
uniformly over the admissible \((z,\x,\mathbf t)\). Since \(\eta>0\) is arbitrary,
\[
\int_{\mathbb R}
R_z(z,\x,\mathbf t,v)\kappa(v)\,dv
=
o(w_z^2)
\]
uniformly. Therefore, by dominated convergence,
\begin{equation*}
\widetilde F(z\mid \x+w_x\mathbf t)
=
F(z\mid \x+w_x\mathbf t)
+
\frac{w_z^2}{2}\partial_z^2F(z\mid \x+w_x\mathbf t)\int v^2\kappa(v)\,dv
+
o(w_z^2),
\end{equation*}
uniformly over $z\in\mathbb R$, $\x\in\mathcal X$, and $\x+w_x\mathbf t\in\mathcal X$.

Next, expand the map
\begin{equation*}
\mathbf u\mapsto F(z\mid \mathbf u)
\end{equation*}
at $\mathbf u=\x$ in the covariate direction. By the second-order Taylor expansion,
\begin{equation}\label{eq:Taylor_expansion}
F(z\mid \x+w_x\mathbf t)
=
F(z\mid \x)
+
w_x \nabla_\x^\top\!\{F(z\mid \x)\}\mathbf t
+
\frac{w_x^2}{2}\mathbf t^\top
\nabla_\x^\top\nabla_\x\!\{F(z\mid \x)\}\mathbf t
+
R_x(z,\x,\mathbf t),
\end{equation}
where 
\begin{equation*}
R_x(z,\x,\mathbf t) = \frac{w_x^2}{2}\mathbf t^\top
\Big(\nabla_\x^\top\nabla_\x\!\{F(z\mid \tilde{\x})\}  - \nabla_\x^\top\nabla_\x\!\{F(z\mid \x)\}\Big)\mathbf t
\end{equation*} and $\tilde{\x} \in [\x, \x+ w_x\mathbf{t}]$. 

Multiplying by $\bm\iota(\mathbf t)\bm\kappa(\mathbf t)$, using the bound
$\|\bm\iota(\mathbf t)\|\lesssim 1+\|\mathbf t\|$, and combining the $z$- and $\x$-expansions, we obtain
\begin{equation*}
\bm\mu(z,\x,w_z,w_x)
=
\bm \varXi(\x,w_x)\D_{w_x}\bm\beta^\star(z,\x)
+
 f(\x)\frac{w_x^2}{2}
\sum_{i,j=1}^d 
\frac{\partial^2}{\partial x_i\partial x_j}F(z\mid\x)
\int \bm\iota(\mathbf t)t_it_j\bm\kappa(\mathbf t)
\mathbb I\{\x+w_x\mathbf t\in\mathcal X\}\,d\mathbf t
\end{equation*}
\begin{equation}\label{eq:mu}
\qquad
+
 f(\x)\frac{w_z^2}{2}
\frac{\partial^2F(z\mid\x)}{\partial z^2}
\left(\int v^2\kappa(v)\,dv\right)
\left(
\int \bm\iota(\mathbf t)\bm\kappa(\mathbf t)
\mathbb I\{\x+w_x\mathbf t\in\mathcal X\}\,d\mathbf t
\right)
+
o(w_x^2+w_z^2),
\end{equation}
uniformly over $z\in\mathbb R$ and $\x\in\mathcal X$. We explain how Eq.~\eqref{eq:mu} is obtained as follows.

To keep all powers of $w_x$ explicit, 
we work with a remainder that does \emph{not} carry a $w_x^2$ factor inside. 
Define the Hessian-increment remainder
\begin{equation*}
\widetilde R_x(z,\x,\mathbf t)
:=
\mathbf t^\top
\Big(
\nabla_\x^\top \nabla_\x F(z\mid \tilde{\x})
-
\nabla_\x^\top \nabla_\x F(z\mid \x)
\Big)\mathbf t,
\end{equation*}
where $\tilde{\x}$ lies on the line segment joining $\x$ and $\x+w_x\mathbf t$. 
With this convention, the second-order Taylor expansion of $F(z\mid\,\cdot\,)$ in the 
covariate direction reads
\begin{equation*}
F(z\mid \x+w_x\mathbf t)
=
F(z\mid \x)
+
w_x \nabla_\x^\top F(z\mid \x)\,\mathbf t
+
\frac{w_x^2}{2}\Big(\mathbf t^\top\nabla_\x^\top\nabla_\x F(z\mid\x)\,\mathbf t
+
\widetilde R_x(z,\x,\mathbf t)\Big).
\end{equation*}
By the definition $\bm\iota(\mathbf t)=(1,t_1,\dots,t_d)^\top$ and 
$\D_{w_x}=\operatorname{diag}(1,w_x,\dots,w_x)$, the zeroth- and first-order terms 
combine into the identity
\begin{equation}\label{eq:identity_iota_beta}
F(z\mid\x)+w_x\nabla_\x^\top F(z\mid\x)\,\mathbf t
=
\bm\iota(\mathbf t)^\top \D_{w_x}\bm\beta^\star(z,\x).
\end{equation}
Combining \eqref{eq:identity_iota_beta} with the $z$-direction expansion
\begin{equation*}
\widetilde F(z\mid \x+w_x\mathbf t)
=
F(z\mid \x+w_x\mathbf t)
+
\frac{w_z^2}{2}\partial_z^2 F(z\mid \x+w_x\mathbf t)\int v^2\kappa(v)\,dv
+
o(w_z^2),
\end{equation*}
we obtain
\begin{equation}\label{eq:tildeF_expansion}
    \begin{aligned}
        \widetilde F(z\mid \x+w_x\mathbf t)&=
\bm\iota(\mathbf t)^\top \D_{w_x}\bm\beta^\star(z,\x)
\\
&+\frac{w_x^2}{2}\Big(\mathbf t^\top\nabla_\x^\top\nabla_\x F(z\mid\x)\,\mathbf t+\widetilde R_x(z,\x,\mathbf t)\Big)
\\
&+\frac{w_z^2}{2}\partial_z^2 F(z\mid \x+w_x\mathbf t)\int v^2\kappa(v)\,dv
+
o(w_z^2).\\
    \end{aligned}
\end{equation}
Substituting \eqref{eq:tildeF_expansion} into the definition of $\bm\mu(z,\x,w_z,w_x)$ yields
\begin{equation}\label{eq:mu_decomposition}
\begin{aligned}
\bm\mu(z,\x,w_z,w_x)
=&\,
\bm\varXi(\x,w_x)\,\D_{w_x}\bm\beta^\star(z,\x)\\
&+
\frac{w_x^2}{2}\int \bm\iota(\mathbf t)\bm\kappa(\mathbf t)
\Big(\mathbf t^\top\nabla_\x^\top\nabla_\x F(z\mid\x)\,\mathbf t+\widetilde R_x(z,\x,\mathbf t)\Big)
f(\x+w_x\mathbf t)\,\mathbb I\{\x+w_x\mathbf t\in\mathcal X\}\,d\mathbf t\\
&+
\frac{w_z^2}{2}\!\int v^2\kappa(v)\,dv\!
\int \bm\iota(\mathbf t)\bm\kappa(\mathbf t)\,\partial_z^2 F(z\mid \x+w_x\mathbf t)\,
f(\x+w_x\mathbf t)\,\mathbb I\{\x+w_x\mathbf t\in\mathcal X\}\,d\mathbf t\\
&+
o(w_z^2),
\end{aligned}
\end{equation}
uniformly over $z\in\mathbb R$ and $\x\in\mathcal X$. 
The first-order Taylor expansion of \(f\) at \(\x\) gives
\begin{equation*}
f(\x+w_x\mathbf t)
=
f(\x)+w_x\nabla_\x^\top f(\check{\x})\,\mathbf t,
\end{equation*}where \(\check{\x}\) lies between \(\x\) and \(\x+w_x\mathbf t\). Since \(\nabla_\x f\) is uniformly bounded by Assumption~\ref{ass:marginal},

\begin{equation*}
f(\x+w_x\mathbf t)-f(\x)=O(w_x\|\mathbf t\|).
\end{equation*}
For $\partial_z^2F$, instead of using a first-order Taylor expansion, we use Assumption~\ref{ass:conditional} to conclude that $\partial_z^2F$ is bounded and uniformly continuous. Hence, for any fixed \(M>0\),
\begin{equation*}
\sup_{\substack{z,\x,\mathbf t:\,\|\mathbf t\|\le M\\
\x+w_x\mathbf t\in\mathcal X}}
\left|
\partial_z^2F(z\mid\x+w_x\mathbf t)
-
\partial_z^2F(z\mid\x)
\right|
=o(1), \text{  as  } w_x\to 0.
\end{equation*}
For \(\|\mathbf t\|>M\), boundedness of \(\partial_z^2F\), together with the finite moments and exponentially decaying tails of the Gaussian kernel, makes the corresponding integral arbitrarily small by taking \(M\) sufficiently large. Therefore, by first taking \(M\) large and then letting \(w_x\to0\),
\begin{equation*}
\int
\bm\iota(\mathbf t)\bm\kappa(\mathbf t)
\left\{
\partial_z^2F(z\mid\x+w_x\mathbf t)
-
\partial_z^2F(z\mid\x)
\right\}
f(\x+w_x\mathbf t)\,d\mathbf t
=o(1)
\end{equation*}
uniformly in \(z\) and \(\x\). Consequently, the corresponding term multiplied by \(w_z^2\) is \(o(w_z^2)\). The cross terms arising from the expansion of \(f\) are of orders \(w_x\cdot w_x^2=o(w_x^2)\) and \(w_x\cdot w_z^2=o(w_x^2+w_z^2)\), respectively, since $
w_xw_z^2
\le
\frac12(w_x^2+w_z^2)\max(w_x,w_z)
=o(w_x^2+w_z^2).
$ Hence \eqref{eq:mu_decomposition} reduces to
\begin{equation*}
\begin{aligned}
\bm\mu(z,\x,w_z,w_x)
=&\,
\bm\varXi(\x,w_x)\,\D_{w_x}\bm\beta^\star(z,\x)\\
&+
f(\x)\,\frac{w_x^2}{2}
\int \bm\iota(\mathbf t)\bm\kappa(\mathbf t)\,
\mathbf t^\top\nabla_\x^\top\nabla_\x F(z\mid\x)\,\mathbf t\,
\mathbb I\{\x+w_x\mathbf t\in\mathcal X\}\,d\mathbf t\\
&+
f(\x)\,\frac{w_x^2}{2}
\int \bm\iota(\mathbf t)\bm\kappa(\mathbf t)\,
\widetilde R_x(z,\x,\mathbf t)\,
\mathbb I\{\x+w_x\mathbf t\in\mathcal X\}\,d\mathbf t\\
&+
f(\x)\,\frac{w_z^2}{2}\,\partial_z^2 F(z\mid\x)\int v^2\kappa(v)\,dv
\int \bm\iota(\mathbf t)\bm\kappa(\mathbf t)\,
\mathbb I\{\x+w_x\mathbf t\in\mathcal X\}\,d\mathbf t
+
o(w_x^2+w_z^2).
\end{aligned}
\end{equation*}

It remains to show that the term involving $\widetilde R_x$ is $o(w_x^2)$ uniformly.
The approach is the same: use the finite moments of the Gaussian kernel and the uniform continuity of the Hessian of $F$ with respect to $\x$.
By Assumption \ref{ass:conditional}, $\nabla_\x^\top\nabla_\x F(z\mid\,\cdot\,)$ admits 
a common modulus of continuity $\omega_{\mathbf{x}}$ on $\mathbb R\times\mathcal X$, so
\begin{equation*}
\bigl\|\bm\iota(\mathbf t)\bm\kappa(\mathbf t)\,\widetilde R_x(z,\x,\mathbf t)\bigr\|
\lesssim
(1+\|\mathbf t\|)\,\|\mathbf t\|^2\,\bm\kappa(\mathbf t)\,\omega_{\mathbf{x}}(w_x\|\mathbf t\|).
\end{equation*}
Fix $\varepsilon>0$. Since the Gaussian kernel has finite moments of all orders, 
there exists $M=M(\varepsilon)$ such that
\begin{equation*}
2\,\|\nabla_\x^\top\nabla_\x F\|_\infty\,\int_{\|\mathbf t\|>M}(1+\|\mathbf t\|^3)\,\bm\kappa(\mathbf t)\,d\mathbf t
<\varepsilon.
\end{equation*}
For this fixed $M$, by uniform continuity of $\nabla_\x^\top\nabla_\x F$ on $\mathbb R\times\mathcal X$, 
there exists $w_x^{(0)}>0$ such that $\omega_{\mathbf{x}}(w_x M)<\varepsilon$ for all $w_x<w_x^{(0)}$. 
Splitting the integral at $\|\mathbf t\|=M$, we obtain
\begin{equation*}
\left\|\int \bm\iota(\mathbf t)\bm\kappa(\mathbf t)\,\widetilde R_x(z,\x,\mathbf t)\,
\mathbb I\{\x+w_x\mathbf t\in\mathcal X\}\,d\mathbf t\right\|
\lesssim
\varepsilon\!\int_{\|\mathbf t\|\le M}(1+\|\mathbf t\|^3)\bm\kappa(\mathbf t)\,d\mathbf t
+\varepsilon
\le C\varepsilon,
\end{equation*}
uniformly over $z\in\mathbb R$ and $\x\in\mathcal X$. Since $\varepsilon$ is arbitrary, 
the $\widetilde R_x$-term is $o(1)$, and after multiplication by the prefactor $\tfrac{w_x^2}{2}$, 
it is $o(w_x^2)$ uniformly. This yields \eqref{eq:mu}.

We now translate the expansion of $\bm\mu(z,\x,w_z,w_x)$ into an expansion of 
$\D_{w_x}\bm\beta^\diamond(z,\x,w_z,w_x)=\bm\varXi(\x,w_x)^{-1}\bm\mu(z,\x,w_z,w_x)$. 
By Lemma \ref{lem:eig_bound}, the eigenvalues of $\Omega(\x,w_x)$ and $\bm\varXi(\x,w_x)$ 
are uniformly bounded away from zero and infinity for all $\x\in\mathcal X$ and all 
sufficiently small $w_x>0$, so
\begin{equation*}
\sup_{\x\in\mathcal X}\|\Omega(\x,w_x)^{-1}\|+\sup_{\x\in\mathcal X}\|\bm\varXi(\x,w_x)^{-1}\|<\infty.
\end{equation*}

We next show that
\begin{equation}\label{eq:Xi_minus_fOmega}
\bm\varXi(\x,w_x)-f(\x)\,\Omega(\x,w_x) = O(w_x),
\qquad
\bm\varXi(\x,w_x)^{-1} f(\x) = \Omega(\x,w_x)^{-1}+O(w_x),
\end{equation}
both uniformly over $\x\in\mathcal X$. Indeed, by Assumption \ref{ass:marginal}, $f$ admits a 
first-order Taylor expansion $f(\x+w_x\mathbf t) = f(\x)+w_x\nabla_\x^\top f(\check{\x})\,\mathbf t$ 
with $\check{\x}$ between $\x$ and $\x+w_x\mathbf t$, and $\nabla f$ is uniformly 
bounded by some constant $L_f$ on the closure of $\mathcal X$. Hence
\begin{equation*}
\begin{aligned}
\bm\varXi(\x,w_x)-f(\x)\,\Omega(\x,w_x)
&= \int \bm\iota(\mathbf t)\bm\iota(\mathbf t)^\top\bm\kappa(\mathbf t)
\bigl(f(\x+w_x\mathbf t)-f(\x)\bigr)\mathbb I\{\x+w_x\mathbf t\in\mathcal X\}\,d\mathbf t\\
&= w_x\int \bm\iota(\mathbf t)\bm\iota(\mathbf t)^\top\bm\kappa(\mathbf t)
\nabla_\x^\top f(\check{\x})\,\mathbf t\,
\mathbb I\{\x+w_x\mathbf t\in\mathcal X\}\,d\mathbf t,
\end{aligned}
\end{equation*}
whose operator norm is bounded by $w_x L_f \int(1+\|\mathbf t\|)^2\|\mathbf t\|\,\bm\kappa(\mathbf t)\,d\mathbf t<\infty$ 
since the Gaussian kernel has finite moments of all orders. This proves the first 
equality in \eqref{eq:Xi_minus_fOmega}. The second equality follows from the resolvent identity
\begin{equation*}
\bm\varXi^{-1} f(\x) - \Omega^{-1}
=
\bm\varXi^{-1}\bigl(f(\x)\Omega-\bm\varXi\bigr)\Omega^{-1},
\end{equation*}
combined with the uniform boundedness of $\bm\varXi^{-1}$ and $\Omega^{-1}$.

Multiplying \eqref{eq:mu} by $\bm\varXi(\x,w_x)^{-1}$ and noting that 
$\bm\varXi(\x,w_x)^{-1}\bigl(\bm\varXi(\x,w_x)\D_{w_x}\bm\beta^\star(z,\x)\bigr)
=\D_{w_x}\bm\beta^\star(z,\x)$, we obtain
\begin{equation*}
\begin{aligned}
&\D_{w_x}\bm\beta^\diamond(z,\x,w_z,w_x) - \D_{w_x}\bm\beta^\star(z,\x)\\
&\qquad=
\frac{w_x^2}{2}\,\bm\varXi(\x,w_x)^{-1} f(\x)\,
\sum_{i,j=1}^d \frac{\partial^2 F(z\mid\x)}{\partial x_i\partial x_j}
\int \bm\iota(\mathbf t)\,t_i t_j\,\bm\kappa(\mathbf t)\,\mathbb I\{\x+w_x\mathbf t\in\mathcal X\}\,d\mathbf t\\
&\qquad\quad+
\frac{w_z^2}{2}\,\bm\varXi(\x,w_x)^{-1} f(\x)\,\frac{\partial^2 F(z\mid\x)}{\partial z^2}\int v^2\kappa(v)\,dv
\int \bm\iota(\mathbf t)\bm\kappa(\mathbf t)\,\mathbb I\{\x+w_x\mathbf t\in\mathcal X\}\,d\mathbf t
+o(w_x^2+w_z^2).
\end{aligned}
\end{equation*}
Applying the second identity in \eqref{eq:Xi_minus_fOmega} to replace 
$\bm\varXi(\x,w_x)^{-1} f(\x)$ by $\Omega(\x,w_x)^{-1}+O(w_x)$, and noting that the 
$O(w_x)$ part multiplies an $O(w_x^2+w_z^2)$ quantity (so their product is  $o(w_x^2+w_z^2)$), we conclude
\begin{equation*}
\begin{aligned}
\D_{w_x}\bigl(\bm\beta^\diamond(z,\x,w_z,w_x)-\bm\beta^\star(z,\x)\bigr)
=&\,
\frac{w_x^2}{2}\,\Omega(\x,w_x)^{-1}
\sum_{i,j=1}^d \frac{\partial^2 F(z\mid\x)}{\partial x_i\partial x_j}
\int \bm\iota(\mathbf t)\,t_i t_j\,\bm\kappa(\mathbf t)\,\mathbb I\{\x+w_x\mathbf t\in\mathcal X\}\,d\mathbf t\\
&+
\frac{w_z^2}{2}\,\Omega(\x,w_x)^{-1}\frac{\partial^2 F(z\mid\x)}{\partial z^2}\int v^2\kappa(v)\,dv
\int \bm\iota(\mathbf t)\bm\kappa(\mathbf t)\,\mathbb I\{\x+w_x\mathbf t\in\mathcal X\}\,d\mathbf t\\
&+o(w_x^2+w_z^2),
\end{aligned}
\end{equation*}
uniformly over $z\in\mathbb R$ and $\x\in\mathcal X$. This proves the first claim. 
The second claim follows by taking the first coordinate of the above expansion 
and noting that the first diagonal entry of $\D_{w_x}$ is $1$, so that
\begin{equation*}
\beta_0^\diamond(z,\x,w_z,w_x)-\beta_0^\star(z,\x) = O(w_x^2+w_z^2)
\end{equation*}
uniformly over $z\in\mathbb R$ and $\x\in\mathcal X$.

\end{prf}

\subsection{Stochastic bound}\label{app:stochastic_bound}
\begin{lem}[Uniform stochastic bound]
\label{lem:simple_stochastic}
Under Assumptions~\ref{ass:marginal}--\ref{ass:kernel},
let
\begin{equation*}
	\D_{w_x}\hat{\bm\beta}(z,\x,w_z,w_x) = \hat{\bm\varXi}(\x,w_x)^{-1}\hat{\bm\mu}(z,\x,w_z,w_x).
\end{equation*}
If $\frac{M w_x^d}{|\log w_x|}\to\infty$, then
\begin{equation*}
\sup_{z\in\mathbb R,\ \x\in\mathcal X}
\left\| \D_{w_x}\bigl(\hat{\bm\beta}(z,\x,w_z,w_x)-\bm\beta^\diamond(z,\x,w_z,w_x)\bigr) \right\|
= O_p\!\left( \sqrt{\frac{|\log w_x|}{M w_x^d}} \right).
\end{equation*}
In particular, for the intercept term,
\begin{equation*}
\sup_{z\in\mathbb R,\ \x\in\mathcal X}
\left| \hat\beta_0(z,\x,w_z,w_x)-\beta_0^\diamond(z,\x,w_z,w_x) \right|
= O_p\!\left( \sqrt{\frac{|\log w_x|}{M w_x^d}} \right).
\end{equation*}
\end{lem}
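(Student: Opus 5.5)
The plan is the standard ``perturbation of a ratio'' argument. Write
$$\hat{\bm\varXi}(\x,w_x)=\frac{1}{Mw_x^d}\sum_{i=1}^M \bm\iota\!\Bigl(\frac{\h_{i,:d}-\x}{w_x}\Bigr)\bm\iota\!\Bigl(\frac{\h_{i,:d}-\x}{w_x}\Bigr)^{\top}\varpi_i$$
and
$$\hat{\bm\mu}(z,\x,w_z,w_x)=\frac{1}{Mw_x^d}\sum_{i=1}^M \bm\iota\!\Bigl(\frac{\h_{i,:d}-\x}{w_x}\Bigr)\mathcal K_i(z)\varpi_i,$$
so that $\mathbb E\hat{\bm\varXi}=\bm\varXi$, $\mathbb E\hat{\bm\mu}=\bm\mu$, and $\D_{w_x}\bm\beta^\diamond=\bm\varXi^{-1}\bm\mu$. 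Then use the algebraic identity
$$\D_{w_x}(\hat{\bm\beta}-\bm\beta^\diamond)=\hat{\bm\varXi}^{-1}\bigl[(\hat{\bm\mu}-\bm\mu)-(\hat{\bm\varXi}-\bm\varXi)\,\bm\varXi^{-1}\bm\mu\bigr].$$
It therefore suffices to prove three things: (a) $\sup_{\x}\|\hat{\bm\varXi}-\bm\varXi\|=O_p(r_M)$; (b) $\sup_{z,\x}\|\hat{\bm\mu}-\bm\mu\|=O_p(r_M)$, where $r_M:=\sqrt{|\log w_x|/(Mw_x^d)}$; and (c) $\sup_{z,\x}\|\bm\varXi^{-1}\bm\mu\|<\infty$ together with a uniform bound on $\|\hat{\bm\varXi}^{-1}\|$ with probability tending to one. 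Item (c) follows from Lemma~\ref{lem:eig_bound}. For the first part, $\|\bm\mu\|\le\int\|\bm\iota(\mathbf t)\|\bm\kappa(\mathbf t)f\,d\mathbf t<\infty$ since $0\le\mathcal K\le 1$. For the second part, (a) and $r_M\to 0$ give that the smallest eigenvalue of $\hat{\bm\varXi}$ is at least $1/(2C)$ uniformly, with probability tending to one, by Weyl's inequality. The intercept claim is then the first coordinate.

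For (a) and (b), apply Lemma~\ref{lem:empirical_bound} entrywise. The relevant function classes, indexed by $(\x,z)$, are
$$g_{\x,z}(\x',z')=\Bigl(\tfrac{x'_j-x_j}{w_x}\Bigr)^{a}\Bigl(\tfrac{x'_l-x_l}{w_x}\Bigr)^{b}\,\bm\kappa\!\Bigl(\tfrac{\x'-\x}{w_x}\Bigr)\,\mathcal K\!\Bigl(\tfrac{z'-z}{w_z}\Bigr)^{c},$$
with $a,b,c\in\{0,1\}$. Each factor $u\mapsto u^a\phi(u)$ has bounded variation and is bounded, because the Gaussian tail kills the polynomial. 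Lemma~\ref{lem:Euclidean_class} then makes each one-dimensional factor class Euclidean, and $\mathcal K=\varPhi$ is monotone, so its class is Euclidean as well. Lemma~\ref{lem:Euclidean_class_times} closes the class under the finitely many products, giving a single uniformly bounded Euclidean class that contains all $\mathcal G_M$ for every $M$. For the variance proxy I take $\sigma_M^2=Cw_x^d$. This holds because
$$\mathbb E g^2\le \|f\|_\infty\, w_x^d\int (1+\|\mathbf t\|)^4\bm\kappa(\mathbf t)^2\,d\mathbf t,$$
which is finite thanks to the Gaussian moments; here I use $f$ bounded on compact $\mathcal X$. The condition $M\sigma_M^2/|\log\sigma_M|\to\infty$ is exactly the bandwidth hypothesis, since $|\log\sigma_M|\asymp|\log w_x|$. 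The lemma then yields a supremum of the centered sums of order $O_p(\sqrt{Mw_x^d|\log w_x|})$, and dividing by $Mw_x^d$ gives $O_p(r_M)$.

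The main obstacle I expect is the passage from bounded-support kernels, as in \citet{xie2023uniform}, to the Gaussian kernel. First, the indicator $\mathbb I\{\x+w_x\mathbf t\in\mathcal X\}$ no longer localizes, but it is harmless since samples lie in $\mathcal X$ automatically. Second, I must check that the polynomially weighted Gaussian factors remain bounded and of bounded variation, so that the Euclidean property and the uniform bound both hold. Third, the uniformity over $z\in\mathbb R$ is unbounded, but it is absorbed because the $\mathcal K$ class is indexed over all shifts in Lemma~\ref{lem:Euclidean_class}. A minor point is that Lemma~\ref{lem:eig_bound} is stated for the population matrices only, so the invertibility of $\hat{\bm\varXi}$ has to come from (a) as described above.
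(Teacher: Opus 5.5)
Your proposal is correct and follows essentially the same route as the paper: entrywise application of Lemma~\ref{lem:empirical_bound} to Euclidean product classes of Gaussian-weighted bounded-variation factors with variance proxy of order $w_x^d$, and then Lemma~\ref{lem:eig_bound} plus Weyl's inequality for uniform invertibility of $\hat{\bm\varXi}$. Your decomposition $\hat{\bm\varXi}^{-1}\bigl[(\hat{\bm\mu}-\bm\mu)-(\hat{\bm\varXi}-\bm\varXi)\bm\varXi^{-1}\bm\mu\bigr]$ is an algebraic rearrangement of the paper's $(\hat{\bm\varXi}^{-1}-\bm\varXi^{-1})\hat{\bm\mu}+\bm\varXi^{-1}(\hat{\bm\mu}-\bm\mu)$, and it is slightly tidier since it needs only the deterministic bound on $\bm\varXi^{-1}\bm\mu$. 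One small slip: when $j=l$ and $a=b=1$, the diagonal entries of $\hat{\bm\varXi}$ give the factor $u^2\phi(u)$, which your list $u^a\phi(u)$ with $a\in\{0,1\}$ omits. That case is handled identically (bounded and of bounded variation), as in the paper's treatment of $m=0,1,2$.
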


\begin{prf}
Let
 \begin{equation*}
\hat{\bm\varXi}(\x,w_x)
=
\frac{1}{Mw_x^d}
\sum_{i=1}^M
\bm\iota\!\left(\frac{\X_i-\x}{w_x}\right)
\bm\iota\!\left(\frac{\X_i-\x}{w_x}\right)^\top
\bm\kappa\!\left(\frac{\X_i-\x}{w_x}\right),
 \end{equation*}
and
 \begin{equation*}
\hat{\bm\mu}(z,\x,w_z,w_x)
=
\frac{1}{Mw_x^d}
\sum_{i=1}^M
\bm\iota\!\left(\frac{\X_i-\x}{w_x}\right)
\mathcal K\!\left(\frac{z-Z_i}{w_z}\right)
\bm\kappa\!\left(\frac{\X_i-\x}{w_x}\right).
 \end{equation*}
Recall that
 \begin{equation*}
\D_{w_x}\hat{\bm\beta}(z,\x,w_z,w_x)
=
\hat{\bm\varXi}(\x,w_x)^{-1}\hat{\bm\mu}(z,\x,w_z,w_x),
\qquad
\D_{w_x}\bm\beta^\diamond(z,\x,w_z,w_x)
=
\bm\varXi(\x,w_x)^{-1}\bm\mu(z,\x,w_z,w_x).
 \end{equation*}
Hence
\begin{equation}\label{eq:decomp_thm2_gaussian}
\begin{aligned}
&\D_{w_x}\bigl(
\hat{\bm\beta}(z,\x,w_z,w_x)-\bm\beta^\diamond(z,\x,w_z,w_x)
\bigr)\\
&\qquad=
\bigl(
\hat{\bm\varXi}(\x,w_x)^{-1}-\bm\varXi(\x,w_x)^{-1}
\bigr)\hat{\bm\mu}(z,\x,w_z,w_x)
+
\bm\varXi(\x,w_x)^{-1}
\bigl(
\hat{\bm\mu}(z,\x,w_z,w_x)-\bm\mu(z,\x,w_z,w_x)
\bigr).
\end{aligned}
\end{equation}
We therefore only need to derive uniform convergence rates for 
$\hat{\bm\mu}-\bm\mu$ and $\hat{\bm\varXi}-\bm\varXi$, and then use the uniform invertibility
of $\bm\varXi(\x,w_x)$.

We first study the stochastic fluctuation of $\hat{\bm\mu}(z,\x,w_z,w_x)$. 
For each coordinate indexed by $\pi=(\pi_1,\dots,\pi_d)\in\{0,1\}^d$ with $\sum^{d}_{j=1} \pi_j \leq 1$, define
\begin{equation*}
\hat\mu_\pi(z,\x,w_z,w_x)
=
\frac{1}{M w_x^d}
\sum_{i=1}^M
\mathcal K\!\left(\frac{z-Z_i}{w_z}\right)
\prod_{j=1}^d
\left(\frac{X_{ij}-x_j}{w_x}\right)^{\pi_j}
\kappa\!\left(\frac{X_{ij}-x_j}{w_x}\right),
\end{equation*}
and let
 \begin{equation*}
\mu_\pi(z,\x,w_z,w_x)=\mathbb E[\hat\mu_\pi(z,\x,w_z,w_x)].
 \end{equation*}

We introduce the following one-dimensional classes:
\begin{equation*}
\mathcal F_z
:=
\left\{
Z \mapsto \mathcal K\!\left(\frac{z-Z}{w_z}\right)
:\ z\in\mathbb R,\ w_z>0
\right\},
\end{equation*}
and for each $j=1,\dots,d$,
\begin{equation*}
\mathcal F_{x,j}^{(\pi_j)}
:=
\left\{
\X \mapsto
\left(\frac{X_j-x_j}{w_x}\right)^{\pi_j}
\kappa\!\left(\frac{X_j-x_j}{w_x}\right)
:\ x_j\in\mathcal X_j,\ w_x>0
\right\}.
\end{equation*}
Then the class indexing $\hat\mu_\pi$ can be written as the product class
\begin{equation*}
\mathcal G_\pi
=
\mathcal F_z \cdot \prod_{j=1}^d \mathcal F_{x,j}^{(\pi_j)}.
\end{equation*} We now verify that each factor class is uniformly bounded and Euclidean. It then follows from Lemmas \ref{lem:Euclidean_class_vector} and \ref{lem:Euclidean_class_times} that $\mathcal G_\pi$ is uniformly bounded and Euclidean.

First, since $\mathcal K=\Phi$ is the Gaussian distribution function,
 \begin{equation*}
0\le \mathcal K(u)\le 1
\qquad\text{for all }u\in\mathbb R,
 \end{equation*}
so $\mathcal F_z$ is uniformly bounded by $1$. Moreover, $\Phi$ is the integral of the Gaussian kernel;
hence it is of bounded variation. Therefore, by Lemma~\ref{lem:Euclidean_class},
the location-scale class $\mathcal F_z$ is Euclidean.

Next consider $\mathcal F_{x,j}^{(\pi_j)}$. Since $\pi_j\in\{0,1\}$, there are only two cases.

If $\pi_j=0$, then the generating function is
 \begin{equation*}
u\mapsto \kappa(u)=\phi(u),
 \end{equation*}
which is uniformly bounded on $\mathbb R$, and
 \begin{equation*}
\phi'(u)=-u\phi(u)\in L^1(\mathbb R).
 \end{equation*}
Hence $\phi$ is of bounded variation, and thus $\mathcal F_{x,j}^{(0)}$ is Euclidean by Lemma~\ref{lem:Euclidean_class}.

If $\pi_j=1$, then the generating function is
 \begin{equation*}
u\mapsto u\kappa(u)=u\phi(u).
 \end{equation*}
This function is also uniformly bounded on $\mathbb R$, since
 \begin{equation*}
\sup_{u\in\mathbb R}|u\phi(u)|<\infty.
 \end{equation*}
Moreover,
 \begin{equation*}
\frac{d}{du}\{u\phi(u)\}=(1-u^2)\phi(u)\in L^1(\mathbb R),
 \end{equation*}
so $u\phi(u)$ is of bounded variation. Therefore $\mathcal F_{x,j}^{(1)}$ is also Euclidean.

Consequently, for every $j=1,\dots,d$, the class $\mathcal F_{x,j}^{(\pi_j)}$ is uniformly bounded
and Euclidean. Since $\mathcal G_\pi$ is a finite product of uniformly bounded Euclidean classes,
it follows from the closure property of Euclidean classes under finite products that
$\mathcal G_\pi$ is itself uniformly bounded and Euclidean; see Lemmas \ref{lem:Euclidean_class_vector} and \ref{lem:Euclidean_class_times}. And the variance bound of the class $\mathcal G_\pi$ is of order $w_x^d$,

 \begin{equation*}
\begin{aligned}
\sup_{g\in\mathcal G_\pi}\mathbb E[g(Z,\X)^2]
&=
\sup_{z\in\mathbb R,\x\in\mathcal X}
\mathbb E\!\left[
\mathcal K^2\!\left(\frac{z-Z}{w_z}\right)
\prod_{j=1}^d
\left(\frac{X_j-x_j}{w_x}\right)^{2\pi_j}
\kappa^2\!\left(\frac{X_j-x_j}{w_x}\right)
\right]\\
&\le
\sup_{\x\in\mathcal X}
\int_{\mathbb R^d}
\prod_{j=1}^d t_j^{2\pi_j}\bm\kappa(\mathbf t)^2\,
f(\x+w_x\mathbf t)\,w_x^d\,d\mathbf t\\
&\le
w_x^d\|f\|_\infty
\int_{\mathbb R^d}
\prod_{j=1}^d t_j^{2\pi_j}\bm\kappa(\mathbf t)^2\,d\mathbf t\\
&=
w_x^d\|f\|_\infty
\prod_{j=1}^d
\int_{\mathbb R}u^{2\pi_j}\kappa(u)^2\,du\\
&\lesssim w_x^d.
\end{aligned}
 \end{equation*} Therefore, applying Lemma~\ref{lem:empirical_bound} to
\begin{equation*}
	\sum_{i=1}^M\{g(Z_i,\X_i)-\mathbb{E}g(Z_i,\X_i)\}
\end{equation*} with variance scale $\sigma_n^2\asymp w_x^d$ and then dividing by $M w_x^d$ yields 
\begin{equation*}
\sup_{z\in\mathbb R,\x\in\mathcal X}
\left|
\hat\mu_\pi(z,\x,w_z,w_x)-\mu_\pi(z,\x,w_z,w_x)
\right|
=O_p\!\left(
\frac{\sqrt{M w_x^d |\log w_x|}}{Mw_x^d}
\right) = 
O_p\!\left(
\sqrt{\frac{|\log w_x|}{Mw_x^d}}
\right).
 \end{equation*}
Since the number of coordinates $\pi$ is finite, we conclude that
\begin{equation}\label{eq:mu_rate_gaussian}
\sup_{z\in\mathbb R,\x\in\mathcal X}
\left\|
\hat{\bm\mu}(z,\x,w_z,w_x)-\bm\mu(z,\x,w_z,w_x)
\right\|
= 
O_p\!\left(
\sqrt{\frac{|\log w_x|}{Mw_x^d}}
\right).
\end{equation}

We now turn to $\hat{\bm\varXi}(\x,w_x)$. Each entry of $\hat{\bm\varXi}(\x,w_x)$ is of the form
 \begin{equation*}
\frac{1}{Mw_x^d}
\sum_{i=1}^M
q\!\left(\frac{\X_i-\x}{w_x}\right)
\bm\kappa\!\left(\frac{\X_i-\x}{w_x}\right),
 \end{equation*}
where $q(\mathbf t)$ is a polynomial of total degree at most $2$. 
Thus the relevant one-dimensional building blocks are $u^m\phi(u)$ for $m=0,1,2$.
For each such $m$,
 \begin{equation*}
\sup_{u\in\mathbb R}|u^m\phi(u)|<\infty,
\qquad
\int_{\mathbb R}u^{2m}\phi(u)^2\,du<\infty,
 \end{equation*}
and $u\mapsto u^m\phi(u)$ is of bounded variation because its derivative is a polynomial times $\phi(u)$ and hence belongs to $L^1(\mathbb R)$. Therefore the same empirical-process argument as above yields
\begin{equation}\label{eq:Xi_rate_gaussian}
\sup_{\x\in\mathcal X}
\left\|
\hat{\bm\varXi}(\x,w_x)-\bm\varXi(\x,w_x)
\right\|_\infty
=
O_p\!\left(
\sqrt{\frac{|\log w_x|}{Mw_x^d}}
\right).
\end{equation}
Since the dimension of $\bm\varXi(\x,w_x)$ is fixed, all matrix norms are equivalent.
In particular, there exists a constant $C_d<\infty$ such that
$\|A\|_2\le C_d\|A\|_\infty$ for every matrix $A$ of the same dimension.
Hence \eqref{eq:Xi_rate_gaussian} implies
\[
\sup_{\x\in\mathcal X}
\left\|
\hat{\bm\varXi}(\x,w_x)-\bm\varXi(\x,w_x)
\right\|_2
=
O_p\!\left(
\sqrt{\frac{|\log w_x|}{Mw_x^d}}
\right),
\]
under the $2$-norm. By Lemma~\ref{lem:eig_bound}, the eigenvalues of $\bm\varXi(\x,w_x)$ are uniformly bounded away from zero and infinity; that is, there exists a constant $C>1$ such that
\[
1/C \le \lambda_{\min}\bigl(\bm\varXi(\x,w_x)\bigr)
\le \lambda_{\max}\bigl(\bm\varXi(\x,w_x)\bigr)
\le C
\]
for all $\x\in\mathcal X$ and all sufficiently small $w_x$. Hence, by Weyl's inequality,
the eigenvalues of $\hat{\bm\varXi}(\x,w_x)$ also lie in $[1/(2C),2C]$ uniformly over
$\x\in\mathcal X$ with probability approaching $1$. In particular,
\[
\sup_{\x\in\mathcal X}\|\hat{\bm\varXi}(\x,w_x)^{-1}\|_2=O_p(1).
\]

By Lemma~\ref{lem:eig_bound} and Eq.~\eqref{eq:Xi_rate_gaussian}, we obtain
\begin{equation}\label{eq:invXi_rate_gaussian}
\begin{aligned}
	\sup_{\x\in\mathcal X}
\left\|
\hat{\bm\varXi}(\x,w_x)^{-1}-\bm\varXi(\x,w_x)^{-1}
\right\|_2
&= \sup_{\x\in\mathcal X} \left\|
\hat{\bm\varXi}(\x,w_x)^{-1} \Big( \hat{\bm\varXi}(\x,w_x)-\bm\varXi(\x,w_x)\Big)\bm\varXi(\x,w_x)^{-1}
\right\|_2\\ 
&\leq\sup_{\x\in\mathcal X} \left\|
\hat{\bm\varXi}(\x,w_x)^{-1}\right\|_2 \left\|\hat{\bm\varXi}(\x,w_x)-\bm\varXi(\x,w_x)\right\|_2\left\|\bm\varXi(\x,w_x)^{-1}
\right\|_2\\
&=O_p\!\left(
\sqrt{\frac{|\log w_x|}{M w_x^d}}
\right).\\
\end{aligned}
\end{equation}

Substituting \eqref{eq:mu_rate_gaussian}, \eqref{eq:invXi_rate_gaussian}, and the above bound into
\eqref{eq:decomp_thm2_gaussian}, we conclude that
 \begin{equation*}
\sup_{z\in\mathbb R,\ \x\in\mathcal X}
\left\|
\D_{w_x}\bigl(
\hat{\bm\beta}(z,\x,w_z,w_x)-\bm\beta^\diamond(z,\x,w_z,w_x)
\bigr)
\right\|
=
O_p\!\left(
\sqrt{\frac{|\log w_x|}{M w_x^d}}
\right).
 \end{equation*}
This proves the first claim.

The intercept statement follows immediately by taking the first coordinate of the above vector bound,
since the first diagonal entry of $\D_{w_x}$ is $1$.
\end{prf}

\subsection{Completion of the proof of Theorem~\ref{thm:simple_uniform}}
Combining Lemma~\ref{lem:simple_bias} and
Lemma~\ref{lem:simple_stochastic} yields
\[
    \sup_{z\in\mathbb R,\ \x\in\mathcal X}
    \bigl|\widehat F(z\mid\x)-F(z\mid\x)\bigr|
    =
    O_{p}\!\left(
        w_{x}^{2}+w_{z}^{2}
        +\sqrt{\tfrac{|\log w_{x}|}{Mw_{x}^{d}}}
    \right),
\]
which is \eqref{eq:simple_uniform}. \qed

\subsection{Proof of Lemma \ref{lem:eig_bound}}
\begin{prf}
The argument is analogous to the proof of Lemma 11 in \cite{fan2016multivariate}, except that here
$\bm\kappa(\mathbf{t})$ is allowed to have unbounded support, as is the case for the Gaussian kernel.

We first consider $\Omega(\x,w_x)$. For the upper bound, note that
\begin{equation*}
\|\Omega(\x,w_x)\|
\le
\int \|\bm\iota(\mathbf{t})\bm\iota(\mathbf{t})^\top\|\,\bm\kappa(\mathbf{t})\,d\mathbf{t} .
\end{equation*}
Since $\bm\iota(\mathbf{t})\bm\iota(\mathbf{t})^\top$ grows at most quadratically in $\|\mathbf{t}\|$ and
$\bm\kappa(\mathbf{t})$ is Gaussian, the integral on the right-hand side is finite. Hence the largest
eigenvalue of $\Omega(\x,w_x)$ is uniformly bounded from above over $\x\in\mathcal{X}$ and all
$w_x>0$.

For the lower bound, since $\bm\kappa(\mathbf{t})$ is continuous and strictly positive on $\mathbb{R}^d$,
\begin{equation*}
c_\kappa:=\inf_{\mathbf{t}\in B(0,1)} \bm\kappa(\mathbf{t})>0.
\end{equation*}
Therefore,
\begin{equation*}
\Omega(\x,w_x)
\succeq
c_\kappa
\int \bm\iota(\mathbf{t})\bm\iota(\mathbf{t})^\top
\mathbb{I}\{\x+w_x\mathbf{t}\in\mathcal{X},\ \mathbf{t}\in B(0,1)\}\,d\mathbf{t}.
\end{equation*} Setting $\mathbf{t}= \frac{ \bm \eta- \x}{w_x}$, we have
\begin{equation*}
	\begin{aligned}
		&\int \bm\iota(\mathbf{t})\bm\iota(\mathbf{t})^\top\mathbb{I}\{\x+w_x\mathbf{t}\in\mathcal{X},\ \mathbf{t}\in  B(0,1)\}\,d\mathbf{t}\\
		& =\int \bm\iota(\frac{ \bm \eta- \x}{w_x})\bm\iota(\frac{ \bm \eta- \x}{w_x})^\top\mathbb{I}\{\bm \eta \in\mathcal{X},\ \bm \eta\in B(\x,w_x)\}\,\frac{d\bm \eta}{w_x^d}.
	\end{aligned}
\end{equation*}
By the geometric condition in Assumption \ref{ass:marginal}, there exists $\lambda_1>0$ such that,
for all $\x\in\mathcal{X}$ and all sufficiently small $w_x>0$, the set $\{\bm \eta \in B(\x,w_x)\}$ contains a ball $B(\x',\lambda_1 w_x)$ for some $\x'\in B(\x,w_x)$. Let $\y_x = \frac{\x' - \x}{w_x}$. Then $\y_x \in B(0,1)$. Hence, 
\begin{equation*}
	\begin{aligned}
		&\int \bm\iota(\frac{ \bm \eta- \x}{w_x})\bm\iota(\frac{ \bm \eta- \x}{w_x})^\top\mathbb{I}\{ \bm \eta\in B(\x,w_x) \cap \mathcal{X}\}\,\frac{d\bm \eta}{w_x^d}\\
		&\succeq  \int \bm\iota(\frac{ \bm \eta- \x}{w_x})\bm\iota(\frac{ \bm \eta- \x}{w_x})^\top\mathbb{I}\{\bm \eta\in B(\x',\lambda_1 w_x)\}\,\frac{d\bm \eta}{w_x^d}\\
		&\succeq  \int \bm\iota(\mathbf{t})\bm\iota(\mathbf{t})^\top\mathbb{I}\{\mathbf{t} \in B(\y_x,\lambda_1 )\}\,d\mathbf{t}.
	\end{aligned}
\end{equation*}
It follows that the smallest eigenvalue of $\Omega(\x,w_x)$ is bounded below by
\begin{equation*}
c_\kappa
\inf_{\y\in B(0,1)}\min_{\|\bm \xi\|=1}
\bm \xi^\top
\left(
\int \bm\iota(\mathbf{t})\bm\iota(\mathbf{t})^\top
\mathbb{I}\{\mathbf{t}\in B(\y,\lambda_1)\}\,d\mathbf{t}
\right)\bm \xi .
\end{equation*}
For each fixed $(\y,\bm \xi)$ with $\|\bm \xi\|=1$,
\begin{equation*}
\bm \xi^\top
\left(
\int \bm\iota(\mathbf{t})\bm\iota(\mathbf{t})^\top
\mathbb{I}\{\mathbf{t}\in B(\y,\lambda_1)\}\,d\mathbf{t}
\right)\bm \xi
=
\int_{B(\y,\lambda_1)} (\bm \xi^\top\bm\iota(\mathbf{t}))^2\,d\mathbf{t} .
\end{equation*}
Since $\bm \xi^\top\bm\iota(\mathbf{t})$ is a nonzero affine function of $\mathbf{t}$, it cannot vanish identically
on the open ball $B(\y,\lambda_1)$. Therefore the above integral is strictly positive. Moreover,
the mapping
\begin{equation*}
(\y,\bm \xi)\mapsto
\int_{B(\y,\lambda_1)} (\bm \xi^\top\bm\iota(\mathbf{t}))^2\,d\mathbf{t}
\end{equation*}
is continuous, and the set
\begin{equation*}
\{(\y,\bm \xi):\ \y\in \bar B(0,1 -\lambda_1),\ \|\bm \xi\|=1\}
\end{equation*}
is compact. The reduction from $B(0,1)$ to $\bar B(0,1-\lambda_1)$ is sufficient for the surrounding argument. Hence, the infimum of this mapping over the compact set is strictly positive. This yields a uniform positive lower bound
for the eigenvalues of $\Omega(\x,w_x)$.

Next consider $\bm\varXi(\x,w_x)$. By definition,
\begin{equation*}
\bm\varXi(\x,w_x)
=
\int \bm\iota(\mathbf{t})\bm\iota(\mathbf{t})^\top \bm\kappa(\mathbf{t})
f(\x+w_x\mathbf{t})\,d\mathbf{t} .
\end{equation*}
By Assumption \ref{ass:marginal}, there exist constants $0<c_f\le C_f<\infty$ such that
\begin{equation*}
c_f\le f(\x)\le C_f,\qquad \forall\,\x\in\mathcal{X} .
\end{equation*}
Therefore,
\begin{equation*}
c_f\,\Omega(\x,w_x)\preceq \bm\varXi(\x,w_x)\preceq C_f\,\Omega(\x,w_x),
\end{equation*}
which implies that the eigenvalues of $\bm\varXi(\x,w_x)$ are bounded above and below uniformly in
$\x\in\mathcal{X}$ and all sufficiently small $w_x>0$.

Combining the above bounds for $\Omega(\x,w_x)$ and $\bm\varXi(\x,w_x)$ completes the proof.
\end{prf}

\section{Other proofs in Section~\ref{sec:theory}}

\subsection{Proofs for the lifted joint and conditional laws}
\label{app:proof_lifted_laws}

This appendix contains the proofs of the three regularity
results stated in Section~\ref{subsec:piecewise_regularity}:
Theorem~\ref{thm:continuity_continualization_pdf}
(blockwise continuity and one-sided limits of the lifted joint
density), Corollary~\ref{cor:Cr_continualization} (the $C^{r}$
extension), and
Theorem~\ref{thm:conditional_distribution} (the corresponding
piecewise structure of the lifted conditional distribution
function). The three results form a logical chain in which each
proof relies on the preceding one: $C^{0}$-regularity provides
the geometric backbone, $C^{r}$-regularity follows by applying
the $C^{0}$ argument coordinatewise to each partial derivative,
and the conditional-distribution structure is then derived by
combining $C^{r}$-regularity with the probability law identity
$\mathbb P(Z\le z\mid\X^{(c)},\Y\in I(\bm\upsilon))
=\mathbb P(Z\le z\mid\X^{(c)},\X^{(d)} = \bm\upsilon )$.

Throughout this appendix we adopt the notation introduced in
Sections~\ref{subsec:continualize_univariate} and
\ref{subsec:continualize_multi}: the discrete support
$\varUpsilon=\prod_{j=1}^{\tilde d}\varUpsilon_{j}$ with blocks
$\{I(\bm\upsilon)\}_{\bm\upsilon\in\varUpsilon}$, the
continualized covariate $\widetilde\X=(\X^{(c)},\Y)$ on
$\widetilde{\mathcal X}=\mathcal X_{c}\times\mathcal Y$, the
interior region $\widetilde{\mathcal X}^{\circ}$ and the
covariate jumping set $\widetilde{\mathcal X}^{\bullet}$.

\subsection{Proof of Theorem~\ref{thm:continuity_continualization_pdf}}
\label{app:proof_continuity_continualization_pdf}

\begin{thm}[Precise version of Theorem~\ref{thm:continuity_continualization_pdf}]
Let $\tilde{\x}=(\x^{(c)},\y)\in \mathcal X_c\times\mathcal Y$ be a generic point in the support of the continualized random vector defined above, where
\begin{equation*}
	\mathcal Y=\prod_{j=1}^{\tilde{d}}\mathcal Y_j,
	\qquad
	\mathcal Y_j=\bigcup_{\ell=1}^{\tau_j} I_{j,\ell},
	\qquad
	I_{j,\ell}=(\upsilon_{j,\ell-1},\upsilon_{j,\ell}].
\end{equation*}
For each $\bm\upsilon=(\upsilon_{1,\ell_1},\dots,\upsilon_{\tilde d,\ell_{\tilde d}})\in\varUpsilon$, let
\begin{equation*}
	I(\bm\upsilon)=I_{1,\ell_1}\times\cdots\times I_{\tilde d,\ell_{\tilde d}}
	\subset \mathcal Y,
	\qquad
	I(\bm\upsilon)^\circ=\prod_{j=1}^{\tilde d}(\upsilon_{j,\ell_j-1},\upsilon_{j,\ell_j}).
\end{equation*}
Assume that for every $\bm\upsilon\in\varUpsilon$, the function
\begin{equation}\label{eq:continuity}
	\x^{(c)}\mapsto f(\x^{(c)},\bm\upsilon)
\end{equation}
is continuous on $\mathcal X_c$ (in the subspace topology of $\mathbb  R^{d-\tilde d}$).

Throughout, all neighborhoods of points in $\widetilde{\mathcal X} = \mathcal X_c\times\mathcal Y$ are understood in the subspace topology of $\mathcal X_c\times\mathcal Y\subset \mathbb R^{d}$.
Then the lifted joint density
\begin{equation*}
	g(\x^{(c)},\y)
	=
	\sum_{\bm\upsilon\in\varUpsilon}
	\frac{f(\x^{(c)},\bm\upsilon)}{|I(\bm\upsilon)|}\,
	\mathbb I\{\y\in I(\bm\upsilon)\}
\end{equation*}
satisfies the following properties.

\begin{enumerate}[(1)]
	\item \emph{Interior continuity.}
	$g$ is jointly continuous on
		\begin{equation}\label{eq:interior_app}
			\widetilde{\mathcal X}^\circ
			:=
			\mathcal X_c\times
			\bigcup_{\bm\upsilon\in\varUpsilon}
			I(\bm\upsilon)^\circ,
		\end{equation}

	\item \emph{Localization of the discontinuity set.}
	The discontinuity set of $g$ on $\mathcal X_c\times\mathcal Y$ is contained in

	\begin{equation*}
		\widetilde{\mathcal X}^\bullet
		:=
		\mathcal X_c\times\mathcal Y^\bullet .
	\end{equation*}

	\item \emph{One-sided limits at partition boundary points in $\widetilde{\mathcal X}^\bullet$.}
	Let $(\x_1^{(c)},\y_1)\in \mathcal X_c\times\mathcal Y$, and let $\bm\upsilon_1\in\varUpsilon$ denote the unique block index for which $\y_1\in I(\bm\upsilon_1)$ (well-defined by the left-open right-closed convention).

	\begin{enumerate}[(3a)]
		\item \emph{Within-block continuity.}
		\begin{equation*}
			\lim_{\substack{(\x^{(c)},\y)\to(\x_1^{(c)},\y_1)\\ (\x^{(c)},\y)\in \mathcal X_c\times I(\bm\upsilon_1)}}
			g(\x^{(c)},\y)
			=
			g(\x_1^{(c)},\y_1)
			=
			\frac{f(\x_1^{(c)},\bm\upsilon_1)}{|I(\bm\upsilon_1)|}.
		\end{equation*}

		\item \emph{Limits from adjacent blocks (jump structure).}
		For every block index $\bm\upsilon'\in\varUpsilon$ with $\bm\upsilon'\neq\bm\upsilon_1$ and $\y_1\in \overline{I(\bm\upsilon')}$ (i.e.\ $\y_1$ lies in the closure of an adjacent block),
		\begin{equation*}
			\lim_{\substack{(\x^{(c)},\y)\to(\x_1^{(c)},\y_1)\\ (\x^{(c)},\y)\in \mathcal X_c\times I(\bm\upsilon')}}
			g(\x^{(c)},\y)
			=
			\frac{f(\x_1^{(c)},\bm\upsilon')}{|I(\bm\upsilon')|}.
		\end{equation*}
		In general this limit differs from $g(\x_1^{(c)},\y_1)$, so $g$ exhibits a jump discontinuity across the corresponding partition interface. The set of such adjacent blocks $\bm\upsilon'$ is empty if and only if every boundary coordinate $y_{1,j}=\upsilon_{j,\tau_j}$ lies on the outermost boundary of $\mathcal Y_j$ on its right side, in which case (3b) is vacuous.
	\end{enumerate}

	\item \emph{Piecewise continuity.}
	The function $g$ is continuous on $\widetilde{\mathcal X}^\circ$, and at every $(\x_1^{(c)},\y_1)\in\widetilde{\mathcal X}^{\bullet}$ admits the one-sided continuous extension described in (3a). Equivalently, $g$ is piecewise continuous in the sense that, for each $\bm\upsilon\in\varUpsilon$, the restriction $g|_{\mathcal X_c\times I(\bm\upsilon)}$ extends continuously to $\mathcal X_c\times \overline{I(\bm\upsilon)}$.
\end{enumerate}
\end{thm}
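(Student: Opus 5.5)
The plan is to first justify the displayed formula for $g$ and then read off properties (1)--(4) from it, using only the continuity hypothesis \eqref{eq:continuity}.

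\emph{Density formula.} For a Borel set $A\subset\mathcal X_c$ and a Borel set $B\subset I(\bm\upsilon)$, we condition on $\X^{(d)}=\bm\upsilon$. The coordinatewise uniforms are mutually independent and independent of $\X$, so on the event $\X^{(d)}=\bm\upsilon$ the vector $\Y$ is uniform on $I(\bm\upsilon)$ and independent of $\X^{(c)}$. Hence
\[
\mathbb P(\X^{(c)}\in A,\Y\in B)=\int_A f(\x^{(c)},\bm\upsilon)\,d\x^{(c)}\cdot |B|/|I(\bm\upsilon)|.
\]
Because the blocks partition $\mathcal Y$ and $\mathbb I\{\Y\in I(\bm\upsilon)\}=\mathbb I\{\X^{(d)}=\bm\upsilon\}$, summing over $\bm\upsilon$ identifies $g$ as the stated sum of indicator-weighted sections.

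\emph{Interior continuity (1) and localization (2).} Take $(\x_0^{(c)},\y_0)\in\widetilde{\mathcal X}^\circ$ with $\y_0\in I(\bm\upsilon_0)^\circ$. The set $I(\bm\upsilon_0)^\circ$ is an open product of open intervals, so a neighborhood of $\y_0$ in $\mathcal Y$ lies entirely inside it. On that neighborhood, $g(\x^{(c)},\y)=f(\x^{(c)},\bm\upsilon_0)/|I(\bm\upsilon_0)|$, which is independent of $\y$ and continuous in $\x^{(c)}$ by \eqref{eq:continuity}. Joint continuity at the point follows. Property (2) is then just the contrapositive: every point outside $\widetilde{\mathcal X}^\bullet$ lies in $\widetilde{\mathcal X}^\circ$, where $g$ is continuous.

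\emph{One-sided limits (3a) and (3b).} For (3a), restricted to $\mathcal X_c\times I(\bm\upsilon_1)$ the function $g$ equals $f(\cdot,\bm\upsilon_1)/|I(\bm\upsilon_1)|$ identically. This holds even on the right-closed faces, by the left-open right-closed convention. The limit therefore reduces to continuity of $f(\cdot,\bm\upsilon_1)$. For (3b), the same computation inside $\mathcal X_c\times I(\bm\upsilon')$ gives the limit $f(\x_1^{(c)},\bm\upsilon')/|I(\bm\upsilon')|$. Because $\y_1\in\overline{I(\bm\upsilon')}$, this approach is nonvacuous, and $\y_1$ is in fact a limit point of $I(\bm\upsilon')$ since the block has nonempty interior.

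\emph{The emptiness characterization in (3b).} This is the only delicate bookkeeping step, and I expect it to be the main obstacle. A block $I(\bm\upsilon')\neq I(\bm\upsilon_1)$ has $\y_1$ in its closure exactly when, for every $j$, either $\ell'_j=\ell_{1,j}$, or $\ell'_j=\ell_{1,j}+1$ with $y_{1,j}=\upsilon_{j,\ell_{1,j}}$ (a right endpoint). The left endpoints are excluded because $y_{1,j}>\upsilon_{j,\ell_{1,j}-1}$. Such a $\bm\upsilon'\neq\bm\upsilon_1$ exists if and only if some coordinate sits at a right endpoint $\upsilon_{j,\ell}$ with $\ell<\tau_j$. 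I will enumerate these choices coordinatewise to make this precise.

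\emph{Piecewise continuity (4).} The extension to $\mathcal X_c\times\overline{I(\bm\upsilon)}$ is simply the same formula $f(\cdot,\bm\upsilon)/|I(\bm\upsilon)|$, which is continuous there because it does not depend on $\y$ and $f(\cdot,\bm\upsilon)$ is continuous on $\mathcal X_c$.
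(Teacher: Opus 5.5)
Your proposal is correct and follows essentially the same route as the paper. Both identify $g$ by disintegrating over the blocks, with $\y$ conditionally uniform on $I(\bm\upsilon)$, and then read off (1)--(4) from the fact that $g$ equals the $\y$-free function $f(\cdot,\bm\upsilon)/|I(\bm\upsilon)|$ on each (right-closed) block; the remaining differences are cosmetic. You obtain (3a)/(3b) by restricting $g$ to the block instead of building the paper's explicit one-sided neighborhoods $B_-$ and $B'$, and your coordinatewise description of which blocks contain $\y_1$ in their closure is somewhat more explicit than the paper's.
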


\begin{prf}

We first verify that the lifted joint density of
\(\widetilde\X=(\X^{(c)},\Y)\) on
\(\widetilde{\mathcal X}=\mathcal X_c\times\mathcal Y\)
equals the displayed formula. We compute the joint distribution of \((\X^{(c)},\Y)\) by
disintegrating over this partition. For any measurable
rectangle \(A\times B\subset\mathcal X_c\times\mathcal Y\),
\[
\begin{aligned}
    \mathbb P\bigl(\X^{(c)}\in A,\,\Y\in B\bigr)
    &=
    \sum_{\bm\upsilon\in\varUpsilon}
    \mathbb P\bigl(\X^{(c)}\in A,\,\Y\in B,\,
        \X^{(d)}=\bm\upsilon\bigr)\\
    &=
    \sum_{\bm\upsilon\in\varUpsilon}
    \mathbb P\bigl(\X^{(c)}\in A,\,\X^{(d)}=\bm\upsilon,\,
        \Y\in B\cap I(\bm\upsilon)\bigr).
\end{aligned}
\]
Conditional on \(\X^{(d)}=\bm\upsilon\), the continualization
takes the form \(Y_j=\upsilon_{j,\ell_j}-U_j\) for each \(j\),
and the \(U_j\)'s are jointly independent of \(\X^{(c)}\); the
conditional distribution of \(\Y\) given
\((\X^{(c)},\X^{(d)}=\bm\upsilon)\) is therefore the uniform
distribution on \(I(\bm\upsilon)\), with constant density
\(1/|I(\bm\upsilon)|\) on this block. Hence
\[
    \mathbb P\bigl(\X^{(c)}\in A,\,\X^{(d)}=\bm\upsilon,\,
        \Y\in B\cap I(\bm\upsilon)\bigr)
    =
    \int_{A}
    f(\x^{(c)},\bm\upsilon)\,d\x^{(c)}
    \cdot
    \frac{\lambda\bigl(B\cap I(\bm\upsilon)\bigr)}
         {|I(\bm\upsilon)|},
\]
where \(\lambda(\cdot)\) denotes Lebesgue measure on
\(\mathbb R^{\tilde d}\). Writing
\(\lambda(B\cap I(\bm\upsilon))=\int_{B}
\mathbb I\{\y\in I(\bm\upsilon)\}\,d\y\), we obtain
\[
    \mathbb P\bigl(\X^{(c)}\in A,\,\Y\in B\bigr)
    =
    \int_{A\times B}
    \biggl\{
    \sum_{\bm\upsilon\in\varUpsilon}
    \frac{f(\x^{(c)},\bm\upsilon)}{|I(\bm\upsilon)|}\,
    \mathbb I\{\y\in I(\bm\upsilon)\}
    \biggr\}
    d\x^{(c)}\,d\y.
\]
Since this holds for every measurable rectangle in
\(\mathcal X_c\times\mathcal Y\), the integrand is a version of
the joint density of \((\X^{(c)},\Y)\) with respect to
Lebesgue measure on \(\mathcal X_c\times\mathcal Y\). This
verifies the displayed formula
\[
    g(\x^{(c)},\y)
    =
    \sum_{\bm\upsilon\in\varUpsilon}
    \frac{f(\x^{(c)},\bm\upsilon)}{|I(\bm\upsilon)|}\,
    \mathbb I\{\y\in I(\bm\upsilon)\}.
\]
We further verify that \(g\) integrates to one:
\[
    \int_{\mathcal X_c\times\mathcal Y}
    g(\x^{(c)},\y)\,d\x^{(c)}\,d\y
    =
    \sum_{\bm\upsilon\in\varUpsilon}
    \int_{\mathcal X_c}f(\x^{(c)},\bm\upsilon)\,d\x^{(c)}
    \cdot
    \frac{|I(\bm\upsilon)|}{|I(\bm\upsilon)|}
    =
    \sum_{\bm\upsilon\in\varUpsilon}
    \mathbb P(\X^{(d)}=\bm\upsilon)
    =1,
\]
where the second equality uses
\(\int_{I(\bm\upsilon)}\mathbb I\{\y\in I(\bm\upsilon)\}d\y
=|I(\bm\upsilon)|\) and the third equality uses the partition
property of \(\X^{(d)}\). This confirms that
\(g\) is a probability density on
\(\mathcal X_c\times\mathcal Y\).

We now turn to the regularity properties of \(g\) stated in
the theorem. We write $\overline B_{\mathcal X_c}(\x_0^{(c)};\delta):=B(\x_0^{(c)};\delta)\cap\mathcal X_c$ for the relative open ball in $\mathcal X_c$, and similarly for $\mathcal Y$.

\smallskip
\textbf{Property (1).}
Fix $\bm\upsilon\in\varUpsilon$ and $(\x_0^{(c)},\y_0)\in\widetilde{\mathcal X}^\circ$. Since $\y_0\in I(\bm\upsilon)^\circ$, there exists $\delta_0>0$ such that $B(\y_0;\delta_0)\subset I(\bm\upsilon)^\circ$.

For all $0<\delta<\delta_0$ and every $(\x^{(c)},\y)\in \overline B_{\mathcal X_c}(\x_0^{(c)};\delta)\times B(\y_0;\delta)$, we have $\y\in I(\bm\upsilon)^\circ$, hence
\begin{equation*}
	g(\x^{(c)},\y)=\frac{f(\x^{(c)},\bm\upsilon)}{|I(\bm\upsilon)|},\qquad
	g(\x_0^{(c)},\y_0)=\frac{f(\x_0^{(c)},\bm\upsilon)}{|I(\bm\upsilon)|}.
\end{equation*}
In particular, on this relative neighborhood $g$ does not depend on $\y$, so the joint continuity at $(\x_0^{(c)},\y_0)$ reduces to the continuity of $\x^{(c)}\mapsto f(\x^{(c)},\bm\upsilon)$ on $\mathcal X_c$, which is guaranteed by~\eqref{eq:continuity}. Thus
\begin{equation*}
	\bigl|g(\x^{(c)},\y)-g(\x_0^{(c)},\y_0)\bigr|
	=\frac{1}{|I(\bm\upsilon)|}\bigl|f(\x^{(c)},\bm\upsilon)-f(\x_0^{(c)},\bm\upsilon)\bigr|
	\longrightarrow 0
\end{equation*}
as $(\x^{(c)},\y)\to(\x_0^{(c)},\y_0)$.

\smallskip
\textbf{Property (2).}
If $(\x_0^{(c)},\y_0)\notin\widetilde{\mathcal X}^{\bullet}$, then for every $j\in[\tilde d]$ we have $y_{0,j}\neq\upsilon_{j,\ell}$ for all $\ell$, so $\y_0$ lies in the interior of some unique block $I(\bm\upsilon_0)$, i.e.\ $\y_0\in I(\bm\upsilon_0)^\circ$. By Property~(1), $g$ is continuous at $(\x_0^{(c)},\y_0)$.

\smallskip
\textbf{Property (3a).}
Let $(\x_1^{(c)},\y_1)\in\widetilde{\mathcal X}$ and let $\bm\upsilon_1$ be the unique block index with $\y_1\in I(\bm\upsilon_1)$. Decompose the discrete coordinate index set as
\begin{equation*}
	\tilde{\mathcal D}_1
	:=
	\bigl\{j\in[\tilde d]:y_{1,j}=\upsilon_{j,\ell_j}\bigr\},
	\qquad
	\tilde{\mathcal D}_2
	:=
	[\tilde d]\setminus \tilde{\mathcal D}_1,
\end{equation*}
where $\ell_j$ is determined by $\bm\upsilon_1=(\upsilon_{1,\ell_1},\dots,\upsilon_{\tilde d,\ell_{\tilde d}})$. Coordinates in $\tilde{\mathcal D}_1$ are the boundary coordinates (right endpoints of their respective intervals), and coordinates in $\tilde{\mathcal D}_2$ lie strictly inside. Write $\y_1=(\y_1^\bullet, \y_1^\circ)$ accordingly.

Define
\begin{equation*}
	\delta_1
	:=
	\min\biggl\{
	\min_{j\in\tilde{\mathcal D}_1}\bigl(\upsilon_{j,\ell_j}-\upsilon_{j,\ell_j-1}\bigr),\ 
	\min_{j\in\tilde{\mathcal D}_2}\operatorname{dist}\bigl(y_{1,j},\{\upsilon_{j,\ell_j-1},\upsilon_{j,\ell_j}\}\bigr)
	\biggr\}>0,
\end{equation*}with the convention that the minimum over an empty set is \(+\infty\).
Construct the one-sided relative neighborhood
\begin{equation*}
	B_-(\x_1^{(c)},\y_1;\delta)
	:=
	\overline B_{\mathcal X_c}(\x_1^{(c)};\delta)
	\times
	B(\y_1^\circ;\delta)
	\times
	\prod_{j\in\tilde{\mathcal D}_1}\bigl(\upsilon_{j,\ell_j}-\delta,\upsilon_{j,\ell_j}\bigr],
\end{equation*}
which is the trace on $\widetilde{\mathcal X}$ of the set of points approaching $\y_1$ from inside $I(\bm\upsilon_1)$ along all boundary coordinates.

For any $0<\delta<\delta_1$ and any $(\x^{(c)},\y)\in B_-(\x_1^{(c)},\y_1;\delta)$, both boundary and interior coordinates satisfy $y_j\in(\upsilon_{j,\ell_j-1},\upsilon_{j,\ell_j}]$, so $\y\in I(\bm\upsilon_1)$, and
\begin{equation*}
	g(\x^{(c)},\y)=\frac{f(\x^{(c)},\bm\upsilon_1)}{|I(\bm\upsilon_1)|}.
\end{equation*}
Hence the limit along $\mathcal X_c\times I(\bm\upsilon_1)$ reduces to the limit in $\x^{(c)}$ alone, and \eqref{eq:continuity} yields
\begin{equation*}
	\lim_{\substack{(\x^{(c)},\y)\to(\x_1^{(c)},\y_1)\\(\x^{(c)},\y)\in\mathcal X_c\times I(\bm\upsilon_1)}}
	g(\x^{(c)},\y)
	=
	g(\x_1^{(c)},\y_1)
	=
	\frac{f(\x_1^{(c)},\bm\upsilon_1)}{|I(\bm\upsilon_1)|}.
\end{equation*}

\smallskip
\textbf{Property (3b).}
Let $\bm\upsilon'\neq\bm\upsilon_1$ be such that $\y_1\in\overline{I(\bm\upsilon')}$. Such $\bm\upsilon'$ exists if and only if some coordinate $j\in\tilde{\mathcal D}_1$ satisfies $\ell_j<\tau_j$ (an internal partition hyperplane); if every $j\in\tilde{\mathcal D}_1$ has $\ell_j=\tau_j$, the boundary coordinates lie on the outermost right boundary of $\mathcal Y$ and no adjacent block exists, so (3b) is vacuous. Assume hereafter that such $\bm\upsilon'$ exists.

Write $\bm\upsilon'=(\upsilon_{1,\ell'_1},\ldots,\upsilon_{\tilde d,\ell'_{\tilde d}})$. Define the one-sided relative neighborhood approaching $\y_1$ from inside $I(\bm\upsilon')$,
\begin{equation*}
	B'(\x_1^{(c)},\y_1;\delta)
	:=
	\overline B_{\mathcal X_c}(\x_1^{(c)};\delta)
	\times
	B(\y_1^\circ;\delta)
	\times
	\prod_{j\in\tilde{\mathcal D}_1}T_j(\delta),
\end{equation*}
where $T_j(\delta)\subset(\upsilon_{j,\ell'_j-1},\upsilon_{j,\ell'_j}]$ is the appropriate one-sided sliver inside $I_{j,\ell'_j}$ with $y_{1,j}\in\overline{T_j(\delta)}$ (e.g.\ $(\upsilon_{j,\ell_j},\upsilon_{j,\ell_j}+\delta]$ if $\ell'_j=\ell_j+1$). For all sufficiently small $\delta$, $B'(\x_1^{(c)},\y_1;\delta)\cap(\mathcal X_c\times\mathcal Y)\subset \mathcal X_c\times I(\bm\upsilon')$, and on this set
\begin{equation*}
	g(\x^{(c)},\y)=\frac{f(\x^{(c)},\bm\upsilon')}{|I(\bm\upsilon')|}.
\end{equation*}
Then \eqref{eq:continuity} gives
\begin{equation*}
	\lim_{\substack{(\x^{(c)},\y)\to(\x_1^{(c)},\y_1)\\(\x^{(c)},\y)\in\mathcal X_c\times I(\bm\upsilon')}}
	g(\x^{(c)},\y)
	=
	\frac{f(\x_1^{(c)},\bm\upsilon')}{|I(\bm\upsilon')|}.
\end{equation*}
This limit equals $g(\x_1^{(c)},\y_1)=f(\x_1^{(c)},\bm\upsilon_1)/|I(\bm\upsilon_1)|$ if and only if
\begin{equation*}
	\frac{f(\x_1^{(c)},\bm\upsilon_1)}{|I(\bm\upsilon_1)|}=\frac{f(\x_1^{(c)},\bm\upsilon')}{|I(\bm\upsilon')|},
\end{equation*}
which fails generically. Hence $g$ admits a jump discontinuity across every partition interface separating two blocks with distinct values of $f(\cdot,\bm\upsilon)/|I(\bm\upsilon)|$.

\smallskip
\textbf{Property (4).}
On $\tilde{\mathcal{X}}^\circ$, every point has the form $(\x_0^{(c)},\y_0)$ with $\y_0\in I(\bm\upsilon_0)^\circ$ for a unique $\bm\upsilon_0$; by Property~(1), $g$ is continuous there. Hence the discontinuity set of $g$ is contained in $\widetilde{\mathcal X}^\bullet$, proving Property~(2) and the first claim of Property~(4). For the second claim, Property~(3a) shows that for each $\bm\upsilon\in\varUpsilon$, the restriction $g|_{\mathcal X_c\times I(\bm\upsilon)}$ admits a continuous extension to $\mathcal X_c\times\overline{I(\bm\upsilon)}$, given on the closure by $f(\x^{(c)},\bm\upsilon)/|I(\bm\upsilon)|$. This is precisely the piecewise continuity statement.
\end{prf}

\subsection{Proof of Corollary~\ref{cor:Cr_continualization}}
\label{app:proof_Cr_continualization}

\begin{cor}[Precise version of Corollary~\ref{cor:Cr_continualization}]
\label{cor:Cr_continualization_precision}
Let \(r\ge 0\) be an integer. Throughout,
\(C^r(\mathcal X_c)\) denotes the class of functions whose partial
derivatives up to order \(r\) exist on \(\mathcal X_c\). Assume that, for every
\(\bm\upsilon\in\varUpsilon\),
\[
    \x^{(c)}\mapsto f(\x^{(c)},\bm\upsilon)
\]
belongs to \(C^r(\mathcal X_c)\), and that all partial derivatives
of order up to \(r\) are continuous on
\(\mathcal X_c\), uniformly over \(\bm\upsilon\in\varUpsilon\).

Let \(g\) denote the lifted density of
\(\widetilde\X=(\X^{(c)},\Y)\), namely
\[
    g(\x^{(c)},\y)
    =
    \sum_{\bm\upsilon\in\varUpsilon}
    \frac{f(\x^{(c)},\bm\upsilon)}
    {|I(\bm\upsilon)|}
    \mathbb I\{\y\in I(\bm\upsilon)\}.
\]
Then the following hold.

\begin{enumerate}[(1)]
    \item \emph{Interior \(C^r\) regularity and explicit derivatives.}
    For each \(\bm\upsilon\in\varUpsilon\), on
    \(\mathcal X_c\times I(\bm\upsilon)^\circ\),
    \[
        g(\x^{(c)},\y)
        =
        \frac{f(\x^{(c)},\bm\upsilon)}
        {|I(\bm\upsilon)|}.
    \]
    Hence \(g\in C^r(\mathcal X_c\times I(\bm\upsilon)^\circ)\)
    in the relative product sense. Moreover, for every multi-index
    \(\alpha=(\alpha_c,\alpha_y)\) with \(|\alpha|\le r\),
    \[
        \partial^\alpha g(\x^{(c)},\y)
        =
        \begin{cases}
        \dfrac{
        \partial^{\alpha_c}_{\x^{(c)}} f(\x^{(c)},\bm\upsilon)}
        {|I(\bm\upsilon)|},
        & \alpha_y=0,\\[6pt]
        0,
        & \alpha_y\neq 0,
        \end{cases}
        \qquad
        (\x^{(c)},\y)\in
        \mathcal X_c\times I(\bm\upsilon)^\circ .
    \]

    \item \emph{Blockwise \(C^r\) extension.}
    For each \(\bm\upsilon\in\varUpsilon\), the restriction
    \(g|_{\mathcal X_c\times I(\bm\upsilon)}\) admits a \(C^r\)
    extension to \(\mathcal X_c\times\overline{I(\bm\upsilon)}\),
    given by
    \[
        g_{\bm\upsilon}^{\rm ext}(\x^{(c)},\y)
        =
        \frac{f(\x^{(c)},\bm\upsilon)}
        {|I(\bm\upsilon)|},
        \qquad
        (\x^{(c)},\y)\in
        \mathcal X_c\times\overline{I(\bm\upsilon)}.
    \]
    Its derivatives satisfy
    \[
        \partial^\alpha g_{\bm\upsilon}^{\rm ext}(\x^{(c)},\y)
        =
        \begin{cases}
        \dfrac{
        \partial^{\alpha_c}_{\x^{(c)}} f(\x^{(c)},\bm\upsilon)}
        {|I(\bm\upsilon)|},
        & \alpha_y=0,\\[6pt]
        0,
        & \alpha_y\neq 0,
        \end{cases}
        \]
    for every \(|\alpha|\le r\).

    \item \emph{One-sided derivative limits across block boundaries.}
    Let \(\y_0\in\mathcal Y^\bullet\), and let
    \(\bm\upsilon_0\in\varUpsilon\) be the unique block index such
    that \(\y_0\in I(\bm\upsilon_0)\). If
    \(\bm\upsilon'\neq\bm\upsilon_0\) satisfies
    \(\y_0\in\overline{I(\bm\upsilon')}\), then for every
    \(\x_0^{(c)}\in\mathcal X_c\) and every multi-index
    \(\alpha=(\alpha_c,0)\) with \(|\alpha|\le r\),
    \[
        \lim_{\substack{
        (\x^{(c)},\y)\to(\x_0^{(c)},\y_0)\\
        \y\in I(\bm\upsilon_0)}}
        \partial^\alpha g(\x^{(c)},\y)
        =
        \frac{
        \partial^{\alpha_c}_{\x^{(c)}}f(\x_0^{(c)},\bm\upsilon_0)}
        {|I(\bm\upsilon_0)|},
    \]
    whereas
    \[
        \lim_{\substack{
        (\x^{(c)},\y)\to(\x_0^{(c)},\y_0)\\
        \y\in I(\bm\upsilon')}}
        \partial^\alpha g(\x^{(c)},\y)
        =
        \frac{
        \partial^{\alpha_c}_{\x^{(c)}}f(\x_0^{(c)},\bm\upsilon')}
        {|I(\bm\upsilon')|}.
    \]
    Therefore the one-sided limits of the \(\x^{(c)}\)-derivative
    \(\partial^\alpha g\) coincide if and only if
    \[
        \frac{
        \partial^{\alpha_c}_{\x^{(c)}}f(\x_0^{(c)},\bm\upsilon_0)}
        {|I(\bm\upsilon_0)|}
        =
        \frac{
        \partial^{\alpha_c}_{\x^{(c)}}f(\x_0^{(c)},\bm\upsilon')}
        {|I(\bm\upsilon')|}.
    \]
    If no such adjacent block index \(\bm\upsilon'\) exists, the
    corresponding cross-block statement is vacuous.

    \item \emph{Piecewise \(C^r\) structure.}
    The collection
    \[
        \left\{
        g|_{\mathcal X_c\times I(\bm\upsilon)}
        :
        \bm\upsilon\in\varUpsilon
        \right\}
    \]
    constitutes a piecewise \(C^r\) structure on
    \(\widetilde{\mathcal X}=\mathcal X_c\times\mathcal Y\). The
    blockwise \(C^r\) extensions need not agree on shared block
    boundaries.
\end{enumerate}
\end{cor}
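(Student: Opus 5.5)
The proof reduces every claim to the blockwise formula for $g$ established in the precise version of Theorem~\ref{thm:continuity_continualization_pdf}. Then, as the paper suggests, the $C^0$ argument is reapplied to each partial derivative of order at most $r$.

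\emph{Step 1 (interior regularity, item (1)).} Fix $\bm\upsilon\in\varUpsilon$. On $\mathcal X_c\times I(\bm\upsilon)^\circ$ the indicator sum in $g$ collapses to the single term $f(\x^{(c)},\bm\upsilon)/|I(\bm\upsilon)|$, because the blocks partition $\mathcal Y$. Any point $(\x_0^{(c)},\y_0)$ there has a product neighbourhood $\overline B_{\mathcal X_c}(\x_0^{(c)};\delta)\times B(\y_0;\delta)$ contained in the block, exactly as in Property~(1) of the theorem. On that neighbourhood $g$ is a function of $\x^{(c)}$ alone. Hence every mixed partial with $\alpha_y\neq 0$ exists and vanishes. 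For $\alpha_y=0$, partials of order $\le r$ equal $\partial^{\alpha_c}f(\cdot,\bm\upsilon)/|I(\bm\upsilon)|$, which exist and are continuous by hypothesis. This gives $g\in C^r$ on the open block together with the displayed derivative formula.

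\emph{Step 2 (extension, items (2) and (4)).} Define $g^{\rm ext}_{\bm\upsilon}(\x^{(c)},\y):=f(\x^{(c)},\bm\upsilon)/|I(\bm\upsilon)|$ on $\mathcal X_c\times\overline{I(\bm\upsilon)}$. This function is constant in $\y$ and $C^r$ in $\x^{(c)}$ on all of $\mathcal X_c$, so it is $C^r$ on the closed block in the relative sense. Its derivatives are given by the same formula as in Step 1. On the half-open block $\mathcal X_c\times I(\bm\upsilon)$ it agrees with $g$, including at right-endpoint boundary points, by the left-open right-closed convention. Collecting these extensions over the finite set $\varUpsilon$ yields the piecewise $C^r$ structure. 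The extensions attached to $\bm\upsilon$ and $\bm\upsilon'$ generally disagree on a shared face, since $f(\cdot,\bm\upsilon)/|I(\bm\upsilon)|\neq f(\cdot,\bm\upsilon')/|I(\bm\upsilon')|$ in general.

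\emph{Step 3 (one-sided derivative limits, item (3)).} For $\alpha=(\alpha_c,0)$, apply the argument of Properties~(3a)--(3b) of the theorem with $f(\cdot,\bm\upsilon)$ replaced by $\partial^{\alpha_c}f(\cdot,\bm\upsilon)$. That replacement is a continuous function on $\mathcal X_c$, which is precisely the hypothesis the $C^0$ proof used.
\begin{itemize}
\item Approaching from inside $I(\bm\upsilon_0)$ through the one-sided neighbourhood $B_-$, the derivative $\partial^\alpha g$ coincides with $\partial^{\alpha_c}f(\cdot,\bm\upsilon_0)/|I(\bm\upsilon_0)|$. Here derivatives at boundary points are understood as those of the extension $g^{\rm ext}_{\bm\upsilon_0}$, or equivalently as limits of interior derivatives; this is the one subtle point.
\item Approaching through the sliver neighbourhood $B'$ inside an adjacent block, $\partial^\alpha g$ coincides with $\partial^{\alpha_c}f(\cdot,\bm\upsilon')/|I(\bm\upsilon')|$.
\end{itemize}
Continuity in $\x^{(c)}$ then gives both limits, and they agree if and only if the stated equality holds.

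The vacuous case is the one in which no internal hyperplane passes through $\y_0$. It is handled identically to the theorem.

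\emph{Main obstacle.} The only nonroutine point is what "derivative of $g$" means at points of $\mathcal Y^\bullet$ lying in a half-open block, where $g$ itself is discontinuous in $\y$. I would resolve this by stating items (1)--(3) for the restriction to $\mathcal X_c\times I(\bm\upsilon)$ or for its extension, never for $g$ on a full neighbourhood. The uniformity over $\bm\upsilon$ is automatic because $\varUpsilon$ is finite.
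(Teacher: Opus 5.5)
Your proposal is correct and follows essentially the same route as the paper: collapse the indicator sum on the open block, take the $\y$-independent extension $f(\cdot,\bm\upsilon)/|I(\bm\upsilon)|$ to the closed block, and read off the one-sided limits from continuity of $\partial^{\alpha_c}f(\cdot,\bm\upsilon)$ along points of $I(\bm\upsilon_0)$ or $I(\bm\upsilon')$. The subtlety you flag is harmless for item (3), and no reinterpretation via the extension is needed: for $\alpha=(\alpha_c,0)$ the partials vary only $\x^{(c)}$ with $\y$ frozen, and $g(\cdot,\y)=f(\cdot,\bm\upsilon_0)/|I(\bm\upsilon_0)|$ for every $\y\in I(\bm\upsilon_0)$, boundary points included (this is what the paper's ``interior formula'' step implicitly uses).
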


\begin{prf}
Fix \(\bm\upsilon\in\varUpsilon\). On the open block
\(\mathcal X_c\times I(\bm\upsilon)^\circ\), the block indicators
satisfy
\[
    \mathbb I\{\y\in I(\bm\upsilon)\}
    =
    \mathbb I\{cat(\y)=\bm\upsilon\}.
\]
Therefore
\[
    g(\x^{(c)},\y)
    =
    \frac{f(\x^{(c)},\bm\upsilon)}
    {|I(\bm\upsilon)|},
    \qquad
    (\x^{(c)},\y)\in
    \mathcal X_c\times I(\bm\upsilon)^\circ .
\]
The right-hand side is \(C^r\) in \(\x^{(c)}\) by assumption and is
constant in \(\y\). Hence \(g\in C^r\) on the open block, and for
every multi-index \(\alpha=(\alpha_c,\alpha_y)\) with \(|\alpha|\le r\),
\[
    \partial^\alpha g(\x^{(c)},\y)
    =
    \begin{cases}
    \dfrac{
    \partial^{\alpha_c}_{\x^{(c)}} f(\x^{(c)},\bm\upsilon)}
    {|I(\bm\upsilon)|},
    & \alpha_y=0,\\[6pt]
    0,
    & \alpha_y\neq 0.
    \end{cases}
\]
This proves the interior derivative formula.

Next define, for the same block,
\[
    g_{\bm\upsilon}^{\rm ext}(\x^{(c)},\y)
    =
    \frac{f(\x^{(c)},\bm\upsilon)}
    {|I(\bm\upsilon)|},
    \qquad
    (\x^{(c)},\y)\in
    \mathcal X_c\times\overline{I(\bm\upsilon)}.
\]
Since \(f(\cdot,\bm\upsilon)\in C^r(\mathcal X_c)\) and the
right-hand side is independent of \(\y\),
\(g_{\bm\upsilon}^{\rm ext}\in
C^r(\mathcal X_c\times\overline{I(\bm\upsilon)})\). It agrees with
\(g\) on \(\mathcal X_c\times I(\bm\upsilon)\). This proves the
blockwise \(C^r\) extension statement.

We now verify the one-sided derivative limits. Let
\(\y_0\in\mathcal Y^\bullet\), and let
\(\bm\upsilon_0\) be the unique block index satisfying
\(\y_0\in I(\bm\upsilon_0)\). Consider any sequence
\[
    (\x_n^{(c)},\y_n)\to(\x_0^{(c)},\y_0),
    \qquad
    \y_n\in I(\bm\upsilon_0).
\]
For any multi-index \(\alpha=(\alpha_c,0)\) with \(|\alpha|\le r\),
the interior formula gives
\[
    \partial^\alpha g(\x_n^{(c)},\y_n)
    =
    \frac{
    \partial^{\alpha_c}_{\x^{(c)}}f(\x_n^{(c)},\bm\upsilon_0)}
    {|I(\bm\upsilon_0)|}.
\]
By continuity of
\(\partial^{\alpha_c}_{\x^{(c)}}f(\cdot,\bm\upsilon_0)\) on
\(\mathcal X_c\), this converges to
\[
    \frac{
    \partial^{\alpha_c}_{\x^{(c)}}f(\x_0^{(c)},\bm\upsilon_0)}
    {|I(\bm\upsilon_0)|}.
\]

Similarly, if \(\bm\upsilon'\neq\bm\upsilon_0\) satisfies
\(\y_0\in\overline{I(\bm\upsilon')}\), then along any sequence
\[
    (\x_n^{(c)},\y_n)\to(\x_0^{(c)},\y_0),
    \qquad
    \y_n\in I(\bm\upsilon'),
\]
we have
\[
    \partial^\alpha g(\x_n^{(c)},\y_n)
    =
    \frac{
    \partial^{\alpha_c}_{\x^{(c)}}f(\x_n^{(c)},\bm\upsilon')}
    {|I(\bm\upsilon')|}
    \to
    \frac{
    \partial^{\alpha_c}_{\x^{(c)}}f(\x_0^{(c)},\bm\upsilon')}
    {|I(\bm\upsilon')|}.
\]
The two one-sided limits coincide exactly when the two displayed
blockwise derivative values are equal. This proves the jump
statement for \(\x^{(c)}\)-derivatives.

Finally, because each block restriction admits a \(C^r\) extension
to its closed block, the family of block restrictions defines a
piecewise \(C^r\) structure on
\(\widetilde{\mathcal X}=\mathcal X_c\times\mathcal Y\). The preceding
one-sided limit calculation shows that these extensions need not
match on shared block boundaries.
\end{prf}

\subsection{Proof of Theorem~\ref{thm:conditional_distribution}}
\label{app:proof_conditional_distribution}

\begin{thm}[Precise version of Theorem~\ref{thm:conditional_distribution}]
\label{thm:conditional_distribution_precision}
Let \(\widetilde\X=(\X^{(c)},\Y)\) be the continualized covariate vector,
and write a generic point as
\[
    \tilde\x=(\x^{(c)},\y)\in\mathcal X_c\times\mathcal Y,
    \qquad
    \mathcal Y=\prod_{j=1}^{\tilde d}\mathcal Y_j .
\]
For each \(\bm\upsilon\in\varUpsilon\), let \(I(\bm\upsilon)\)
be the associated covariate block. Under the left-open right-closed
convention, there is a unique block index
\[
    \mathrm{cat}(\y) = \bm \upsilon \in\varUpsilon
\]
such that \(\y\in I(\mathrm{cat}(\y))\). Then the following statements hold.

\begin{enumerate}[(i)]
    \item \textbf{Continuous response variable.}
    Suppose \(Z\) is continuous on \(\mathcal Z\subset\mathbb R\). Assume that for every
    \(\bm\upsilon\in\varUpsilon\), the original conditional
    distribution function
    \[
        (z,\x^{(c)})\mapsto F(z\mid\x^{(c)},\bm\upsilon)
    \]
    belongs to \(C^2(\mathcal Z\times\mathcal X_c)\), with
    second-order partial derivatives uniformly continuous and bounded
    on \(\mathcal Z\times\mathcal X_c\), uniformly in
    \(\bm\upsilon\). Then the lifted conditional distribution
    \[
        F(z\mid\x^{(c)},\y)
        :=
        \mathbb P(Z\le z\mid \X^{(c)}=\x^{(c)},\Y=\y)
    \]
    satisfies
    \[
        F(z\mid\x^{(c)},\y)
        =
        \sum_{\bm\upsilon\in\varUpsilon}
        F(z\mid\x^{(c)},\bm\upsilon)
        \mathbb I\{\y\in I(\bm\upsilon)\}.
    \]
    Consequently, for each \(\bm\upsilon\in\varUpsilon\),
    \[
        (z,\x^{(c)},\y)\mapsto F(z\mid\x^{(c)},\y)
        \in
        C^2\bigl(
        \mathcal Z\times\mathcal X_c\times I(\bm\upsilon)^\circ
        \bigr)
    \]
    in the relative product sense. On this block, \(F\) is independent
    of \(\y\), and hence all derivatives involving at least one
    derivative with respect to \(\y\) vanish identically.

    \item \textbf{Discrete response variable after continualization.}
    Suppose \(Z\) is ordered discrete on
    \[
        \mathcal Z=\{z_1,\ldots,z_L\},
        \qquad
        z_1<\cdots<z_L.
    \]
    Let
    \[
        z_0:=z_1-1,\qquad
        I_i=(z_{i-1},z_i],
        \qquad
        \widetilde{\mathcal Z}=\bigcup_{i=1}^L I_i,
    \]
    and let \(\tilde Z=Z-V\), where, conditional on $Z=z_i$, $V\sim\mathrm{Unif}(0,|I_i|)$ independently of $\widetilde\X$. Assume that,
    for every \(i\in[L]\) and \(\bm\upsilon\in\varUpsilon\), the
    conditional mass
    \[
        \x^{(c)}\mapsto P(z_i\mid\x^{(c)},\bm\upsilon)
    \]
    belongs to \(C^2(\mathcal X_c)\), with second-order partial
    derivatives uniformly continuous and bounded on \(\mathcal X_c\),
    uniformly in \(i\) and \(\bm\upsilon\).

    Then the lifted conditional density of \(\tilde Z\) given
    \((\x^{(c)},\y)\) is
    \[
        f(\tilde z\mid\x^{(c)},\y)
        =
        \sum_{\bm\upsilon\in\varUpsilon}
        \sum_{i=1}^L
        \frac{P(z_i\mid\x^{(c)},\bm\upsilon)}{|I_i|}
        \mathbb I\{\tilde z\in I_i\}
        \mathbb I\{\y\in I(\bm\upsilon)\}.
    \]
    This is a valid conditional density on
    \(\widetilde{\mathcal Z}\). Moreover, for
    \(\tilde z\in I_i\) and \(\y\in I(\bm\upsilon)\),
    \[
        F(\tilde z\mid\x^{(c)},\y)
        =
        \sum_{k=1}^{i-1}P(z_k\mid\x^{(c)},\bm\upsilon)
        +
        \frac{\tilde z-z_{i-1}}{|I_i|}
        P(z_i\mid\x^{(c)},\bm\upsilon).
    \]
    Consequently, for every \(i\in[L]\) and
    \(\bm\upsilon\in\varUpsilon\),
    \[
        (\tilde z,\x^{(c)},\y)\mapsto
        F(\tilde z\mid\x^{(c)},\y)
        \in
        C^2\bigl(
        I_i^\circ\times\mathcal X_c\times I(\bm\upsilon)^\circ
        \bigr).
    \]
    On each such block, \(F\) is independent of \(\y\), and is exactly
    affine in \(\tilde z\). In particular,
    \[
        \frac{\partial^2 F}{\partial \tilde z^2}\equiv 0,
        \qquad
        \frac{\partial^2 F}
        {\partial \tilde z\,\partial x_a^{(c)}}
        =
        \frac{1}{|I_i|}
        \frac{\partial}{\partial x_a^{(c)}}
        P(z_i\mid\x^{(c)},\bm\upsilon)
    \]
    on \(I_i^\circ\times\mathcal X_c\times I(\bm\upsilon)^\circ\).
    Across a response interface \(\tilde z=z_i\), \(1\le i<L\),
    the lifted CDF is continuous in \(\tilde z\), while its
    \(\tilde z\)-derivative may jump from
    \[
        \frac{P(z_i\mid\x^{(c)},\bm\upsilon)}{|I_i|}
        \quad\text{to}\quad
        \frac{P(z_{i+1}\mid\x^{(c)},\bm\upsilon)}{|I_{i+1}|}.
    \]

    \item \textbf{Covariate jumping set.}
    In both cases, at every covariate boundary point
    \[
        (\x_0^{(c)},\y_0)\in
        \widetilde{\mathcal X}^{\bullet}
        =
        \mathcal X_c\times\mathcal Y^\bullet,
    \]
    the lifted conditional CDF \(F\) admits blockwise one-sided
    \(C^2\) limits from every covariate block
    \(I(\bm\upsilon)\) whose closure contains \(\y_0\). The values
    of \(F\) and its blockwise \(\x^{(c)}\)-derivatives on adjacent
    covariate blocks need not agree.
\end{enumerate}
\end{thm}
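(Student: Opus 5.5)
The plan is to compute the lifted laws directly by disintegrating over the discrete labels, exactly as in the proof of Theorem~\ref{thm:continuity_continualization_pdf}. The key identity I will establish first is that, on the event $\Y\in I(\bm\upsilon)$, the $\sigma$-field generated by $(\X^{(c)},\Y)$ carries no more information about $Z$ than $(\X^{(c)},\X^{(d)})$ does. The reason is that $\Y=\X^{(d)}-\mathbf U$ with $\mathbf U$ independent of $(Z,\X^{(c)},\X^{(d)})$, and $\X^{(d)}=\mathrm{cat}(\Y)$ almost surely by Proposition~\ref{prop:univariate_continualize}(ii).

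Concretely, for measurable $A\subset\mathcal X_c$, $B\subset I(\bm\upsilon)$ and any $z$, I will write
\[
\mathbb P(Z\le z,\X^{(c)}\in A,\Y\in B)=\int_A F(z\mid\x^{(c)},\bm\upsilon)f(\x^{(c)},\bm\upsilon)\,d\x^{(c)}\cdot\frac{\lambda(B)}{|I(\bm\upsilon)|}.
\]
Dividing by the joint density $g=f(\x^{(c)},\bm\upsilon)/|I(\bm\upsilon)|$ from Theorem~\ref{thm:continuity_continualization_pdf}(i) gives a version of the conditional CDF equal to $F(z\mid\x^{(c)},\bm\upsilon)$ on the block. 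Summing over blocks then yields the representation in (i). Regularity in (i) is immediate: the function is the given $C^2$ section composed with the projection $(z,\x^{(c)},\y)\mapsto(z,\x^{(c)})$, so all $\y$-derivatives vanish on the open block.

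For (ii), I will apply the same disintegration with the extra independent response jitter $V$. Conditional on $(\X^{(c)},\X^{(d)}=\bm\upsilon,Z=z_i)$, the variable $\tilde Z$ is uniform on $I_i$. This gives a joint density in $(\tilde z,\x^{(c)},\y)$ equal to $P(z_i\mid\x^{(c)},\bm\upsilon)f(\x^{(c)},\bm\upsilon)/(|I_i||I(\bm\upsilon)|)$, and dividing by $g$ gives the stated conditional density. Its validity follows because it integrates over $\widetilde{\mathcal Z}$ to $\sum_i P(z_i\mid\cdot)=1$. Integrating in $\tilde z$ up to a point of $I_i$ gives the affine formula. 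The $C^2$ claim, $\partial^2_{\tilde z}F\equiv0$, and the mixed-derivative formula then follow by differentiating the explicit expression, using the assumed $C^2$ smoothness of the masses. Continuity across $\tilde z=z_i$ holds because at the right endpoint the formula equals $\sum_{k\le i}P(z_k\mid\cdot)$, which matches the left limit on $I_{i+1}$. The slope jump is read off directly.

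For (iii), I will argue exactly as in Corollary~\ref{cor:Cr_continualization}(2)–(3). On $\mathcal X_c\times\overline{I(\bm\upsilon)}$, define the extension by the same blockwise formula, $F(z\mid\x^{(c)},\bm\upsilon)$ or its affine analogue. This extension is $C^2$ and independent of $\y$, so the one-sided limits of $F$ and its $\x^{(c)}$-derivatives along sequences within $I(\bm\upsilon)$ exist and equal the blockwise values. Comparing two adjacent blocks shows that these limits need not agree.

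The main obstacle is measure-theoretic rigor. Conditional distributions are defined only almost everywhere, so the statement must be read as identifying a canonical version. I will handle this by noting that the blockwise formula is a version, and that it is the unique version continuous on each open block, since $g>0$ there by the positivity assumption used later. Boundary points have measure zero, and the left-open/right-closed convention fixes the values there.
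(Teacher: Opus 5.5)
Your proposal is correct and takes essentially the paper's route. The paper's proof rests on the same block-label identity $\mathcal L(Z\mid\X^{(c)}=\x^{(c)},\Y=\y)=\mathcal L(Z\mid\X^{(c)}=\x^{(c)},\X^{(d)}=\bm\upsilon)$ for $\y\in I(\bm\upsilon)$, reads (i)--(ii) off the explicit blockwise formulas, and gets (iii) from the argument of Corollary~\ref{cor:Cr_continualization}; you derive that identity explicitly through the rectangle disintegration used for Theorem~\ref{thm:continuity_continualization_pdf}, which the paper only asserts. Two minor points: strictly, $\mathbf U$ is not independent of $\X^{(d)}$ when interval widths differ across categories, and what you actually use (correctly, in your display) is that given $\X^{(d)}=\bm\upsilon$, $\Y$ is uniform on $I(\bm\upsilon)$ and independent of $(Z,\X^{(c)})$; also, your uniqueness-of-version remark needs positivity of $f(\cdot,\bm\upsilon)$, which is not a hypothesis here, though none of the stated claims depend on it.
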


\begin{prf}
We first record the basic conditioning identity induced by the
continualization. Since \(\X^{(d)}=\mathrm{cat}(\Y)\) is a measurable
function of \(\Y\), and since the uniform variables used to generate
\(\Y\) are conditionally independent of \(Z\) given
\((\X^{(c)},\X^{(d)})\), the conditional law of \(Z\) given
\((\X^{(c)},\Y)\) depends on \(\Y\) only through its block label.
Thus, for \(\y\in I(\bm\upsilon)\),
\[
    \mathcal L(Z\mid \X^{(c)}=\x^{(c)},\Y=\y)
    =
    \mathcal L(Z\mid \X^{(c)}=\x^{(c)},\X^{(d)}=\bm\upsilon).
\]
Equivalently,
\[
    \mathbb P(Z\le z\mid \X^{(c)}=\x^{(c)},\Y=\y)
    =
    F(z\mid\x^{(c)},\bm\upsilon),
    \qquad
    \y\in I(\bm\upsilon).
\]

\smallskip
\textbf{(i) Continuous response.}
The preceding identity immediately gives
\[
    F(z\mid\x^{(c)},\y)
    =
    \sum_{\bm\upsilon\in\varUpsilon}
    F(z\mid\x^{(c)},\bm\upsilon)
    \mathbb I\{\y\in I(\bm\upsilon)\}.
\]
Fix \(\bm\upsilon\). On
\(\mathcal Z\times\mathcal X_c\times I(\bm\upsilon)^\circ\), this
function equals \(F(z\mid\x^{(c)},\bm\upsilon)\). By assumption this
is \(C^2\) in \((z,\x^{(c)})\), with uniformly continuous and bounded
second derivatives. Since it is independent of \(\y\) on this block,
viewed as a function of \((z,\x^{(c)},\y)\), it belongs to
\[
    C^2\bigl(
    \mathcal Z\times\mathcal X_c\times I(\bm\upsilon)^\circ
    \bigr),
\]
with all \(\y\)-derivatives equal to zero.

\smallskip
\textbf{(ii) Discrete response.}
For \(\y\in I(\bm\upsilon)\), the same conditioning identity gives
\[
    \mathbb P(Z=z_i\mid \X^{(c)}=\x^{(c)},\Y=\y)
    =
    P(z_i\mid\x^{(c)},\bm\upsilon).
\]
Given \(Z=z_i\), the response continualization places \(\tilde Z\)
uniformly on \(I_i\). Therefore, for
\(\tilde z\in I_i\) and \(\y\in I(\bm\upsilon)\),
\[
    f(\tilde z\mid\x^{(c)},\y)
    =
    \frac{P(z_i\mid\x^{(c)},\bm\upsilon)}{|I_i|}.
\]
Summing over \(i\) and \(\bm\upsilon\) yields the displayed density.
It integrates to one because
\[
    \int_{\widetilde{\mathcal Z}}
    f(\tilde z\mid\x^{(c)},\y)\,d\tilde z
    =
    \sum_{i=1}^L
    \int_{I_i}
    \frac{P(z_i\mid\x^{(c)},\bm\upsilon)}{|I_i|}
    \,d\tilde z
    =
    \sum_{i=1}^L
    P(z_i\mid\x^{(c)},\bm\upsilon)
    =
    1
\]
whenever \(\y\in I(\bm\upsilon)\).

For \(\tilde z\in I_i\) and \(\y\in I(\bm\upsilon)\),
\[
\begin{aligned}
    F(\tilde z\mid\x^{(c)},\y)
    &=
    \int_{z_0}^{\tilde z}
    f(s\mid\x^{(c)},\y)\,ds  \\
    &=
    \sum_{k=1}^{i-1}
    \int_{I_k}
    \frac{P(z_k\mid\x^{(c)},\bm\upsilon)}{|I_k|}
    \,ds
    +
    \int_{z_{i-1}}^{\tilde z}
    \frac{P(z_i\mid\x^{(c)},\bm\upsilon)}{|I_i|}
    \,ds  \\
    &=
    \sum_{k=1}^{i-1}
    P(z_k\mid\x^{(c)},\bm\upsilon)
    +
    \frac{\tilde z-z_{i-1}}{|I_i|}
    P(z_i\mid\x^{(c)},\bm\upsilon).
\end{aligned}
\]
This proves the blockwise CDF formula.

On \(I_i^\circ\times\mathcal X_c\times I(\bm\upsilon)^\circ\), the
right-hand side is affine in \(\tilde z\), \(C^2\) in
\(\x^{(c)}\), and independent of \(\y\). Hence it is \(C^2\) on the
block, with all \(\y\)-derivatives equal to zero and with
\[
    \frac{\partial^2 F}{\partial \tilde z^2}=0,
    \qquad
    \frac{\partial^2 F}
    {\partial \tilde z\,\partial x_a^{(c)}}
    =
    \frac{1}{|I_i|}
    \frac{\partial}{\partial x_a^{(c)}}
    P(z_i\mid\x^{(c)},\bm\upsilon).
\]

At a response interface \(\tilde z=z_i\), with \(1\le i<L\), the
left limit from \(I_i\) is
\[
    \sum_{k=1}^{i-1}
    P(z_k\mid\x^{(c)},\bm\upsilon)
    +
    P(z_i\mid\x^{(c)},\bm\upsilon)
    =
    \sum_{k=1}^{i}
    P(z_k\mid\x^{(c)},\bm\upsilon),
\]
while the right limit from \(I_{i+1}\) is
\[
    \sum_{k=1}^{i}
    P(z_k\mid\x^{(c)},\bm\upsilon)
    +
    0
    =
    \sum_{k=1}^{i}
    P(z_k\mid\x^{(c)},\bm\upsilon).
\]
Thus \(F\) is continuous in \(\tilde z\) across the response
interface. Its \(\tilde z\)-derivative equals
\(P(z_i\mid\x^{(c)},\bm\upsilon)/|I_i|\) on \(I_i^\circ\) and
\(P(z_{i+1}\mid\x^{(c)},\bm\upsilon)/|I_{i+1}|\) on
\(I_{i+1}^\circ\), so this derivative may jump at \(z_i\).

\smallskip
\textbf{(iii) Covariate jumping set.}
In case (i), the blockwise representation shows that, on each
covariate block \(I(\bm\upsilon)^\circ\), \(F\) is the
\(C^2\) function \(F(z\mid\x^{(c)},\bm\upsilon)\), independent of
\(\y\). In case (ii), for each fixed response interval \(I_i^\circ\),
the displayed formula expresses \(F\) as a \(C^2\) function of
\(\x^{(c)}\), independent of \(\y\), on every covariate block.

Therefore the blockwise one-sided limits at points of
\(\widetilde{\mathcal X}^{\bullet}\) exist by the same argument as
in Corollary~\ref{cor:Cr_continualization}, applied respectively to
the maps
\[
    \x^{(c)}\mapsto F(z\mid\x^{(c)},\bm\upsilon)
\]
in the continuous-response case and to the maps
\[
    \x^{(c)}\mapsto P(z_i\mid\x^{(c)},\bm\upsilon)
\]
in the discrete-response case. Since these blockwise functions need
not agree for adjacent \(\bm\upsilon\)'s, the values of \(F\) and its
blockwise \(\x^{(c)}\)-derivatives may jump across the covariate
jumping set.
\end{prf}

\subsection{Proof of Theorem~\ref{thm:int_uniform}}
\label{app:proof_int_uniform}
\begin{prf}
\textbf{(i) Continuous response.} For the mixed covariate case, the only additional issue is that the Gaussian kernel has unbounded support and may assign positive weight to observations lying in blocks adjacent to the block containing the query point. We therefore require only the continualized coordinates to stay away from the internal partition interfaces. 
Denote 
\begin{equation*}
\mathcal Y_{\epsilon}^{\circ}
:=
\left\{
\y\in  \mathcal Y^\circ:
\min_{\substack{j\in[\tilde d]\\1\le \ell\le \tau_j-1}}
|y_j-\upsilon_{j,\ell}|
\ge \epsilon
\right\},
\qquad
\widetilde{\mathcal X}_{\epsilon}^{\circ}
:=
\mathcal X_c\times\mathcal Y_{\epsilon}^{\circ},
\end{equation*} where $\epsilon\ge C_\epsilon w_x \sqrt{|\log w_x|}$, for some constant \(C_\epsilon>2\). Notice that no additional distance restriction is imposed on the original continuous coordinates \(\x^{(c)}\). 

Fix \(\bm\upsilon=(\upsilon_{1,\ell_1},\ldots,\upsilon_{\tilde d,\ell_{\tilde d}})\in\varUpsilon\) and
\begin{equation*}
\tilde{\x}=(\x^{(c)},\y)
\in
\bigl(\mathcal X_c\times I(\bm\upsilon)^\circ\bigr)
\cap\widetilde{\mathcal X}_{\epsilon}^{\circ}.    
\end{equation*}
Write \(\mathbf t=(\mathbf t^{(c)},\mathbf t^{(d)})\in\mathbb R^{d - \tilde d}\times\mathbb R^{\tilde d}\), and define
\begin{equation*}
\mathcal A_{\bm\upsilon}(\tilde{\x})
:=
\left\{
\mathbf t:
\y+w_x\mathbf t^{(d)}\in I(\bm\upsilon)
\right\}.    
\end{equation*}
If $
\tilde{\x}+w_x\mathbf t\in\mathcal X_c\times\mathcal Y$ and $\mathbf t\notin\mathcal A_{\bm\upsilon}(\tilde{\x})$, then the continualized component \(\y+w_x\mathbf t^{(d)}\) lies in a block different from \(I(\bm\upsilon)\). Hence at least one internal partition interface must be crossed. By the definition of \(\widetilde{\mathcal X}_{\epsilon}^{\circ}\), this implies
\begin{equation*}
\max_{j\in[\tilde d]}|t_j^{(d)}|
\ge
\frac{\epsilon}{w_x}.
\end{equation*}
Consequently, defining $\mathcal{S}(\tilde{\x}):=\{\mathbf t:
\tilde{\x}+w_x\mathbf t\in\mathcal X_c\times\mathcal Y\} $, we obtain
\begin{equation*}
 \mathcal{S}(\tilde{\x}) \cap\mathcal A^c_{\bm\upsilon}(\tilde{\x})
\subset
\left\{
\mathbf t:
\max_{j\in[\tilde d]}|t_j^{(d)}|
\ge\frac{\epsilon}{w_x}
\right\}.
\end{equation*}
Then, for $\bm\mu = \bm\mu(z,\tilde{\x},w_z,w_x)$ and $\bm\varXi = \bm\varXi(\tilde{\x},w_x)$, we have decompositions of the integrals over $\mathcal{S}(\tilde{\x}) \cap\mathcal A_{\bm\upsilon}(\tilde{\x})$ and $\mathcal{S}(\tilde{\x}) \cap\mathcal A^c_{\bm\upsilon}(\tilde{\x})$ as follows:
\begin{equation*}
    \bm\mu = \bm\mu_{in} + \bm\mu_{out}, \qquad \bm\varXi = \bm\varXi_{in} + \bm\varXi_{out},
\end{equation*} where the subscripts "in" and "out" denote the integrals over the sets $\mathcal{S}(\tilde{\x}) \cap\mathcal A_{\bm\upsilon}(\tilde{\x})$ and $\mathcal{S}(\tilde{\x}) \cap\mathcal A^c_{\bm\upsilon}(\tilde{\x})$, respectively.

On $\mathcal{S}(\tilde{\x}) \cap\mathcal A_{\bm\upsilon}(\tilde{\x})$, both $\y$ and $\y+w_x\mathbf t^{(d)}$ belong to the same continualization block. By Corollary~\ref{cor:Cr_continualization} and Theorem~\ref{thm:conditional_distribution},

\begin{equation*}
f(\x^{(c)},\y)
=
\frac{f(\x^{(c)},\bm\upsilon)}{|I(\bm\upsilon)|},
\qquad
F(z\mid\x^{(c)},\y)
=
F(z\mid\x^{(c)},\bm\upsilon),
\end{equation*} throughout this block. Hence \(f\) and \(F\) have the required \(C^2\) regularity there, and all derivatives of \(F\) with respect to \(\y\) vanish. Therefore, for $\mathbf t\in\mathcal{S}(\tilde{\x}) \cap\mathcal A_{\bm\upsilon}(\tilde{\x})$, the covariate Taylor expansion in Eq.~\eqref{eq:Taylor_expansion} used in the proof of Lemma~\ref{lem:simple_bias} remains valid:
\begin{equation*}
F(z\mid\tilde{\x}+w_x\mathbf t)
=
F(z\mid\tilde{\x})
+
w_x\nabla_{\tilde{\x}}^\top F(z\mid\tilde{\x})\,\mathbf t
+
\frac{w_x^2}{2}
\mathbf t^\top
\nabla_{\tilde{\x}}^\top\nabla_{\tilde{\x}}
F(z\mid\tilde{\x})\,\mathbf t
+
R_x(z,\tilde{\x},\mathbf t).
\end{equation*} Thus the argument leading to Eq.~\eqref{eq:mu} applies without modification to $\bm\mu_{in}$.

It remains to control $\bm\mu_{out}$ via the contribution from the cross-block region $\left\{\mathbf t:\max_{j\in[\tilde d]}|t_j^{(d)}|\ge\frac{\epsilon}{w_x}\right\}$. Since $0\le \widetilde F\le1$, the continualized density $f$ is uniformly bounded over the finitely many blocks, and
\(\|\bm\iota(\mathbf t)\|\lesssim1+\|\mathbf t\|\), the contribution to \(\bm\mu(z,\tilde{\x},w_z,w_x)\) from observations outside the query block is bounded uniformly by
\begin{equation*}
C_f
\int_{\max_{j\in[\tilde d]}|t_j^{(d)}|
\ge \epsilon/w_x}
(1+\|\mathbf t\|)\bm\kappa(\mathbf t)\,d\mathbf t,
\end{equation*} where $C_f$ is an upper bound on the density function.
Similarly, the corresponding contribution of $\bm\varXi_{out}$ to
$\bm\varXi(\tilde{\x},w_x)$ is bounded by
\begin{equation*}
C_f
\int_{\max_{j\in[\tilde d]}|t_j^{(d)}|
\ge \epsilon/w_x}
(1+\|\mathbf t\|)^2\bm\kappa(\mathbf t)\,d\mathbf t.
\end{equation*}

Since \(\bm\kappa\) is the product Gaussian kernel and \(d\) is fixed, for \(m=1,2\),
    \begin{align*}
        \int_{\max_{j\in[\tilde d]}|t_j^{(d)}|\ge a}
        (1+\|\mathbf t\|)^m\bm\kappa(\mathbf t)\,d\mathbf t
        &\le
        2 \sum_{j=1}^{\tilde d} \mathbb E[(1 + \|\mathbf T\|^2) \mathbb I\{|T^{(d)}_j|\ge a\}] \\
        &\le 
        \frac{4\tilde{d}(d+2)}{\sqrt{2\pi}}(1+a)e^{-a^2/2},\ \forall a \ge 1,
\end{align*} where $\mathbf T=(\mathbf T^{(c)},\mathbf T^{(d)})\sim N(0,\mathbf I_d)$ and $\mathbf T^{(d)}= (T_1^{(d)},\ldots,T_{\tilde d}^{(d)})$; the first inequality follows from $2x \le 1+x^2$; the last inequality follows from $\mathbb E\!\left[V^2\mathbb I\{|V|\ge a\}\right]
=2\{a\phi(a)+1-\varPhi(a)\}$ and the Gaussian Mills bound $1-\varPhi(a)\le\phi(a)/a$.

Taking $a=\epsilon/w_x$ and using $\epsilon \ge C_\epsilon w_x \sqrt{|\log w_x|}$, we have 
\begin{align*}
\int_{\max_j|t_j^{(d)}|\ge\epsilon/w_x}
(1+\|\mathbf t\|)^m\bm\kappa(\mathbf t)\,d\mathbf t
&\le
\frac{4\tilde{d}(d+2)}{\sqrt{2\pi}}(1+C_\epsilon\sqrt{|\log w_x|})w_x^{C_\epsilon^2/2}\\
&\overset{C_\epsilon > 2}{=} o(w_x^2),
\qquad m=1,2,
\end{align*} where the first inequality follows from the fact that $g(a)=(1+a)\exp(-a^2/2)$ is monotonically decreasing for $a\ge1$. When $w_x$ is sufficiently small, $a\ge a_0:=C_\epsilon\sqrt{|\log w_x|}\ge1$, and hence $g(a)\le g(a_0)$.
Hence, the total cross-block contribution is \(o(w_x^2)\), uniformly over \(\tilde{\x}\in\widetilde{\mathcal X}_{\epsilon}^{\circ}\).

Accordingly, the expansion in Eq.~\eqref{eq:mu} applied to the within-block integral, together with the above cross-block bounds, gives
\begin{equation*}
\bm\mu
=
\bm\mu_{in}+\bm\mu_{out}
=
\bm\varXi_{in}\D_{w_x}\bm\beta^\star(z,\tilde{\x})
+O(w_x^2+w_z^2)+o(w_x^2)
=
\bm\varXi\D_{w_x}\bm\beta^\star(z,\tilde{\x})
+O(w_x^2+w_z^2),
\end{equation*}
uniformly over $z\in\mathbb R$ and $\tilde{\x}\in\mathcal X_{\epsilon}^{\circ}$, where the last equality follows from $\bm\varXi_{out}=o(w_x^2)$ and $|\mathbf{D}_{w_x}\bm\beta^\star(z,\tilde{\x})|=O(1)$ uniformly.

Moreover, since the continualized density is uniformly bounded away from zero over the finitely many blocks, the eigenvalue argument of Lemma~\ref{lem:eig_bound}, applied to the full continualized support, yields
\begin{equation*}
\sup_{\tilde{\x}\in\widetilde{\mathcal X}_{\epsilon}^{\circ}}
|\bm\varXi^{-1}(\tilde{\x},w_x)|=O(1).
\end{equation*}
Therefore,
\begin{equation*}
\sup_{\substack{z\in\mathbb R\\
\tilde{\x}\in\widetilde{\mathcal X}_{\epsilon}^{\circ}}}
\left|
\beta_0^\diamond(z,\tilde{\x},w_z,w_x)
-\beta_0^\star(z,\tilde{\x})
\right|=
O(w_x^2+w_z^2).
\end{equation*}

Finally, Lemma~\ref{lem:simple_stochastic} remains valid on
\(\mathcal X_{\epsilon}^{\circ}\), since this set is a subset of the original index set and the stochastic argument does not require differentiation across the partition interfaces. Therefore,
\begin{equation*}
\sup_{\substack{
z\in\mathbb R\\
\tilde{\x}\in\widetilde{\mathcal X}_{\epsilon}^{\circ}
}}
\bigl|
\hat F(z\mid\tilde{\x})
-
F(z\mid\tilde{\x})
\bigr|
=
O_p\!\left(
w_x^2+w_z^2+
\sqrt{\frac{|\log w_x|}{M w_x^d}}
\right).
\end{equation*}

\textbf{(ii) Discrete response after continualization.} Similarly, we need to avoid the interface points in \(\{z_0,\ldots,z_L\}\). Define
\begin{equation*}
\widetilde{\mathcal Z}_{\epsilon'}^{\circ}
:=
\left\{
\tilde z\in\widetilde{\mathcal Z}^{\circ}:
\min_{0\le i\le L}
|\tilde z-z_i|
\ge\epsilon'
\right\}.
\end{equation*}
Suppose
\begin{equation*}
\epsilon'\ge C_{\epsilon'}w_z\sqrt{|\log w_z|},
\qquad C_{\epsilon'}>2.
\end{equation*}
For any \(\tilde z\in\widetilde{\mathcal Z}_{\epsilon'}^{\circ}\), let \(I_i\) denote the continualization interval containing \(\tilde z\), and set \(a=\epsilon'/w_z\). By the definition of \(\widetilde{\mathcal Z}_{\epsilon'}^{\circ}\), whenever \(|v|\le a\), both \(\tilde z\) and \(\tilde z-w_zv\) belong to \(I_i\). By Theorem~\ref{thm:conditional_distribution}, \(F(\cdot\mid\tilde{\x})\) is affine on \(I_i\), with slope \(p_i(\tilde{\x})/|I_i|\). Hence
\[
F(\tilde z-w_zv\mid\tilde{\x})
-F(\tilde z\mid\tilde{\x})
=-w_zv\frac{p_i(\tilde{\x})}{|I_i|},
\qquad |v|\le a.
\]
By the symmetry of the Gaussian kernel,
\[
\int_{|v|\le a}
\bigl\{F(\tilde z-w_zv\mid\tilde{\x})
-F(\tilde z\mid\tilde{\x})\bigr\}
\kappa(v)\,dv =
-w_z\frac{p_i(\tilde{\x})}{|I_i|}
\int_{-a}^{a}v\kappa(v)\,dv=0.
\]
Thus, the bias from the symmetric region $|v|\le a$ vanishes exactly. Using \(0\le F\le1\) and referring to Eq.~\eqref{eq:response_direction}, we bound the remaining contribution as follows:
\[
\begin{aligned}
\left|\widetilde F(\tilde z\mid\tilde{\x})
-F(\tilde z\mid\tilde{\x})\right|
&\quad=
\left|\int_{|v|>a}
\bigl\{F(\tilde z-w_zv\mid\tilde{\x})
-F(\tilde z\mid\tilde{\x})\bigr\}
\kappa(v)\,dv\right|\\
&\quad\le
2\{1-\varPhi(a)\}
\le \frac{2}{a\sqrt{2\pi}}e^{-a^2/2}\le
\frac{2}{C_{\epsilon'}\sqrt{2\pi|\log w_z|}}
w_z^{C_{\epsilon'}^2/2}
=o(w_z^2),
\end{aligned}
\]
where the last inequality follows from
\(\epsilon'\ge C_{\epsilon'}w_z\sqrt{|\log w_z|}\) with \(C_{\epsilon'}>2\). 
The response-smoothing argument in fact holds uniformly over all continualized covariate blocks, since it only uses the piecewise-affine structure of \(F\) in the response direction and \(0\le F\le1\). Hence,
\[
\sup_{\substack{
\tilde z\in\widetilde{\mathcal Z}_{\epsilon'}^\circ\\
\tilde{\x}\in\widetilde{\mathcal X}
}}
\left|
\widetilde F(\tilde z\mid\tilde{\x})
-
F(\tilde z\mid\tilde{\x})
\right|
=o(w_z^2).
\]

Substituting the preceding response-smoothing bound into the integral expression for \(\bm\mu\), we write
\[
\begin{aligned}
\bm\mu-\bm\varXi\D_{w_x}\bm\beta^\star(\tilde z,\tilde{\x})
&=
\underbrace{
\int_{\mathcal S(\tilde{\x})}
\bm\iota(\mathbf t)\bm\kappa(\mathbf t)
f(\tilde{\x}+w_x\mathbf t)
\Bigl\{
\widetilde F(\tilde z\mid\tilde{\x}+w_x\mathbf t)
-
F(\tilde z\mid\tilde{\x}+w_x\mathbf t)
\Bigr\}\,d\mathbf t
}_{=:R_z}\\
&\quad+
\underbrace{
\int_{\mathcal S(\tilde{\x})}
\bm\iota(\mathbf t)\bm\kappa(\mathbf t)
f(\tilde{\x}+w_x\mathbf t)
\Bigl\{
F(\tilde z\mid\tilde{\x}+w_x\mathbf t)
-
\bm\iota(\mathbf t)^\top
\D_{w_x}\bm\beta^\star(\tilde z,\tilde{\x})
\Bigr\}\,d\mathbf t
}_{=:R_x}.
\end{aligned}
\]
Since \(f\) is uniformly bounded and
\(\|\bm\iota(\mathbf t)\|\lesssim1+\|\mathbf t\|\), it follows that
\[
\|R_z\|
\le
C
\int_{\mathbb R^d}
(1+\|\mathbf t\|)\bm\kappa(\mathbf t)\,d\mathbf t
\cdot o(w_z^2)
=
o(w_z^2).
\]

For \(R_x\), decompose the integral over
\(\mathcal S(\tilde{\x})\cap\mathcal A_{\bm\upsilon}(\tilde{\x})\)
and
\(\mathcal S(\tilde{\x})\cap\mathcal A_{\bm\upsilon}^c(\tilde{\x})\).
On the within-block region, the covariate Taylor expansion gives
\[
F(\tilde z\mid\tilde{\x}+w_x\mathbf t)
-
\bm\iota(\mathbf t)^\top
\D_{w_x}\bm\beta^\star(\tilde z,\tilde{\x})
=
O(w_x^2\|\mathbf t\|^2),
\]
uniformly, and therefore its contribution to \(R_x\) is \(O(w_x^2)\).
On the cross-block region, the integrand is uniformly bounded by a polynomial in \(\|\mathbf t\|\), while
\[
\mathcal S(\tilde{\x})\cap
\mathcal A_{\bm\upsilon}^c(\tilde{\x})
\subset
\left\{
\mathbf t:
\max_{j\in[\tilde d]}|t_j^{(d)}|
\ge\frac{\epsilon}{w_x}
\right\}.
\]
The Gaussian-tail bound established in part~(i) therefore gives an
\(o(w_x^2)\) contribution. Consequently,
\[
R_x=O(w_x^2)+o(w_x^2)=O(w_x^2),
\]
and hence
\[
\bm\mu-\bm\varXi\D_{w_x}\bm\beta^\star(\tilde z,\tilde{\x})
=
O(w_x^2)+o(w_z^2)
\]
uniformly over
\(\tilde z\in\widetilde{\mathcal Z}_{\epsilon'}^\circ\) and
\(\tilde{\x}\in\widetilde{\mathcal X}_{\epsilon}^\circ\).

By the population normal equation,
\[
\D_{w_x}
\bigl\{
\bm\beta^\diamond(\tilde z,\tilde{\x},w_z,w_x)
-
\bm\beta^\star(\tilde z,\tilde{\x})
\bigr\}
=
\bm\varXi^{-1}
\bigl\{
\bm\mu-
\bm\varXi\D_{w_x}\bm\beta^\star(\tilde z,\tilde{\x})
\bigr\}.
\]
Since \(\bm\varXi^{-1}\) is uniformly bounded and the first diagonal entry of \(\D_{w_x}\) equals one,
\[
\sup_{\substack{
\tilde z\in\widetilde{\mathcal Z}_{\epsilon'}^{\circ}\\
\tilde{\x}\in\widetilde{\mathcal X}_{\epsilon}^{\circ}
}}
\left|
\beta_0^\diamond(\tilde z,\tilde{\x},w_z,w_x)
-
F(\tilde z\mid\tilde{\x})
\right|
=
O(w_x^2)+o(w_z^2).
\]
Combining this oracle bias bound with the same stochastic bound as in part (i) yields
\begin{equation*}
\sup_{\substack{
\tilde z\in\widetilde{\mathcal Z}_{\epsilon'}^{\circ}\\
\tilde{\x}\in\widetilde{\mathcal X}_{\epsilon}^{\circ}
}}
\left|
\hat F(\tilde z\mid\tilde{\x})
-
F(\tilde z\mid\tilde{\x})
\right|
=
O_p\!\left(
w_x^2+w_z^2+
\sqrt{\frac{|\log w_x|}{M w_x^d}}
\right).
\end{equation*}
\end{prf}

\subsection{Proof of Corollary~\ref{cor:mass_uniform}}
\label{app:proof_mass_uniform}

\begin{prf}
Fix $i$ and $\tilde\x\in\mathcal X_\epsilon^\circ$.
By Theorem~\ref{thm:conditional_distribution}(ii),
$F(\cdot\mid\tilde\x)$ is affine on
$I_i\cap\widetilde{\mathcal Z}_{\epsilon'}^\circ$
with slope $p_i(\tilde\x)/|I_i|$. Hence
\begin{equation*}
    F(\tilde z_{i,+}^\circ\mid\tilde\x)
    -
    F(\tilde z_{i,-}^\circ\mid\tilde\x)
    =
    \frac{\tilde z_{i,+}^\circ-\tilde z_{i,-}^\circ}
    {|I_i|}p_i(\tilde\x),
\end{equation*}
giving the exact identity
\begin{equation*}
    p_i(\tilde\x)
    =
    \frac{|I_i|}
    {\tilde z_{i,+}^\circ-\tilde z_{i,-}^\circ}
    \Bigl(
        F(\tilde z_{i,+}^\circ\mid\tilde\x)
        -
        F(\tilde z_{i,-}^\circ\mid\tilde\x)
    \Bigr).
\end{equation*}
Subtracting,
\begin{equation}\label{eq:phat_decomp}
    \widehat p_i(\tilde\x)-p_i(\tilde\x)
    =
    \frac{|I_i|}
    {\tilde z_{i,+}^\circ-\tilde z_{i,-}^\circ}
    \Bigl[ (\widehat F-F)(\tilde z_{i,+}^\circ\mid\tilde\x)-(\widehat F-F)(\tilde z_{i,-}^\circ\mid\tilde\x)\Bigr].
\end{equation}
By the separation condition~\eqref{eq:separation},
$|I_i|/(\tilde z_{i,+}^\circ-\tilde z_{i,-}^\circ)
\le c^{-1}$ is uniformly bounded in $i$.

By the refined bias bound established in the proof of
Theorem~\ref{thm:int_uniform}(ii), uniformly over
\(\tilde z\in\widetilde{\mathcal Z}_{\epsilon'}^\circ\) and
\(\tilde\x\in\widetilde{\mathcal X}_\epsilon^\circ\),
\[
\bigl|\widehat F(\tilde z\mid\tilde\x)
-F(\tilde z\mid\tilde\x)\bigr|
=
O_p\!\left(
w_x^2+
\sqrt{\frac{|\log w_x|}{Mw_x^d}}
\right)
+o(w_z^2).
\]
Since
\(\tilde z_{i,\pm}^\circ\in\widetilde{\mathcal Z}_{\epsilon'}^\circ\)
and \(L\) is fixed,
\[
\max_{i\in[L],\,\tilde z_{i,\pm}^\circ}
\sup_{\tilde\x\in\mathcal X_\epsilon^\circ}
\left|
(\widehat F-F)
(\tilde z_{i,\pm}^\circ \mid\tilde\x)
\right|
=
O_p\!\left(
w_x^2+
\sqrt{\frac{|\log w_x|}{Mw_x^d}}
\right)
+o(w_z^2).
\]
Combining this bound with \eqref{eq:phat_decomp} and
\eqref{eq:separation} proves \eqref{eq:mass_uniform}.

For
\(\tilde\x\in\widetilde{\mathcal X}_\epsilon^\bullet\),
the same result follows by evaluating \(\widehat p_i\) at the
corresponding interior representative
\(\tilde\x_{\epsilon}^\circ\), since the conditional
probability masses are invariant in \(\y\) within each
continualization block.
\end{prf}

\section{Experimental settings}

\subsection{Data usage modes for downstream learning}
\label{subsec:integration_modes}

A federated synthetic dataset is most useful when it can be
combined with the original local data to support downstream
learning tasks at each client. This subsection formalizes four
data usage modes for downstream training that differ in how
synthetic and local data is weighted in the empirical loss.
These four modes are not algorithmic variants of the
synthetic data generator itself but four well-defined ways of
\emph{using} the generated data in downstream learning; they
constitute the comparison framework that
Section~\ref{sec:experiments} evaluates empirically.

Throughout this subsection, let $\mathcal B=\{\hat{\bm x}_b\}_{b=1}^B$ denote the synthetic dataset
generated by Algorithm~\ref{alg:chain_generation} and shared
across all clients, and let $\mathcal H^{(k)}$ denote
client $k$'s local dataset. We consider a generic downstream
loss $\ell$ indexed by a parameter $\theta$ (for example, a
squared-error loss for linear regression or a negative log-likelihood
loss for logistic regression), and define the four data usage 
modes as follows.

\textbf{M1: synthetic-only learning.}
Client $k$ fits its downstream model entirely on the shared
synthetic dataset:
\begin{equation}\label{eq:mode_syn_only}
    \widehat\theta^{(k)}_{\rm syn}
    =
    \arg\min_{\theta}
    \sum_{\hat{\x}_b\in\mathcal B}
    \ell(\theta;\hat{\x}_b).
\end{equation}
This mode is the natural baseline for clients with no or
extremely limited local data. It captures all the information
that the federated framework has aggregated about the joint
law, but it does not exploit any client-specific signal.

\textbf{M2: local-only learning.}
Client $k$ fits the downstream model on its local data alone:
\begin{equation}\label{eq:mode_pri_only}
    \widehat\theta^{(k)}_{\rm loc}
    =
    \arg\min_{\theta}
    \sum_{\h_i\in\mathcal H^{(k)}}
    \ell(\theta;\h_i).
\end{equation}
This is the standard non-federated baseline: no benefit is
drawn from the synthetic data, and each client's estimator is
governed by its local sample size $m_k$.

\textbf{M3: mixed-data learning.}
Client $k$ pools a subset of the synthetic observations ${\mathcal B}_{part}$ and $|{\mathcal B}_{part}| = r |\mathcal{H}^{(k)}|,r\in[0,\infty]$, whose size is controlled by $r$ relative to the local sample size, with its local dataset to form a single training set with uniform weight per sample.
\begin{equation}\label{eq:mode_mixed}
    \widehat\theta^{(k)}_{\rm mix}
    =
    \arg\min_{\theta}
    \biggl\{
    \sum_{\h_i\in\mathcal H^{(k)}}
    \ell(\theta;\h_i)
    +
    \sum_{\hat{\x}_b\in {\mathcal B}_{part}}
    \ell(\theta;\hat{\x}_b)
    \biggr\}.
\end{equation}
Each synthetic and local sample contributes equally to the empirical loss, but the number of synthetic observations is controlled. 
Note that the parameter $r = \infty$ means that all synthetic data is used.

\textbf{M4: loss-reweighted synthetic-local learning.}
Client $k$ combines synthetic and local data with a tunable
weight $\lambda\in[0,+\infty)$ on the synthetic loss:
\begin{equation}\label{eq:mode_reweight}
    \widehat\theta^{(k)}_{\rm rew}(\lambda)
    =
    \arg\min_{\theta}
    \biggl\{
    \sum_{\h_i\in\mathcal H^{(k)}}
    \!\!
    \ell(\theta;\h_i)
    +
    \lambda
    \!\!
    \sum_{\hat{\x}_b \in{\mathcal B}}
    \!\!
    \ell(\theta;\hat{\x}_b)
    \biggr\}.
\end{equation}
The weight $\lambda$ controls the relative influence of the two
data sources. The special case $\lambda=0$ recovers Mode~2, whereas $\lambda=1$ recovers Mode~3 with the full synthetic dataset.

The four modes differ in how synthetic data replaces or complements local data. Mode~1 completely replaces local data with synthetic data; Mode~2 ignores the synthetic data entirely; and Mode~3 controls the relative contribution of synthetic data through the number of synthetic observations selected by $r$ relative to $m_k$. Mode~4 controls the trade-off explicitly at the loss-function level through $\lambda$.

\subsection{Simulation data}\label{subsec:cov_ship}

The simulated covariates are generated as follows. First,
\[
X_{B,i}\sim \mathrm{Bernoulli}(\pi),\qquad \pi=0.45.
\]
Conditional on \(X_{B,i}\), the pair \((X_{G_1,i},X_{G_2,i})\) is sampled from a bivariate Gaussian distribution:
\[
(X_{G_1,i},X_{G_2,i})^\top\mid X_{B,i}=0
\sim
N\left(
\begin{pmatrix}-1.0\\0.5\end{pmatrix},
\begin{pmatrix}1.0&0.6\\0.6&1.5\end{pmatrix}
\right),
\]
\[
(X_{G_1,i},X_{G_2,i})^\top\mid X_{B,i}=1
\sim
N\left(
\begin{pmatrix}1.2\\-0.7\end{pmatrix},
\begin{pmatrix}1.6&-0.4\\-0.4&0.8\end{pmatrix}
\right).
\]
The independent continuous covariate is generated as
\[
X_{G_3,i}\sim N(0,1).
\]
The ordinal covariate $X_{O,i}\in\{1,2,3,4\}$ is obtained by discretizing the latent variable
\[
Z_i=0.65X_{B,i}+0.45X_{G_1,i}-0.35X_{G_2,i}+\varepsilon_i,
\qquad
\varepsilon_i\sim N(0,1),
\]
using cut points (-0.8,0.25,1.1):
\[
X_{O,i}=1+\sum_{c\in\{-0.8,0.25,1.1\}}\mathbf 1\{Z_i>c\}.
\]
Finally, with $X_{O,i}^c=X_{O,i}-2.5$, the Laplace covariate is generated as
\[
X_{L,i}\sim \mathrm{Laplace}(\mu_i,s_i),
\]
where $\mu_i=0.40X_{B,i}+0.25X_{G_1,i}-0.15X_{G_2,i}+0.20X_{O,i}^c$, and
$
s_i=\exp(-0.10+0.25X_{B,i}+0.08X_{O,i}^c).
$
Thus $X_B$ drives the distribution of $(X_{G_1},X_{G_2})$, the ordinal variable $X_O$ depends on $(X_B,X_{G_1},X_{G_2})$, and $X_L$ depends on $(X_B,X_{G_1},X_{G_2},X_O)$, while $X_{G_3}$ is generated independently.

\begin{table}[ht]
\centering
\caption{Chain order, task-specific parameters, and evaluation metrics.}
\label{tab:task_chain_parameters}
\begin{tabular}{p{0.26\textwidth}p{0.34\textwidth}p{0.30\textwidth}}
\specialrule{0.8pt}{0pt}{1.5pt}
\specialrule{0.8pt}{0pt}{3pt}
Component & Specification & Description \\
\specialrule{0.8pt}{3pt}{1.5pt}
\specialrule{0.8pt}{0pt}{3pt}
Covariates
& $(X_B,X_{G_1},X_{G_2},X_O,X_L,X_{G_3})$ 
& {\small Chain order used to generate the synthetic covariate table.} \\

Linear task 
& $(X_B,X_{G_1},X_{G_2},X_O,X_L,X_{G_3},Y_{\rm lin})$ 
& {\small The response $Y_{\rm lin}$ is generated after the covariates.} \\

Logistic task 
& $(X_B,X_{G_1},X_{G_2},X_O,X_L,X_{G_3},Y_{\rm log})$ 
& {\small The response $Y_{\rm log}$ is generated after the same covariates.} \\

Parameter order 
& $(\mathrm{intercept},X_B,X_{G_1},X_{G_2},X_O,X_L,$ $X_{G_3})$ 
& {\small Coefficients are aligned with this ordering.} \\

True linear-model parameter $\beta^\star$
& $(3.75,1.5,1.2,-0.8,-1.1,0.5,0.7)^\top$ 
& {\small Ground-truth coefficient vector for the linear response model.} \\

True logistic-model parameter $\beta^\star$
& $(0,1.5,1.2,-0.8,-1.1,0.5,0.7)^\top$ 
& {\small Ground-truth coefficient vector for the logistic response model.} \\

Test MSE
& Test Mean Squared Error
& {\small Reported only for the downstream regression tasks.} \\

Parameter metric 
& $\|\widehat\beta-\beta^\star\|_2$
& {\small Euclidean error of the estimated parameter vector.} \\

Accuracy metric 
& Test accuracy 
& {\small Reported only for the downstream classification tasks.} \\
\specialrule{0.8pt}{3pt}{1.5pt}
\specialrule{0.8pt}{0pt}{0pt}
\end{tabular}
\end{table}

\subsection{Real data}\label{app:realdata_setting}
\begin{table}[H]
\centering
\caption{Preprocessing and variable-type assignments for the UCI datasets.}
\label{tab:uci_preprocessing}
\resizebox{\textwidth}{!}{
\begin{tabular}{lll}
\specialrule{0.8pt}{0pt}{1.5pt}
\specialrule{0.8pt}{0pt}{3pt}
 & Bike Sharing & Adult Income \\
\specialrule{0.8pt}{3pt}{1.5pt}
\specialrule{0.8pt}{0pt}{3pt}
Response
& $Y=$\texttt{cnt}, standardized
& $Y=\mathbf{1}\{\texttt{income}>\texttt{50K}\}$ \\
Full sample size
& $17{,}379$
& $45{,}222$ \\
Train/test split
& $13{,}903/3{,}476$
& $36{,}177/9{,}045$ \\
\midrule
Variable mapping
& 
\begin{tabular}[t]{@{}l@{\;}l@{}}
$X_1$: & \texttt{temp} \\
$X_2$: & \texttt{atemp} \\
$X_3$: & \texttt{hum} \\
$X_4$: & \texttt{windspeed} \\
$X_5$: & \texttt{season} \\
$X_6$: & \texttt{yr} \\
$X_7$: & \texttt{mnth} \\
$X_8$: & \texttt{hr} \\
$X_9$: & \texttt{holiday} \\
$X_{10}$: & \texttt{weekday} \\
$X_{11}$: & \texttt{workingday} \\
$X_{12}$: & \texttt{weathersit}
\end{tabular}
&
\begin{tabular}[t]{@{}l@{\;}l@{}}
$X_1$: & \texttt{age} \\
$X_2$: & \texttt{fnlwgt} \\
$X_3$: & \texttt{capital-gain} \\
$X_4$: & \texttt{capital-loss} \\
$X_5$: & \texttt{hours-per-week} \\
$X_6$: & \texttt{education-num} \\
$X_7$: & \texttt{workclass} \\
$X_8$: & \texttt{education} \\
$X_9$: & \texttt{marital-status} \\
$X_{10}$: & \texttt{occupation} \\
$X_{11}$: & \texttt{relationship} \\
$X_{12}$: & \texttt{race} \\
$X_{13}$: & \texttt{sex} \\
$X_{14}$: & \texttt{native-country}
\end{tabular}
\\
\midrule
Continuous predictors
& $X_1,X_2,X_3,X_4$
& $X_1,X_2,X_3,X_4,X_5$ \\
Discrete predictors
& $X_5,X_6,X_7,X_8,X_9,X_{10},X_{11},X_{12}$
& $X_6,X_7,X_8,X_9,X_{10},X_{11},X_{12},X_{13},X_{14}$  \\
First chain variable
& $X_3$ \;(\texttt{hum})
& $X_2$ \;(\texttt{fnlwgt}) \\
Chain order
& $X_3,X_1,X_2,X_4,X_5,\ldots,X_{12},Y$
& $X_2,X_1,X_3,X_4,X_5,\ldots,X_{14},Y$ \\
\specialrule{0.8pt}{3pt}{1.5pt}
\specialrule{0.8pt}{0pt}{0pt}
\end{tabular}
}
\end{table}

\section{Baseline methods}
\label{app:baseline_methods}

We compare FITS with HeteroFedSyn \citep{li2026heterofedsyn} and FedTabDiff \citep{sattarov2024fedtabdiff}. Both baselines generate a shared synthetic tabular dataset $\mathcal B=\{\hat{\bm x}_b\}_{b=1}^{B}$ from the same client datasets $\{\mathcal H^{(k)}\}_{k=1}^{K}$, where $m_k=|\mathcal H^{(k)}|$ and $M=\sum_{k=1}^{K}m_k$. FT-Transformer provides an additional downstream evaluation model. The descriptions below summarize the baseline implementations used in our experiments.

\subsection{HeteroFedSyn}
\label{app:heterofedsyn}

We use the adaptive marginal-selection variant, AdaFedPrivSyn. Continuous columns are discretized into at most $100$ equal-width bins, while discrete columns retain their observed categories. For a table with $q$ columns, client $k$ computes one-way count vectors $\bm c_a^{(k)}$ and two-way count vectors $\bm c_{ab}^{(k)}$. Each attribute pair uses a shared Gaussian projection matrix $\Pi_{ab}\in\mathbb R^{c_a c_b\times 10}$, where $c_a$ is the encoded domain size of column $a$. The server aggregates locally perturbed statistics as
\[
\hat{\bm p}_a
=\frac{1}{M}\sum_{k=1}^{K}\bigl(\bm c_a^{(k)}+\bm\xi_a^{(k)}\bigr),
\qquad
\hat{\bm s}_{ab}
=\frac{1}{M}\sum_{k=1}^{K}\bigl(\Pi_{ab}^{\top}\bm c_{ab}^{(k)}+\bm\zeta_{ab}^{(k)}\bigr).
\]
Here $\bm\xi_a^{(k)}$ and $\bm\zeta_{ab}^{(k)}$ are Gaussian noise vectors. 
Dependency scores compare $\hat{\bm s}_{ab}$ with the projected product $\Pi_{ab}^{\top}(\hat{\bm p}_a\otimes\hat{\bm p}_b)$, with a correction for the aggregated noise variance. 
A pair is selected when its dependency score exceeds its expected release-noise error. 
We select at most $\lfloor\binom{q}{2}/3\rfloor$ pairs. 
After each selection, marginal consistency adjustment and $30$ iterations of the gradual update method (GUM, \cite{zhang2021privsyn}) produce a synthetic tabular dataset; projected residuals update the scores of remaining pairs involving covered attributes. 
Previously released one-way marginals are reused. 
Continuous values are recovered as bin midpoints, and discrete values are mapped back to their original categories.

We set $\epsilon=10^8$ and $\delta=10^{-5}$. Writing $L=\log(1/\delta)$, the implementation uses the zCDP budget
\[
\rho=\bigl(\sqrt{L+\epsilon}-\sqrt L\bigr)^2,
\qquad
(\rho_1,\rho_2,\rho_3)=(0.1\rho,0.1\rho,0.8\rho),
\]
for one-way statistics, projected two-way statistics, and selected two-way marginals, respectively. This deliberately weak privacy setting emphasizes synthesis utility rather than a privacy--utility trade-off.

\subsection{FedTabDiff}
\label{app:fedtabdiff}

FedTabDiff learns a denoising diffusion model on an encoded mixed-type tabular dataset. Continuous columns are transformed to approximately standard normal marginals using a training-data quantile transform with at most $1{,}000$ quantiles. Each discrete column uses a two-dimensional frozen embedding; generated embeddings are decoded to the nearest category embedding within that column. The encoded record $\bm z_0$ is perturbed according to
\[
\bm z_t=\sqrt{\bar\alpha_t}\bm z_0+\sqrt{1-\bar\alpha_t}\bm\varepsilon,
\qquad
\bm\varepsilon\sim N(\bm 0,I),
\qquad
\bar\alpha_t=\prod_{u=0}^{t}(1-\beta_u).
\]
Clients minimize the noise-prediction loss $\mathbb E\|\bm\varepsilon-\bm\varepsilon_\psi(\bm z_t,t)\|_2^2$. All clients participate in every communication round, and the server uses sample-size-weighted FedAvg:
\[
\psi^{(s+1)}=\sum_{k=1}^{K}\frac{m_k}{M}\psi_k^{(s+1)},
\] where $\psi_k^{(s+1)}$ is the $k$-th client's model at the $(s+1)$-th iteration and $\psi^{(s+1)}$ is the corresponding central model.

The noise predictor uses four hidden layers of width $1{,}024$. We use $500$ diffusion steps with a linear noise schedule from $2\times10^{-4}$ to $0.04$, and $1{,}000$ FedAvg rounds with Adam (learning rate $10^{-4}$), batch size $512$, and at most $20$ local updates per client per round. Covariates and responses are synthesized jointly.

\subsection{FT-Transformer for downstream evaluation}
\label{app:ft_transformer}

We use FT-Transformer \citep{gorishniy2021revisiting} to evaluate nonlinear regression and classification on the generated data. For each M1--M4 fit, only covariates enter the model. Continuous features are standardized using the weighted mean and standard deviation of that fit's training data, while nominal and ordinal predictors are mapped to categorical indices.

Each standardized continuous feature $x^{(c)}_j$ receives an affine token, and each discrete feature $x^{(d)}_j$ receives a learned category embedding:
\[
\bm t_j=
\begin{cases}
 x^{(c)}_j\bm a_j+\bm b_j, & x^{(c)}_j\text{ continuous},\\
 E_j[x^{(d)}_j]+\bm b_j, & x^{(d)}_j\text{ discrete},
\end{cases}
\qquad \bm t_j\in\mathbb R^{32},
\]where $\bm a_j,\ \bm b_j \in \mathbb R^{32}$ are learnable parameter vectors and $E_j$ is a learnable embedding matrix for discrete feature $j$.

The feature tokens and a learned $\texttt{[CLS]}$ token have dimension $32$ and pass through two Transformer blocks. Each block contains residual self-attention with four heads and a feed-forward network of width $64$ with ReGLU activation. Attention, feed-forward, and residual dropout are all set to zero. The final $\texttt{[CLS]}$ representation passes through LayerNorm, ReLU, and a linear head to produce a scalar output $f_\theta(\bm x)$, used for regression or as a binary-classification logit.

For a training table $\mathcal T=\{(\bm x_i,y_i,w_i)\}_{i=1}^{n}$ specified by the data usage mode, the objective is
\[
\mathcal L(\theta)=\frac{\sum_{i=1}^{n}w_i\ell(\theta;\bm x_i,y_i)}{\sum_{i=1}^{n}w_i},
\qquad
\ell(\theta;\bm x,y)=
\begin{cases}
 (f_\theta(\bm x)-y)^2, & \text{regression},\\
 \log(1+e^{f_\theta(\bm x)})-yf_\theta(\bm x), & \text{classification}.
\end{cases}
\]
Regression responses are additionally standardized using training weights, and predictions are mapped back to the evaluation scale. Binary responses retain their $0/1$ encoding; the sigmoid of the model output gives the class probability, with threshold $0.5$.

Each model uses $500$ AdamW updates with learning rate $10^{-3}$, batch size $256$, and weight decay $10^{-5}$. Minibatches are sampled with replacement with probabilities $w_i/\sum_j w_j$. Training settings and seed rules are identical across methods. We report regression test MSE on the original response scale and classification test accuracy.

\section{Supplementary experimental results}
\label{app:supp_results}

\begin{table}[ht]
\centering
\caption{Simulation: linear-regression parameter estimation error (FITS-G vs FITS-K).}
\label{tab:simulation_linear_parameter_l2_error_fixed_modes_fits_g_vs_fits_k}
\begingroup
\setlength{\tabcolsep}{4pt}
\renewcommand{\arraystretch}{1.35}
\resizebox{\textwidth}{!}{%
\begin{tabular}{@{}cc|*{10}{c}@{}}
\specialrule{0.8pt}{0pt}{1.5pt}
\specialrule{0.8pt}{0pt}{3pt}
\multirow{2}{*}{Mode} & \multirow{2}{*}{Baselines} & \multicolumn{10}{c}{Clients} \\
\cline{3-12}
 & & Client 0 (30\%) & Client 1 (20\%) & Client 2 (10\%) & Client 3 (9\%) & Client 4 (8\%) & Client 5 (7\%) & Client 6 (6\%) & Client 7 (5\%) & Client 8 (4\%) & Client 9 (1\%) \\
\specialrule{0.8pt}{3pt}{1.5pt}
\specialrule{0.8pt}{0pt}{3pt}
\multirow{2}{*}{M1} & FITS-G & \textbf{0.1584 (0.0698)} & \textbf{0.1584 (0.0698)} & \textbf{0.1584 (0.0698)} & \textbf{0.1584 (0.0698)} & \textbf{0.1584 (0.0698)} & \textbf{0.1584 (0.0698)} & \textbf{0.1584 (0.0698)} & \textbf{0.1584 (0.0698)} & \textbf{0.1584 (0.0698)} & \textbf{0.1584 (0.0698)} \\
 & FITS-K & 0.1721 (0.0580) & 0.1721 (0.0580) & 0.1721 (0.0580) & 0.1721 (0.0580) & 0.1721 (0.0580) & 0.1721 (0.0580) & 0.1721 (0.0580) & 0.1721 (0.0580) & 0.1721 (0.0580) & 0.1721 (0.0580) \\
\hline
\multirow{1}{*}{M2} & None & \textbf{0.0751 (0.0329)} & \textbf{0.0930 (0.0385)} & \textbf{0.1302 (0.0568)} & \textbf{0.1446 (0.0698)} & \textbf{0.1310 (0.0616)} & \textbf{0.1541 (0.0704)} & \textbf{0.1751 (0.0725)} & \textbf{0.2063 (0.0856)} & \textbf{0.2198 (0.1036)} & \textbf{0.4730 (0.1856)} \\
\hline
\multirow{2}{*}{\shortstack{M3 }} & FITS-G & \textbf{0.1060 (0.0435)} & \textbf{0.1225 (0.0549)} & \textbf{0.1350 (0.0622)} & \textbf{0.1406 (0.0620)} & \textbf{0.1405 (0.0638)} & \textbf{0.1413 (0.0629)} & \textbf{0.1438 (0.0654)} & \textbf{0.1467 (0.0663)} & \textbf{0.1501 (0.0669)} & \textbf{0.1545 (0.0682)} \\
 & FITS-K & 0.1140 (0.0398) & 0.1315 (0.0493) & 0.1462 (0.0526) & 0.1500 (0.0538) & 0.1511 (0.0542) & 0.1522 (0.0547) & 0.1565 (0.0560) & 0.1584 (0.0554) & 0.1635 (0.0550) & 0.1680 (0.0554) \\
\end{tabular}
}
\endgroup
\end{table}

\begin{table}[ht]
\centering
\caption{Simulation: logistic-regression parameter estimation error (FITS-G vs FITS-K).}
\label{tab:simulation_logistic_parameter_l2_error_fixed_modes_fits_g_vs_fits_k}
\begingroup
\setlength{\tabcolsep}{4pt}
\renewcommand{\arraystretch}{1.35}
\resizebox{\textwidth}{!}{%
\begin{tabular}{@{}cc|*{10}{c}@{}}
\specialrule{0.8pt}{0pt}{1.5pt}
\specialrule{0.8pt}{0pt}{3pt}
\multirow{2}{*}{Mode} & \multirow{2}{*}{Baselines} & \multicolumn{10}{c}{Clients} \\
\cline{3-12}
 & & Client 0 (30\%) & Client 1 (20\%) & Client 2 (10\%) & Client 3 (9\%) & Client 4 (8\%) & Client 5 (7\%) & Client 6 (6\%) & Client 7 (5\%) & Client 8 (4\%) & Client 9 (1\%) \\
\specialrule{0.8pt}{3pt}{1.5pt}
\specialrule{0.8pt}{0pt}{3pt}
\multirow{2}{*}{M1} & FITS-G & \textbf{0.7111 (0.1401)} & \textbf{0.7111 (0.1401)} & \textbf{0.7111 (0.1401)} & \textbf{0.7111 (0.1401)} & \textbf{0.7111 (0.1401)} & \textbf{0.7111 (0.1401)} & \textbf{0.7111 (0.1401)} & \textbf{0.7111 (0.1401)} & \textbf{0.7111 (0.1401)} & \textbf{0.7111 (0.1401)} \\
 & FITS-K & 0.7289 (0.1278) & 0.7289 (0.1278) & 0.7289 (0.1278) & 0.7289 (0.1278) & 0.7289 (0.1278) & 0.7289 (0.1278) & 0.7289 (0.1278) & 0.7289 (0.1278) & 0.7289 (0.1278) & 0.7289 (0.1278) \\
\hline
\multirow{1}{*}{M2} & None & \textbf{0.3923 (0.1852)} & \textbf{0.5028 (0.2397)} & \textbf{0.6821 (0.3112)} & \textbf{0.8635 (0.3364)} & \textbf{0.7619 (0.3377)} & \textbf{0.8787 (0.4005)} & \textbf{0.8757 (0.4659)} & \textbf{1.0606 (0.5028)} & \textbf{1.1961 (0.6224)} & \textbf{10.6726 (16.6535)} \\
\hline
\multirow{2}{*}{\shortstack{M3 }} & FITS-G & \textbf{0.5117 (0.1249)} & \textbf{0.5694 (0.1246)} & \textbf{0.6281 (0.1209)} & \textbf{0.6370 (0.1343)} & \textbf{0.6446 (0.1351)} & \textbf{0.6602 (0.1370)} & \textbf{0.6615 (0.1381)} & \textbf{0.6714 (0.1420)} & \textbf{0.6681 (0.1364)} & \textbf{0.6983 (0.1363)} \\
 & FITS-K & 0.5252 (0.1168) & 0.5823 (0.1136) & 0.6448 (0.1178) & 0.6529 (0.1194) & 0.6603 (0.1237) & 0.6752 (0.1200) & 0.6779 (0.1286) & 0.6896 (0.1315) & 0.6856 (0.1328) & 0.7172 (0.1244) \\
\end{tabular}
}
\endgroup
\end{table}

\begin{table}[ht]
\centering
\caption{Simulation: FT-Transformer classification accuracy on the test data (FITS-G vs FITS-K).}
\label{tab:simulation_ft_classification_test_accuracy_fixed_modes_fits_g_vs_fits_k}
\begingroup
\setlength{\tabcolsep}{4pt}
\renewcommand{\arraystretch}{1.35}
\resizebox{\textwidth}{!}{%
\begin{tabular}{@{}cc|*{10}{c}@{}}
\specialrule{0.8pt}{0pt}{1.5pt}
\specialrule{0.8pt}{0pt}{3pt}
\multirow{2}{*}{Mode} & \multirow{2}{*}{Baselines} & \multicolumn{10}{c}{Clients} \\
\cline{3-12}
 & & Client 0 (30\%) & Client 1 (20\%) & Client 2 (10\%) & Client 3 (9\%) & Client 4 (8\%) & Client 5 (7\%) & Client 6 (6\%) & Client 7 (5\%) & Client 8 (4\%) & Client 9 (1\%) \\
\specialrule{0.8pt}{3pt}{1.5pt}
\specialrule{0.8pt}{0pt}{3pt}
\multirow{2}{*}{M1} & FITS-G & \textbf{0.8608 (0.0079)} & \textbf{0.8608 (0.0079)} & \textbf{0.8608 (0.0079)} & \textbf{0.8608 (0.0079)} & \textbf{0.8608 (0.0079)} & \textbf{0.8608 (0.0079)} & \textbf{0.8608 (0.0079)} & \textbf{0.8608 (0.0079)} & \textbf{0.8608 (0.0079)} & \textbf{0.8608 (0.0079)} \\
 & FITS-K & 0.8603 (0.0082) & 0.8603 (0.0082) & 0.8603 (0.0082) & 0.8603 (0.0082) & 0.8603 (0.0082) & 0.8603 (0.0082) & 0.8603 (0.0082) & 0.8603 (0.0082) & 0.8603 (0.0082) & 0.8603 (0.0082) \\
\hline
\multirow{1}{*}{M2} & None & \textbf{0.8547 (0.0101)} & \textbf{0.8406 (0.0093)} & \textbf{0.8280 (0.0105)} & \textbf{0.8278 (0.0115)} & \textbf{0.8237 (0.0127)} & \textbf{0.8242 (0.0123)} & \textbf{0.8259 (0.0122)} & \textbf{0.8246 (0.0135)} & \textbf{0.8234 (0.0134)} & \textbf{0.8193 (0.0235)} \\
\hline
\multirow{2}{*}{\shortstack{M3 }} & FITS-G & 0.8692 (0.0080) & \textbf{0.8672 (0.0070)} & \textbf{0.8644 (0.0074)} & 0.8645 (0.0075) & 0.8629 (0.0082) & \textbf{0.8638 (0.0072)} & 0.8635 (0.0084) & \textbf{0.8634 (0.0081)} & \textbf{0.8617 (0.0079)} & 0.8603 (0.0081) \\
 & FITS-K & \textbf{0.8695 (0.0084)} & 0.8671 (0.0076) & 0.8638 (0.0080) & \textbf{0.8656 (0.0078)} & \textbf{0.8637 (0.0076)} & 0.8627 (0.0085) & \textbf{0.8640 (0.0077)} & 0.8633 (0.0078) & 0.8608 (0.0083) & \textbf{0.8615 (0.0080)} \\
\hline
\multirow{2}{*}{\shortstack{M4 }} & FITS-G & \textbf{0.8689 (0.0084)} & 0.8667 (0.0079) & 0.8647 (0.0085) & 0.8633 (0.0078) & 0.8634 (0.0071) & 0.8620 (0.0080) & 0.8627 (0.0083) & 0.8612 (0.0096) & 0.8606 (0.0080) & 0.8605 (0.0075) \\
\end{tabular}
}
\endgroup
\end{table}

\begin{table}[ht]
\centering
\caption{Simulation: linear-regression parameter estimation error (FITS-G: all modes).}
\label{tab:simulation_linear_parameter_l2_error_fixed_modes_fits_g_all_modes_parameters}
\begingroup
\setlength{\tabcolsep}{4pt}
\renewcommand{\arraystretch}{1.35}
\resizebox{\textwidth}{!}{%
\begin{tabular}{@{}cc|*{10}{c}@{}}
\specialrule{0.8pt}{0pt}{1.5pt}
\specialrule{0.8pt}{0pt}{3pt}
\multirow{2}{*}{Mode} & \multirow{2}{*}{Parameters} & \multicolumn{10}{c}{Clients} \\
\cline{3-12}
 & & Client 0 (30\%) & Client 1 (20\%) & Client 2 (10\%) & Client 3 (9\%) & Client 4 (8\%) & Client 5 (7\%) & Client 6 (6\%) & Client 7 (5\%) & Client 8 (4\%) & Client 9 (1\%) \\
\specialrule{0.8pt}{3pt}{1.5pt}
\specialrule{0.8pt}{0pt}{3pt}
\multirow{1}{*}{M1} & -- & 0.1584 (0.0698) & 0.1584 (0.0698) & 0.1584 (0.0698) & 0.1584 (0.0698) & 0.1584 (0.0698) & 0.1584 (0.0698) & 0.1584 (0.0698) & 0.1584 (0.0698) & 0.1584 (0.0698) & 0.1584 (0.0698) \\
\hline
\multirow{1}{*}{M2} & -- & 0.0751 (0.0329) & 0.0930 (0.0385) & 0.1302 (0.0568) & 0.1446 (0.0698) & 0.1310 (0.0616) & 0.1541 (0.0704) & 0.1751 (0.0725) & 0.2063 (0.0856) & 0.2198 (0.1036) & 0.4730 (0.1856) \\
\hline
\multirow{5}{*}{M3} & $r=0$ & 0.0751 (0.0329) & 0.0930 (0.0385) & 0.1302 (0.0568) & 0.1446 (0.0698) & 0.1310 (0.0616) & 0.1541 (0.0704) & 0.1751 (0.0725) & 0.2063 (0.0856) & 0.2198 (0.1036) & 0.4730 (0.1856) \\
 & $r=0.5$ & 0.0938 (0.0376) & 0.1082 (0.0464) & 0.1364 (0.0651) & 0.1582 (0.0761) & 0.1490 (0.0804) & 0.1616 (0.0748) & 0.1836 (0.0678) & 0.2218 (0.1041) & 0.2308 (0.0934) & 0.4392 (0.2013) \\
 & $r=1$ & 0.0999 (0.0401) & 0.1184 (0.0481) & 0.1491 (0.0679) & 0.1544 (0.0687) & 0.1642 (0.0853) & 0.1580 (0.0759) & 0.1823 (0.0736) & 0.2009 (0.1022) & 0.2129 (0.0876) & 0.3802 (0.1548) \\
 & $r=2$ & 0.1060 (0.0435) & 0.1182 (0.0559) & 0.1349 (0.0535) & 0.1424 (0.0669) & 0.1588 (0.0800) & 0.1470 (0.0642) & 0.1622 (0.0779) & 0.1769 (0.0887) & 0.1863 (0.0841) & 0.3667 (0.1730) \\
\end{tabular}
}
\endgroup
\end{table}

\begin{table}[ht]
\centering
\caption{Simulation: logistic-regression parameter estimation error (FITS-G: all modes).}
\label{tab:simulation_logistic_parameter_l2_error_fixed_modes_fits_g_all_modes_parameters}
\begingroup
\setlength{\tabcolsep}{4pt}
\renewcommand{\arraystretch}{1.35}
\resizebox{\textwidth}{!}{%
\begin{tabular}{@{}cc|*{10}{c}@{}}
\specialrule{0.8pt}{0pt}{1.5pt}
\specialrule{0.8pt}{0pt}{3pt}
\multirow{2}{*}{Mode} & \multirow{2}{*}{Parameters} & \multicolumn{10}{c}{Clients} \\
\cline{3-12}
 & & Client 0 (30\%) & Client 1 (20\%) & Client 2 (10\%) & Client 3 (9\%) & Client 4 (8\%) & Client 5 (7\%) & Client 6 (6\%) & Client 7 (5\%) & Client 8 (4\%) & Client 9 (1\%) \\
\specialrule{0.8pt}{3pt}{1.5pt}
\specialrule{0.8pt}{0pt}{3pt}
\multirow{1}{*}{M1} & -- & 0.7111 (0.1401) & 0.7111 (0.1401) & 0.7111 (0.1401) & 0.7111 (0.1401) & 0.7111 (0.1401) & 0.7111 (0.1401) & 0.7111 (0.1401) & 0.7111 (0.1401) & 0.7111 (0.1401) & 0.7111 (0.1401) \\
\hline
\multirow{1}{*}{M2} & -- & 0.3923 (0.1852) & 0.5028 (0.2397) & 0.6821 (0.3112) & 0.8635 (0.3364) & 0.7619 (0.3377) & 0.8787 (0.4005) & 0.8757 (0.4659) & 1.0606 (0.5028) & 1.1961 (0.6224) & 10.6726 (16.6535) \\
\hline
\multirow{5}{*}{M3} & $r=0$ & 0.3923 (0.1852) & 0.5028 (0.2397) & 0.6821 (0.3112) & 0.8635 (0.3364) & 0.7619 (0.3377) & 0.8787 (0.4005) & 0.8757 (0.4659) & 1.0606 (0.5028) & 1.1961 (0.6224) & 10.6726 (16.6535) \\
 & $r=0.5$ & 0.4041 (0.1665) & 0.4679 (0.1653) & 0.5697 (0.2247) & 0.6907 (0.2742) & 0.6489 (0.2302) & 0.6532 (0.2362) & 0.7458 (0.3929) & 0.8008 (0.3280) & 0.8787 (0.3414) & 3.0192 (3.4075) \\
 & $r=1$ & 0.4546 (0.1308) & 0.4850 (0.1498) & 0.5876 (0.2121) & 0.6201 (0.2102) & 0.6232 (0.2326) & 0.6382 (0.2552) & 0.6440 (0.3432) & 0.7573 (0.2600) & 0.7669 (0.2690) & 1.7311 (0.8331) \\
 & $r=2$ & 0.5117 (0.1249) & 0.5510 (0.1356) & 0.5632 (0.1836) & 0.6277 (0.2068) & 0.6002 (0.2122) & 0.6313 (0.2036) & 0.6204 (0.2093) & 0.6557 (0.2232) & 0.7169 (0.2454) & 1.2980 (0.5683) \\
\end{tabular}
}
\endgroup
\end{table}

\begin{table}[ht]
\centering
\caption{Simulation: FT-Transformer classification accuracy on the test data (FITS-G: all modes).}
\label{tab:simulation_ft_classification_test_accuracy_fixed_modes_fits_g_all_modes_parameters}
\begingroup
\setlength{\tabcolsep}{4pt}
\renewcommand{\arraystretch}{1.35}
\resizebox{\textwidth}{!}{%
\begin{tabular}{@{}cc|*{10}{c}@{}}
\specialrule{0.8pt}{0pt}{1.5pt}
\specialrule{0.8pt}{0pt}{3pt}
\multirow{2}{*}{Mode} & \multirow{2}{*}{Parameters} & \multicolumn{10}{c}{Clients} \\
\cline{3-12}
 & & Client 0 (30\%) & Client 1 (20\%) & Client 2 (10\%) & Client 3 (9\%) & Client 4 (8\%) & Client 5 (7\%) & Client 6 (6\%) & Client 7 (5\%) & Client 8 (4\%) & Client 9 (1\%) \\
\specialrule{0.8pt}{3pt}{1.5pt}
\specialrule{0.8pt}{0pt}{3pt}
\multirow{1}{*}{M1} & -- & 0.8608 (0.0079) & 0.8608 (0.0079) & 0.8608 (0.0079) & 0.8608 (0.0079) & 0.8608 (0.0079) & 0.8608 (0.0079) & 0.8608 (0.0079) & 0.8608 (0.0079) & 0.8608 (0.0079) & \textbf{0.8608 (0.0079)} \\
\hline
\multirow{1}{*}{M2} & -- & 0.8547 (0.0101) & 0.8406 (0.0093) & 0.8280 (0.0105) & 0.8278 (0.0115) & 0.8237 (0.0127) & 0.8242 (0.0123) & 0.8259 (0.0122) & 0.8246 (0.0135) & 0.8234 (0.0134) & 0.8193 (0.0235) \\
\hline
\multirow{5}{*}{M3} & $r=0$ & 0.8547 (0.0101) & 0.8406 (0.0093) & 0.8280 (0.0105) & 0.8278 (0.0115) & 0.8237 (0.0127) & 0.8242 (0.0123) & 0.8259 (0.0122) & 0.8246 (0.0135) & 0.8234 (0.0134) & 0.8193 (0.0235) \\
 & $r=0.5$ & 0.8639 (0.0067) & 0.8486 (0.0097) & 0.8306 (0.0111) & 0.8251 (0.0121) & 0.8214 (0.0137) & 0.8187 (0.0139) & 0.8210 (0.0114) & 0.8208 (0.0128) & 0.8179 (0.0137) & 0.8115 (0.0229) \\
 & $r=1$ & 0.8661 (0.0070) & 0.8566 (0.0088) & 0.8323 (0.0111) & 0.8295 (0.0097) & 0.8232 (0.0111) & 0.8191 (0.0111) & 0.8206 (0.0116) & 0.8167 (0.0138) & 0.8120 (0.0160) & 0.8078 (0.0217) \\
 & $r=2$ & \textbf{0.8695 (0.0070)} & 0.8651 (0.0074) & 0.8474 (0.0097) & 0.8420 (0.0093) & 0.8391 (0.0102) & 0.8320 (0.0118) & 0.8279 (0.0123) & 0.8214 (0.0127) & 0.8110 (0.0154) & 0.8057 (0.0184) \\
\end{tabular}
}
\endgroup
\end{table}

\begin{table}[ht]
\centering
\caption{Bike: linear-regression parameter estimation error (FITS-G vs FITS-K).}
\label{tab:bike_linear_parameter_l2_error_fixed_modes_fits_g_vs_fits_k}
\begingroup
\setlength{\tabcolsep}{4pt}
\renewcommand{\arraystretch}{1.35}
\resizebox{\textwidth}{!}{%
\begin{tabular}{@{}cc|*{10}{c}@{}}
\specialrule{0.8pt}{0pt}{1.5pt}
\specialrule{0.8pt}{0pt}{3pt}
\multirow{2}{*}{Mode} & \multirow{2}{*}{Baselines} & \multicolumn{10}{c}{Clients} \\
\cline{3-12}
 & & Client 0 (30\%) & Client 1 (20\%) & Client 2 (10\%) & Client 3 (9\%) & Client 4 (8\%) & Client 5 (7\%) & Client 6 (6\%) & Client 7 (5\%) & Client 8 (4\%) & Client 9 (1\%) \\
\specialrule{0.8pt}{3pt}{1.5pt}
\specialrule{0.8pt}{0pt}{3pt}
\multirow{2}{*}{M1} & FITS-G & \textbf{0.1240 (0.0443)} & \textbf{0.1240 (0.0443)} & \textbf{0.1240 (0.0443)} & \textbf{0.1240 (0.0443)} & \textbf{0.1240 (0.0443)} & \textbf{0.1240 (0.0443)} & \textbf{0.1240 (0.0443)} & \textbf{0.1240 (0.0443)} & \textbf{0.1240 (0.0443)} & \textbf{0.1240 (0.0443)} \\
 & FITS-K & 0.1577 (0.0603) & 0.1577 (0.0603) & 0.1577 (0.0603) & 0.1577 (0.0603) & 0.1577 (0.0603) & 0.1577 (0.0603) & 0.1577 (0.0603) & 0.1577 (0.0603) & 0.1577 (0.0603) & 0.1577 (0.0603) \\
\hline
\multirow{1}{*}{M2} & None & \textbf{0.1059 (0.0711)} & \textbf{0.1397 (0.0924)} & \textbf{0.2044 (0.1343)} & \textbf{0.2043 (0.1359)} & \textbf{0.2331 (0.1749)} & \textbf{0.2088 (0.1111)} & \textbf{0.2900 (0.1840)} & \textbf{0.3137 (0.1962)} & \textbf{0.3291 (0.2270)} & \textbf{0.6422 (0.4136)} \\
\hline
\multirow{2}{*}{\shortstack{M3 }} & FITS-G & \textbf{0.0933 (0.0370)} & \textbf{0.0976 (0.0402)} & \textbf{0.1138 (0.0442)} & \textbf{0.1117 (0.0460)} & \textbf{0.1126 (0.0413)} & \textbf{0.1114 (0.0425)} & \textbf{0.1186 (0.0406)} & \textbf{0.1169 (0.0424)} & \textbf{0.1173 (0.0399)} & \textbf{0.1224 (0.0445)} \\
 & FITS-K & 0.1095 (0.0516) & 0.1169 (0.0508) & 0.1347 (0.0557) & 0.1350 (0.0556) & 0.1377 (0.0552) & 0.1373 (0.0568) & 0.1428 (0.0572) & 0.1440 (0.0595) & 0.1460 (0.0563) & 0.1551 (0.0602) \\
\end{tabular}
}
\endgroup
\end{table}

\begin{table}[ht]
\centering
\caption{Bike: linear-regression test MSE (FITS-G vs FITS-K).}
\label{tab:bike_linear_test_mse_fixed_modes_fits_g_vs_fits_k}
\begingroup
\setlength{\tabcolsep}{4pt}
\renewcommand{\arraystretch}{1.35}
\resizebox{\textwidth}{!}{%
\begin{tabular}{@{}cc|*{10}{c}@{}}
\specialrule{0.8pt}{0pt}{1.5pt}
\specialrule{0.8pt}{0pt}{3pt}
\multirow{2}{*}{Mode} & \multirow{2}{*}{Baselines} & \multicolumn{10}{c}{Clients} \\
\cline{3-12}
 & & Client 0 (30\%) & Client 1 (20\%) & Client 2 (10\%) & Client 3 (9\%) & Client 4 (8\%) & Client 5 (7\%) & Client 6 (6\%) & Client 7 (5\%) & Client 8 (4\%) & Client 9 (1\%) \\
\specialrule{0.8pt}{3pt}{1.5pt}
\specialrule{0.8pt}{0pt}{3pt}
\multirow{2}{*}{M1} & FITS-G & \textbf{0.6117 (0.0195)} & \textbf{0.6117 (0.0195)} & \textbf{0.6117 (0.0195)} & \textbf{0.6117 (0.0195)} & \textbf{0.6117 (0.0195)} & \textbf{0.6117 (0.0195)} & \textbf{0.6117 (0.0195)} & \textbf{0.6117 (0.0195)} & \textbf{0.6117 (0.0195)} & \textbf{0.6117 (0.0195)} \\
 & FITS-K & 0.6187 (0.0184) & 0.6187 (0.0184) & 0.6187 (0.0184) & 0.6187 (0.0184) & 0.6187 (0.0184) & 0.6187 (0.0184) & 0.6187 (0.0184) & 0.6187 (0.0184) & 0.6187 (0.0184) & 0.6187 (0.0184) \\
\hline
\multirow{1}{*}{M2} & None & \textbf{0.6099 (0.0189)} & \textbf{0.6106 (0.0189)} & \textbf{0.6135 (0.0193)} & \textbf{0.6140 (0.0190)} & \textbf{0.6157 (0.0189)} & \textbf{0.6152 (0.0196)} & \textbf{0.6172 (0.0198)} & \textbf{0.6192 (0.0200)} & \textbf{0.6216 (0.0209)} & \textbf{0.6582 (0.0274)} \\
\hline
\multirow{2}{*}{\shortstack{M3 }} & FITS-G & \textbf{0.6098 (0.0192)} & \textbf{0.6101 (0.0192)} & \textbf{0.6109 (0.0194)} & \textbf{0.6107 (0.0195)} & \textbf{0.6110 (0.0194)} & \textbf{0.6110 (0.0194)} & \textbf{0.6111 (0.0194)} & \textbf{0.6111 (0.0195)} & \textbf{0.6113 (0.0194)} & \textbf{0.6116 (0.0194)} \\
 & FITS-K & 0.6119 (0.0185) & 0.6130 (0.0186) & 0.6151 (0.0184) & 0.6153 (0.0186) & 0.6158 (0.0184) & 0.6158 (0.0185) & 0.6162 (0.0185) & 0.6166 (0.0185) & 0.6169 (0.0184) & 0.6182 (0.0185) \\
\end{tabular}
}
\endgroup
\end{table}

\begin{table}[ht]
\centering
\caption{Bike: FT-Transformer regression MSE on the test data (FITS-G vs FITS-K).}
\label{tab:bike_ft_regression_test_mse_fixed_modes_fits_g_vs_fits_k}
\begingroup
\setlength{\tabcolsep}{4pt}
\renewcommand{\arraystretch}{1.35}
\resizebox{\textwidth}{!}{%
\begin{tabular}{@{}cc|*{10}{c}@{}}
\specialrule{0.8pt}{0pt}{1.5pt}
\specialrule{0.8pt}{0pt}{3pt}
\multirow{2}{*}{Mode} & \multirow{2}{*}{Baselines} & \multicolumn{10}{c}{Clients} \\
\cline{3-12}
 & & Client 0 (30\%) & Client 1 (20\%) & Client 2 (10\%) & Client 3 (9\%) & Client 4 (8\%) & Client 5 (7\%) & Client 6 (6\%) & Client 7 (5\%) & Client 8 (4\%) & Client 9 (1\%) \\
\specialrule{0.8pt}{3pt}{1.5pt}
\specialrule{0.8pt}{0pt}{3pt}
\multirow{2}{*}{M1} & FITS-G & \textbf{0.2348 (0.0209)} & \textbf{0.2348 (0.0209)} & \textbf{0.2348 (0.0209)} & \textbf{0.2348 (0.0209)} & \textbf{0.2348 (0.0209)} & \textbf{0.2348 (0.0209)} & \textbf{0.2348 (0.0209)} & \textbf{0.2348 (0.0209)} & \textbf{0.2348 (0.0209)} & \textbf{0.2348 (0.0209)} \\
 & FITS-K & 0.2826 (0.0270) & 0.2826 (0.0270) & 0.2826 (0.0270) & 0.2826 (0.0270) & 0.2826 (0.0270) & 0.2826 (0.0270) & 0.2826 (0.0270) & 0.2826 (0.0270) & 0.2826 (0.0270) & 0.2826 (0.0270) \\
\hline
\multirow{1}{*}{M2} & None & \textbf{0.0687 (0.0060)} & \textbf{0.0727 (0.0041)} & \textbf{0.0896 (0.0065)} & \textbf{0.0944 (0.0056)} & \textbf{0.0990 (0.0070)} & \textbf{0.1079 (0.0079)} & \textbf{0.1154 (0.0107)} & \textbf{0.1262 (0.0142)} & \textbf{0.1522 (0.0192)} & \textbf{0.4313 (0.0741)} \\
\hline
\multirow{2}{*}{\shortstack{M3 }} & FITS-G & \textbf{0.1150 (0.0122)} & \textbf{0.1324 (0.0143)} & \textbf{0.1636 (0.0177)} & \textbf{0.1697 (0.0191)} & \textbf{0.1732 (0.0152)} & \textbf{0.1790 (0.0210)} & \textbf{0.1840 (0.0212)} & \textbf{0.1908 (0.0158)} & \textbf{0.1984 (0.0210)} & \textbf{0.2213 (0.0199)} \\
 & FITS-K & 0.1253 (0.0128) & 0.1524 (0.0140) & 0.1911 (0.0196) & 0.1988 (0.0215) & 0.2061 (0.0163) & 0.2148 (0.0225) & 0.2220 (0.0261) & 0.2336 (0.0237) & 0.2414 (0.0241) & 0.2753 (0.0256) \\
\end{tabular}
}
\endgroup
\end{table}

\begin{table}[ht]
\centering
\caption{Bike: linear-regression parameter estimation error (FITS-G: all modes).}
\label{tab:bike_linear_parameter_l2_error_fixed_modes_fits_g_all_modes_parameters}
\begingroup
\setlength{\tabcolsep}{4pt}
\renewcommand{\arraystretch}{1.35}
\resizebox{\textwidth}{!}{%
\begin{tabular}{@{}cc|*{10}{c}@{}}
\specialrule{0.8pt}{0pt}{1.5pt}
\specialrule{0.8pt}{0pt}{3pt}
\multirow{2}{*}{Mode} & \multirow{2}{*}{Parameters} & \multicolumn{10}{c}{Clients} \\
\cline{3-12}
 & & Client 0 (30\%) & Client 1 (20\%) & Client 2 (10\%) & Client 3 (9\%) & Client 4 (8\%) & Client 5 (7\%) & Client 6 (6\%) & Client 7 (5\%) & Client 8 (4\%) & Client 9 (1\%) \\
\specialrule{0.8pt}{3pt}{1.5pt}
\specialrule{0.8pt}{0pt}{3pt}
\multirow{1}{*}{M1} & -- & 0.1240 (0.0443) & 0.1240 (0.0443) & 0.1240 (0.0443) & 0.1240 (0.0443) & 0.1240 (0.0443) & 0.1240 (0.0443) & 0.1240 (0.0443) & 0.1240 (0.0443) & 0.1240 (0.0443) & 0.1240 (0.0443) \\
\hline
\multirow{1}{*}{M2} & -- & 0.1059 (0.0711) & 0.1397 (0.0924) & 0.2044 (0.1343) & 0.2043 (0.1359) & 0.2331 (0.1749) & 0.2088 (0.1111) & 0.2900 (0.1840) & 0.3137 (0.1962) & 0.3291 (0.2270) & 0.6422 (0.4136) \\
\hline
\multirow{5}{*}{M3} & $r=0$ & 0.1059 (0.0711) & 0.1397 (0.0924) & 0.2044 (0.1343) & 0.2043 (0.1359) & 0.2331 (0.1749) & 0.2088 (0.1111) & 0.2900 (0.1840) & 0.3137 (0.1962) & 0.3291 (0.2270) & 0.6422 (0.4136) \\
 & $r=0.5$ & 0.0872 (0.0571) & 0.1019 (0.0543) & 0.1626 (0.0885) & 0.1498 (0.0676) & 0.1713 (0.1141) & 0.1391 (0.0711) & 0.1881 (0.1157) & 0.2089 (0.1124) & 0.2287 (0.1309) & 0.4503 (0.2013) \\
 & $r=1$ & 0.0871 (0.0389) & 0.0983 (0.0587) & 0.1493 (0.0802) & 0.1302 (0.0707) & 0.1179 (0.0663) & 0.1328 (0.0521) & 0.1719 (0.0813) & 0.1727 (0.0921) & 0.1821 (0.0818) & 0.3670 (0.1706) \\
 & $r=2$ & 0.0933 (0.0370) & 0.0976 (0.0402) & 0.1145 (0.0557) & 0.1311 (0.0733) & 0.1253 (0.0492) & 0.1084 (0.0497) & 0.1317 (0.0578) & 0.1241 (0.0601) & 0.1443 (0.0692) & 0.2833 (0.1474) \\
\end{tabular}
}
\endgroup
\end{table}

\begin{table}[ht]
\centering
\caption{Bike: linear-regression test MSE (FITS-G: all modes).}
\label{tab:bike_linear_test_mse_fixed_modes_fits_g_all_modes_parameters}
\begingroup
\setlength{\tabcolsep}{4pt}
\renewcommand{\arraystretch}{1.35}
\resizebox{\textwidth}{!}{%
\begin{tabular}{@{}cc|*{10}{c}@{}}
\specialrule{0.8pt}{0pt}{1.5pt}
\specialrule{0.8pt}{0pt}{3pt}
\multirow{2}{*}{Mode} & \multirow{2}{*}{Parameters} & \multicolumn{10}{c}{Clients} \\
\cline{3-12}
 & & Client 0 (30\%) & Client 1 (20\%) & Client 2 (10\%) & Client 3 (9\%) & Client 4 (8\%) & Client 5 (7\%) & Client 6 (6\%) & Client 7 (5\%) & Client 8 (4\%) & Client 9 (1\%) \\
\specialrule{0.8pt}{3pt}{1.5pt}
\specialrule{0.8pt}{0pt}{3pt}
\multirow{1}{*}{M1} & -- & 0.6117 (0.0195) & 0.6117 (0.0195) & 0.6117 (0.0195) & 0.6117 (0.0195) & 0.6117 (0.0195) & 0.6117 (0.0195) & 0.6117 (0.0195) & 0.6117 (0.0195) & 0.6117 (0.0195) & 0.6117 (0.0195) \\
\hline
\multirow{1}{*}{M2} & -- & 0.6099 (0.0189) & 0.6106 (0.0189) & 0.6135 (0.0193) & 0.6140 (0.0190) & 0.6157 (0.0189) & 0.6152 (0.0196) & 0.6172 (0.0198) & 0.6192 (0.0200) & 0.6216 (0.0209) & 0.6582 (0.0274) \\
\hline
\multirow{5}{*}{M3} & $r=0$ & 0.6099 (0.0189) & 0.6106 (0.0189) & 0.6135 (0.0193) & 0.6140 (0.0190) & 0.6157 (0.0189) & 0.6152 (0.0196) & 0.6172 (0.0198) & 0.6192 (0.0200) & 0.6216 (0.0209) & 0.6582 (0.0274) \\
 & $r=0.5$ & \textbf{0.6095 (0.0191)} & 0.6102 (0.0193) & 0.6125 (0.0195) & 0.6123 (0.0196) & 0.6133 (0.0190) & 0.6134 (0.0192) & 0.6144 (0.0196) & 0.6156 (0.0191) & 0.6176 (0.0194) & 0.6388 (0.0234) \\
 & $r=1$ & 0.6097 (0.0192) & 0.6102 (0.0189) & 0.6116 (0.0195) & 0.6113 (0.0196) & 0.6123 (0.0193) & 0.6127 (0.0197) & 0.6130 (0.0188) & 0.6135 (0.0194) & 0.6154 (0.0201) & 0.6299 (0.0203) \\
 & $r=2$ & 0.6098 (0.0192) & 0.6101 (0.0192) & 0.6112 (0.0192) & 0.6112 (0.0195) & 0.6116 (0.0192) & 0.6117 (0.0192) & 0.6122 (0.0191) & 0.6120 (0.0195) & 0.6130 (0.0195) & 0.6245 (0.0198) \\
\end{tabular}
}
\endgroup
\end{table}

\begin{table}[ht]
\centering
\caption{Bike: FT-Transformer regression test MSE (FITS-G: all modes).}
\label{tab:bike_ft_regression_test_mse_fixed_modes_fits_g_all_modes_parameters}
\begingroup
\setlength{\tabcolsep}{4pt}
\renewcommand{\arraystretch}{1.35}
\resizebox{\textwidth}{!}{%
\begin{tabular}{@{}cc|*{10}{c}@{}}
\specialrule{0.8pt}{0pt}{1.5pt}
\specialrule{0.8pt}{0pt}{3pt}
\multirow{2}{*}{Mode} & \multirow{2}{*}{Parameters} & \multicolumn{10}{c}{Clients} \\
\cline{3-12}
 & & Client 0 (30\%) & Client 1 (20\%) & Client 2 (10\%) & Client 3 (9\%) & Client 4 (8\%) & Client 5 (7\%) & Client 6 (6\%) & Client 7 (5\%) & Client 8 (4\%) & Client 9 (1\%) \\
\specialrule{0.8pt}{3pt}{1.5pt}
\specialrule{0.8pt}{0pt}{3pt}
\multirow{1}{*}{M1} & -- & 0.2348 (0.0209) & 0.2348 (0.0209) & 0.2348 (0.0209) & 0.2348 (0.0209) & 0.2348 (0.0209) & 0.2348 (0.0209) & 0.2348 (0.0209) & 0.2348 (0.0209) & 0.2348 (0.0209) & 0.2348 (0.0209) \\
\hline
\multirow{1}{*}{M2} & -- & 0.0687 (0.0060) & \textbf{0.0727 (0.0041)} & 0.0896 (0.0065) & 0.0944 (0.0056) & 0.0990 (0.0070) & 0.1079 (0.0079) & 0.1154 (0.0107) & 0.1262 (0.0142) & 0.1522 (0.0192) & 0.4313 (0.0741) \\
\hline
\multirow{5}{*}{M3} & $r=0$ & 0.0687 (0.0060) & \textbf{0.0727 (0.0041)} & 0.0896 (0.0065) & 0.0944 (0.0056) & 0.0990 (0.0070) & 0.1079 (0.0079) & 0.1154 (0.0107) & 0.1262 (0.0142) & 0.1522 (0.0192) & 0.4313 (0.0741) \\
 & $r=0.5$ & 0.0935 (0.0091) & 0.1047 (0.0092) & 0.1467 (0.0123) & 0.1597 (0.0171) & 0.1661 (0.0204) & 0.1760 (0.0186) & 0.1970 (0.0202) & 0.2112 (0.0231) & 0.2334 (0.0240) & 0.4408 (0.0804) \\
 & $r=1$ & 0.1077 (0.0108) & 0.1215 (0.0133) & 0.1578 (0.0167) & 0.1751 (0.0173) & 0.1848 (0.0244) & 0.1994 (0.0196) & 0.2175 (0.0233) & 0.2352 (0.0261) & 0.2596 (0.0254) & 0.4393 (0.0654) \\
 & $r=2$ & 0.1119 (0.0112) & 0.1299 (0.0109) & 0.1649 (0.0174) & 0.1711 (0.0170) & 0.1769 (0.0202) & 0.1946 (0.0201) & 0.2165 (0.0253) & 0.2418 (0.0224) & 0.2722 (0.0296) & 0.4420 (0.0555) \\
\end{tabular}
}
\endgroup
\end{table}

\begin{table}[ht]
\centering
\caption{Adult: logistic-regression parameter estimation error (FITS-G vs FITS-K).}
\label{tab:adult_logistic_parameter_l2_error_fixed_modes_fits_g_vs_fits_k}
\begingroup
\setlength{\tabcolsep}{4pt}
\renewcommand{\arraystretch}{1.35}
\resizebox{\textwidth}{!}{%
\begin{tabular}{@{}cc|*{10}{c}@{}}
\specialrule{0.8pt}{0pt}{1.5pt}
\specialrule{0.8pt}{0pt}{3pt}
\multirow{2}{*}{Mode} & \multirow{2}{*}{Baselines} & \multicolumn{10}{c}{Clients} \\
\cline{3-12}
 & & Client 0 (30\%) & Client 1 (20\%) & Client 2 (10\%) & Client 3 (9\%) & Client 4 (8\%) & Client 5 (7\%) & Client 6 (6\%) & Client 7 (5\%) & Client 8 (4\%) & Client 9 (1\%) \\
\specialrule{0.8pt}{3pt}{1.5pt}
\specialrule{0.8pt}{0pt}{3pt}
\multirow{2}{*}{M1} & FITS-G & \textbf{1.0930 (0.1225)} & \textbf{1.0930 (0.1225)} & \textbf{1.0930 (0.1225)} & \textbf{1.0930 (0.1225)} & \textbf{1.0930 (0.1225)} & \textbf{1.0930 (0.1225)} & \textbf{1.0930 (0.1225)} & \textbf{1.0930 (0.1225)} & \textbf{1.0930 (0.1225)} & \textbf{1.0930 (0.1225)} \\
 & FITS-K & 1.2821 (0.0896) & 1.2821 (0.0896) & 1.2821 (0.0896) & 1.2821 (0.0896) & 1.2821 (0.0896) & 1.2821 (0.0896) & 1.2821 (0.0896) & 1.2821 (0.0896) & 1.2821 (0.0896) & 1.2821 (0.0896) \\
\hline
\multirow{1}{*}{M2} & None & \textbf{0.2712 (0.1082)} & \textbf{0.2810 (0.1239)} & \textbf{0.4893 (0.2531)} & \textbf{0.5538 (0.2660)} & \textbf{0.5820 (0.2656)} & \textbf{0.5930 (0.2823)} & \textbf{0.6766 (0.2996)} & \textbf{0.7805 (0.3195)} & \textbf{0.7778 (0.4350)} & \textbf{1.6560 (0.8013)} \\
\hline
\multirow{2}{*}{\shortstack{M3 }} & FITS-G & \textbf{0.7996 (0.1013)} & \textbf{0.8844 (0.1078)} & \textbf{0.9800 (0.1166)} & \textbf{0.9865 (0.1158)} & \textbf{0.9973 (0.1187)} & \textbf{1.0069 (0.1231)} & \textbf{1.0121 (0.1186)} & \textbf{1.0249 (0.1155)} & \textbf{1.0422 (0.1198)} & \textbf{1.0822 (0.1207)} \\
 & FITS-K & 0.9506 (0.0771) & 1.0472 (0.0796) & 1.1552 (0.0915) & 1.1634 (0.0879) & 1.1752 (0.0891) & 1.1857 (0.0927) & 1.1962 (0.0903) & 1.2082 (0.0856) & 1.2248 (0.0852) & 1.2697 (0.0898) \\
\end{tabular}
}
\endgroup
\end{table}

\begin{table}[ht]
\centering
\caption{Adult: logistic-regression test accuracy (FITS-G vs FITS-K).}
\label{tab:adult_logistic_test_accuracy_fixed_modes_fits_g_vs_fits_k}
\begingroup
\setlength{\tabcolsep}{4pt}
\renewcommand{\arraystretch}{1.35}
\resizebox{\textwidth}{!}{%
\begin{tabular}{@{}cc|*{10}{c}@{}}
\specialrule{0.8pt}{0pt}{1.5pt}
\specialrule{0.8pt}{0pt}{3pt}
\multirow{2}{*}{Mode} & \multirow{2}{*}{Baselines} & \multicolumn{10}{c}{Clients} \\
\cline{3-12}
 & & Client 0 (30\%) & Client 1 (20\%) & Client 2 (10\%) & Client 3 (9\%) & Client 4 (8\%) & Client 5 (7\%) & Client 6 (6\%) & Client 7 (5\%) & Client 8 (4\%) & Client 9 (1\%) \\
\specialrule{0.8pt}{3pt}{1.5pt}
\specialrule{0.8pt}{0pt}{3pt}
\multirow{2}{*}{M1} & FITS-G & \textbf{0.8187 (0.0030)} & \textbf{0.8187 (0.0030)} & \textbf{0.8187 (0.0030)} & \textbf{0.8187 (0.0030)} & \textbf{0.8187 (0.0030)} & \textbf{0.8187 (0.0030)} & \textbf{0.8187 (0.0030)} & \textbf{0.8187 (0.0030)} & \textbf{0.8187 (0.0030)} & \textbf{0.8187 (0.0030)} \\
 & FITS-K & 0.8174 (0.0030) & 0.8174 (0.0030) & 0.8174 (0.0030) & 0.8174 (0.0030) & 0.8174 (0.0030) & 0.8174 (0.0030) & 0.8174 (0.0030) & 0.8174 (0.0030) & 0.8174 (0.0030) & 0.8174 (0.0030) \\
\hline
\multirow{1}{*}{M2} & None & \textbf{0.8204 (0.0031)} & \textbf{0.8207 (0.0032)} & \textbf{0.8204 (0.0036)} & \textbf{0.8199 (0.0034)} & \textbf{0.8202 (0.0028)} & \textbf{0.8194 (0.0038)} & \textbf{0.8196 (0.0034)} & \textbf{0.8195 (0.0033)} & \textbf{0.8189 (0.0038)} & \textbf{0.8131 (0.0054)} \\
\hline
\multirow{2}{*}{\shortstack{M3 }} & FITS-G & \textbf{0.8201 (0.0029)} & \textbf{0.8198 (0.0029)} & \textbf{0.8195 (0.0032)} & \textbf{0.8193 (0.0030)} & \textbf{0.8193 (0.0030)} & \textbf{0.8192 (0.0029)} & \textbf{0.8192 (0.0029)} & \textbf{0.8190 (0.0030)} & \textbf{0.8190 (0.0030)} & \textbf{0.8188 (0.0030)} \\
 & FITS-K & 0.8195 (0.0031) & 0.8191 (0.0032) & 0.8184 (0.0032) & 0.8185 (0.0032) & 0.8181 (0.0030) & 0.8182 (0.0030) & 0.8182 (0.0030) & 0.8180 (0.0030) & 0.8178 (0.0030) & 0.8176 (0.0030) \\
\end{tabular}
}
\endgroup
\end{table}

\begin{table}[ht]
\centering
\caption{Adult: FT-Transformer classification test accuracy (FITS-G vs FITS-K).}
\label{tab:adult_ft_classification_test_accuracy_fixed_modes_fits_g_vs_fits_k}
\begingroup
\setlength{\tabcolsep}{4pt}
\renewcommand{\arraystretch}{1.35}
\resizebox{\textwidth}{!}{%
\begin{tabular}{@{}cc|*{10}{c}@{}}
\specialrule{0.8pt}{0pt}{1.5pt}
\specialrule{0.8pt}{0pt}{3pt}
\multirow{2}{*}{Mode} & \multirow{2}{*}{Baselines} & \multicolumn{10}{c}{Clients} \\
\cline{3-12}
 & & Client 0 (30\%) & Client 1 (20\%) & Client 2 (10\%) & Client 3 (9\%) & Client 4 (8\%) & Client 5 (7\%) & Client 6 (6\%) & Client 7 (5\%) & Client 8 (4\%) & Client 9 (1\%) \\
\specialrule{0.8pt}{3pt}{1.5pt}
\specialrule{0.8pt}{0pt}{3pt}
\multirow{2}{*}{M1} & FITS-G & \textbf{0.8478 (0.0039)} & \textbf{0.8478 (0.0039)} & \textbf{0.8478 (0.0039)} & \textbf{0.8478 (0.0039)} & \textbf{0.8478 (0.0039)} & \textbf{0.8478 (0.0039)} & \textbf{0.8478 (0.0039)} & \textbf{0.8478 (0.0039)} & \textbf{0.8478 (0.0039)} & \textbf{0.8478 (0.0039)} \\
 & FITS-K & 0.8465 (0.0038) & 0.8465 (0.0038) & 0.8465 (0.0038) & 0.8465 (0.0038) & 0.8465 (0.0038) & 0.8465 (0.0038) & 0.8465 (0.0038) & 0.8465 (0.0038) & 0.8465 (0.0038) & 0.8465 (0.0038) \\
\hline
\multirow{1}{*}{M2} & None & \textbf{0.8513 (0.0045)} & \textbf{0.8506 (0.0040)} & \textbf{0.8394 (0.0076)} & \textbf{0.8353 (0.0086)} & \textbf{0.8304 (0.0081)} & \textbf{0.8290 (0.0079)} & \textbf{0.8231 (0.0091)} & \textbf{0.8176 (0.0068)} & \textbf{0.8098 (0.0075)} & \textbf{0.8015 (0.0102)} \\
\hline
\multirow{2}{*}{\shortstack{M3 }} & FITS-G & \textbf{0.8513 (0.0042)} & \textbf{0.8513 (0.0039)} & \textbf{0.8498 (0.0043)} & \textbf{0.8498 (0.0044)} & \textbf{0.8496 (0.0039)} & \textbf{0.8493 (0.0045)} & \textbf{0.8496 (0.0044)} & \textbf{0.8491 (0.0045)} & \textbf{0.8474 (0.0053)} & \textbf{0.8485 (0.0048)} \\
 & FITS-K & 0.8507 (0.0038) & 0.8499 (0.0048) & 0.8487 (0.0039) & 0.8484 (0.0043) & 0.8475 (0.0044) & 0.8477 (0.0049) & 0.8482 (0.0040) & 0.8476 (0.0052) & 0.8461 (0.0042) & 0.8467 (0.0042) \\
\end{tabular}
}
\endgroup
\end{table}

\begin{table}[ht]
\centering
\caption{Adult: logistic-regression parameter estimation error (FITS-G: all modes).}
\label{tab:adult_logistic_parameter_l2_error_fixed_modes_fits_g_all_modes_parameters}
\begingroup
\setlength{\tabcolsep}{4pt}
\renewcommand{\arraystretch}{1.35}
\resizebox{\textwidth}{!}{%
\begin{tabular}{@{}cc|*{10}{c}@{}}
\specialrule{0.8pt}{0pt}{1.5pt}
\specialrule{0.8pt}{0pt}{3pt}
\multirow{2}{*}{Mode} & \multirow{2}{*}{Parameters} & \multicolumn{10}{c}{Clients} \\
\cline{3-12}
 & & Client 0 (30\%) & Client 1 (20\%) & Client 2 (10\%) & Client 3 (9\%) & Client 4 (8\%) & Client 5 (7\%) & Client 6 (6\%) & Client 7 (5\%) & Client 8 (4\%) & Client 9 (1\%) \\
\specialrule{0.8pt}{3pt}{1.5pt}
\specialrule{0.8pt}{0pt}{3pt}
\multirow{1}{*}{M1} & -- & 1.0930 (0.1225) & 1.0930 (0.1225) & 1.0930 (0.1225) & 1.0930 (0.1225) & 1.0930 (0.1225) & 1.0930 (0.1225) & 1.0930 (0.1225) & 1.0930 (0.1225) & 1.0930 (0.1225) & 1.0930 (0.1225) \\
\hline
\multirow{1}{*}{M2} & -- & 0.2712 (0.1082) & 0.2810 (0.1239) & 0.4893 (0.2531) & 0.5538 (0.2660) & 0.5820 (0.2656) & 0.5930 (0.2823) & 0.6766 (0.2996) & 0.7805 (0.3195) & 0.7778 (0.4350) & 1.6560 (0.8013) \\
\hline
\multirow{5}{*}{M3} & $r=0$ & 0.2712 (0.1082) & 0.2810 (0.1239) & 0.4893 (0.2531) & 0.5538 (0.2660) & 0.5820 (0.2656) & 0.5930 (0.2823) & 0.6766 (0.2996) & 0.7805 (0.3195) & 0.7778 (0.4350) & 1.6560 (0.8013) \\
 & $r=0.5$ & 0.4937 (0.1383) & 0.5016 (0.1498) & 0.6129 (0.2177) & 0.5926 (0.2377) & 0.6516 (0.2328) & 0.6575 (0.2170) & 0.6018 (0.2645) & 0.6413 (0.2764) & 0.7271 (0.3544) & 1.3037 (0.5863) \\
 & $r=1$ & 0.6709 (0.1233) & 0.6605 (0.1319) & 0.7212 (0.2112) & 0.7154 (0.2037) & 0.7412 (0.2095) & 0.7044 (0.2303) & 0.6906 (0.2636) & 0.7160 (0.3141) & 0.8068 (0.2605) & 1.2527 (0.6042) \\
 & $r=2$ & 0.7989 (0.1008) & 0.8299 (0.1220) & 0.8310 (0.1454) & 0.8544 (0.1743) & 0.8657 (0.1873) & 0.8399 (0.1944) & 0.8511 (0.1913) & 0.8614 (0.2080) & 0.8943 (0.2216) & 1.1270 (0.4059) \\
\end{tabular}
}
\endgroup
\end{table}

\begin{table}[ht]
\centering
\caption{Adult: logistic-regression test accuracy (FITS-G: all modes).}
\label{tab:adult_logistic_test_accuracy_fixed_modes_fits_g_all_modes_parameters}
\begingroup
\setlength{\tabcolsep}{4pt}
\renewcommand{\arraystretch}{1.35}
\resizebox{\textwidth}{!}{%
\begin{tabular}{@{}cc|*{10}{c}@{}}
\specialrule{0.8pt}{0pt}{1.5pt}
\specialrule{0.8pt}{0pt}{3pt}
\multirow{2}{*}{Mode} & \multirow{2}{*}{Parameters} & \multicolumn{10}{c}{Clients} \\
\cline{3-12}
 & & Client 0 (30\%) & Client 1 (20\%) & Client 2 (10\%) & Client 3 (9\%) & Client 4 (8\%) & Client 5 (7\%) & Client 6 (6\%) & Client 7 (5\%) & Client 8 (4\%) & Client 9 (1\%) \\
\specialrule{0.8pt}{3pt}{1.5pt}
\specialrule{0.8pt}{0pt}{3pt}
\multirow{1}{*}{M1} & -- & 0.8187 (0.0030) & 0.8187 (0.0030) & 0.8187 (0.0030) & 0.8187 (0.0030) & 0.8187 (0.0030) & 0.8187 (0.0030) & 0.8187 (0.0030) & 0.8187 (0.0030) & 0.8187 (0.0030) & 0.8187 (0.0030) \\
\hline
\multirow{1}{*}{M2} & -- & 0.8204 (0.0031) & 0.8207 (0.0032) & 0.8204 (0.0036) & 0.8199 (0.0034) & 0.8202 (0.0028) & 0.8194 (0.0038) & 0.8196 (0.0034) & 0.8195 (0.0033) & 0.8189 (0.0038) & 0.8131 (0.0054) \\
\hline
\multirow{5}{*}{M3} & $r=0$ & 0.8204 (0.0031) & 0.8207 (0.0032) & 0.8204 (0.0036) & 0.8199 (0.0034) & 0.8202 (0.0028) & 0.8194 (0.0038) & 0.8196 (0.0034) & 0.8195 (0.0033) & 0.8189 (0.0038) & 0.8131 (0.0054) \\
 & $r=0.5$ & 0.8204 (0.0032) & 0.8205 (0.0031) & 0.8204 (0.0034) & 0.8202 (0.0033) & 0.8201 (0.0028) & 0.8199 (0.0036) & 0.8196 (0.0033) & 0.8196 (0.0033) & 0.8191 (0.0039) & 0.8152 (0.0046) \\
 & $r=1$ & 0.8202 (0.0030) & 0.8204 (0.0031) & 0.8202 (0.0034) & 0.8199 (0.0033) & 0.8199 (0.0030) & 0.8195 (0.0035) & 0.8198 (0.0032) & 0.8193 (0.0036) & 0.8188 (0.0039) & 0.8163 (0.0047) \\
 & $r=2$ & 0.8201 (0.0029) & 0.8199 (0.0029) & 0.8198 (0.0033) & 0.8198 (0.0029) & 0.8197 (0.0029) & 0.8196 (0.0031) & 0.8194 (0.0030) & 0.8196 (0.0034) & 0.8191 (0.0033) & 0.8177 (0.0036) \\
\end{tabular}
}
\endgroup
\end{table}

\begin{table}[ht]
\centering
\caption{Adult: FT-Transformer classification test accuracy (FITS-G: all modes).}
\label{tab:adult_ft_classification_test_accuracy_fixed_modes_fits_g_all_modes_parameters}
\begingroup
\setlength{\tabcolsep}{4pt}
\renewcommand{\arraystretch}{1.35}
\resizebox{\textwidth}{!}{%
\begin{tabular}{@{}cc|*{10}{c}@{}}
\specialrule{0.8pt}{0pt}{1.5pt}
\specialrule{0.8pt}{0pt}{3pt}
\multirow{2}{*}{Mode} & \multirow{2}{*}{Parameters} & \multicolumn{10}{c}{Clients} \\
\cline{3-12}
 & & Client 0 (30\%) & Client 1 (20\%) & Client 2 (10\%) & Client 3 (9\%) & Client 4 (8\%) & Client 5 (7\%) & Client 6 (6\%) & Client 7 (5\%) & Client 8 (4\%) & Client 9 (1\%) \\
\specialrule{0.8pt}{3pt}{1.5pt}
\specialrule{0.8pt}{0pt}{3pt}
\multirow{1}{*}{M1} & -- & 0.8478 (0.0039) & 0.8478 (0.0039) & 0.8478 (0.0039) & 0.8478 (0.0039) & 0.8478 (0.0039) & 0.8478 (0.0039) & 0.8478 (0.0039) & 0.8478 (0.0039) & 0.8478 (0.0039) & 0.8478 (0.0039) \\
\hline
\multirow{1}{*}{M2} & -- & 0.8513 (0.0045) & 0.8506 (0.0040) & 0.8394 (0.0076) & 0.8353 (0.0086) & 0.8304 (0.0081) & 0.8290 (0.0079) & 0.8231 (0.0091) & 0.8176 (0.0068) & 0.8098 (0.0075) & 0.8015 (0.0102) \\
\hline
\multirow{5}{*}{M3} & $r=0$ & 0.8513 (0.0045) & 0.8506 (0.0040) & 0.8394 (0.0076) & 0.8353 (0.0086) & 0.8304 (0.0081) & 0.8290 (0.0079) & 0.8231 (0.0091) & 0.8176 (0.0068) & 0.8098 (0.0075) & 0.8015 (0.0102) \\
 & $r=0.5$ & 0.8526 (0.0040) & 0.8498 (0.0050) & 0.8444 (0.0062) & 0.8445 (0.0054) & 0.8397 (0.0065) & 0.8369 (0.0074) & 0.8318 (0.0068) & 0.8268 (0.0076) & 0.8181 (0.0076) & 0.7954 (0.0112) \\
 & $r=1$ & 0.8514 (0.0049) & 0.8501 (0.0044) & 0.8475 (0.0041) & 0.8451 (0.0047) & 0.8438 (0.0051) & 0.8425 (0.0051) & 0.8382 (0.0056) & 0.8329 (0.0083) & 0.8242 (0.0087) & 0.7942 (0.0106) \\
 & $r=2$ & 0.8516 (0.0041) & 0.8503 (0.0049) & 0.8485 (0.0039) & 0.8477 (0.0053) & 0.8463 (0.0053) & 0.8465 (0.0047) & 0.8442 (0.0065) & 0.8418 (0.0060) & 0.8371 (0.0061) & 0.7967 (0.0092) \\
\end{tabular}
}
\endgroup
\end{table}

\begin{table}[ht]
\centering
\caption{Simulation: linear-regression test MSE.}
\label{tab:simulation_linear_test_mse_fixed_modes}
\begingroup
\setlength{\tabcolsep}{4pt}
\renewcommand{\arraystretch}{1.35}
\resizebox{\textwidth}{!}{%
\begin{tabular}{@{}cc|*{10}{c}@{}}
\specialrule{0.8pt}{0pt}{1.5pt}
\specialrule{0.8pt}{0pt}{3pt}
\multirow{2}{*}{Mode} & \multirow{2}{*}{Baselines} & \multicolumn{10}{c}{Clients} \\
\cline{3-12}
 & & Client 0 (30\%) & Client 1 (20\%) & Client 2 (10\%) & Client 3 (9\%) & Client 4 (8\%) & Client 5 (7\%) & Client 6 (6\%) & Client 7 (5\%) & Client 8 (4\%) & Client 9 (1\%) \\
\specialrule{0.8pt}{3pt}{1.5pt}
\specialrule{0.8pt}{0pt}{3pt}
\multirow{3}{*}{M1} & FITS-G & \textbf{0.4971 (0.0127)} & \textbf{0.4971 (0.0127)} & \textbf{0.4971 (0.0127)} & \textbf{0.4971 (0.0127)} & \textbf{0.4971 (0.0127)} & \textbf{0.4971 (0.0127)} & \textbf{0.4971 (0.0127)} & \textbf{0.4971 (0.0127)} & \textbf{0.4971 (0.0127)} & \textbf{0.4971 (0.0127)} \\
 & HeteroFedSyn & 3.5888 (0.3499) & 3.5888 (0.3499) & 3.5888 (0.3499) & 3.5888 (0.3499) & 3.5888 (0.3499) & 3.5888 (0.3499) & 3.5888 (0.3499) & 3.5888 (0.3499) & 3.5888 (0.3499) & 3.5888 (0.3499) \\
 & FedTabDiff & 0.7209 (0.1486) & 0.7209 (0.1486) & 0.7209 (0.1486) & 0.7209 (0.1486) & 0.7209 (0.1486) & 0.7209 (0.1486) & 0.7209 (0.1486) & 0.7209 (0.1486) & 0.7209 (0.1486) & 0.7209 (0.1486) \\
\hline
\multirow{1}{*}{M2} & None & \textbf{0.4926 (0.0113)} & \textbf{0.4942 (0.0111)} & \textbf{0.4968 (0.0115)} & \textbf{0.4970 (0.0120)} & \textbf{0.4975 (0.0116)} & \textbf{0.4997 (0.0123)} & \textbf{0.5015 (0.0135)} & \textbf{0.5032 (0.0133)} & \textbf{0.5057 (0.0126)} & \textbf{0.5602 (0.0375)} \\
\hline
\multirow{3}{*}{\shortstack{M3 }} & FITS-G & \textbf{0.4938 (0.0120)} & \textbf{0.4947 (0.0120)} & \textbf{0.4956 (0.0123)} & \textbf{0.4958 (0.0123)} & \textbf{0.4960 (0.0125)} & \textbf{0.4960 (0.0124)} & \textbf{0.4962 (0.0125)} & \textbf{0.4963 (0.0124)} & \textbf{0.4966 (0.0126)} & \textbf{0.4969 (0.0127)} \\
 & HeteroFedSyn & 1.8255 (0.1602) & 2.1875 (0.1923) & 2.7177 (0.2436) & 2.8014 (0.2706) & 2.8567 (0.2786) & 2.9276 (0.2814) & 3.0330 (0.2946) & 3.1035 (0.2907) & 3.1842 (0.2984) & 3.4802 (0.3405) \\
\end{tabular}
}
\endgroup
\end{table}

\begin{table}[ht]
\centering
\caption{Simulation: logistic-regression test accuracy.}
\label{tab:simulation_logistic_test_accuracy_fixed_modes}
\begingroup
\setlength{\tabcolsep}{4pt}
\renewcommand{\arraystretch}{1.35}
\resizebox{\textwidth}{!}{%
\begin{tabular}{@{}cc|*{10}{c}@{}}
\specialrule{0.8pt}{0pt}{1.5pt}
\specialrule{0.8pt}{0pt}{3pt}
\multirow{2}{*}{Mode} & \multirow{2}{*}{Baselines} & \multicolumn{10}{c}{Clients} \\
\cline{3-12}
 & & Client 0 (30\%) & Client 1 (20\%) & Client 2 (10\%) & Client 3 (9\%) & Client 4 (8\%) & Client 5 (7\%) & Client 6 (6\%) & Client 7 (5\%) & Client 8 (4\%) & Client 9 (1\%) \\
\specialrule{0.8pt}{3pt}{1.5pt}
\specialrule{0.8pt}{0pt}{3pt}
\multirow{3}{*}{M1} & FITS-G & \textbf{0.8759 (0.0059)} & \textbf{0.8759 (0.0059)} & \textbf{0.8759 (0.0059)} & \textbf{0.8759 (0.0059)} & \textbf{0.8759 (0.0059)} & \textbf{0.8759 (0.0059)} & \textbf{0.8759 (0.0059)} & \textbf{0.8759 (0.0059)} & \textbf{0.8759 (0.0059)} & \textbf{0.8759 (0.0059)} \\
 & HeteroFedSyn & 0.8147 (0.0184) & 0.8147 (0.0184) & 0.8147 (0.0184) & 0.8147 (0.0184) & 0.8147 (0.0184) & 0.8147 (0.0184) & 0.8147 (0.0184) & 0.8147 (0.0184) & 0.8147 (0.0184) & 0.8147 (0.0184) \\
 & FedTabDiff & 0.8687 (0.0058) & 0.8687 (0.0058) & 0.8687 (0.0058) & 0.8687 (0.0058) & 0.8687 (0.0058) & 0.8687 (0.0058) & 0.8687 (0.0058) & 0.8687 (0.0058) & 0.8687 (0.0058) & 0.8687 (0.0058) \\
\hline
\multirow{1}{*}{M2} & None & \textbf{0.8763 (0.0054)} & \textbf{0.8760 (0.0055)} & \textbf{0.8748 (0.0055)} & \textbf{0.8736 (0.0058)} & \textbf{0.8745 (0.0058)} & \textbf{0.8738 (0.0059)} & \textbf{0.8727 (0.0058)} & \textbf{0.8722 (0.0065)} & \textbf{0.8705 (0.0077)} & \textbf{0.8488 (0.0177)} \\
\hline
\multirow{3}{*}{\shortstack{M3 }} & FITS-G & \textbf{0.8763 (0.0056)} & \textbf{0.8761 (0.0056)} & \textbf{0.8762 (0.0058)} & \textbf{0.8758 (0.0059)} & \textbf{0.8760 (0.0059)} & \textbf{0.8760 (0.0057)} & \textbf{0.8762 (0.0058)} & \textbf{0.8759 (0.0059)} & \textbf{0.8760 (0.0060)} & \textbf{0.8760 (0.0059)} \\
 & HeteroFedSyn & 0.8583 (0.0081) & 0.8509 (0.0092) & 0.8397 (0.0102) & 0.8373 (0.0110) & 0.8356 (0.0114) & 0.8331 (0.0116) & 0.8308 (0.0125) & 0.8288 (0.0127) & 0.8268 (0.0137) & 0.8180 (0.0170) \\
\end{tabular}
}
\endgroup
\end{table}

\begin{table}[ht]
\centering
\caption{Simulation: FT-Transformer regression test MSE.}
\label{tab:simulation_ft_regression_test_mse_fixed_modes}
\begingroup
\setlength{\tabcolsep}{4pt}
\renewcommand{\arraystretch}{1.35}
\resizebox{\textwidth}{!}{%
\begin{tabular}{@{}cc|*{10}{c}@{}}
\specialrule{0.8pt}{0pt}{1.5pt}
\specialrule{0.8pt}{0pt}{3pt}
\multirow{2}{*}{Mode} & \multirow{2}{*}{Baselines} & \multicolumn{10}{c}{Clients} \\
\cline{3-12}
 & & Client 0 (30\%) & Client 1 (20\%) & Client 2 (10\%) & Client 3 (9\%) & Client 4 (8\%) & Client 5 (7\%) & Client 6 (6\%) & Client 7 (5\%) & Client 8 (4\%) & Client 9 (1\%) \\
\specialrule{0.8pt}{3pt}{1.5pt}
\specialrule{0.8pt}{0pt}{3pt}
\multirow{3}{*}{M1} & FITS-G & \textbf{0.6028 (0.0433)} & \textbf{0.6028 (0.0433)} & \textbf{0.6028 (0.0433)} & \textbf{0.6028 (0.0433)} & \textbf{0.6028 (0.0433)} & \textbf{0.6028 (0.0433)} & \textbf{0.6028 (0.0433)} & \textbf{0.6028 (0.0433)} & \textbf{0.6028 (0.0433)} & \textbf{0.6028 (0.0433)} \\
 & HeteroFedSyn & 5.5571 (0.9458) & 5.5571 (0.9458) & 5.5571 (0.9458) & 5.5571 (0.9458) & 5.5571 (0.9458) & 5.5571 (0.9458) & 5.5571 (0.9458) & 5.5571 (0.9458) & 5.5571 (0.9458) & 5.5571 (0.9458) \\
 & FedTabDiff & 0.7994 (0.1271) & 0.7994 (0.1271) & 0.7994 (0.1271) & 0.7994 (0.1271) & 0.7994 (0.1271) & 0.7994 (0.1271) & 0.7994 (0.1271) & 0.7994 (0.1271) & 0.7994 (0.1271) & 0.7994 (0.1271) \\
\hline
\multirow{1}{*}{M2} & None & \textbf{0.5567 (0.0273)} & \textbf{0.5889 (0.0319)} & \textbf{0.7232 (0.0644)} & \textbf{0.7731 (0.0580)} & \textbf{0.8112 (0.0694)} & \textbf{0.8747 (0.0853)} & \textbf{0.9386 (0.1001)} & \textbf{1.0237 (0.0881)} & \textbf{1.1248 (0.1095)} & \textbf{1.4740 (0.2824)} \\
\hline
\multirow{3}{*}{\shortstack{M3 }} & FITS-G & \textbf{0.5625 (0.0341)} & \textbf{0.5688 (0.0345)} & \textbf{0.5777 (0.0468)} & \textbf{0.5756 (0.0371)} & \textbf{0.5806 (0.0384)} & \textbf{0.5763 (0.0269)} & \textbf{0.5970 (0.0547)} & \textbf{0.5820 (0.0376)} & \textbf{0.5748 (0.0277)} & \textbf{0.5910 (0.0437)} \\
 & HeteroFedSyn & 2.1663 (0.2894) & 2.6479 (0.3289) & 3.5003 (0.4756) & 3.6347 (0.4695) & 3.7814 (0.4829) & 3.9253 (0.5653) & 4.1741 (0.6395) & 4.2038 (0.6069) & 4.4703 (0.6628) & 5.0962 (0.7845) \\
\end{tabular}
}
\endgroup
\end{table}

\begin{table}[ht]
\centering
\caption{Simulation: linear-regression test MSE (FITS-G vs FITS-K).}
\label{tab:simulation_linear_test_mse_fixed_modes_fits_g_vs_fits_k}
\begingroup
\setlength{\tabcolsep}{4pt}
\renewcommand{\arraystretch}{1.35}
\resizebox{\textwidth}{!}{%
\begin{tabular}{@{}cc|*{10}{c}@{}}
\specialrule{0.8pt}{0pt}{1.5pt}
\specialrule{0.8pt}{0pt}{3pt}
\multirow{2}{*}{Mode} & \multirow{2}{*}{Baselines} & \multicolumn{10}{c}{Clients} \\
\cline{3-12}
 & & Client 0 (30\%) & Client 1 (20\%) & Client 2 (10\%) & Client 3 (9\%) & Client 4 (8\%) & Client 5 (7\%) & Client 6 (6\%) & Client 7 (5\%) & Client 8 (4\%) & Client 9 (1\%) \\
\specialrule{0.8pt}{3pt}{1.5pt}
\specialrule{0.8pt}{0pt}{3pt}
\multirow{2}{*}{M1} & FITS-G & \textbf{0.4971 (0.0127)} & \textbf{0.4971 (0.0127)} & \textbf{0.4971 (0.0127)} & \textbf{0.4971 (0.0127)} & \textbf{0.4971 (0.0127)} & \textbf{0.4971 (0.0127)} & \textbf{0.4971 (0.0127)} & \textbf{0.4971 (0.0127)} & \textbf{0.4971 (0.0127)} & \textbf{0.4971 (0.0127)} \\
 & FITS-K & 0.4986 (0.0124) & 0.4986 (0.0124) & 0.4986 (0.0124) & 0.4986 (0.0124) & 0.4986 (0.0124) & 0.4986 (0.0124) & 0.4986 (0.0124) & 0.4986 (0.0124) & 0.4986 (0.0124) & 0.4986 (0.0124) \\
\hline
\multirow{1}{*}{M2} & None & \textbf{0.4926 (0.0113)} & \textbf{0.4942 (0.0111)} & \textbf{0.4968 (0.0115)} & \textbf{0.4970 (0.0120)} & \textbf{0.4975 (0.0116)} & \textbf{0.4997 (0.0123)} & \textbf{0.5015 (0.0135)} & \textbf{0.5032 (0.0133)} & \textbf{0.5057 (0.0126)} & \textbf{0.5602 (0.0375)} \\
\hline
\multirow{2}{*}{\shortstack{M3 }} & FITS-G & \textbf{0.4938 (0.0120)} & \textbf{0.4947 (0.0120)} & \textbf{0.4956 (0.0123)} & \textbf{0.4958 (0.0123)} & \textbf{0.4960 (0.0125)} & \textbf{0.4960 (0.0124)} & \textbf{0.4962 (0.0125)} & \textbf{0.4963 (0.0124)} & \textbf{0.4966 (0.0126)} & \textbf{0.4969 (0.0127)} \\
 & FITS-K & 0.4944 (0.0118) & 0.4955 (0.0118) & 0.4966 (0.0120) & 0.4967 (0.0120) & 0.4969 (0.0122) & 0.4971 (0.0121) & 0.4974 (0.0121) & 0.4975 (0.0121) & 0.4978 (0.0123) & 0.4982 (0.0124) \\
\end{tabular}
}
\endgroup
\end{table}

\begin{table}[ht]
\centering
\caption{Simulation: logistic-regression test accuracy (FITS-G vs FITS-K).}
\label{tab:simulation_logistic_test_accuracy_fixed_modes_fits_g_vs_fits_k}
\begingroup
\setlength{\tabcolsep}{4pt}
\renewcommand{\arraystretch}{1.35}
\resizebox{\textwidth}{!}{%
\begin{tabular}{@{}cc|*{10}{c}@{}}
\specialrule{0.8pt}{0pt}{1.5pt}
\specialrule{0.8pt}{0pt}{3pt}
\multirow{2}{*}{Mode} & \multirow{2}{*}{Baselines} & \multicolumn{10}{c}{Clients} \\
\cline{3-12}
 & & Client 0 (30\%) & Client 1 (20\%) & Client 2 (10\%) & Client 3 (9\%) & Client 4 (8\%) & Client 5 (7\%) & Client 6 (6\%) & Client 7 (5\%) & Client 8 (4\%) & Client 9 (1\%) \\
\specialrule{0.8pt}{3pt}{1.5pt}
\specialrule{0.8pt}{0pt}{3pt}
\multirow{2}{*}{M1} & FITS-G & 0.8759 (0.0059) & 0.8759 (0.0059) & 0.8759 (0.0059) & 0.8759 (0.0059) & 0.8759 (0.0059) & 0.8759 (0.0059) & 0.8759 (0.0059) & 0.8759 (0.0059) & 0.8759 (0.0059) & 0.8759 (0.0059) \\
 & FITS-K & \textbf{0.8759 (0.0055)} & \textbf{0.8759 (0.0055)} & \textbf{0.8759 (0.0055)} & \textbf{0.8759 (0.0055)} & \textbf{0.8759 (0.0055)} & \textbf{0.8759 (0.0055)} & \textbf{0.8759 (0.0055)} & \textbf{0.8759 (0.0055)} & \textbf{0.8759 (0.0055)} & \textbf{0.8759 (0.0055)} \\
\hline
\multirow{1}{*}{M2} & None & \textbf{0.8763 (0.0054)} & \textbf{0.8760 (0.0055)} & \textbf{0.8748 (0.0055)} & \textbf{0.8736 (0.0058)} & \textbf{0.8745 (0.0058)} & \textbf{0.8738 (0.0059)} & \textbf{0.8727 (0.0058)} & \textbf{0.8722 (0.0065)} & \textbf{0.8705 (0.0077)} & \textbf{0.8488 (0.0177)} \\
\hline
\multirow{2}{*}{\shortstack{M3 }} & FITS-G & 0.8763 (0.0056) & 0.8761 (0.0056) & 0.8762 (0.0058) & 0.8758 (0.0059) & 0.8760 (0.0059) & 0.8760 (0.0057) & \textbf{0.8762 (0.0058)} & 0.8759 (0.0059) & 0.8760 (0.0060) & \textbf{0.8760 (0.0059)} \\
 & FITS-K & \textbf{0.8765 (0.0054)} & \textbf{0.8763 (0.0053)} & \textbf{0.8763 (0.0054)} & \textbf{0.8761 (0.0057)} & \textbf{0.8762 (0.0053)} & \textbf{0.8761 (0.0054)} & 0.8761 (0.0052) & \textbf{0.8761 (0.0056)} & \textbf{0.8761 (0.0055)} & 0.8759 (0.0054) \\
\hline
\multirow{2}{*}{\shortstack{M4 }} & FITS-G & 0.8764 (0.0056) & 0.8763 (0.0055) & 0.8764 (0.0057) & 0.8758 (0.0058) & \textbf{0.8761 (0.0057)} & 0.8760 (0.0056) & \textbf{0.8760 (0.0056)} & 0.8759 (0.0057) & 0.8760 (0.0060) & \textbf{0.8761 (0.0059)} \\
\end{tabular}
}
\endgroup
\end{table}

\begin{table}[ht]
\centering
\caption{Simulation: FT-Transformer regression test MSE (FITS-G vs FITS-K).}
\label{tab:simulation_ft_regression_test_mse_fixed_modes_fits_g_vs_fits_k}
\begingroup
\setlength{\tabcolsep}{4pt}
\renewcommand{\arraystretch}{1.35}
\resizebox{\textwidth}{!}{%
\begin{tabular}{@{}cc|*{10}{c}@{}}
\specialrule{0.8pt}{0pt}{1.5pt}
\specialrule{0.8pt}{0pt}{3pt}
\multirow{2}{*}{Mode} & \multirow{2}{*}{Baselines} & \multicolumn{10}{c}{Clients} \\
\cline{3-12}
 & & Client 0 (30\%) & Client 1 (20\%) & Client 2 (10\%) & Client 3 (9\%) & Client 4 (8\%) & Client 5 (7\%) & Client 6 (6\%) & Client 7 (5\%) & Client 8 (4\%) & Client 9 (1\%) \\
\specialrule{0.8pt}{3pt}{1.5pt}
\specialrule{0.8pt}{0pt}{3pt}
\multirow{2}{*}{M1} & FITS-G & \textbf{0.6028 (0.0433)} & \textbf{0.6028 (0.0433)} & \textbf{0.6028 (0.0433)} & \textbf{0.6028 (0.0433)} & \textbf{0.6028 (0.0433)} & \textbf{0.6028 (0.0433)} & \textbf{0.6028 (0.0433)} & \textbf{0.6028 (0.0433)} & \textbf{0.6028 (0.0433)} & \textbf{0.6028 (0.0433)} \\
 & FITS-K & 0.6358 (0.0519) & 0.6358 (0.0519) & 0.6358 (0.0519) & 0.6358 (0.0519) & 0.6358 (0.0519) & 0.6358 (0.0519) & 0.6358 (0.0519) & 0.6358 (0.0519) & 0.6358 (0.0519) & 0.6358 (0.0519) \\
\hline
\multirow{1}{*}{M2} & None & \textbf{0.5567 (0.0273)} & \textbf{0.5889 (0.0319)} & \textbf{0.7232 (0.0644)} & \textbf{0.7731 (0.0580)} & \textbf{0.8112 (0.0694)} & \textbf{0.8747 (0.0853)} & \textbf{0.9386 (0.1001)} & \textbf{1.0237 (0.0881)} & \textbf{1.1248 (0.1095)} & \textbf{1.4740 (0.2824)} \\
\hline
\multirow{2}{*}{\shortstack{M3 }} & FITS-G & \textbf{0.5625 (0.0341)} & \textbf{0.5688 (0.0345)} & \textbf{0.5777 (0.0468)} & \textbf{0.5756 (0.0371)} & \textbf{0.5806 (0.0384)} & \textbf{0.5763 (0.0269)} & \textbf{0.5970 (0.0547)} & \textbf{0.5820 (0.0376)} & \textbf{0.5748 (0.0277)} & \textbf{0.5910 (0.0437)} \\
 & FITS-K & 0.5714 (0.0423) & 0.5757 (0.0258) & 0.5883 (0.0368) & 0.5934 (0.0364) & 0.6013 (0.0440) & 0.5951 (0.0382) & 0.6050 (0.0395) & 0.6055 (0.0391) & 0.5987 (0.0392) & 0.6189 (0.0497) \\
\end{tabular}
}
\endgroup
\end{table}

\begin{table}[ht]
\centering
\caption{Simulation: linear-regression test MSE (FITS-G: all modes).}
\label{tab:simulation_linear_test_mse_fixed_modes_fits_g_all_modes_parameters}
\begingroup
\setlength{\tabcolsep}{4pt}
\renewcommand{\arraystretch}{1.35}
\resizebox{\textwidth}{!}{%
\begin{tabular}{@{}cc|*{10}{c}@{}}
\specialrule{0.8pt}{0pt}{1.5pt}
\specialrule{0.8pt}{0pt}{3pt}
\multirow{2}{*}{Mode} & \multirow{2}{*}{Parameters} & \multicolumn{10}{c}{Clients} \\
\cline{3-12}
 & & Client 0 (30\%) & Client 1 (20\%) & Client 2 (10\%) & Client 3 (9\%) & Client 4 (8\%) & Client 5 (7\%) & Client 6 (6\%) & Client 7 (5\%) & Client 8 (4\%) & Client 9 (1\%) \\
\specialrule{0.8pt}{3pt}{1.5pt}
\specialrule{0.8pt}{0pt}{3pt}
\multirow{1}{*}{M1} & -- & 0.4971 (0.0127) & 0.4971 (0.0127) & 0.4971 (0.0127) & 0.4971 (0.0127) & 0.4971 (0.0127) & 0.4971 (0.0127) & 0.4971 (0.0127) & 0.4971 (0.0127) & 0.4971 (0.0127) & 0.4971 (0.0127) \\
\hline
\multirow{1}{*}{M2} & -- & 0.4926 (0.0113) & 0.4942 (0.0111) & 0.4968 (0.0115) & 0.4970 (0.0120) & 0.4975 (0.0116) & 0.4997 (0.0123) & 0.5015 (0.0135) & 0.5032 (0.0133) & 0.5057 (0.0126) & 0.5602 (0.0375) \\
\hline
\multirow{5}{*}{M3} & $r=0$ & 0.4926 (0.0113) & 0.4942 (0.0111) & 0.4968 (0.0115) & 0.4970 (0.0120) & 0.4975 (0.0116) & 0.4997 (0.0123) & 0.5015 (0.0135) & 0.5032 (0.0133) & 0.5057 (0.0126) & 0.5602 (0.0375) \\
 & $r=0.5$ & 0.4934 (0.0117) & 0.4947 (0.0112) & 0.4978 (0.0119) & 0.4979 (0.0118) & 0.4979 (0.0118) & 0.5005 (0.0120) & 0.5023 (0.0146) & 0.5037 (0.0133) & 0.5074 (0.0138) & 0.5500 (0.0358) \\
 & $r=1$ & 0.4937 (0.0121) & 0.4953 (0.0118) & 0.4964 (0.0114) & 0.4979 (0.0128) & 0.4982 (0.0132) & 0.4994 (0.0126) & 0.5017 (0.0144) & 0.5024 (0.0122) & 0.5057 (0.0142) & 0.5400 (0.0294) \\
 & $r=2$ & 0.4938 (0.0120) & 0.4947 (0.0119) & 0.4963 (0.0119) & 0.4968 (0.0117) & 0.4984 (0.0132) & 0.4973 (0.0120) & 0.4986 (0.0124) & 0.5002 (0.0130) & 0.5040 (0.0129) & 0.5332 (0.0258) \\
\end{tabular}
}
\endgroup
\end{table}

\begin{table}[ht]
\centering
\caption{Simulation: logistic-regression test accuracy (FITS-G: all modes).}
\label{tab:simulation_logistic_test_accuracy_fixed_modes_fits_g_all_modes_parameters}
\begingroup
\setlength{\tabcolsep}{4pt}
\renewcommand{\arraystretch}{1.35}
\resizebox{\textwidth}{!}{%
\begin{tabular}{@{}cc|*{10}{c}@{}}
\specialrule{0.8pt}{0pt}{1.5pt}
\specialrule{0.8pt}{0pt}{3pt}
\multirow{2}{*}{Mode} & \multirow{2}{*}{Parameters} & \multicolumn{10}{c}{Clients} \\
\cline{3-12}
 & & Client 0 (30\%) & Client 1 (20\%) & Client 2 (10\%) & Client 3 (9\%) & Client 4 (8\%) & Client 5 (7\%) & Client 6 (6\%) & Client 7 (5\%) & Client 8 (4\%) & Client 9 (1\%) \\
\specialrule{0.8pt}{3pt}{1.5pt}
\specialrule{0.8pt}{0pt}{3pt}
\multirow{1}{*}{M1} & -- & 0.8759 (0.0059) & 0.8759 (0.0059) & 0.8759 (0.0059) & 0.8759 (0.0059) & 0.8759 (0.0059) & 0.8759 (0.0059) & 0.8759 (0.0059) & 0.8759 (0.0059) & 0.8759 (0.0059) & 0.8759 (0.0059) \\
\hline
\multirow{1}{*}{M2} & -- & 0.8763 (0.0054) & 0.8760 (0.0055) & 0.8748 (0.0055) & 0.8736 (0.0058) & 0.8745 (0.0058) & 0.8738 (0.0059) & 0.8727 (0.0058) & 0.8722 (0.0065) & 0.8705 (0.0077) & 0.8488 (0.0177) \\
\hline
\multirow{5}{*}{M3} & $r=0$ & 0.8763 (0.0054) & 0.8760 (0.0055) & 0.8748 (0.0055) & 0.8736 (0.0058) & 0.8745 (0.0058) & 0.8738 (0.0059) & 0.8727 (0.0058) & 0.8722 (0.0065) & 0.8705 (0.0077) & 0.8488 (0.0177) \\
 & $r=0.5$ & 0.8763 (0.0054) & 0.8761 (0.0055) & 0.8753 (0.0057) & 0.8743 (0.0055) & 0.8749 (0.0059) & 0.8746 (0.0059) & 0.8731 (0.0060) & 0.8732 (0.0061) & 0.8719 (0.0067) & 0.8570 (0.0114) \\
 & $r=1$ & \textbf{0.8764 (0.0056)} & 0.8762 (0.0054) & 0.8758 (0.0058) & 0.8750 (0.0059) & 0.8753 (0.0055) & 0.8751 (0.0060) & 0.8745 (0.0055) & 0.8742 (0.0054) & 0.8733 (0.0067) & 0.8632 (0.0115) \\
 & $r=2$ & 0.8763 (0.0056) & 0.8764 (0.0056) & 0.8760 (0.0058) & 0.8752 (0.0054) & 0.8756 (0.0054) & 0.8752 (0.0060) & 0.8748 (0.0058) & 0.8748 (0.0061) & 0.8745 (0.0062) & 0.8659 (0.0074) \\
\end{tabular}
}
\endgroup
\end{table}

\begin{table}[ht]
\centering
\caption{Simulation: FT-Transformer regression test MSE (FITS-G: all modes).}
\label{tab:simulation_ft_regression_test_mse_fixed_modes_fits_g_all_modes_parameters}
\begingroup
\setlength{\tabcolsep}{4pt}
\renewcommand{\arraystretch}{1.35}
\resizebox{\textwidth}{!}{%
\begin{tabular}{@{}cc|*{10}{c}@{}}
\specialrule{0.8pt}{0pt}{1.5pt}
\specialrule{0.8pt}{0pt}{3pt}
\multirow{2}{*}{Mode} & \multirow{2}{*}{Parameters} & \multicolumn{10}{c}{Clients} \\
\cline{3-12}
 & & Client 0 (30\%) & Client 1 (20\%) & Client 2 (10\%) & Client 3 (9\%) & Client 4 (8\%) & Client 5 (7\%) & Client 6 (6\%) & Client 7 (5\%) & Client 8 (4\%) & Client 9 (1\%) \\
\specialrule{0.8pt}{3pt}{1.5pt}
\specialrule{0.8pt}{0pt}{3pt}
\multirow{1}{*}{M1} & -- & 0.6028 (0.0433) & 0.6028 (0.0433) & 0.6028 (0.0433) & 0.6028 (0.0433) & 0.6028 (0.0433) & 0.6028 (0.0433) & 0.6028 (0.0433) & 0.6028 (0.0433) & 0.6028 (0.0433) & 0.6028 (0.0433) \\
\hline
\multirow{1}{*}{M2} & -- & 0.5567 (0.0273) & 0.5889 (0.0319) & 0.7232 (0.0644) & 0.7731 (0.0580) & 0.8112 (0.0694) & 0.8747 (0.0853) & 0.9386 (0.1001) & 1.0237 (0.0881) & 1.1248 (0.1095) & 1.4740 (0.2824) \\
\hline
\multirow{5}{*}{M3} & $r=0$ & 0.5567 (0.0273) & 0.5889 (0.0319) & 0.7232 (0.0644) & 0.7731 (0.0580) & 0.8112 (0.0694) & 0.8747 (0.0853) & 0.9386 (0.1001) & 1.0237 (0.0881) & 1.1248 (0.1095) & 1.4740 (0.2824) \\
 & $r=0.5$ & 0.5646 (0.0365) & 0.5899 (0.0343) & 0.6999 (0.0600) & 0.7414 (0.0689) & 0.7943 (0.0727) & 0.8534 (0.1054) & 0.9549 (0.1172) & 1.0441 (0.1100) & 1.2009 (0.1289) & 1.5537 (0.2274) \\
 & $r=1$ & 0.5654 (0.0329) & 0.5804 (0.0322) & 0.6707 (0.0498) & 0.6957 (0.0524) & 0.7449 (0.0674) & 0.7833 (0.0765) & 0.8586 (0.1190) & 0.9823 (0.1009) & 1.1235 (0.1313) & 1.6559 (0.2200) \\
 & $r=2$ & 0.5561 (0.0289) & 0.5725 (0.0404) & 0.6212 (0.0492) & 0.6372 (0.0351) & 0.6718 (0.0485) & 0.6919 (0.0619) & 0.7523 (0.0708) & 0.8267 (0.0824) & 0.9167 (0.1145) & 1.6683 (0.1677) \\
\end{tabular}
}
\endgroup
\end{table}

\end{document}